\documentclass{article}

    \PassOptionsToPackage{numbers}{natbib}


    \usepackage[preprint]{neurips_2024}



\usepackage[utf8]{inputenc} 
\usepackage[T1]{fontenc}    
\usepackage{hyperref}       
\usepackage{url}            
\usepackage{booktabs}       
\usepackage{amsfonts}       
\usepackage{nicefrac}       
\usepackage{microtype}      
\usepackage{xcolor}         

\usepackage{subcaption}

\hypersetup{
    colorlinks,
    linkcolor=blue,
    citecolor=red,
    urlcolor=blue
}



\title{Local to Global: Learning Dynamics and Effect of Initialization for Transformers}

%


\author{Ashok Vardhan Makkuva $^*$ \\
EPFL 
\And 
Marco Bondaschi \thanks{Equal contribution. Correspondence to \texttt{ashok.makkuva@epfl.ch}. } \\
EPFL 
\hspace{0.5em}
\And 
Chanakya Ekbote \\
EPFL 
\And 
Adway Girish \\
EPFL 
\And
Alliot Nagle \\
UT Austin 
\And
Hyeji Kim \\
UT Austin
\And
Michael Gastpar \\
EPFL
}




\usepackage[utf8]{inputenc} 
\usepackage[T1]{fontenc}    
\usepackage{url}            
\usepackage{booktabs}       
\usepackage{amsfonts}       
\usepackage{nicefrac}       
\usepackage{microtype}      
\usepackage{tabu}
\usepackage{multicol}
\usepackage{soul}
\usepackage{bbm}
\usepackage{lipsum}
\usepackage{kantlipsum}
\usepackage{tabularx}

\usepackage{cancel}

\usepackage{amsmath,amssymb,amsfonts,amsthm, dsfont, color}
\usepackage{algorithm}
\usepackage{mathtools}
\usepackage{graphicx}
\usepackage{textcomp}
\usepackage{xcolor, fancyhdr}
\usepackage{enumitem}
\usepackage{float}
\usepackage{nicefrac}

\usepackage{wrapfig}
\usepackage{mathtools}
\usepackage{cuted}



\definecolor{MyGreen1}{RGB}{20,180,40}
\usepackage{multirow, tikz,float}
\allowdisplaybreaks

\usepackage{nicefrac,color,mathrsfs,float}
\usepackage{multirow,caption,tikz}
\captionsetup{compatibility=false}

\usetikzlibrary{shapes.misc, positioning}
\usetikzlibrary{decorations.pathreplacing}
\usetikzlibrary{arrows.meta, shapes,patterns.meta}

\tikzset{
  block/.style    = {draw, thick, rectangle, minimum width = 2em},
sblock/.style      = {draw, thick, rectangle, minimum height = 2em,
minimum width = 2em}, 
}



\DeclareGraphicsRule{.eps}{pdf}{.pdf}{`epstopdf #1}
\pdfcompresslevel=9

\newcommand{\Expect}{\mathbb{E}}

\newcommand{\one}{\mathbf{1}}

\newcommand{\norm}[1]{\|#1\|}

\newcommand{\argmin}{\mathop{\operatorname{argmin}}}


\usepackage{comment}





\newcommand{\gptwo}{\textsc{GPT-2}\xspace}



\newcommand{\binary}{\{0,1\}}
\newcommand{\softmax}{\mathrm{softmax}}

\newcommand{\set}[1]{[#1]}
\newcommand{\att}{\mathrm{att}}
\newcommand{\relu}{\mathrm{ReLU}}
\newcommand{\logit}{\mathrm{logit}}

\newcommand{\pthetastar}[1]{\mathbb{P}_{\btheta_\ast}\left(#1\right)}
\newcommand{\pthetapi}[1]{\mathbb{P}_{\btheta_{\bpi}}\left(#1\right)}

\newcommand{\inrd}{\in \mathbb{R}^d}
\newcommand{\inr}[1]{\in \mathbb{R}^{#1}}

\newcommand{\pqmarkov}{\bP(p,q)}

\newcommand{\predprob}{f_{\btheta}(x_1^n)}

\newcommand{\finiteseq}{\{x_n\}_{n=1}^N}
\newcommand{\infiniteseq}{(x_n)_{n \geq 1}}

\newcommand{\Expectnplus}{\Expect_{x_1^{n+1}}}

\DeclarePairedDelimiterX{\infdivx}[2]{(}{)}{%
  #1\;\delimsize\|\;#2%
}



\newcommand{\logistic}{\ell_{\mathrm{log}}}
\newcommand{\energy}{\calE}

\newcommand\yaxis{\operatorname{e-axis}}
\newcommand\xaxis{\operatorname{w-axis}}
\newcommand\eaplane{\operatorname{ea-plane}}

\newcommand{\thetalim}{\btheta_{\mathrm{lim}}}
\newcommand{\thetainit}{\btheta_0}


\usepackage{thmtools}


\newtheorem{theorem}{Theorem}
\newtheorem{lemma}{Lemma}

\newtheorem{remark}{Remark}

\usepackage{prettyref,xspace}
\usepackage{tikz}

\newrefformat{cond}{Condition~\ref{#1}}
\newrefformat{eq}{Eq.~\eqref{#1}}
\newrefformat{thm}{Thm.~\ref{#1}}
\newrefformat{th}{Theorem~\ref{#1}}
\newrefformat{chap}{Chapter~\ref{#1}}
\newrefformat{sec}{Sec.~\ref{#1}}
\newrefformat{algo}{Algorithm~\ref{#1}}
\newrefformat{fig}{Fig.~\ref{#1}}
\newrefformat{tab}{Table~\ref{#1}}
\newrefformat{rmk}{Remark~\ref{#1}}
\newrefformat{clm}{Claim~\ref{#1}}
\newrefformat{def}{Definition~\ref{#1}}
\newrefformat{cor}{Corollary~\ref{#1}}
\newrefformat{lmm}{Lemma~\ref{#1}}
\newrefformat{prop}{Proposition~\ref{#1}}
\newrefformat{pr}{Proposition~\ref{#1}}
\newrefformat{app}{App.~\ref{#1}}
\newrefformat{prob}{Problem~\ref{#1}}
\newrefformat{ques}{Question~\ref{#1}}
\newrefformat{notee}{Note~\ref{#1}}
\newrefformat{assump}{Assumption~\ref{#1}}
\newrefformat{issuee}{Issue ~\ref{#1}}
\newrefformat{fix}{Fix ~\ref{#1}}

\newcommand{\diverge}{\to\infty}
\newcommand{\reals}{\mathbb{R}}
\newcommand{\naturals}{\mathbb{N}}

\newcommand{\prob}[1]{\mathbb{P}\left(#1\right)}

\newcommand{\inner}[2]{\langle {#1}, {#2}  \rangle}

\newcommand{\calB}{\mathcal{B}}
\newcommand{\calE}{\mathcal{E}}

\newcommand{\calI}{\mathcal{I}}

\newcommand{\calN}{\mathcal{N}}







\newcommand{\vect}[1]{\boldsymbol{#1}}

\newcommand{\ba}{\vect{a}}

\newcommand{\be}{\vect{e}}

\newcommand{\bk}{\vect{k}}

\newcommand{\bp}{\vect{p}}
\newcommand{\bq}{\vect{q}}

\newcommand{\bv}{\vect{v}}
\newcommand{\bw}{\vect{w}}
\newcommand{\bx}{\vect{x}}
\newcommand{\by}{\vect{y}}
\newcommand{\bz}{\vect{z}}

\newcommand{\bH}{\vect{H}}
\newcommand{\bI}{\vect{I}}

\newcommand{\bM}{\vect{M}}

\newcommand{\bP}{\vect{P}}

\newcommand{\bW}{\vect{W}}

\newcommand{\btheta}{\vect{\theta}}
\newcommand{\bTheta}{\vect{\Theta}}

\newcommand{\bGamma}{\vect{\Gamma}}
\newcommand{\bgamma}{\vect{\gamma}}

\newcommand{\bpi}{\vect{\pi}}

\newcommand{\balpha}{\vect{\alpha}}

\newcommand{\matrx}[2]{\reals^{#1 \times #2}}

\newcommand{\compsum}[2]{\sum_{#1 \in [#2]}}


\newcommand{\globalmin}{\btheta_\ast}
\newcommand{\localmin}{\btheta_{\mathrm{min}}}
\newcommand{\localmax}{\btheta_{\mathrm{max}}}
\newcommand{\saddle}{\btheta_\mathrm{sad}}

\newcommand{\globalset}{\bTheta_\ast}
\newcommand{\localset}{\bTheta_{\mathrm{min}}}
\newcommand{\maxset}{\bTheta_{\mathrm{max}}}
\newcommand{\saddleset}{\bTheta_{\mathrm{sad}}}
\newcommand{\stationset}{\bTheta_{\mathrm{station}}}

\newcommand{\biglobalmin}{\bgamma_\ast}
\newcommand{\bilocalmin}{\bgamma_{\mathrm{min}}}

\newcommand{\bisaddle}{\bgamma_\mathrm{sad}}
\newcommand{\bistation}{\bgamma_\mathrm{station}}

\newcommand{\biglobalset}{\bGamma_\ast}
\newcommand{\bilocalset}{\bGamma_{\mathrm{min}}}

\newcommand{\bisaddleset}{\bGamma_{\mathrm{sad}}}
\newcommand{\bistationset}{\bGamma_{\mathrm{station}}}

\newcommand{\biopt}{b_\ast}


\newcommand{\energyinit}{\energy_0}
\newcommand{\fleshinit}{(e_0,w_0)}

\newcommand{\sadpoi}{-\frac{1}{\sqrt{2}}}

\newcommand{\sadx}{(-\frac{1}{\sqrt{2}}, 0)}

\newcommand{\trajectory}{(\btheta_t)_{t \geq 0}}
\newcommand{\energysad}{\energy_{\mathrm{sad}}} 

\newcommand{\woyaxis}{\reals^2 \setminus \yaxis}

\newcommand{\globinit}{\calI_\ast}
\newcommand{\mininit}{\calI_{\mathrm{min}}}
\newcommand{\maxinit}{\calI_{\mathrm{max}}}
\newcommand{\sadinit}{\calI_{\mathrm{sad}}}

\newcommand{\pqfrac}{\frac{(1-p)(1-q)}{pq}}
\newcommand{\ahead}{t \geq 0}

\usepackage{derivative}



\newcommand{\define}{\triangleq}

\newcommand{\pth}[1]{\left( #1 \right)}
\newcommand{\qth}[1]{\left[ #1 \right]}
\newcommand{\sth}[1]{\left\{ #1 \right\}}

\newcommand{\ie}{i.e.\xspace}

\mathchardef\mhyphen="2D

\definecolor{MyGreen1}{RGB}{20,180,40}

\usepackage{siunitx}









\let\ast\star 
\usepackage[capitalize,noabbrev]{cleveref}

\begin{document}

\maketitle


\begin{abstract}
In recent years, transformer-based models have revolutionized deep learning, particularly in sequence modeling. To better understand this phenomenon, there is a growing interest in using Markov input processes to study transformers. However, our current understanding in this regard remains limited with many fundamental questions about how transformers learn Markov chains still unanswered. In this paper, we address this by focusing on first-order Markov chains and single-layer transformers, providing a comprehensive characterization of the learning dynamics in this context. Specifically, we prove that transformer parameters trained on next-token prediction loss can either converge to global or local minima, contingent on the initialization and the Markovian data properties, and we characterize the precise conditions under which this occurs. To the best of our knowledge, this is the first result of its kind highlighting the role of initialization. We further demonstrate that our theoretical findings are corroborated by empirical evidence. Based on these insights, we provide guidelines for the initialization of transformer parameters and demonstrate their effectiveness. Finally, we outline several open problems in this arena. Code is available at: \url{https://github.com/Bond1995/Markov}.
\end{abstract} 



\section{Introduction}
\label{sec:intro}

Transformers have been at the forefront of recent successes across various fields including natural language processing \cite{vaswani2017attention}. To obtain insights into their impressive sequential modeling capabilities, a notable emerging theme among several recent works is to model the input data as a Markov process. \looseness=-1

Using this Markovian perspective, works such as \cite{nic2024trans,edelman2024evolution, bietti2023birth}, among others, study the in-context learning capabilities of transformer. \cite{makkuva2024attention} analyzes the loss-landscape for the next-token prediction task, while \cite{ildiz2024self} shows an equivalence between the attention mechanism and Markov models. Although these works reveal interesting insights about transformers and their capabilities, many fundamental questions about their learning dynamics remain unanswered. In particular, a comprehensive characterization of their training dynamics vis-\'a-vis the data distributional properties and the role of initialization is still missing.

To address this gap, in this paper, we focus on the canonical setting of first-order Markov chains and single-layer transformers and analyze the learning dynamics in this context. Specifically, we prove (Thms.~\ref{thm:gf_highswitch}, \ref{thm:main_attn}, and \ref{thm:gf_lowswitch}) that the input data properties and the parameter initialization play a significant role in the convergence of the transformer parameters to either local or global minima on the loss surface. Further, we precisely characterize (Figs.~\ref{fig:gradient-flow} and \prettyref{fig:pq_3dplots}) the specific data characteristics and the region of initialization under which this convergence
occurs. Based on these insights, we provide guidelines for the initialization of transformer parameters and empirically corroborate our theoretical findings. On the theoretical front, our analysis provides a novel gradient flow analysis of the transformer parameters, capitalizing on their low-rank structure during training. Our main contributions can be summarized as follows:

\begin{itemize}
    \item {\bf \emph{Theoretical analysis:}} We precisely characterize the loss landscape and gradient flow dynamics for single-layer transformers with first-order Markov chains (Secs.~\ref{sec:cannoli} and \ref{sec:learn_dyna}). We demonstrate that transformer parameters trained on next-token prediction loss can converge to global or local minima, depending on the initialization and the Markovian data properties, and determine the exact conditions under which this occurs (Thms.~\ref{thm:gf_highswitch}, \ref{thm:main_attn}, and \ref{thm:gf_lowswitch}). To the best of our knowledge, this is the first result of its kind.
    \item {\bf \emph{Insights into initialization:}}  Our theoretical analysis underscores the crucial role of initialization in transformer parameter training. Specifically, we demonstrate how the standard Gaussian initialization scheme can lead the convergence to local or global minima depending on the Markovian data properties (Thms.~\ref{thm:gf_highswitch} and \ref{thm:gf_lowswitch}, Figs.~\ref{fig:gradient-flow} and \ref{fig:pq_3dplots}).   
    \item {\bf \emph{Guidelines:}} Based on these insights, we provide practical guidelines for parameter initialization, corroborated by empirical evidence demonstrating their effectiveness (\prettyref{sec:init_fullmodel}).
\end{itemize}

\begin{figure*}[!htbp]
\captionsetup[sub]{font=scriptsize}
\centering
\begin{subfigure}[t!]{0.24\textwidth}
\centering
   \includegraphics[width=1.05\textwidth]{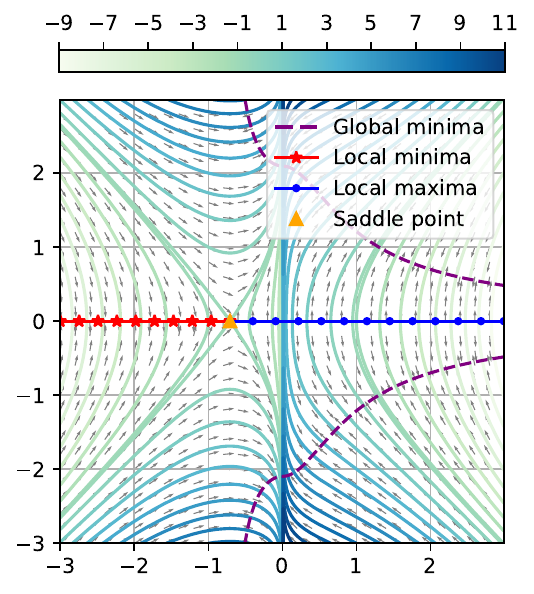}
    \put(-50,-3){\fontsize{6}{3}\selectfont $w \ \rightarrow$}
    \put(-100,50){\fontsize{6}{3}\selectfont $e$}
    \put(-100,58){\fontsize{6}{3}\selectfont $\uparrow$}
    \caption{Gradient flow $(p+q<1)$}
    \label{fig:energy1}
\end{subfigure}
\hfill
\begin{subfigure}[t!]{0.24\textwidth}
\centering
\vspace{1.5em}
   \includegraphics[width=\textwidth]{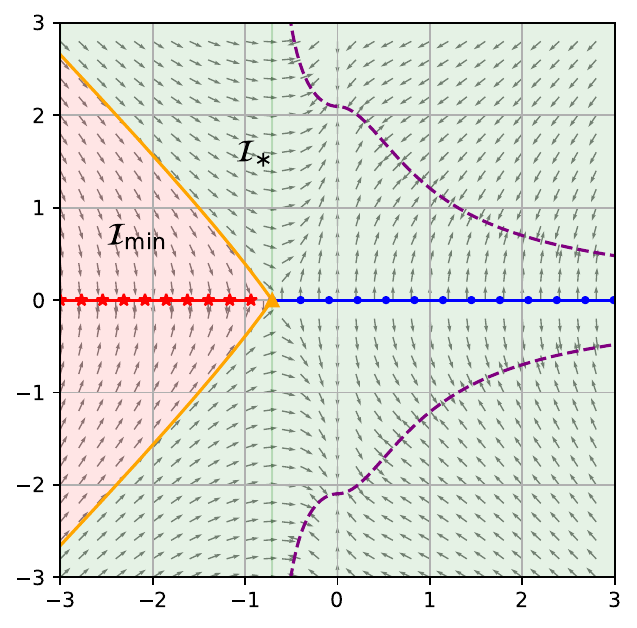}
    \put(-48,-4){\fontsize{6}{3}\selectfont $w \ \rightarrow$}
    \put(-96,48){\fontsize{6}{3}\selectfont $e$}
    \put(-96,56){\fontsize{6}{3}\selectfont $\uparrow$}
    \caption{Convergence basin ${(p\!+\!q\!<\!1)}$}
    \label{fig:regions1}
\end{subfigure}
\hfill
\begin{subfigure}[t!]{0.24\textwidth}
\centering
   \includegraphics[width=1.05\textwidth]{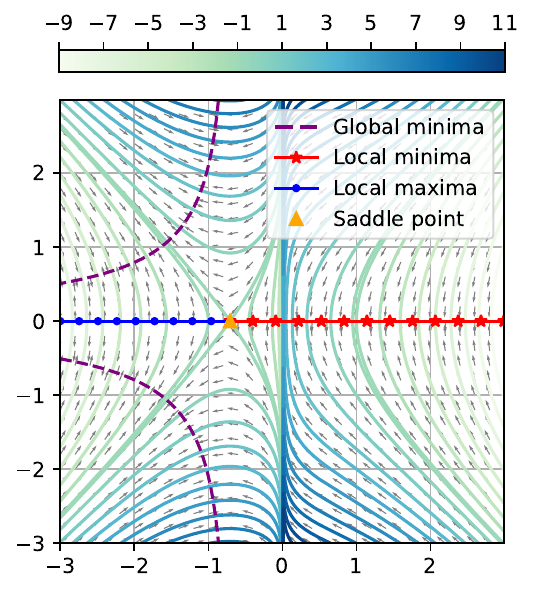}
    \put(-50,-3){\fontsize{6}{3}\selectfont $w \ \rightarrow$}
    \put(-100,50){\fontsize{6}{3}\selectfont $e$}
    \put(-100,58){\fontsize{6}{3}\selectfont $\uparrow$}
    \caption{Gradient flow $(p+q>1)$}
   \label{fig:energy2}
\end{subfigure}
\hfill
\begin{subfigure}[t!]{0.24\textwidth}
\centering
\vspace{1.5em}
   \includegraphics[width=\textwidth]{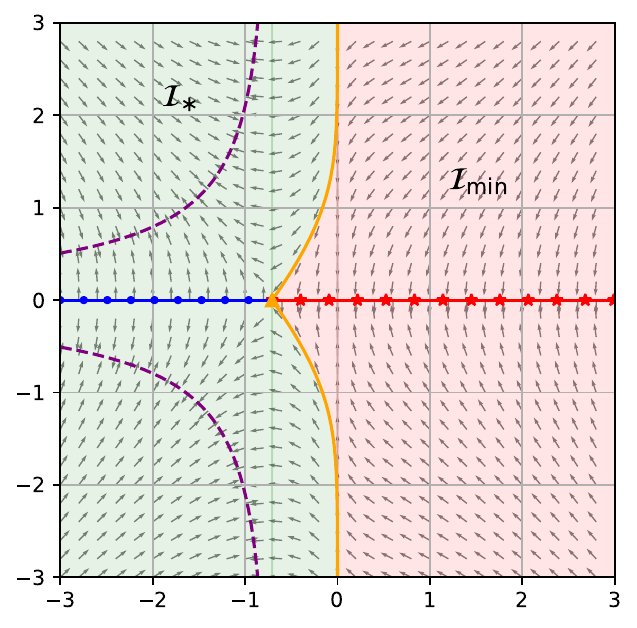}
    \put(-48,-4){\fontsize{6}{3}\selectfont $w \ \rightarrow$}
    \put(-96,48){\fontsize{6}{3}\selectfont $e$}
    \put(-96,56){\fontsize{6}{3}\selectfont $\uparrow$}
    \caption{Convergence basin ${(p\!+\!q\!>\!1)}$}
    \label{fig:regions2}
\end{subfigure}
\caption{Gradient flow dynamics and initialization effect for single-layer transformers. $(p,q)$ are Markov switching probabilities, and $(e,w)$ are the embedding and weight parameters (\prettyref{sec:prob}). (a), (c): The flow is aligned along energy contour lines, converging to local or global optima. (b), (d): $\globinit$ is the basin of convergence for global minima, $\mininit$ for the local minima, and yellow asymptotes for the saddle point. Notice the contrasting behavior for Gaussian initialization around origin for $p+q \lessgtr 1 $. \looseness=-1 }
\label{fig:gradient-flow}
\end{figure*}


{\bf Notation.} We denote scalars by italic lower case letters like $x,y$ and Euclidean vectors and matrices in bold: $\bx, \by, \bM$, etc. $\| \cdot \| $ denotes the $\ell_2$-norm for Euclidean vectors and Frobenius norm for matrices. $[k] \define \{1, \ldots, k \}$, and for a sequence $(x_n)_{n \geq 1}$, define $x_k^m \triangleq (x_k, \ldots, x_m)$ if $k \geq 1$ and $(x_1, \ldots, x_m)$ otherwise. For $z \in \reals$, the sigmoid $\sigma(z) \triangleq 1/(1+e^{-z})$, $\relu(z) \triangleq \max(0,z)$ and the convex logistic loss $\logistic(z) \define \log\pth{1+\exp(-z)} \in (0,\infty)$. 
For events $A$ and $B$, $\prob{A}$ denotes the probability of $A$ whereas $\prob{A \mid B}$ the conditional probability. Let $(x,y)$ be a pair of discrete random variables on $[k]\times[k]$ with the probability mass function (pmf) of $x$ being $\bp_x=(p_1,\ldots, p_k) \in [0,1]^k$. Then its Shannon entropy is defined as $H(x) = H(\bp_x) \define -\compsum{i}{k} p_i \log p_i$.  The conditional entropy is defined to be $H(y|x) \triangleq H(x,y) - H(x)$. The entropy rate of a stochastic process $\infiniteseq$ is defined as $\lim_{n\to\infty} H(x_1^n)/n$. We simply write $x = y$ to mean $\prob{x = y}=1$. We also use the shorthand $\prob{y= j \mid x}$ for $\prob{y= j \mid x=x}$ as a function of the random variable $x$. For $p \in (0,1)$, the binary entropy function $h(\cdot)$ is defined as $h(p) \triangleq  - p \log p - (1-p) \log (1-p)$.


\section{Problem Setting}
\label{sec:prob}
We formally define the problem setting for analysis of single-layer transformers with Markovian data, following \cite{makkuva2024attention}.

{\bf Input data.}
 We assume that the input word sequence $\{x_n\}_{n=1}^N \in \binary^N$ is a first-order time-homogenous Markov chain with a fixed kernel $\bP = (\bP_{ij})$. That is, the transition probability $\bP_{ij} \define \prob{x_{n+1} = j \mid x_{n} =  i } = \prob{x_{n+1} = j \mid x_n=i, \, x^{n-1}_1}   $, for any $ x_1^{n-1}, i, j \in \binary$. In particular, we consider $\bP = [1- p , p ; \, q , 1-q ]$ where $p = \bP_{01}= \prob{x_{n+1} = 1 \mid x_{n} =  0 }$ and $q= \bP_{10}= \prob{x_{n+1} = 0 \mid x_{n} =  1 }$ denote the switching probabilities from the states $0$ and $1$ respectively. We call $p+q$, the \emph{switching factor}. We assume that the process is already mixed, \ie $x_n \sim \bpi$ for all $n$, where $\bpi \define (\pi_0, \pi_1) = (q, p)/(p+q)$ is the stationary distribution satisfying $\bpi =\bpi \bP$. Succinctly, $(x_n)_{n \geq 1} \sim (\bpi, \bP)$. For this process, the entropy rate, $H(x_{n+1}|x_n) = \frac{1}{p+q} \, (q \, h(p) + p \, h(q) )$, and the entropy of the marginal, $H(x_n) = H(\bpi)$, are both constant in $n$.

{\bf Transformer architecture.} We consider a single-layer transformer with a single-head attention and ReLU non-linearity. Given an input sequence $\finiteseq$, it performs the following mathematical operations at each $n \in [N]$ to predict the next-token probability $f_{\btheta}(x_1^n) $:
\begin{align*}
  x_n \in \binary \xrightarrow{\text{\ref{eq:embedding}}} \bx_n  \xrightarrow{\text{\ref{eq:attention}}} \by_n \xrightarrow{\text{\ref{eq:ff}}} \bz_n \xrightarrow{\text{\ref{eq:logit}}} \logit
_n \xrightarrow{\text{\ref{eq:prediction}}} f_{\btheta}(x_1^n),
\end{align*}
where
\begin{align}
    \bx_n &= x_n \, \be +  {\bp}_n  \inrd, \tag{\small{Embedding}}     \label{eq:embedding}  \\
    \by_n &=  \bx_n +   \compsum{i}{n} \underbrace{\att_{n,i}}_{\in (0,1)} \cdot \bW_{V} \, \bx_i \inrd,  \tag{\small{Attention}}  \label{eq:attention} \\
    \bz_n &= \by_n + \bW_2 \, \relu(\bW_1 \, \by_n) \inrd, \tag{\small{Feed-forward}} \label{eq:ff} \\
    \logit_n &= \inner{\ba}{\bz_n} + b \hspace{6em}\inr, \tag{\small{Linear}} \label{eq:logit} \\
    f_{\btheta}(x_1^n) & \define \mathbb{P}_{\btheta}\pth{{x_{n+1}=1 \mid x_1^n}} = \sigma(\logit_n) \in [0,1]. \tag{\small{Prediction}} \label{eq:prediction}
\end{align}
Here $\btheta \define (\be, \{\bp_n\}_{n=1}^N, \ldots, \bW_1, \bW_2, b, \ba) \in \reals^D$ denotes the full list of the transformer parameters from the embedding layer till the linear layer (\S~\ref{app:transformer} details them). While the underlying data $\finiteseq$ is Markovian, \ie $\mathbb{P}\pth{{x_{n+1}=1 \mid x_1^n}} = \mathbb{P}\pth{{x_{n+1}=1 \mid x_n}}$, the transformer is agnostic to this fact and it can potentially utilize the full past $x_1^n$ in the \text{\ref{eq:attention}} layer, via the attention weights $\att_{n,i}$, to predict the next-symbol probability $\predprob = \mathbb{P}_{\btheta}\pth{{x_{n+1}=1 \mid x_1^n}}$. Note that it suffices to estimate the symbol $1$ probability as the vocabulary is binary.

{\bf Loss and training.} The transformer parameters $\btheta$ are usually initialized according to standard Gaussian distribution $\calN(0, \sigma^2 \bI)$ with a small variance $\sigma^2$ \cite{pagliardini-llm} and are trained using gradient-based methods to minimize the cross-entropy loss on the next-token prediction, \ie
\begin{align}
\min_{\btheta} L(\btheta), \quad L(\btheta) & \define -\frac{1}{N} \sum_{n \in \set N} \Expect_{x_1^{n+1}}[x_{n+1} \cdot \log f_{\btheta}(x_1^n) \, +  (1- x_{n+1}) \cdot \log (1-f_{\btheta}(x_1^n)) ].
    \label{eq:loss}
\end{align}
When the input sequence $\{x_n\}_{n=1}^N \sim (\bpi, \bP)$, the minimal loss equals its entropy-rate, \ie $L_\ast \define \min_{\btheta} L(\btheta) = H(x_{n+1}|x_n)$ \cite{cover1999elements}.


{\bf Loss landscape.} A key surprising observation in \cite{makkuva2024attention} is that the loss function $L(\cdot)$ admits both the global and local minima depending on the switching factor $p+q$ of the Markovian data, and the weight-tying of the embedding and linear weights ($\be = \ba$) of the transformer. In particular, they show that 
\begin{enumerate}[label={\upshape(\roman*)}]
    \item for all $(p,q) \in (0,1)^2$, there exists a global minimum $\globalmin$ for the loss $L$ such that its prediction matches the Markov kernel, \ie $ \pthetastar{x_{n+1}=1 \mid x_1^n} = \prob{x_{n+1}=1 \mid x_n }$.
    \item if $p+q>1$ and the weights are tied ($\be = \ba$), there exists a bad local minimum $\localmin$ for $L$ whose prediction equals the marginal, \ie $\pthetapi{x_{n+1}=1 \mid x_1^n} = \prob{x_{n+1}=1}$.
\end{enumerate}

In view of these results, we focus on the weight-tying scenario and hence let $\be = \ba$ to be a single parameter in $\reals^d$. Thus, $\btheta = (\be=\ba, \{\bp_n\}_{n=1}^N, \ldots, \bW_1, \bW_2, b)$. We interchangeably refer to $\btheta$ as both the transformer and the set of parameters. 

{\bf Our objective.} While the aforementioned results detail the static landscape of the loss, they do not characterize the learning dynamics on the loss surface and the effect of initialization, which plays a central role in training machine learning models \cite{arora2019convergence}. In view of these shortcomings, the main objective of this paper is to address the following question: 
\begin{quote}
\emph{(Q.1): Can we explain how the initialization and learning dynamics affect the convergence of the transformer parameters $\btheta$ to the local or global optima?}

\end{quote}

\section{Canonical Low-rank Parameterization}
\label{sec:cannoli}

{\bf Motivation.} Given the complexity of the transformer architecture and the non-convex loss function, it is challenging to analyze the learning dynamics directly \cite{nic2024trans,edelman2024evolution}. To tackle this, we capitalize on the following empirical observation \cite{makkuva2024attention} which is the motivating idea behind our approach: when trained by gradient-based methods, the weight matrices $(\bW_V, \ldots, \bW_1, \bW_2)$ at the optima $\globalmin$ and $\localmin$ exhibit \emph{rank-one} structure, whose eigenvector is the same direction in which the both the token embedding $\be$ and the positional embeddings $\bp_n$ are all aligned in. Interestingly, such low-rank solutions can also be shown to be theoretically optimal (\S~\ref{app:low-rank} details these structures). 
While these observations illustrate the implicit bias towards low-rank solutions at the final convergence, a natural question arises: \emph{if we initialize with low-rank parameters, will they remain low-rank during training?} In \prettyref{sec:empiri_lowrank}, we affirmatively address this  based on a thorough empirical evaluation for single-layer transformers and inspired by these empirical phenomena, without loss of generality, we restrict our attention to these low-rank manifolds to characterize the learning dynamics. 

{\bf Parameterization.} More specifically, we consider a special low-rank parameterization that is empirically observed and capitalize on it to address \emph{(Q.1)}. Interestingly, along this low-rank manifold, it suffices to consider a reduced set of parameters $\btheta \in \reals^2$ or $\btheta \in \reals^3 $ given by:
\begin{align}
 \btheta = (e, w) \in \reals^2, \, \,  \text{or} \, \, \btheta = (e,w,a) \in \reals^3.  \tag{\small{Reparameterization}}  \label{eq:reparam}
\end{align}
Here $e$ denotes the \emph{embedding} scalar, $w$ the \emph{weight}, and $a$ the \emph{attention} parameter respectively. Now we describe the parameterization of the transformer vis-\'a-vis these scalars and refer to \S~\ref{app:reparam} for a more detailed descripton. Let the input $\finiteseq$ be a first-order Markov chain as in \prettyref{sec:prob} and let $n \in [N]$ be fixed. Then we have
\begin{align*}
    \text{\ref{eq:embedding}}: \be = e \cdot \balpha, \, \bp_n = \pth{-\frac{e}{2}} \cdot \balpha \rightarrow \bx_n = e\pth{x_n - \frac{1}{2}}  \balpha, \quad e \in \reals, \balpha \in \{\pm 1\}^d/\sqrt{d}, \\
    \text{\ref{eq:attention}}: \bW_V = \balpha \, \bv^\top
    \rightarrow \by_n = e\pth{x_n - \frac{1}{2}}  \balpha + \underbrace{\inner{\bv}{\balpha}}_{ \define a \approx 0} \pth{\compsum{i}{n} \att_{n,i} \cdot  e\pth{x_i - \frac{1}{2}}} \balpha,  \bv \in \reals^d.
\end{align*}

The scalar $a$ is the product of $\inner{\bv}{\balpha}$ and the scaling in the attention weights $\att_{n,i}$, which is empirically close to zero for first-order Markov chains. Hence for the ease of exposition, we first let $a=0$ and analyze the general case when $a \in \reals$ in \prettyref{sec:learn_dyna_attn}. We continue:
\begin{align*}
    \text{\ref{eq:ff}}: \bW_1 = \frac{|w|}{\sqrt{d}}\, \one \, \balpha^\top,  \bW_2 = \frac{w}{\sqrt{d}} \, \balpha \, \one^\top \rightarrow \bz_n = e\pth{x_n - \frac{1}{2}}\pth{1+ 4 w |w| x_n} \balpha, w \in \reals.
\end{align*}
$\one$ is the all-one vector in $\reals^r$ with $r= 4d$ typically in practice. Substituting this $\bz_n$ in the linear layer with $\be=\ba$ and bias $b \in \reals$, the logits and the probabilities simplify to:
\begin{align}
    \text{\ref{eq:logit}}: \logit_n(e,w,b) &= e^2 (1+ 2 w |w|)\, x_n + b-\frac{e^2}{2} \in \reals, \label{eq:logit_cannoli} \\
     \text{\ref{eq:prediction}} : f_{(\btheta,b)}(x_1^n) &= \sigma \pth{\logit_n} \in (0,1), \quad \btheta \define (e,w). \label{eq:pred_cannoli} 
\end{align}



Finally, using the equivalence between the cross-entropy loss and the logistic loss $\logistic(\cdot)$, the loss function in \prettyref{eq:loss} can be compactly written as (\prettyref{lmm:loss_rewrite}):
\begin{align}
      L(\btheta, b) &= \frac{1}{N} \sum_{n \in \set N} \Expect[ \logistic \pth{ (2x_{n+1}-1) \cdot \logit_n(\btheta) } ], \quad 
      \btheta \in \reals^2, b \in \reals. \label{eq:loss_cannoli_almost}
\end{align}
Due to convexity of $\logistic(\cdot)$, it follows that $L(\btheta, b)$ is convex in the bias $b$ for any fixed $\btheta$, whose minimizer, $b_\ast(\btheta) = \argmin_{b \in \reals} L(\btheta, b) $, has a closed form expression (\prettyref{lmm:optimal_bias}). Hence, without loss of generality, we consider the loss with this optimal bias $b_\ast$:
\begin{align}
     L(\btheta) \define L(\btheta, b_\ast) = \frac{1}{N} \sum_{n \in \set N} \Expect \qth{ \logistic \pth{ (2x_{n+1}-1)\pth{e^2 (1+ 2 w |w|)\, x_n + b_\ast -\frac{e^2}{2}} } }.
    \label{eq:loss_cannoli}
\end{align}
Empirically, this roughly translates to running the gradient-based algorithm for the bias for more steps at each 
$\btheta$. In practice, one additional step is usually sufficient (see \prettyref{sec:empirical}).
\prettyref{eq:loss_cannoli} resembles the standard logistic regression loss \cite{soud2018implic} whose binary labels are $2x_{n+1} - 1 \in \{\pm 1\}$ and the logits given by $ e^2 (1+ 2 w |w|)\, x_n + b_\ast -{e^2}/{2} $, for each $n \in [N]$. The key difference here is that the logits are a non-linear function of the parameters $(e,w)$ unlike in the standard setting.


\subsection{Loss Landscape with Canonical Parameterization}
\label{sec:landscape_cannoli}

With the new set of parameters $\btheta = (e, w) \in \reals^2$, we are now ready to analyze the loss $L(\cdot)$ in \prettyref{eq:loss_cannoli}. First we recall the definition of a critical point \cite{lee2016gradient}. A point $\globalmin \in \reals^2$ is a critical or a stationary point for $L$ if $\nabla L(\globalmin)=0$. A critical point $\globalmin$ is a \emph{local minimum} if there exists a neighborhood $U$ around $\globalmin$ such that $L(\globalmin) \leq L(\btheta)$ for all $\btheta \in U$, and a \emph{local maximum} if $L(\globalmin) \geq L(\btheta)$. If the neighborhood $U$ is whole of $\reals^2$, it is a \emph{global minimum/maximum}. On the other hand, a critical point is a saddle point if for all neighborhoods $U$ around $\globalmin$, there are $\btheta_1, \btheta_2 \in U$ such that $L(\btheta_1) \leq L(\globalmin) \leq L(\btheta_2)$.


\prettyref{thm:all_character} below provides a complete characterization of the loss landscape in terms of the aforementioned critical points.  

\begin{theorem}[All critical points] 
\label{thm:all_character}
Let the input sequence be $\finiteseq \sim (\bpi, \bP)$, the transformer parameters $\btheta = (e,w) \in \reals^2$, and the next-token prediction loss $L(\cdot)$ be as in \prettyref{eq:loss_cannoli}. Then for any $(p,q) \in (0,1)^2$ with $p+q \neq 1$ and $N \in \naturals$, 
    \begin{enumerate}[label={\upshape(\roman*)}, nosep,leftmargin=16pt,itemsep=2pt]

    \item the set of all global minima is given by
        \begin{align}
            \globalset(p,q) &\define \sth{(e, w) \in \reals^2 : e^2(1+2w|w|) = \log \frac{(1-p)(1-q)}{pq} },
            \label{eq:globalset}
        \end{align}
   \item the set of all local minima is given by
        \begin{align}
                \localset(p,q) &\define \sth{(e, w) \in \reals^2 : e = 0, \, (p+q-1)(1+2w|w|) > 0 },
                \label{eq:localset}
        \end{align}
    \item the set of all local maxima is given by
        \begin{align}
            \maxset(p,q) &\define \sth{  (e, w) \in \reals^2 : e = 0, \, (p+q-1)(1+2w|w|) < 0 },
            \label{eq:maxset}
        \end{align}
    \item and the set of all saddle points is 
    \begin{align}
    \saddleset(p,q) &\define 
             \sth{ (0, - {1}/{\sqrt{2}}) }.
    \label{eq:saddleset}
    \end{align}
        
   \end{enumerate}
Thus the set of all critical points is
\begin{align}
       \sth{\btheta \in \reals^2: \nabla L(\btheta) = 0} = \globalset \cup \localset \cup \maxset \cup \saddleset.
\end{align}
In addition, for any $\globalmin \in \globalset, \localmin \in\localset$, $\localmax \in \maxset$, and $\saddle \in \saddleset$, the loss values satisfy
\begin{align*}
H(x_{n+1} \mid x_n) = L(\globalmin) < L(\localmin) = L(\localmax) = L(\saddle) = H(x_{n+1}).    
\end{align*}
\end{theorem}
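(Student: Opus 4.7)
My plan is to exploit a drastic simplification: after absorbing $-e^2/2$ into the bias via $c \define b - e^2/2$, the logit in \prettyref{eq:loss_cannoli} equals $c$ when $x_n = 0$ and $c + \mu$ when $x_n = 1$, where
\[ \mu(e,w) \define e^2\,(1 + 2w|w|). \]
Splitting the stationary expectation on $x_n$, the loss (with free $c$) rewrites as $f_0(c) + f_1(c + \mu)$ for two strictly convex logistic terms with weights $\pi_0, \pi_1$ and targets $p, 1-q$. Minimizing over $b$ (via \prettyref{lmm:optimal_bias}) yields
\[ L(e,w) \;=\; g\bigl(\mu(e,w)\bigr), \qquad g(\mu) \define \min_{c \in \reals} \bigl[f_0(c) + f_1(c + \mu)\bigr]. \]
A Hessian computation for $(c,\mu) \mapsto f_0(c) + f_1(c + \mu)$ (whose $2 \times 2$ Hessian has positive determinant $f_0'' f_1''$) gives strict convexity of $g$, with unique minimizer $\mu_\ast = \log\tfrac{(1-p)(1-q)}{pq}$, values $g(\mu_\ast) = H(x_{n+1}|x_n)$ and $g(0) = H(\bpi) = H(x_{n+1})$, and envelope derivative $g'(0) = f_1'(b_0) = \gamma$ at $b_0 = \log(p/q)$, where $\gamma \define \frac{pq(p+q-1)}{(p+q)^2}$ has sign $\mathrm{sgn}(p+q-1)$.

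\textbf{Enumerate critical points.} With $L = g \circ \mu$, the chain rule gives
\[ \nabla L(e,w) = g'(\mu)\,\nabla \mu, \qquad \nabla \mu = \bigl(2 e\, s(w),\ 4 e^2 |w|\bigr), \quad s(w) \define 1 + 2 w |w|. \]
Hence $\nabla L = 0$ iff $g'(\mu) = 0$ or $\nabla \mu = 0$. The first case gives $\mu(e,w) = \mu_\ast$, which is exactly $\globalset$ in \prettyref{eq:globalset}. The second system forces $e\, s(w) = 0$ and $e^2 |w| = 0$; parsing it yields either $e = 0$, contributing the entire line $\{(0, w) : w \in \reals\}$, or $e \neq 0$ and $w = 0$, in which case the first component $\partial_e L = 2 e\, g'(e^2)$ still forces $g'(e^2) = 0$, i.e.\ $e^2 = \mu_\ast$, so the point falls back into $\globalset$. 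Therefore the critical set equals $\globalset \cup \{(0, w) : w \in \reals\}$; since $p + q \neq 1$ implies $\mu_\ast \neq 0$ while every point on $\{e = 0\}$ has $\mu = 0$, the two parts are disjoint.

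\textbf{Classify the line $\{e = 0\}$.} Along $\{e = 0\}$, $L \equiv g(0) = H(\bpi)$, so each point is a weak extremum, and I determine its type from the sign of $L - L_0$ nearby. Smoothness of $g$ at $0$ yields, uniformly for $(e, w)$ in a neighborhood of any fixed $(0, w_0)$,
\[ L(e, w) - L(0, w_0) = g\bigl(e^2 s(w)\bigr) - g(0) = \gamma \cdot e^2 s(w) + O(e^4). \]
When $s(w_0) \neq 0$, this leading term has sign $\mathrm{sgn}\bigl(\gamma\, s(w_0)\bigr) = \mathrm{sgn}\bigl((p+q-1)(1+2w_0|w_0|)\bigr)$, matching the dichotomy in \prettyref{eq:localset}--\prettyref{eq:maxset}; combined with the flat direction along $e = 0$, this certifies the (weak) local min/max classification. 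The loss-value claims $L(\globalmin) = H(x_{n+1}|x_n) < L(\localmin) = L(\localmax) = L(\saddle) = H(x_{n+1})$ follow from the values of $g$ at $\mu_\ast$ and $0$.

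\textbf{Main obstacle.} The genuine difficulty is the saddle point $(0, -1/\sqrt{2})$: here $s(w_0) = 1 + 2(-1/\sqrt{2})(1/\sqrt{2}) = 0$, so the leading $O(e^2)$ coefficient vanishes and the above sign argument is inconclusive. The $g(\mu)$ reduction again saves the day: expanding $s(-1/\sqrt{2} + \delta) = 2\sqrt{2}\,\delta + O(\delta^2)$ gives $\mu \approx 2\sqrt{2}\, e^2 \delta$, so $L - L_0 \approx 2\sqrt{2}\,\gamma\, e^2 \delta$ changes sign with $\delta$ and takes both signs in every neighborhood, certifying the saddle behavior. A minor ancillary concern is that $w \mapsto w|w|$ is $C^1$ but not $C^2$ at $w = 0$; this is harmless, since all second-order information is carried by the smooth scalar profile $g$, bypassing any direct Hessian analysis in the original coordinates.
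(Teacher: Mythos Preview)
Your proof is correct and takes a genuinely different route from the paper's. The paper works first in $\reals^3$ with the bias present: it computes the gradient and Hessian of $L(\btheta,b)$ explicitly (Lemmas~\ref{lmm:grad_comp_bias}--\ref{lmm:hess_comp_bias}), classifies all critical points there (Thms.~\ref{thm:all_station_bias}--\ref{thm:all_character_bias}), and then projects to $\reals^2$ via the optimal bias using Danskin's theorem for the gradient and a Schur-complement identity for the reduced Hessian (\prettyref{lmm:hess_comp}); the min/max/saddle classification on $\{e=0\}$ is read off from the eigenvalue signs of that Hessian, supplemented by a continuity argument imported from \cite{makkuva2024attention} to handle the degenerate $w$-direction. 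You instead observe from the outset that the logit depends on $(e,w,b)$ only through $\mu = e^2(1+2w|w|)$ and $c = b - e^2/2$, collapsing the loss to a one-dimensional strictly convex profile $L = g\circ\mu$; the chain rule then yields the critical set directly, strict convexity of $g$ gives the global-minimum locus and the loss ordering, and a first-order Taylor expansion of $g$ at $0$ (with the explicit sign $\mathrm{sgn}\,g'(0) = \mathrm{sgn}(p+q-1)$) classifies the entire line $\{e=0\}$ and the saddle without ever writing down a Hessian in the $(e,w)$ coordinates. Your argument is shorter, more conceptual, and cleanly sidesteps the $C^2$ failure of $w\mapsto w|w|$ at the origin; the paper's explicit gradient and Hessian formulas, on the other hand, are reused verbatim in the gradient-flow analysis (\S~\ref{app:gf_analysis}) and its computational template carries over to the attention parameterization (\S~\ref{app:gf_attention}), where the loss no longer factors through a single scalar and your one-dimensional reduction is unavailable.
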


\begin{proof}
    We refer to \S~\ref{app:all_character}.
\end{proof}

\prettyref{fig:gradient-flow} illustrates the loci of these critical points for $p+q<1$ and $p+q>1$. Motivated by empirical observations, while \cite{makkuva2024attention} characterizes local minima for $p+q>1$, it is interesting to note that our \prettyref{thm:all_character} shows that local minima also exist for $p+q <1$ (\prettyref{eq:localset} and \prettyref{fig:energy1}). So why did they find the minima only when $p+q >1$? The answer to this, and more broadly to question \emph{(Q.1)} lies in the learning dynamics and initialization for $\btheta$, which we study in the next section.

\section{Learning Dynamics}
\label{sec:learn_dyna}

Capitalizing on the loss landscape in terms of the critical points in \prettyref{thm:all_character}, we now focus on the convergence of gradient-based algorithms to these points. In this regard, we analyze the dynamics of the gradient-flow (GF), which can be viewed as a continuous-time analogue of gradient-descent \cite{bach20}. The gradient-flow of the parameters, $(\btheta_t)_{t \geq 0}$, on $L$ is governed by
\begin{align}
    \odv{\btheta_t}{t} =    -\nabla L(\btheta_t), \quad \btheta_t = (e_t, w_t)  \inr{2}, \, t \geq 0,
    \tag{GF} 
    \label{eq:gflow}
\end{align}
where $\btheta_t \define \btheta(t) $ is a $C^1$ (continuously differentiable) curve in $\reals^2$ starting with a randomly initalized $\btheta_0$. To characterize these trajectories, we define an \emph{energy function} $\energy(\cdot, \cdot)$, which plays a crucial in the GF dynamics. It is defined as
\begin{align}
    \energy(e,w) \define e^2 - (w^2 + \mathrm{sign}(w) \cdot \log |w|), \quad \forall(e,w) \in \reals^2 \setminus \yaxis,
    \label{eq:energy}
\end{align}
 where $\yaxis \define \{ (e, w=0) \}$. Note that $\energy$ is well-defined and finite for all the points in its domain. On the other hand, $ \lim_{w \rightarrow 0^{-}} \energy(e,w) = -\infty$ whereas $\lim_{w \rightarrow 0^{+}} \energy(e,w) = \infty$ for any fixed $e \in \reals$. Thus the $\yaxis$ corresponding to $w=0$ serves as an energy barrier for the flow. Figs.~\ref{fig:energy1} and \ref{fig:energy2} illustrate this by visualizing the energy contour lines. The utility of the energy function is captured in the following lemma. 

\begin{lemma}[Constant energy along the flow]
    For any $(p,q) \in (0,1)^2$ and initialization $\btheta_0 =(e_0, w_0) \in \reals^2 $, let $(\btheta_t)_{t \geq 0}$ be the corresponding \ref{eq:gflow} trajectory starting from $\btheta_0$. If $ \thetainit \in \woyaxis $, the energy stays constant along the trajectory, \ie
    \begin{align}
        \energy(\btheta_t) = e_t^2 - (w_t^2 + \mathrm{sign}(w_t) \cdot \log |w_t|) = \energy(\btheta_0), \quad \forall t \geq 0.
        \label{eq:energy_constant}
    \end{align}
    On the other hand, if $\thetainit \in \yaxis$, we have that $\btheta_t \in \yaxis$  for all $t \geq 0$ with $w_t = w_0= 0$, \ie if we initialize on the $\yaxis$, the trajectory always stays there.
    \label{lmm:energy_constant}
\end{lemma}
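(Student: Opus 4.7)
I would split the proof into two cases depending on whether $\thetainit$ lies on $\yaxis$ or not. For the energy-conservation statement (Case~1) I would differentiate $\energy$ along the \ref{eq:gflow} trajectory via the chain rule and show that $\nabla\energy\cdot\nabla L$ vanishes, crucially using the envelope-theorem identity for the optimal bias $b_\ast$. For the $\yaxis$-invariance statement (Case~2) I would verify directly that $-\nabla L$ is tangent to $\yaxis$ and invoke Picard--Lindel\"of uniqueness.

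\textbf{Case 1: $\thetainit \in \woyaxis$.} By the chain rule, $\frac{d}{dt}\energy(\btheta_t) = -\nabla\energy(\btheta_t)\cdot\nabla L(\btheta_t)$. Differentiating \prettyref{eq:energy} with a short case-split on $\mathrm{sign}(w)$ gives $\partial_e\energy = 2e$ and $\partial_w\energy = -2w - 1/|w|$ for $w \neq 0$. For $\nabla L$ at the optimal bias, the envelope theorem lets me hold $b=b_\ast$ fixed when differentiating in $\btheta$, since $b_\ast(\btheta)$ satisfies $\partial_b L(\btheta, b_\ast)=0$. Writing $\logit_n = e^2(1+2w|w|)x_n + b_\ast - e^2/2$ and setting $R_n \define \logistic'(u_n)(2x_{n+1}-1)$ with $u_n = (2x_{n+1}-1)\logit_n$, I compute $\partial_e\logit_n = 2e(1+2w|w|)x_n - e$ and $\partial_w\logit_n = 4e^2|w|x_n$. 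Substituting into $\nabla L = \frac{1}{N}\sum_n \Expect[R_n\,\nabla\logit_n]$ and collecting terms in $-\nabla\energy\cdot\nabla L$, all $x_n$-dependent pieces cancel in pairs between the $e$- and $w$-components (the $\mp 8 e^2 w|w| x_n$ terms cancel, as do the $\mp 4 e^2 x_n$ terms), leaving
\[
 \frac{d}{dt}\energy(\btheta_t) \;=\; \frac{2 e_t^2}{N}\sum_{n \in \set N}\Expect[R_n] \;=\; 2 e_t^2 \cdot \partial_b L(\btheta_t, b_\ast) \;=\; 0.
\]
Finally, since $\energy(e,w) \to \pm\infty$ as $w \to 0^{\pm}$ for any fixed $e$, the finite conserved value $\energy(\thetainit)$ forbids the continuous curve $\btheta_t$ from reaching $\yaxis$, so the trajectory stays in a single half-plane and \prettyref{eq:energy} remains well-defined along the flow.

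\textbf{Case 2: $\thetainit \in \yaxis$.} Here $\partial_w\logit_n$ carries the factor $|w|$, so $\partial_w L(e,0) \equiv 0$ for every $e \in \reals$. Thus $-\nabla L$ is tangent to $\yaxis$, and the restricted flow collapses to the one-dimensional ODE $\dot e_t = -\partial_e L(e_t, 0)$ with $w_t \equiv 0$. Because $|w|$ and $w|w|$ are globally Lipschitz, $\nabla L$ is Lipschitz near $\yaxis$, so Picard--Lindel\"of uniqueness for \ref{eq:gflow} forces any solution starting from $\thetainit$ to coincide with this restricted one, giving $w_t = w_0 = 0$ for all $t \geq 0$.

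\textbf{Main obstacle.} The technical heart is the algebraic cancellation in Case~1: both $\nabla\energy$ and $\nabla L$ depend nontrivially on $w|w|$ and $|w|$, and the $x_n$-dependent pieces across the $e$- and $w$-components must cancel exactly before the envelope-theorem identity can finish the job. The specific form of $\energy$ in \prettyref{eq:energy} --- in particular the combination $w^2 + \mathrm{sign}(w)\log|w|$ --- is engineered precisely so that $\partial_w\energy \cdot \partial_w\logit_n$ telescopes with $\partial_e\energy \cdot \partial_e\logit_n$, leaving a residue killed by the first-order optimality of $b_\ast$.
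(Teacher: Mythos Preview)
Your proof is correct and rests on the same structural fact as the paper's, but you package it differently, and the comparison is worth recording. The paper first appeals to \prettyref{lmm:grad_comp} to write $\dot e$ and $\dot w$ as a common scalar $\Expect[(f_1X+f_2)X]$ times $-2(1+2w|w|)e$ and $-4e^2|w|$ respectively, then divides the two ODEs to obtain $e\,\dot e = (w + 1/(2|w|))\,\dot w$, and integrates to \emph{discover} the conserved quantity $\energy$. You instead take $\energy$ as given and verify $\tfrac{d}{dt}\energy = -\nabla\energy\cdot\nabla L = 0$ directly, without first simplifying $\nabla L$ via the common factor: the $x_n$-dependent pieces cancel in pairs and the residue $2e_t^2\,\partial_b L(\btheta_t,b_\ast)$ vanishes by the first-order optimality of $b_\ast$. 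Your route makes the role of the envelope theorem explicit and avoids dividing by the (possibly vanishing) factor $\Expect[(f_1X+f_2)X]$; the paper's route has the advantage of showing how the specific form of $\energy$ arises. Your barrier argument (finite $\energy_0$ forbids $w_t\to 0^\pm$) and your Picard--Lindel\"of invocation in Case~2 are slightly more careful than the paper's corresponding one-line remarks.
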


We are now ready to present the main results of our paper. Specifically, \prettyref{thm:gf_highswitch} and \prettyref{thm:gf_lowswitch} highlight the role of the switching factor of the Markovian data, $p+q$, and the parameter initialization, $\btheta_0$, in deciding whether the GF converges to local optima or  global optima. First we define the energy value $\energy_{\mathrm{sad}} \define \energy(e=0, w=-1/\sqrt{2}) = -(1+\log 2)/2$. 


\begin{theorem}[GF dynamics for $p+q>1$] 
\label{thm:gf_highswitch}
Let $(p,q) \in (0,1)^2$ with $p+q>1$, the input sequence be $\finiteseq \sim (\bpi, \bP)$, and $(\btheta_t)_{t \geq 0}$ be the corresponding \ref{eq:gflow} trajectory starting from $\btheta_0$. Then for all initializations $\btheta_0 \in \reals^2$, the gradient flow converges to a critical point of the loss $L$. That is, there exists a $\thetalim \in \reals^2$ such that $\lim_{t \diverge} \btheta_t = \thetalim $ and $\nabla L(\thetalim) = 0$. In particular, $\thetalim$ is a

    \begin{enumerate}[label={\upshape(\roman*)}, nosep,leftmargin=16pt,itemsep=2pt]

    \item a local minimum if 
    \begin{align*}
    \begin{split}
        \thetainit & \in \calI_{\mathrm{min}} \define  \sth{(e, w) :  w \in (- {1}/{\sqrt{2}},0), \, e \in (- g(w), g(w)), \, g(w) = \sqrt{w^2 - \log(-w) + \energy_{\mathrm{sad}}}  }  \\
        & \hspace{3.5em} \cup \sth{(e, w) : w \geq 0 },
    \end{split}
    \end{align*}

 \item a saddle point if $
        \thetainit  \in \calI_{\mathrm{sad}} \define  \sth{(e, w) :  w \in [- {1}/{\sqrt{2}},0), \, e = \pm \sqrt{w^2 - \log(-w) + \energy_{\mathrm{sad}}}  } $, 

    \item a local maximum if $
        \thetainit  \in \calI_{\mathrm{max}} \define  \sth{(e, w) : e=0, \, w < - 1/\sqrt{2} }$, 
   
    \item and a global minimum if $\thetainit  \in \calI_{\ast} \define \reals^2 \setminus \pth{\calI_{\mathrm{min}}  \cup \calI_{\mathrm{sad}} \cup \calI_{\mathrm{max}}} $. 
   \end{enumerate}

    Consequently, when $p+q>1$, if we use the standard initialization $\thetainit \sim \calN(0, \sigma^2 \bI_2)$ with $\sigma^2  \ll {1}/{\sqrt{2}}$, $\thetalim$ will be a local minimum with high probability. If $p+q <1$, under the same initialization scheme, $\thetalim$ will be a global minimum with high probability.
   
\end{theorem}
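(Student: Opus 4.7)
My proof plan combines two observations about the flow. First, by \prettyref{lmm:energy_constant} the energy $\energy$ is conserved, so every trajectory of \ref{eq:gflow} is confined to a single level curve $\{\energy = \energy(\thetainit)\}$. Second, $L$ is a Lyapunov function along the flow: $\odv{L(\btheta_t)}{t} = -\|\nabla L(\btheta_t)\|^2 \leq 0$. Combined with the complete classification of critical points in \prettyref{thm:all_character}, this reduces the dynamics to an effectively one-dimensional flow on each level curve whose $\omega$-limit (after ruling out escape to infinity) must be a critical point. The task becomes to identify, for each starting condition, which critical point is reached.

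Next I would chart the level sets of $\energy(e,w) = e^2 - (w^2 + \mathrm{sign}(w)\log|w|)$. On $\{w>0\}$, writing $\psi(w) \define -w^2 - \log w$ (strictly decreasing from $+\infty$ to $-\infty$), each level set has a single U-shaped branch opening rightward. On $\{w<0\}$, writing $\phi(w) \define -w^2 + \log(-w)$, a quick calculus check gives $\phi$ a unique maximum $\phi(-1/\sqrt{2}) = -(1+\log 2)/2 = \energysad$ with $\phi \to -\infty$ at both $w\to 0^-$ and $w\to -\infty$. Hence for $c<\energysad$ the $\{w<0\}$ portion splits into an \emph{inner} arc ($w\in (w_R(c),0)$) and an \emph{outer} arc ($w<w_L(c)<-1/\sqrt{2}$); for $c=\energysad$ these arcs join at the saddle $(0,-1/\sqrt{2})$; for $c>\energysad$ they merge into a single component wrapping the saddle. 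The condition $\energy(e,w)<\energysad$ with $w\in(-1/\sqrt{2},0)$ rearranges exactly to $e^2 < w^2 - \log(-w) + \energysad = g(w)^2$, which is how the inner arc appears in the definition of $\calI_{\mathrm{min}}$.

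Using \prettyref{thm:all_character} with $p+q>1$, the critical points on each arc are: one local minimum from $\localset$ at the tip of the $w>0$ U-branch (on $\{e=0\}$); one local minimum on the inner $w<0$ arc at $w=w_R(c)$ (for $c<\energysad$); one local maximum from $\maxset$ at the tip of the outer arc (for $c<\energysad$); and two global minima from $\globalset$ on the outer arc, one on each symmetric $e$-half, whose existence follows because $\energy|_{\globalset}$ is continuous and sweeps all of $\reals$ as $w$ traverses $(-\infty,-1/\sqrt{2})$. On each arc I would then apply a standard one-dimensional phase-line argument: between consecutive critical points, $L$ is strictly monotone (checked directly from \prettyref{eq:loss_cannoli}), so the trajectory converges to the unique critical point toward which $-\nabla L$ points. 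Matching starting conditions to arcs recovers the four basins: the inner arcs and the $w>0$ branch form $\calI_{\mathrm{min}}$; the critical-energy curve $\{e=\pm g(w),\, w\in[-1/\sqrt{2},0)\}$ is the saddle's stable manifold, giving $\calI_{\mathrm{sad}}$; the axis $\{e=0\}$ is fixed pointwise (because $L(0,\cdot)$ is constant in $w$ since the logit at $e=0$ collapses to the bias, so $\nabla L\equiv 0$ there), and its ray $w_0<-1/\sqrt{2}$ is $\calI_{\mathrm{max}}$; everything else lies on an outer arc and converges to a global minimum, giving $\calI_{\ast}$.

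Finally, for the Gaussian-initialization corollary, when $\sigma^2\ll 1/\sqrt{2}$, the draw $(e_0,w_0)\sim\calN(\zero,\sigma^2\bI_2)$ satisfies $w_0\geq 0$ with probability $1/2$ (automatically in $\calI_{\mathrm{min}}$), and for $w_0\in(-1/\sqrt{2},0)$, since $g(w_0)^2 = w_0^2 - \log(-w_0) + \energysad \to +\infty$ as $w_0\to 0^-$, the bound $|e_0|<g(w_0)$ holds with probability $\to 1$ as $\sigma\to 0$; a union bound gives $\prob{\thetainit\in\calI_{\mathrm{min}}}\to 1$. The dual claim for $p+q<1$ is the content of \prettyref{thm:gf_lowswitch}, whose parallel level-set analysis places the global minima near the origin. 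The main technical obstacle is the phase-line step on the \emph{unbounded} outer arcs: I must rule out $\omega$-limits at infinity by verifying that $\|\nabla L\|$ stays uniformly bounded below outside any neighborhood of the critical set on each arc, which follows from the explicit gradient of \prettyref{eq:loss_cannoli} together with the coercivity $L\to +\infty$ as $|w|\to\infty$ along the arc (the sigmoid saturates on the ``wrong'' side of one of the two transitions).
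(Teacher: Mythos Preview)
Your approach is correct and largely parallels the paper's, with a different convergence mechanism. Both proofs hinge on energy conservation (\prettyref{lmm:energy_constant}) to reduce to one-dimensional dynamics on level curves, and on \prettyref{thm:all_character} to locate the critical points on each curve. Where you diverge: the paper obtains convergence by invoking {\L}ojasiewicz's theorem for real-analytic functions, after first proving boundedness via an explicit ``the vector field points inward for large $\|\btheta\|$'' calculation (see \prettyref{lmm:gf_convergence}); you instead argue directly from the one-dimensional phase portrait plus coercivity of $L$ along each arc. Your route is more elementary (no {\L}ojasiewicz), and your identification of the saddle's stable manifold via the loss ordering $L(\saddle)>L(\globalmin)$ is cleaner than the paper's explicit sign computation of $(\dot e,\dot w)$ near $\sadeval$. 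Conversely, the paper's inward-pointing argument is more self-contained than your coercivity claim $L\to+\infty$ along the unbounded outer arcs, which you assert but do not verify; checking it requires tracking the asymptotics of the optimal bias $b_\ast(\btheta)$ from \prettyref{lmm:optimal_bias} as $A=\exp(e^2(1+2w|w|))\to 0$ and then seeing that one of the two conditional predictions saturates on the wrong side.

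One gap to flag: you do not treat $w_0=0$, where $\energy$ is undefined and your level-curve argument does not apply. The paper handles this case separately: from $\partial_w L\propto e^2|w|$ the flow stays on $\{w=0\}$, and an explicit monotonicity argument for $e_t$ (in the proof of \prettyref{lmm:gf_convergence}) drives $e_t\to 0$, so $\thetalim$ is the origin, a local minimum for $p+q>1$. Since $\{w=0\}\subset\calI_{\mathrm{min}}$ in the statement, you need this case to complete the partition.
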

\begin{proof}[Proof sketch]
The main idea behind the proof is to show that if we do not initialize on the $\yaxis$, the flows stays on the constant energy contour (\prettyref{lmm:energy_constant}) and hence converges to a critical point of the loss $L$, which is at the intersection of the contour line and the set of critical points (Lemmas.~\ref{lmm:gf_convergence} and \ref{lmm:energy_lim}).  By determining where these intersections occur, the corresponding basins of convergence $\mininit, \ldots, \calI_{\ast} $ are obtained by showing that an initialization in a specific set leads to the said critical point (\prettyref{thm:all_character}). The proof for $\btheta_0 \in \yaxis$ is similar.  \looseness=-1
\end{proof}

Figs.~\ref{fig:regions1} and \ref{fig:regions2} illustrate these initialization sets corresponding to the convergence basins for $p=q=0.9$ and $p=q=0.1$ respectively. An analogous result about GF dynamics for $p+q<1$  is presented in \prettyref{thm:gf_lowswitch} (\S~\ref{app:gf_pq_small}). Here a key difference is that small Gaussian initialization around origin leads to a global minimum $\thetalim$ with high probability (\prettyref{fig:regions2}).

{\bf Key insights.} Together, \prettyref{thm:gf_highswitch} and \prettyref{thm:gf_lowswitch} address our motivating question \emph{(Q.1)} by fully characterizing the GF dynamics in terms of initialization and input data properties. Specifically, our results explain the phenomenon in \cite{makkuva2024attention} wherein they observe local minima for $p+q>1$ more often than for $p+q<1$, owing to standard Gaussian initialization around origin (Figs.~\ref{fig:regions1} and \ref{fig:regions2}). However, in practice, we often do not know the input switching factor, raising a natural questions: \emph{is there a data-agnostic initialization that always converges to global minima?}  Indeed, as can be seen from Figs.~\ref{fig:regions1} and \ref{fig:regions2}, there is a common region of initialization in the negative half-plane above the saddle-asymptotes (in yellow) that leads to the global minima convergence irrespective of the switching $p+q$. Mathematically, this region is given by $\calI_{\mathrm{common}} \define \{ (e,w): w<0, |e| > \sqrt{w^2 - \log(-w) + \energysad }  \} $. We empirically corroborate this fact in \prettyref{sec:init_fullmodel}. 







\subsection{Gradient Flow with Attention}
\label{sec:learn_dyna_attn}

\begin{figure*}[!htbp]
\captionsetup[sub]{font=scriptsize}
\centering
\begin{subfigure}[t!]{0.24\textwidth}
\centering
   \includegraphics[width=1.05\textwidth]{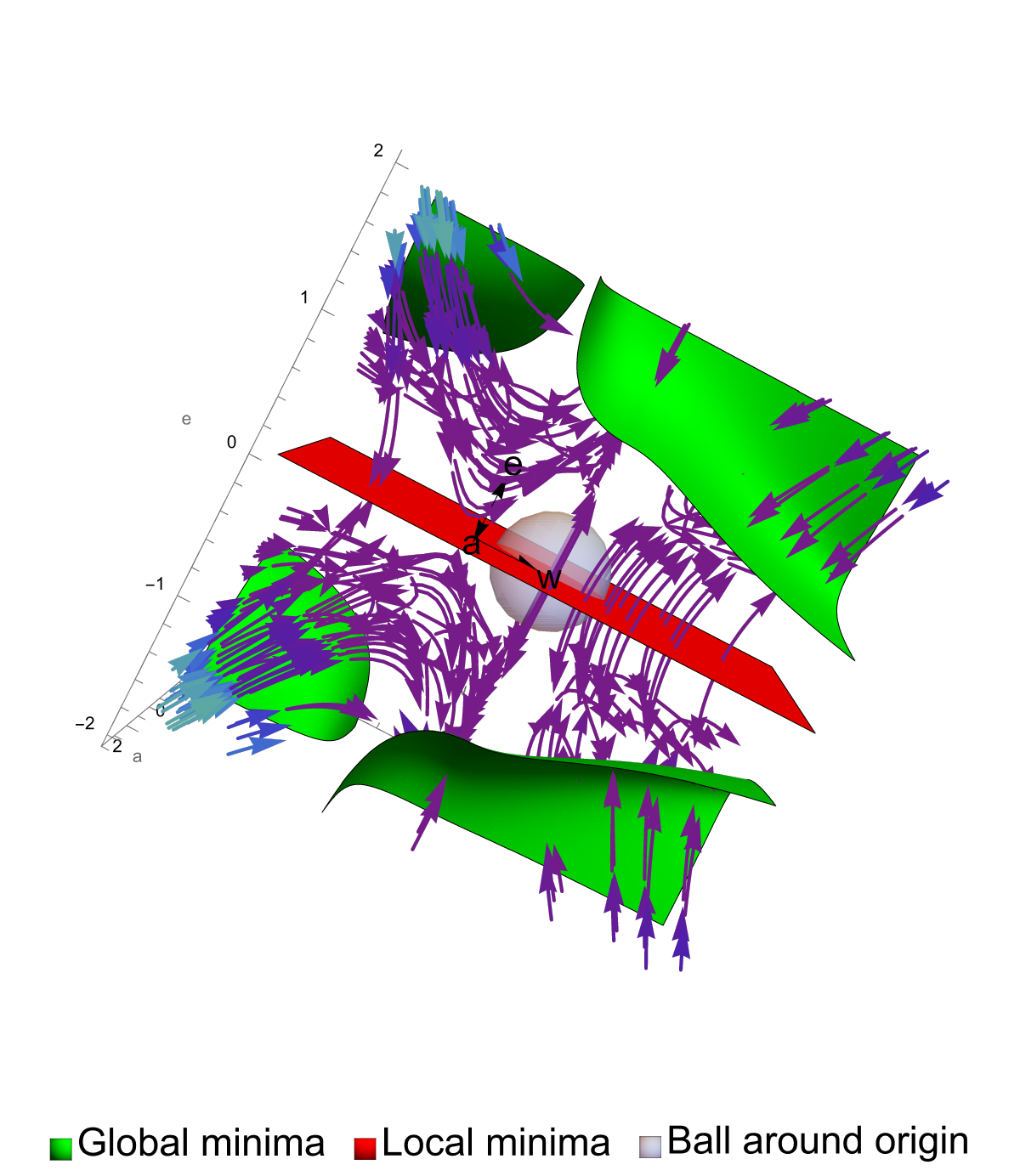}
    \caption{$p+q<1$: Gradient flow }
\end{subfigure}
\hfill
\begin{subfigure}[t!]{0.24\textwidth}
\centering
\vspace{1.5em}
   \includegraphics[width=0.91\textwidth]{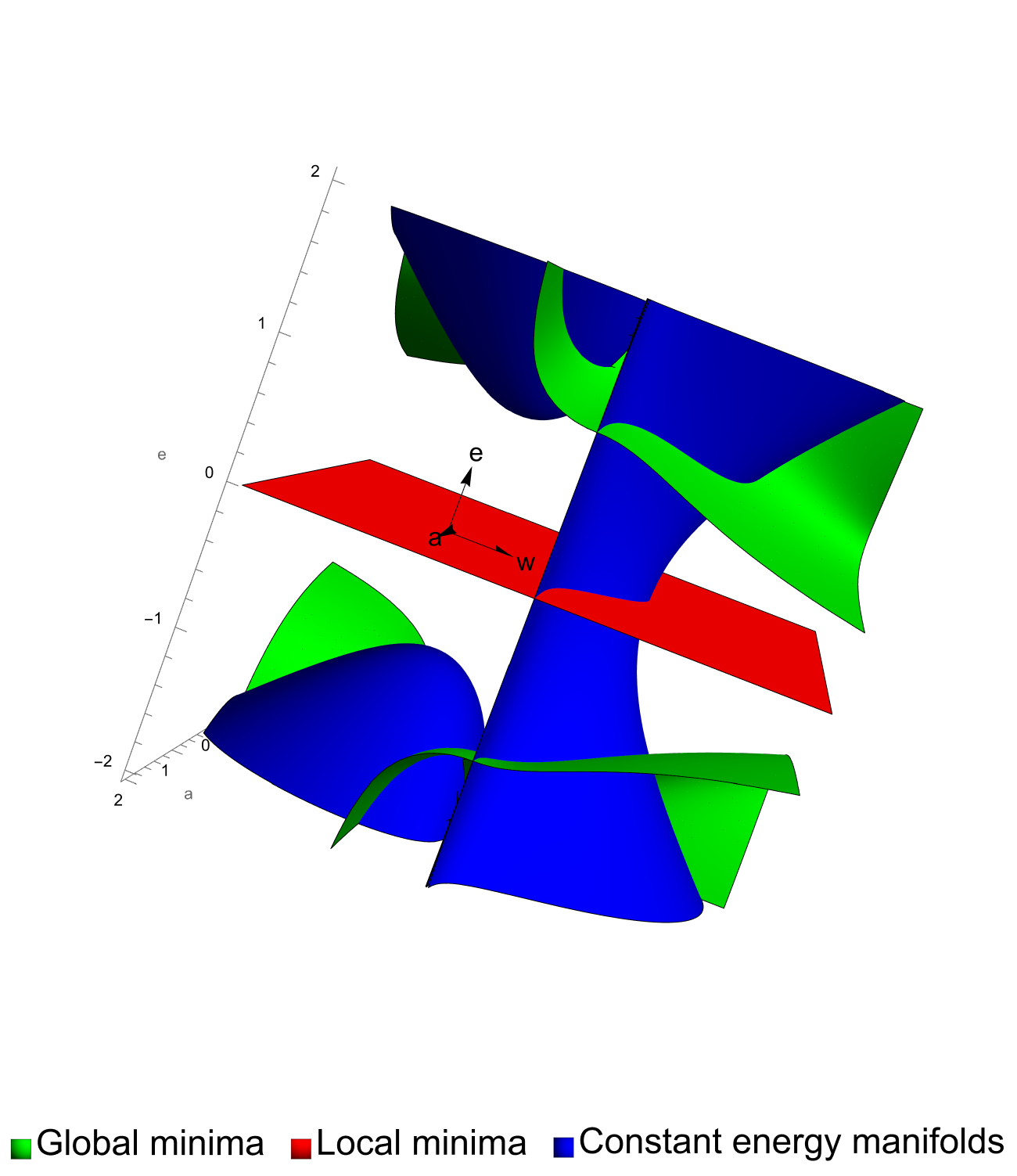}
    \caption{Energy manifold and minima}
\end{subfigure}
\hfill
\begin{subfigure}[t!]{0.24\textwidth}
\centering
   \includegraphics[width=1.05\textwidth]{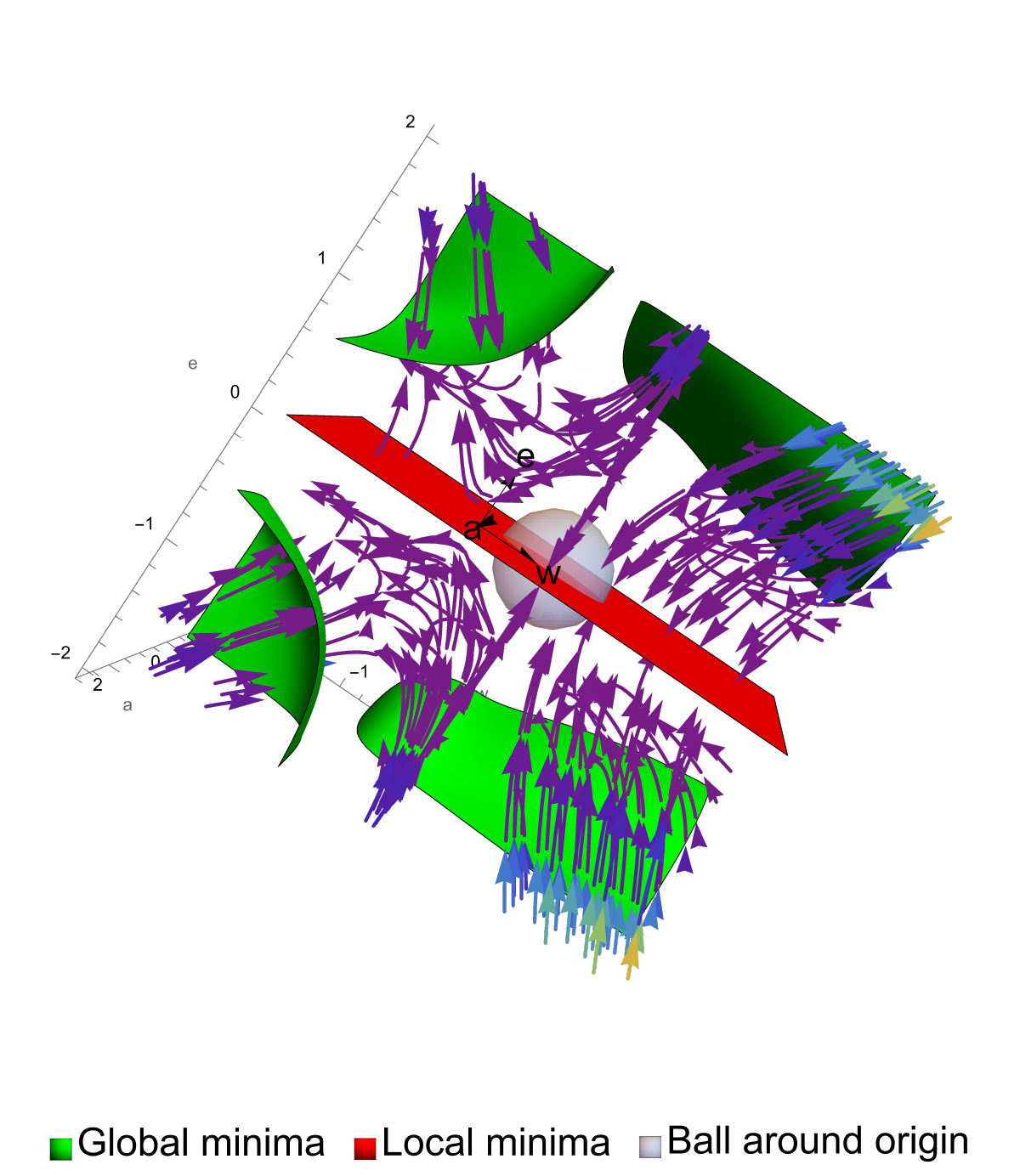}
    \caption{$p+q > 1$: Gradient flow}
\end{subfigure}
\hfill
\begin{subfigure}[t!]{0.24\textwidth}
\centering
\vspace{1.5em}
   \includegraphics[width=0.91\textwidth]{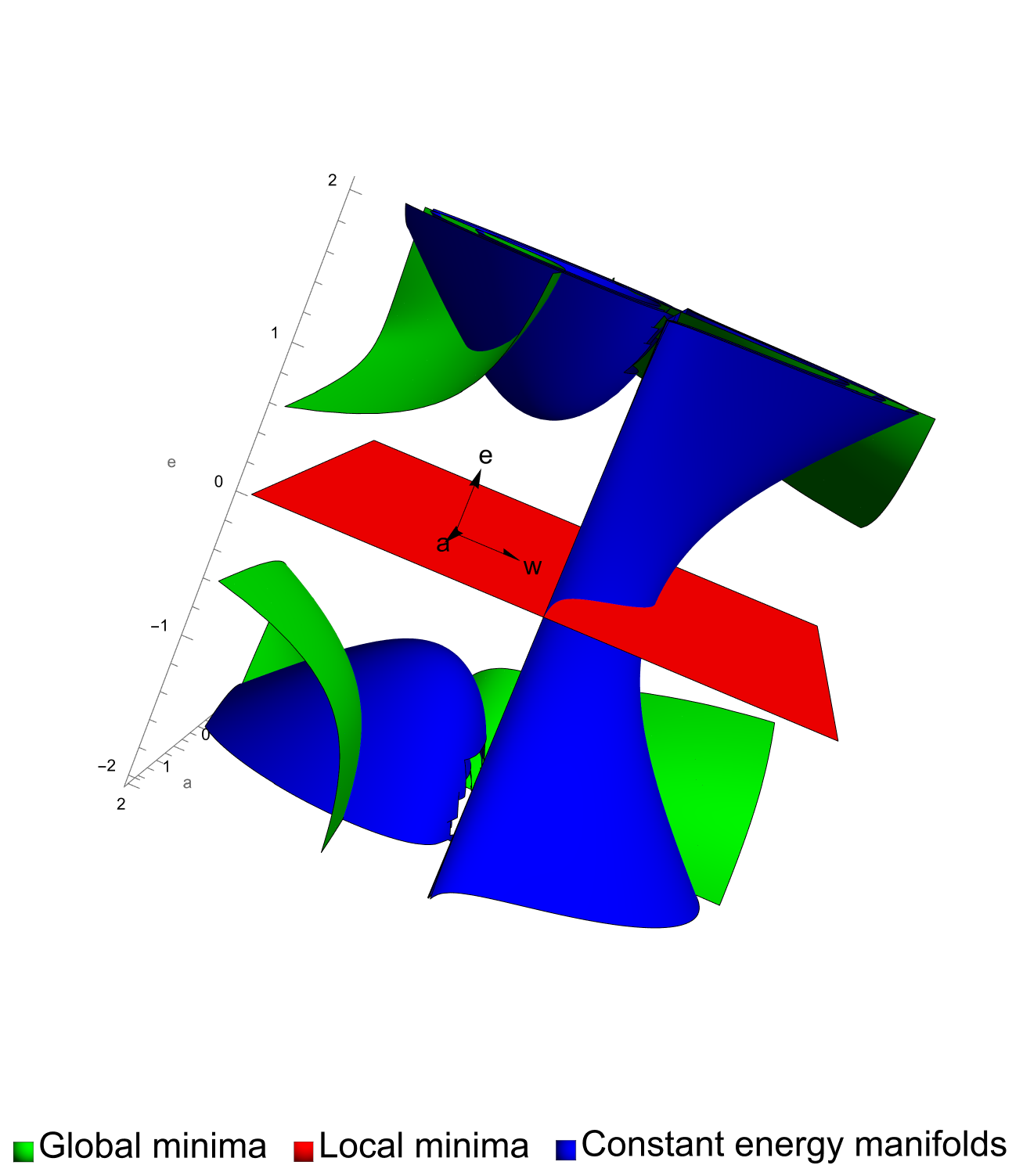}
    \caption{Energy manifold and minima}
\end{subfigure}
\caption{Gradient flow dynamics for the canonical parameters $\btheta=(e,w,a) \in \reals^3$ with the attention scalar $a$. Notice the contrasting behavior for Gaussian initialization around origin for $p+q$ smaller and greater than one. For an enhanced view of the flow near the origin, please refer to \prettyref{fig:pq_grad_flow_attn_near_origin}. \looseness=-1}
 \label{fig:pq_3dplots}
 \end{figure*}


In this section, the consider the attention scalar $a \in \reals $ (\prettyref{sec:cannoli}) and study the gradient flow dynamics with the parameters $\btheta = (e,w,a) \in \reals^3$. The parameter $a$ captures the overall scaling from the value, key, and query components in the attention layer. Recall that the soft-max attention weights are given by $\att_{n,i} \propto \exp ({\inner{\bq_n}{\bk_i}/\sqrt{d}})$, where $\bq_n = \bW_Q  \bx_n$ and $\bk_i = \bW_K  \bx_i$ are the query and key embeddings for any position $i \in [n]$. Using the low-rank structure of the query and key matrices, satisfying $\bW_Q^\top \bW_K = (q^2 d)\, \balpha \balpha^\top $ and the value matrix $\bW_V= \balpha \bv^\top$ for some $q \in \reals$ and $\bv \in \reals^d$ (\S~\ref{app:gf_attention}), and assuming linear attention $\att_{n,i} \propto {\inner{\bq_n}{\bk_i}/\sqrt{d}}$, we define a single scalar $a \define { \inner{\bv}{\balpha}  q^2 d^{5/2} }/{4} $ that captures the essence of the attention layer (\prettyref{eq:attn_scalar}). We note that linear attention weights are a standard assumption in the transformer analysis literature \cite{ahn2023transformers,vonoswald2023transformers}. Using this parameterization, similar to the steps in \prettyref{sec:cannoli}, we obtain the final loss function to be
\begin{align*}
    L(\btheta) & = \Expect \qth{\logistic \pth{(2Y-1) \pth{ e^2 \qth{\pth{X - \frac{1}{2}}\pth{1+ae^2}\pth{1 + 2w|w|} + w|w(1+ae^2)|} + b_\ast } } },
\end{align*}
where $\btheta = (e,w,a)$ and $\biopt$ is the corresponding optimal bias. $L$ recovers the loss in \prettyref{eq:loss_cannoli} when $a=0$. In \prettyref{thm:all_character_lin_attn_projected_parameters}, we determine the set of all critical points of $L$ in terms of global minima and local optima in closed-form expressions, analogous to \prettyref{thm:all_character}. Capitalizing on this characterization, we now shift our focus to the analysis of the gradient flow in $\reals^3$. To this end, let $(\btheta_t)_{t \geq 0}$ be a $C^1$ curve in $\reals^3$  governed by
\begin{align}
    \odv{\btheta_t}{t} =    -\nabla L(\btheta_t), \quad \btheta_t = (e_t, w_t, a_t)  \inr{3}, \, t \geq 0,
    \tag{GF-attn} 
    \label{eq:gflow_attn}
\end{align}
starting with a randomly initalized $\btheta_0$. We define the \emph{energy function} $\energy(\cdot, \cdot, \cdot)$  as
\begin{align}
    \energy(e,w, a) \define e^2 - (w^2 + \mathrm{sign}(w) \cdot \log |w|) - 2 a^2, \quad \forall(e,w, a) \in \reals^3 \setminus \eaplane,
    \label{eq:energy_attn}
\end{align}
where $\eaplane \define \{ (e, w=0, a) \}$. It is similar to its counterpart in \prettyref{eq:energy}, except for the $2a^2$ term. \prettyref{fig:pq_3dplots} visualizes this energy surface and the set of critical points, which reveal close resemblance to that of \prettyref{fig:gradient-flow} in $\reals^2$. Capitalizing on the energy function, we now present our main result with the attention.

\begin{theorem}[GF dynamics with attention]
   For any $(p,q) \in (0,1)^2$ and initialization $\btheta_0  \in \reals^3 $, let $(\btheta_t)_{t \geq 0}$ be the corresponding \ref{eq:gflow_attn} trajectory starting from it. Then for all $\btheta_0 \in \reals^3$, the gradient flow converges to a critical point of the loss $L$. That is, there exists a $\thetalim \in \reals^3$ such that $\lim_{t \diverge} \btheta_t = \thetalim $ and $\nabla L(\thetalim) = 0$. Further, 
    \begin{enumerate}[label={\upshape(\roman*)}, nosep,leftmargin=16pt,itemsep=2pt]
       \item if $\btheta_0 \in \reals^3 \setminus \eaplane $, we have $\energy(\thetalim) = \energy(\btheta_t) = \energy(\btheta_0) $ for all $ t \geq 0$. Hence $\thetalim$ is at the intersection of the energy contour line $\energy = \energy_0 $ with that of the set of critical points.
       \item if $\thetainit \in \eaplane $, we have $\btheta_t \in \eaplane $ for all $ t \geq 0$ and hence $\thetalim \in \eaplane $.
   \end{enumerate}

   \label{thm:main_attn}
\end{theorem}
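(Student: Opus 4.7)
The plan is to mirror the two-dimensional argument behind \prettyref{lmm:energy_constant} and \prettyref{thm:gf_highswitch}, adapting it to the extra coordinate $a$. I would begin with (ii), the invariance of the $\eaplane$. Since $\partial_w(w|w|) = 2|w|$ vanishes at $w=0$, a direct differentiation of the expression for $\logit_n$ shows $\partial_w \logit_n(e,0,a) \equiv 0$ regardless of the sign of $1 + ae^2$. Hence $\partial_w L(e,0,a) \equiv 0$, so the unique solution of \ref{eq:gflow_attn} starting with $w_0 = 0$ satisfies $w_t \equiv 0$ for all $t \geq 0$, giving (ii).

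For (i), the crux is to prove $\nabla L \cdot \nabla \energy \equiv 0$ on $\reals^3 \setminus \eaplane$. Using $\partial_w|w| = \mathrm{sign}(w)$ for $w \neq 0$, one has $\nabla \energy = \bigl(2e,\, -(2w + 1/|w|),\, -4a\bigr)$. A short algebraic simplification shows that, for every sign of $1+ae^2$, the logit is \emph{affine} in $x_n$,
\begin{align*}
    \logit_n(\btheta) = \Phi(\btheta)\, x_n + C(\btheta) + b_\ast, \quad \Phi(\btheta) \define e^2(1+ae^2)(1+2w|w|),
\end{align*}
for some scalar $C(\btheta)$ independent of $x_n$. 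The centerpiece of the proof is the orthogonality identity
\begin{align*}
    \nabla \Phi \cdot \nabla \energy \equiv 0 \qquad \text{on } \reals^3 \setminus \eaplane,
\end{align*}
verified by a direct (if tedious) calculation: expanding the three partial-derivative terms and observing that the $e^2$, $ae^4$, $e^2 w|w|$, and $ae^4 w|w|$ contributions cancel in pairs. Because $\Phi$ is the only coefficient of $x_n$ in $\logit_n$, the inner product $\nabla L \cdot \nabla \energy$ collapses to $(\nabla C \cdot \nabla \energy) \cdot \Expect\qth{\logistic'\bigl((2Y-1)\logit_n\bigr)(2Y-1)}$, which vanishes by the first-order optimality condition $\partial_b L(\btheta, b_\ast) = 0$ defining $b_\ast$. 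Thus $\tfrac{d}{dt}\energy(\btheta_t) = -\nabla L \cdot \nabla \energy \equiv 0$, proving the conservation law.

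To conclude convergence of $\btheta_t$ to a critical point, I would use $L$ as a strict Lyapunov function: $\tfrac{d}{dt} L(\btheta_t) = -\|\nabla L(\btheta_t)\|^2 \leq 0$ with equality only on the critical set, and $L$ is bounded below by the entropy rate. Combined with the conservation law (or $w_t \equiv 0$ for part (ii)), the trajectory is confined to a two-dimensional level surface of $\energy$ (resp.\ to the $\eaplane$), on which the critical set, characterized explicitly in \prettyref{thm:all_character_lin_attn_projected_parameters}, is of codimension at least one. A LaSalle-type invariance argument together with the monotonicity of $L$ along the flow then pins down the $\omega$-limit to a single critical point $\thetalim$, which by construction lies on the same energy contour as $\btheta_0$ (resp.\ in $\eaplane$).

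The main obstacle I anticipate is this last convergence-to-a-point step. Because $|w|$ makes $L$ only piecewise smooth, the Lojasiewicz gradient inequality has to be applied on each half-space $\{w>0\}$ and $\{w<0\}$ separately, and boundedness of the trajectory on the energy surface must be established (which follows from the fact that $L$ blows up whenever $\Phi$ diverges on one of the two data labels, forcing $\btheta_t$ to stay in a compact region of its level surface). The energy barrier at $w=0$, analogous to \prettyref{lmm:energy_constant}, ensures that for $\btheta_0 \notin \eaplane$ the trajectory never crosses into the other half-space, so the piecewise argument applies cleanly to the committed half.
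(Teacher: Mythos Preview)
Your energy-conservation argument is correct and is essentially a repackaging of the paper's approach. The paper (proof of \prettyref{lmm:energy_constant_attn1}) computes the explicit gradients from \prettyref{lmm:grad_comp_attn}, observes that after imposing $\pi_1 f_1+f_2=0$ all three components of $\nabla L$ share the common scalar factor $\mu \define \Expect[(f_1X+f_2)(X-\tfrac12)]$, divides the flow equations by their respective prefactors, and integrates to recover $\energy$. Your route is the same content in orthogonality form: since $\nabla L = \mu\,\nabla\Phi$ once the optimal-bias condition kills the $\nabla C$ contribution, showing $\nabla\Phi\cdot\nabla\energy=0$ is exactly the cancellation the paper obtains by the ODE manipulation. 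For convergence, the paper likewise invokes the \L ojasiewicz inequality on each half-space $\{w>0\}$, $\{w<0\}$ (it states that the proof ``directly follows'' from the two-dimensional Lemmas~\ref{lmm:gf_convergence} and~\ref{lmm:energy_lim}), so your anticipated tool is the right one.

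The genuine gap is your boundedness argument. You claim that $L$ blowing up when $\Phi$ diverges forces $\btheta_t$ into a compact subset of the energy level surface. It is true that $L(\btheta)$ depends on $\btheta$ only through $\Phi(\btheta)$ (the optimal bias absorbs $C$), and that $L\to\infty$ iff $|\Phi|\to\infty$. But bounded $\Phi$ on $\{\energy=\energy_0\}$ does \emph{not} give bounded $\btheta$: along the curve where $1+ae^2\to 0$ at a rate comparable to $1/(e^2 w|w|)$, one can send $e,|w|\to\infty$ with $a\to 0$, stay on the energy surface (since $2a^2\to 0$ and $e^2 - w^2 - \mathrm{sign}(w)\log|w|=\energy_0$ is satisfiable with $w\sim e$), and keep $\Phi=e^2(1+ae^2)(1+2w|w|)$ bounded. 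So the sublevel set $\{L\le L_0\}\cap\{\energy=\energy_0\}$ is not compact in general. The paper sidesteps this by arguing, as in the proof of \prettyref{lmm:gf_convergence}, that for $\|\btheta\|$ large the velocity $-\nabla L$ points inward (a sign analysis of $\mu$), which is what you should import rather than a static sublevel-set argument.
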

\begin{proof}
    We refer to \S~\ref{app:gf_attention} and \S~\ref{app:direct_proofs_attn}.
\end{proof}


\prettyref{thm:main_attn} shows that the learning dynamics with attention closely resemble those without it (Thms.~\ref{thm:gf_highswitch} and \ref{thm:gf_lowswitch}). While the set of all critical points of $L$, and thus the limit points of the flow, has a closed-form expression (\prettyref{thm:all_character_lin_attn_projected_parameters}), deriving the same for the initialization sets $\mininit$ and $\globinit$ to determine the basin of convergence is technically challenging (see discussion in \S~\ref{app:gf_attention}). Nonetheless, empirical observations with the standard Gaussian initialization around origin reveal a similar picture as in the two-dimensional setting for both the $p+q < 1$ and $p+q >1$ cases (\prettyref{fig:pq_3dplots}). We believe it's an interesting direction of future research to theoretically characterize this, analogous to Thms.~\ref{thm:gf_highswitch} and \ref{thm:gf_lowswitch}. We refer to \S~\ref{app:gf_attention} for additional details and proofs.



\section{Empirical Results}
\label{sec:empirical}

We empirically validate the low-rank assumption behind our canonical parameterization and investigate our findings about local optima and initialization in the context of the full model.

\begin{figure}[!htbp]
\captionsetup[sub]{font=scriptsize}
\centering
\begin{subfigure}{0.32\textwidth}
\centering
   \includegraphics[width=\textwidth]{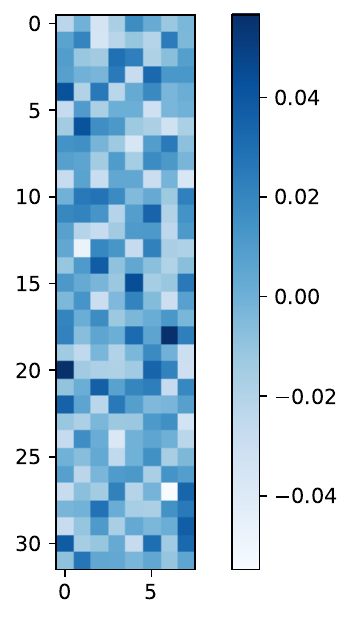} 
    \caption{$\bW_1$ at initialization}
    \label{fig:cfc1}
\end{subfigure}
\hfill
\begin{subfigure}{0.32\textwidth}
\centering
   \includegraphics[width=\textwidth]{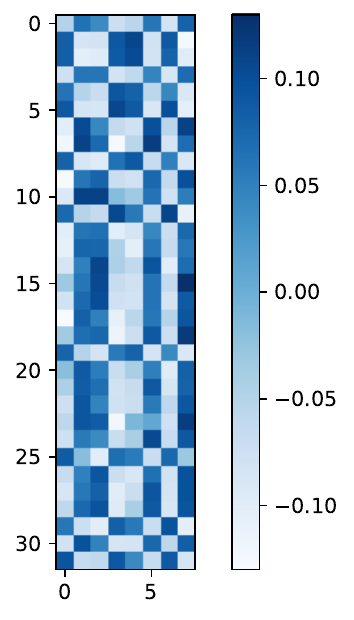}
    \caption{$\bW_1$ after 50 iterations}
    \label{fig:cfc2}
\end{subfigure}
\hfill
\begin{subfigure}{0.32\textwidth}
\centering
   \includegraphics[width=0.965\textwidth]{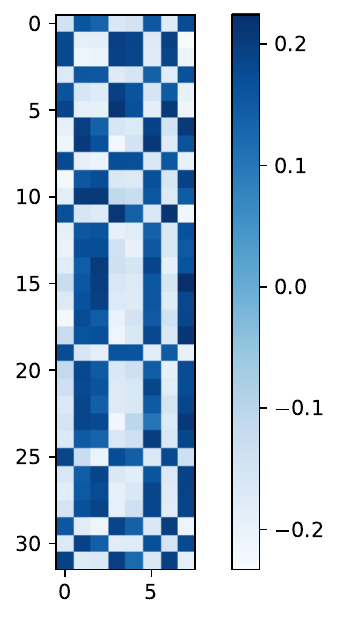}
    \caption{$\bW_1$ at convergence}
   \label{fig:cfc3}
\end{subfigure}\\
\begin{subfigure}{0.32\textwidth}
\centering
   \includegraphics[width=\textwidth]{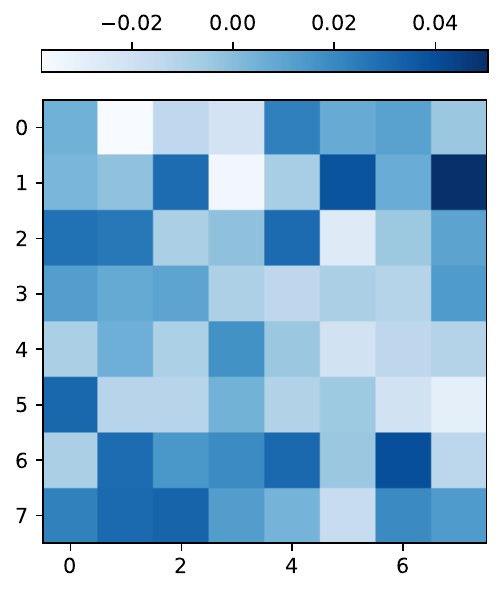} 
    \caption{$\bW_V$ at initialization}
    \label{fig:att1}
\end{subfigure}
\hfill
\begin{subfigure}{0.32\textwidth}
\centering
   \includegraphics[width=\textwidth]{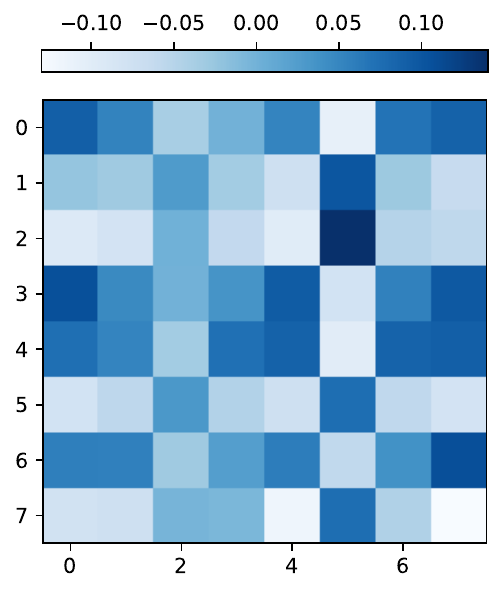}
    \caption{$\bW_V$ after 50 iterations}
    \label{fig:att2}
\end{subfigure}
\hfill
\begin{subfigure}{0.32\textwidth}
\centering
   \includegraphics[width=\textwidth]{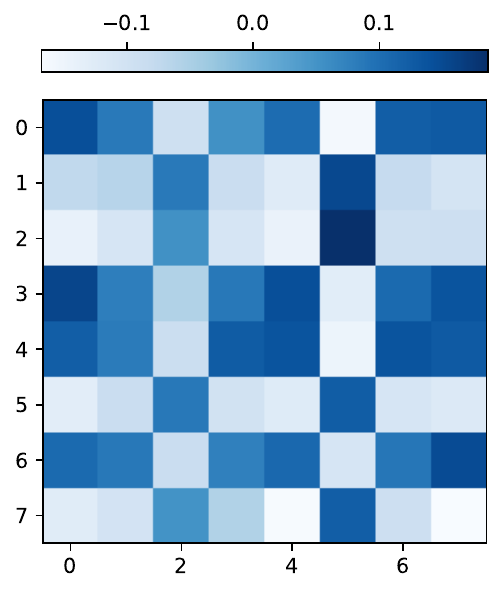}
    \caption{$\bW_V$ at convergence}
   \label{fig:att3}
\end{subfigure}
\caption{Evolution of parameters $\bW_1$ and $\bW_V$ across iterations, starting from a standard Gaussian initialization. At convergence, all the parameter matrices are approximately rank-one.}
\label{fig:weights}
\end{figure}

\subsection{Low-rank Parameters}
\label{sec:empiri_lowrank}
{\bf Low-rank at convergence.} We let the input Markov sequence to be $\finiteseq \sim (\bpi(p,q), \bP(p,q))$ for $p=0.2,q=0.3, N=1024$ and consider the single-layer transformer as defined in \prettyref{sec:prob} with embedding dimension $d=8$. First, we initialize the parameters $\btheta = (\be=\ba, \{\bp_n\}_{n=1}^N, \ldots, \bW_1, \bW_2, b)$ using the standard Gaussian initialization with standard deviation $0.001$ \cite{pagliardini-llm} and train them using SGD on a batch size $B=16$ and for $t=800$ iterations. In \prettyref{fig:weights}, we track the value matrix $\bW_V \in \reals^{d \times d}$ and the weight matrix $\bW_1 \in \reals^{4d \times d}$ across iterations. We observe that at convergence both $\bW_V$ and $\bW_1$ are approximately rank-one with one of their components being same as the embedding vector (the row in $\bW_V$ and column in $\bW_1$). Further, the embedding vector has all entries in $\{ \pm 1 \}$ up to a scaling. We observe the same conclusion for other weight matrices $\bW_{K, Q}, \bW_2$ and for all values of $(p,q) \in (0,1)^2$.

{\bf Low-rank initialization remains low-rank during training.} Inpsired by the low-rank structure obtained above, we randomly initialize the weight parameters as rank-one matrices and the embeddings on the hypercube $\{\pm 1\}^d$. After the initialization, we train them without any low-rank restrictions, and track them during the course of training. Interestingly, here we observe that the parameters still stay low-rank as illustrated in \prettyref{fig:weights2}. A similar conclusion holds for the remaining weight matrices. Together these results provide the empirical basis for our canonical parameterization analysis in \prettyref{sec:cannoli}. \looseness = -1

\subsection{Effect of Initialization: Broader Implications}
\label{sec:init_fullmodel}
Now we investigate the findings of \prettyref{sec:cannoli} and \prettyref{sec:learn_dyna}, derived for the canonical low-rank model, more broadly in the context of the general single-layer transformer in \prettyref{sec:prob}. In particular, as shown in \prettyref{thm:gf_highswitch} and \prettyref{fig:regions2} for $p+q>1$, any small initialization around zero would lead a local minima convergence. To test this hypothesis, we compare the standard initialization where all the transformer parameters $\btheta = (\be=\ba, \{\bp_n\}_{n=1}^N, \ldots, \bW_1, \bW_2, b)$ are randomly chosen around zero with small variance $\sigma=0.02$, with a new initialization based on our results, where we initialize the embedding vector $\be$ such that all cordinates are equal to $e=0.5$, $\bW_1$ to be constant with the scalar $w_1=1$ and $\bW_2$ constant with $w_2=-1$ (corresponding to $\globinit$ in \prettyref{fig:regions2}). We indeed observe that the final test loss matches the unigram loss for the standard initialization, while it converges to the optimal bigram loss for our initialization (see \prettyref{fig:loss}). Together these results indicate that though our analysis used canonical parameterization, the corresponding insights are more general and apply more broadly to the general architecture. In a similar spirit, analysis of initialization effects for deeper architectures is an interesting avenue of future research.

\begin{figure}
\centering
\includegraphics[width=0.5\textwidth]{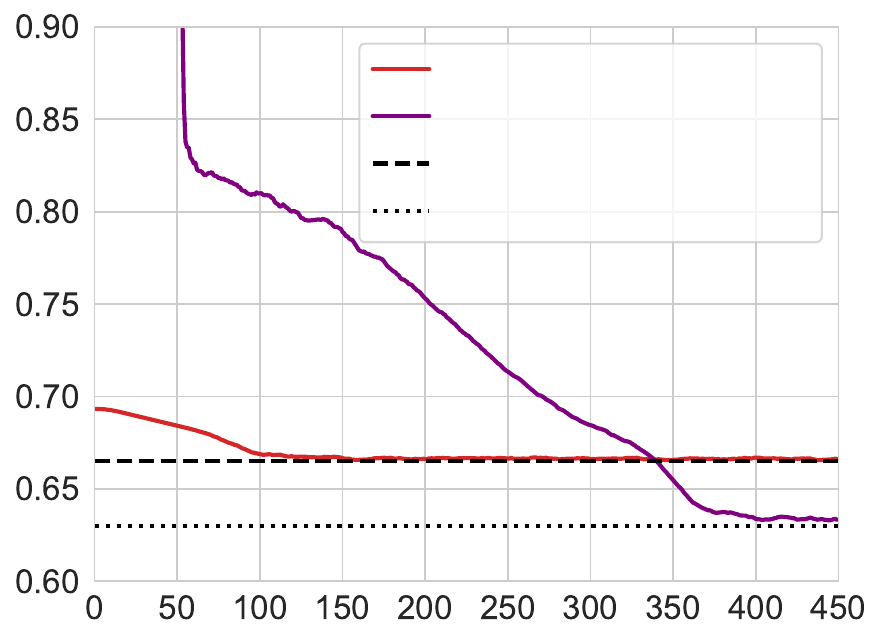}
    \put(-110,-6){\fontsize{9}{3}\selectfont Iteration}
    \put(-207,70){\rotatebox[origin=t]{90}{\fontsize{9}{3}\selectfont Test loss}}
    \put(-98,126){\fontsize{8}{3}\selectfont $\btheta_0 \sim \calN(0, \sigma^2 \bI_2) \in \mathcal{I}_{\min}$}
    \put(-98,116){\fontsize{8}{3}\selectfont $\btheta_0 \in \mathcal{I}_{\star}$}
    \put(-98,105){\fontsize{8}{3}\selectfont Unigram distribution loss}
    \put(-98,95){\fontsize{8}{3}\selectfont Bigram distribution loss}
\caption{Comparison between the average loss curve for the standard gaussian initialization around $0$ and our initialization, for $p=0.5$ and $q=0.8$. Starting from the standard initialization, the model converges to a local minimum corresponding to the unigram model. With our initialization, it converges to the global minimum corresponding to the bigram model.}
\label{fig:loss}
\end{figure}


\section{Related Works}
\label{sec:related}

The recent success of transformer models in deep learning has sparked significant interest and active research in understanding them \cite{weiss2021thinking,oymak2023attn-prompt,fu2023single,perez2021turing,elhage2021mathematical,wei2022turing-approx,Yun2020seq2seq,tarzanagh2023maxmargin}. In relation to our paper, they can be broadly classified into two topics: (i) {\bf In-context learning (ICL):} ICL refers to the ability of transformers learn and reason from information present in their context \cite{chen2024training,dong2023incontext,akyurek2023icl,von2023icl-gd,xie2021bayesian-icl,bai2023statisticians,algorithms2023icl,garg2022can}. Along this thread, the works most relevant to ours are \cite{bietti2023birth,edelman2024evolution,nic2024trans}, which use Markovian input data to understand the ICL mechanism. \cite{bietti2023birth,edelman2024evolution} heuristically show how gradient-based updates can learn an induction-head mechanism using a simplified transformer architecture with frozen encodings, query matrices and linear activations. On the other hand, we consider the transformer in full generality including $\relu$ nonlinearity, capitalizng on inherent low-rank parameters, to provide a full characterization of the learning dynamics. \cite{nic2024trans} demonstrates how two-layer transformers with GD learn induction head mechanism when the input has a causal tree dependency, such as in Markov chains. In this work, we focus on the GF dynamics for single-layer transformers and show how they can also converge to local optima, further highlighting the role of initialization. (ii) {\bf Training dynamics:} On the other hand, numerous works have investigated the training dynamics of transformers. For instance, \cite{chandra2024towards} examines the gradient flow in a simplified single-layer transformer, while \cite{tian2023scan} studies the process by which self-attention integrates input tokens, assuming the decoder learns faster than the attention layer. Unlike these settings, our focus is on understanding the training dynamics of the full transformer model without any simplifications. Other related works include \cite{snell2021attn}, which analyzes gradient dynamics in LSTM Seq2seq models, \cite{jelassi2022vision}, which shows how Vision Transformers learn spatial structures, and \cite{li2023transformers}, which demonstrates that a single-layer transformer can learn a constrained topic model. A closely related work is \cite{ildiz2024self}, which shows that self-attention has a Markovian structure, but our focus is on self-attention's capability in modeling Markov chains and the associated training dynamics. \looseness=-1


\section{Conclusion}
\label{sec:conclusion}

In this work, we present a novel characterization of gradient flow dynamics for (weight-tied) single-layer transformers with first-order Markov chains. Specifically, we highlight the significant role of the parameter initialization and inherent properties of the Markovian data in determining the parameter convergence to either global minima or local optima. Drawing upon these insights, we offer practical guidelines for parameter initialization, corroborated by empirical results demonstrating their effectiveness. Our findings also open an interesting avenue for future research such as gradient flow analysis with deeper architectures and higher order Markov chains.


\section*{Acknowledgements}
Ashok would like to thank Aditya Vardhan Varre for many helpful discussions about the project.


\bibliographystyle{plainnat}
\bibliography{main}



\newpage
\appendix
\onecolumn

\tableofcontents



\section{Single-layer transformer: architecture and results}
\label{app:transformer}

We first describe the transformer architecture from \prettyref{sec:prob}:
\begin{align}
    \bx_n &= x_n \, \be + {\bp}_n  \inrd, \tag{\small{Embedding}}     \label{eq:embedding1}  \\
    \by_n &= \bx_n +  \compsum{i}{n} \att_{n,i} \cdot \bW_{V} \, \bx_i \inrd,  \tag{\small{Attention}}  \label{eq:attention1} \\
    \bz_n &= \by_n + \bW_2 \, \relu(\bW_1 \, \by_n) \inrd, \tag{\small{Feed-forward}} \label{eq:ff1} \\
    \logit_n &= \inner{\ba}{\bz_n} + b \hspace{6em}\inr, \tag{\small{Linear}} \label{eq:logit1} \\
    f_{\btheta}(x_1^n) & \define \mathbb{P}_{\btheta}\pth{x_{n+1}=1 \mid x_1^n} = \underbrace{\sigma(\logit_n)}_{\hspace{2em}\in [0,1]}. \tag{\small{Prediction}} \label{eq:prediction1}
\end{align}

Here $\btheta \define (\be, \{\bp_n\}_{n=1}^N, \ldots, \bW_1, \bW_2, b, \ba) \in \reals^D$ denotes the full list of the transformer parameters from the embedding layer till the linear layer. In the attention layer, the weight assigned to each value, $\att_{n,i}$, is computed by a compatibility function of the query vector $\bq_n \define \bW_Q \, \bx_n$ and the corresponding key vectors $\bk_i \define \bW_K \, \bx_i$ for all $i \in [n]$. More precisely, $\att_{n,i} \define \softmax((\inner{\bq_n}{\bk_1},\ldots, \inner{\bq_n}{\bk_n})/\sqrt{d})_i$. $\bW_{K,Q,V} \in \matrx{d}{d}$ are the respective key, query, and value matrices. For multi-headed attention, the same operation is performed on multiple parallel heads, whose outputs are additively combined.

Finally, the transformer parameters $\btheta \define (\be, \{\bp_n\}_{n=1}^N, \ldots, b, \ba)  $ are trained via the cross-entropy loss on the next-token prediction:
\begin{align}
L(\btheta) & \define -\frac{1}{N} \sum_{n \in \set N} \Expect_{x_1^{n+1}}[x_{n+1} \cdot \log f_{\btheta}(x_1^n) + (1- x_{n+1}) \cdot \log (1-f_{\btheta}(x_1^n)) ].
    \label{eq:loss1}
\end{align}

In this paper, we focus on the weight-tied scenario where $\be =\ba$. Hence we let them be a single parameter with $\btheta =  (\be=\ba, \{\bp_n\}_{n=1}^N, \ldots, b) \in \reals^D$, where $D$ is the total parameter dimensionality.

\subsection{Loss landscape results}
Now we recall the theoretical results from \cite{makkuva2024attention} about the loss landscape of $L$ in the form of global and local minima. 

\begin{theorem}[\bf Global minimum] 
\label{thm:globalmin_tie}
Let the input sequence be $\finiteseq \sim (\bpi(p,q), \pqmarkov)$ for some fixed $(p,q) \in (0,1)^2$. Then for all $(p,q)$, there exists a $\globalmin \in \reals^{D}$ with an explicit construction such that it is a global minimum for the population loss $L(\cdot)$ in \prettyref{eq:loss1}, \ie 
    \begin{enumerate}[label={\upshape(\roman*)}, nosep,leftmargin=16pt,itemsep=2pt]
        \item $L(\btheta) \geq L(\globalmin)$ for all $\btheta \in \reals^{D}$.
    \end{enumerate}
    Further, $\globalmin$ satisfies:
    \begin{enumerate}[label={\upshape(\roman*)}, nosep,leftmargin=16pt,itemsep=2pt]
    \setcounter{enumi}{1}
        \item $\pthetastar{x_{n+1}=1 \mid x_1^n} = \prob{x_{n+1}=1 \mid x_n }$, the Markov kernel.
        \item $L(\globalmin) = H(x_{n+1}|x_n)$, the entropy rate of the Markov chain. 
        \item $\nabla L(\globalmin)=0 $, \ie $\globalmin$ is a stationary point.
    \end{enumerate}
\end{theorem}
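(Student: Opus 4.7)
My plan is to reduce the problem to a standard information-theoretic lower bound and then exhibit parameters that saturate it. The key observation is that for any $\btheta$, the cross-entropy loss satisfies $L(\btheta) \geq H(x_{n+1}\mid x_n)$, with equality if and only if the transformer's predictor $\mathbb{P}_\btheta(x_{n+1}=1\mid x_1^n)$ equals the Markov kernel $\mathbb{P}(x_{n+1}=1\mid x_n)$ almost surely (this uses that the loss is a per-token average of cross-entropies and that the chain is first-order, so the Bayes-optimal predictor depends only on $x_n$). Once an explicit $\globalmin$ achieving this match is produced, (ii) holds by construction, (iii) follows from the equality case of the bound, (i) follows from the bound together with (iii), and (iv) is immediate from Fermat's rule applied to the smooth loss $L$ at an interior global minimum.

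For the construction, I would use the low-rank ansatz introduced in \prettyref{sec:cannoli}. Pick an arbitrary $\balpha \in \{\pm 1\}^d/\sqrt{d}$ and set $\be = \ba = e\,\balpha$, $\bp_n = -(e/2)\,\balpha$ for every $n$, $\bW_V = \balpha \bv^\top$ with $\bv \in \reals^d$ chosen orthogonal to $\balpha$, and $\bW_1 = (|w|/\sqrt{d})\,\one \balpha^\top$, $\bW_2 = (w/\sqrt{d})\,\balpha \one^\top$, plus a scalar bias $b$. The orthogonality $\inner{\bv}{\balpha}=0$ forces $\bW_V \bx_i \equiv 0$ for every $i$, so the softmax-weighted attention term vanishes regardless of the attention weights $\att_{n,i}$, and the residual leaves $\by_n = \bx_n = e(x_n - 1/2)\balpha$. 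The feedforward ReLU computation carried out in \prettyref{sec:cannoli} then collapses the logit to the scalar expression $\logit_n = e^2(1+2w|w|)\,x_n + b - e^2/2$.

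It remains to choose $(e,w,b)$ so that $\logit_n = \log[\mathbb{P}(x_{n+1}=1\mid x_n)/\mathbb{P}(x_{n+1}=0\mid x_n)]$ at both values of $x_n$. Evaluating at $x_n=0$ and $x_n=1$ yields the system
\[
b - \frac{e^2}{2} = \log \frac{p}{1-p}, \qquad e^2(1+2w|w|) = \log \frac{(1-p)(1-q)}{pq}.
\]
Since $w \mapsto w|w|$ is a bijection on $\reals$, the right-hand side of the second equation—of arbitrary sign depending on $p+q \lessgtr 1$—is always realizable by a suitable choice of $(e,w)$, and $b$ is then uniquely determined. (The degenerate case $p+q=1$ is covered by $e=0$ or $w=-1/\sqrt{2}$.) This produces an explicit $\globalmin$ whose induced conditional distribution matches the Markov kernel exactly.

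The main technical point to double-check is that the attention block truly contributes nothing; the choice $\inner{\bv}{\balpha}=0$ handles this uniformly in the $\att_{n,i}$, so no further argument about the softmax denominators is needed. With the kernel match established, claims (i)--(iv) follow as described above, completing the construction.
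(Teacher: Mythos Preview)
Your proposal is correct and follows essentially the same route as the paper. The paper actually recalls this theorem from \cite{makkuva2024attention} rather than reproving it, but the construction you give is precisely the canonical low-rank parameterization developed in \S\ref{app:low-rank}--\S\ref{app:reparam}, and your system $b-e^2/2=\log\frac{p}{1-p}$, $e^2(1+2w|w|)=\log\frac{(1-p)(1-q)}{pq}$ coincides with the characterization of $\biglobalset(p,q)$ in \prettyref{eq:globalset_bias} (derived in the proof of \prettyref{thm:all_character_bias}); the information-theoretic lower bound you invoke is exactly \cite[Lemma~1]{makkuva2024attention}, which the paper cites for the same purpose.
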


Let $L_\ast \define L(\globalmin)$ be the global minimal loss from \prettyref{thm:globalmin_tie}. Now we recall the result on the bad local minimum.

\begin{theorem}[\bf Bad local minimum] 
\label{thm:localmin_tie}
Let the input sequence be $\finiteseq \sim (\bpi(p,q), \pqmarkov)$ for some fixed $(p,q) \in (0,1)^2$. If $p+q > 1$, there exists an explicit $\localmin \in \reals^{D}$ such that it is a bad local minimum for the loss $L(\cdot)$, \ie
    \begin{enumerate}[label={\upshape(\roman*)}, nosep,leftmargin=16pt,itemsep=2pt]
        \item there exists a neighborhood $\calB(\localmin ,r)$ with $r>0$ such that $L(\btheta) \geq L(\localmin)$ for all $\btheta \in \calB(\localmin, r)$, with $L(\localmin) > L_\ast$.
    \end{enumerate}
    Further, $\localmin$ satisfies:
    \begin{enumerate}[label={\upshape(\roman*)}, nosep,leftmargin=16pt,itemsep=2pt]
    \setcounter{enumi}{1}
        \item $\pthetapi{x_{n+1}=1 \mid x_1^n} = \prob{x_{n+1}=1} = \pi_1 $, the marginal distribution.        
        \item $L(\localmin) = H(x_{n+1}) = H(\bpi)$, the entropy of the marginal.
        \item $\nabla L(\localmin)=0 $, \ie $\localmin$ is a stationary point.
    \end{enumerate}
\end{theorem}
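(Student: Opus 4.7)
The plan is to exhibit $\localmin$ by choosing all parameters so that every hidden state vanishes, making the network output the constant $\pi_1$, and then verify each claim by direct Taylor expansion of the loss around this point.

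\emph{Construction and claims (ii)--(iv).} Set $\be = \ba = \zero$ (weight tying preserved), all positional embeddings $\bp_n = \zero$, all weight matrices $\bW_K = \bW_Q = \bW_V = \bW_1 = \bW_2 = \zero$, and bias $b = \log(\pi_1/\pi_0) = \log(p/q)$. With these choices $\bx_n = \by_n = \bz_n = \zero$, the logit collapses to $b$, and $\pthetapi{x_{n+1}=1 \mid x_1^n} = \sigma(b) = \pi_1$, which is claim (ii). The loss becomes the cross-entropy of $\bpi$ with itself, so $L(\localmin) = H(\bpi)$, giving (iii); the strict inequality $L(\localmin) > L_\ast = H(x_{n+1}\mid x_n)$ follows because $p+q \neq 1$ forces the chain to be non-independent. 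For stationarity (iv), writing $\nabla_\btheta L = -\frac{1}{N}\sum_n \Expect[(x_{n+1}-\sigma(\logit_n))\,\nabla_\btheta \logit_n]$, every component of $\nabla_\btheta \logit_n$ at $\localmin$ is either identically zero (being multiplied by $\ba = \zero$ or a vanishing hidden state) or a deterministic constant (the bias); combined with $\Expect[x_{n+1}] = \pi_1 = \sigma(b)$, this forces $\nabla L(\localmin) = \zero$.

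\emph{Local-minimum property (i).} This is where the hypothesis $p+q > 1$ enters. Parameterize a small perturbation as $\btheta = \localmin + \epsilon \vect{\delta}$ with vector components $\vect{\delta}_e$ (tied to $\vect{\delta}_a$), $\{\vect{\delta}_{p,n}\}$, $\vect{\delta}_V,\vect{\delta}_1,\vect{\delta}_2$, and a scalar $\delta_b$. Because $\ba$ and every hidden vector vanish at $\localmin$, careful order tracking gives
\begin{align*}
\logit_n - b \;=\; \epsilon\,\delta_b + \epsilon^2\bigl(x_n\|\vect{\delta}_e\|^2 + \inner{\vect{\delta}_e}{\vect{\delta}_{p,n}}\bigr) + O(\epsilon^3),
\end{align*}
and weight-matrix perturbations enter only at order $\epsilon^3$ or higher. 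Taylor-expanding $L$ with $g(u) = x_{n+1}\log\sigma(u) + (1-x_{n+1})\log(1-\sigma(u))$, using $g'(b) = x_{n+1}-\pi_1$, $g''(b) = -\pi_0\pi_1$, and the identity $\Expect[(x_{n+1}-\pi_1)x_n] = -\pi_1 q(p+q-1)/(p+q)$, the quadratic part of $L(\localmin + \epsilon\vect{\delta}) - L(\localmin)$ simplifies to
\begin{align*}
\epsilon^2 \pi_1 q\,\frac{p+q-1}{p+q}\,\|\vect{\delta}_e\|^2 + \tfrac{1}{2}\epsilon^2 \pi_0\pi_1\,\delta_b^2 \;+\; O(\epsilon^3),
\end{align*}
a positive semi-definite quadratic form in the effective coordinates $(\|\vect{\delta}_e\|,\delta_b)$ precisely when $p+q > 1$.

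\emph{Main obstacle.} The delicate part is handling the ``neutral'' directions — perturbations of $\{\bp_n\}$ or of the weight matrices in isolation — which leave $\logit_n$ unchanged at $O(\epsilon^2)$. I would argue these are genuinely flat: any $\vect{\delta}$ with $\vect{\delta}_e = \zero$ and $\delta_b = 0$ leaves $\logit_n$ invariant to all orders (every affected layer multiplies a zero factor) and therefore contributes nothing to $L$. Cross terms between $\vect{\delta}_e$ and the weight-matrix or positional perturbations enter at $O(\epsilon^3)$ and are dominated, for sufficiently small $\epsilon$, by the strictly positive $O(\epsilon^2)$ term along $\|\vect{\delta}_e\|^2$. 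This reduces the verification to a positive-definiteness check on the effective two-dimensional subspace on which the Hessian actually acts, yielding the (non-strict) local minimum claim whenever $p+q > 1$.
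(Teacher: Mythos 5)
Your construction and the verification of (ii)--(iv) are correct, and your second-order bookkeeping is right: with $\ba=\be=\zero$, all hidden states vanish, the logit is the bias, stationarity follows from $\Expect[x_{n+1}]=\sigma(\log(p/q))=\pi_1$, and the only $O(\epsilon^2)$ contributions to the loss are $\pi_1 q\frac{p+q-1}{p+q}\|\vect{\delta}_e\|^2$ (via $\Expect[(x_{n+1}-\pi_1)x_n]=-\pi_1 q(p+q-1)/(p+q)$) and $\tfrac12\pi_0\pi_1\delta_b^2$, which is exactly where $p+q>1$ enters. The genuine gap is the final domination step. Your quadratic form is degenerate (it vanishes on the positional and weight-matrix directions), and your argument is run per direction: you fix $\vect{\delta}$ and let $\epsilon\to0$. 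The threshold below which the $O(\epsilon^2)$ term beats the $O(\epsilon^3)$ cross terms depends on the direction, and for directions in which $\|\vect{\delta}_e\|$ and $\delta_b$ are tiny relative to the kernel components the positive coefficient degenerates, so no uniform radius $r$ for $\calB(\localmin,r)$ comes out. Minimality along every ray does not imply a local minimum when the Hessian has a kernel (Peano-type examples such as $f(x,y)=(y-x^2)(y-3x^2)$), and that is precisely the regime here; asserting that the cross terms are ``dominated for sufficiently small $\epsilon$'' is therefore the unproven step, not a routine one.

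The gap is fillable, but it needs a structural observation you do not state: because $\ba=\be$, all data dependence of $\logit_n$ enters through $x_n\be$ (and through $\be$-dependent attention scores), so every data-dependent fluctuation of the logit carries at least one factor of $\|\vect{\delta}_e\|$, and the contraction with $\ba$ contributes a second. Hence every correction that survives the expectation against $x_{n+1}-\pi_1$, as well as the cubic Taylor remainder and the $\delta_b$ cross term (after AM--GM), is bounded by $C\rho\,(\|\vect{\delta}_e\|^2+\delta_b^2)$ uniformly over $\calB(\localmin,\rho)$, and is absorbed by the strictly positive quadratic form once $\rho$ is small; together with the exact flatness of the $\vect{\delta}_e=0,\ \delta_b=0$ directions (which you do note correctly), this yields a genuine uniform neighborhood. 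For comparison: this paper does not reprove the statement but imports it from \cite{makkuva2024attention}; the closest argument it does contain (the canonical-parameterization result with bias, \prettyref{thm:all_character_bias}) faces the same degeneracy and resolves it by observing that the loss is exactly constant along the flat direction and combining positive definiteness of the Hessian on the complementary $(b,e)$ block with a continuity argument --- i.e., the uniformity issue your write-up glosses over is precisely the point that proof treats explicitly.
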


\section{Low-rank structure of the optima}
\label{app:low-rank}
Here we recall the low-rank structure for the global minima found by SGD consistently across multiple runs when $p+q < 1$ \cite[Appendix C.2]{makkuva2024attention}. In particular, it is observed that the token and positional encodings point in the same direction $\balpha$, which is a low-rank factor for the weight matrices in the attention and the feedforward layers, which in turn are all rank-one. Mathematically, 

{\bf Embedding.} The embedding vector $\be$ obeys 
\begin{align*}
\be = e \cdot \balpha  
\end{align*}
for some $ e >0$ and $\balpha \in \{\pm 1\}^d$. Further, the positional embeddings $\bp_n$ are constant across positions $n$ pointing in the same direction albeit with a negative scalar, \ie 
\begin{align*}
\bp_n = -p \cdot \balpha    
\end{align*}
for $p > 0$ and $p \approx \frac{e}{2}$ such that $e> p$ . Thus from \ref{eq:embedding1} layer,
\begin{align}
    \bx_n = (ex_n - p) \cdot \balpha,
    \label{eq:low_embedding}
\end{align}
which ensures that the respective embeddings for the bit $x_n=0$ and $x_n=1$ are $\bx_n = -p  \cdot \balpha$ and $ \bx_n= (e-p) \cdot \balpha$, which are roughly anti-podal. 

{\bf Attention.} Recall from the \ref{eq:attention1} layer that the output $\by_n$ is given by $\by_n = \bx_n + \bW_{O} \compsum{i}{n} \att_{n,i} \cdot \bW_{V} \, \bx_i $, where the attention weights $\att_{n,i}$ are computed according to  $\att_{n,i} = \exp\pth{\inner{\bq_n}{\bk_i}/\sqrt{d}}/\pth{\sum_{j \in [n]} \exp\pth{\inner{\bq_n}{\bk_j}/\sqrt{d}}}$   with $\bq_n = \bW_Q \, \bx_n$ and $\bk_i = \bW_K \, \bx_i$. Here it is observed that the matrix products are all rank-one with $\balpha$ being a factor, \ie
\begin{align*}
    \bW_O \bW_V &= \balpha \cdot \bv^\top \in \matrx{d}{d},  \quad \text{for some} \, \bv \in \reals^d,  \\
    \bW_Q^\top \bW_K &= (q^2 d)\, \balpha \cdot \balpha^\top \in \matrx{d}{d}, \quad \text{for some} \, q \in \reals.
\end{align*}
Hence,
\begin{align*}
     \bW_{V} \, \bx_i &= \inner{\bv}{\balpha} (ex_i - p) \, \balpha,
\end{align*}
and
\begin{align*}
    \frac{\inner{\bq_n}{\bk_i}}{\sqrt{d}} &= \frac{1}{\sqrt{d}} \cdot \bx_n^\top \bW_Q^\top \bW_K \bx_n  = \frac{q^2 d}{\sqrt{d}} \cdot (\bx_n^\top \balpha) (\bx_i^\top \balpha)  \stackrel{(\norm{\balpha}^2 =d)}{=} \frac{q^2 d^3}{\sqrt{d}} \cdot (ex_n - p) (ex_i - p) \\
    & = q^2 d^{5/2} \cdot (ex_n - p) (ex_i - p).
\end{align*}
Thus,
\begin{align}
   \by_n &= \bx_n + \compsum{i}{n} \att_{n,i} \cdot \bW_{O} \bW_{V} \, \bx_i \nonumber \\
   &= (ex_i-p) \, \balpha + \compsum{i}{n} \att_{n,i} \cdot \inner{\bv}{\balpha} (ex_i - p) \, \balpha \nonumber \\
   &= \pth{(ex_n-p) + \inner{\bv}{\balpha} \compsum{i}{n} \frac{\exp\pth{q^2 d^{5/2} \, (ex_n - p) (ex_i - p)}}{\sum_{j \in [n]} \exp\pth{q^2 d^{5/2} \, (ex_n - p) (ex_j - p)}} \cdot (ex_i-p)  } \balpha. 
   \label{eq:low_attention}
\end{align}
It is further noticed that $\inner{\bv}{\balpha}  \approx 0$ and hence $\by_n = (ex_n - p) \, \balpha = \bx_n$. 

{\bf Feed-forward.} For the \ref{eq:ff1} layer, both the matrices $\bW_1 \in \matrx{r}{d}$ and $\bW_2 \in \matrx{d}{r}$ exhibit rank-one structure with $\balpha$ being one of the factors,
\begin{align}
  \bW_1 &= w \cdot \bw \cdot \balpha^\top, \quad \text{for some} \, \bw \in \{\pm 1\}^r, w>0, \label{eq:low_w1}\\
  \bW_2 &= \bW_1^\top. \label{eq:low_w2}
\end{align}
Thus $\bW_1 \by_n = d w (ex_n-p) \, \bw$. Since $-p <0$ and $e-p >0$, corresponding to $x_n=0$ and $x_n=1$ respectively, we obtain
$\mathrm{ReLU}(\bW_1 \by_n) = dw\pth{(1-x_n)p \cdot \relu(-\bw) + x_n (e-p) \cdot \relu(\bw) }$. Denoting the number of ones in $\bw$ as $\beta$, \ie $\beta = \sum_{i=1}^r \mathbbm{1}(w_i=1)$, we further simplify:
\begin{align*}
    \bW_2 \mathrm{ReLU}(\bW_1 \by_n) &= \bW_1^\top \mathrm{ReLU}(\bW_1 \by_n) \\
    &= w^2 d \pth{(1-x_n)p \cdot \inner{\bw}{\relu(-\bw)} + x_n (e-p) \cdot \inner{\bw}{\relu(\bw)}} \balpha \\
    & = w^2 d \pth{(1-x_n)p \cdot (\beta - r) + x_n (e-p) \cdot \beta} \balpha \\
    & = w^2 d (ex_n -p)\pth{(2\beta- r)x_n + r- \beta} \balpha.
\end{align*}
Hence
\begin{align}
    \bz_n = \by_n + \bW_2 \mathrm{ReLU}(\bW_1 \by_n) = (ex_n -p) \pth{1+ w^2 d \pth{(2\beta- r)x_n + r- \beta}} \balpha.
    \label{eq:low_ff}
\end{align}

{\bf Linear.} Using the fact that $\be=\ba = e \cdot \balpha$ due to weight-tying, we obtain from \ref{eq:logit1} layer that
\begin{align}
    \logit_n = \inner{\be}{\bz_n} + b = ed(ex_n -p) \pth{1+ w^2 d \pth{(2\beta- r)x_n + r- \beta}} + b.  
    \label{eq:low_linear}
\end{align}

{\bf Prediction.} We finally obtain that the prediction probability 
\begin{align*}
  f_{\btheta}(x_1^n) = \sigma(\logit_n) = x_n \cdot \sigma\pth{ed(e-p)\pth{1+\beta w^2 d} + b}  + (1-x_n) \cdot \sigma\pth{-edp\pth{1+(r-\beta) w^2 d} + b} .  
\end{align*}

Thus we see that the prediction probability and hence the loss function $L(\cdot)$ in \prettyref{eq:loss1} is influenced only by the scalars $e, p, w, b$ and $\beta$.

\section{Canonical reparameterization}
\label{app:reparam}
Building on the low-rank strucutre of the transformer parameters described above, we consider a special parameterization for them. A key property of this parameterization is that it covers both the global and local minima from \prettyref{thm:globalmin_tie} and \prettyref{thm:localmin_tie} for all $(p,q) \in (0,1)^2$. Recall that \prettyref{thm:localmin_tie} characterizes local minima only for $p+q >1$ whereas our special parameterization allows to discover local minima even for $p+q <1 $. Our construction follows the same outline as in Eqs.~(\ref{eq:low_embedding})-(\ref{eq:low_linear}). First we start with the embedding layer. 

{\bf Embedding.} We let $\be = e \cdot \balpha$ and $\bp_n = - p \cdot \balpha$ for all $n$ where $e >0, p = \frac{e}{2}$ and $\balpha \in \{\pm 1 \}^d/\sqrt{d}$. Thus the embedding $\bx_n$ from \prettyref{eq:low_embedding} simplifies to
\begin{align*}
    \bx_n = e\pth{x_n - \frac{1}{2}}  \balpha  \in \{ \pm  \frac{e}{2} \} \, \balpha.
\end{align*}
{\bf Attention.} Substituting this $\bx_n$ in \prettyref{eq:low_attention}, we have
\begin{align}
    \by_n = e\pth{x_n - \frac{1}{2}}  \balpha + \inner{\bv}{\balpha} \pth{\compsum{i}{n} \att_{n,i} \cdot  e\pth{x_i - \frac{1}{2}}} \balpha, 
    \label{eq:attn_copy}
\end{align}
where the attention weights $\att_{n,i}= \frac{\exp\pth{e^2 q^2 d^{5/2} \, (x_n - \frac{1}{2}) (x_i - \frac{1}{2})}}{\sum_{j \in [n]} \exp\pth{e^2 q^2 d^{5/2} \, (x_n - \frac{1}{2}) (x_j - \frac{1}{2})}} \in (0,1)$ for some $q \in \reals$. Since $\inner \bv \balpha \approx 0$, we let $\bv = 0$ and obtain
\begin{align*}
    \by_n = \bx_n =  e\pth{x_n - \frac{1}{2}}  \balpha.
\end{align*}

{\bf Feed-forward}. For the feed-forward layer, we observe from \prettyref{eq:low_w1} and \prettyref{eq:low_ff} that for any $\bw \in \{\pm 1 \}^r$, only the number of $1$'s in $\bw$, $\beta$, matters for the final vector $\bz_n$ which further interacts with the weight scalar $w$. Hence without loss of generality, we set $\bw$ to be the all-ones vector: $\bw = \one \in \reals^r$ and hence $\beta= r = 4d$. While we observe from \prettyref{eq:low_w1} that $\bW_2 = \bW_1^\top$ for $p+q <1$, we observe from the proof of the \prettyref{thm:globalmin_tie} for $p+q >1$ in \cite[Appendix B.2]{makkuva2024attention} that we need $\bW_2 = -\bW_1^\top$ in this scenario. Hence we consider the following parameterization that covers both these scenarios:
\begin{align*}
    \bW_1 = \frac{|w|}{\sqrt{d}}\, \one \cdot \balpha^\top \in \reals^{4d \times d}, \quad \bW_2 = \frac{w}{\sqrt{d}} \, \balpha \cdot \one^\top  \in \reals^{d \times 4d}.
\end{align*}
Here $w>0$ ensures $\bW_2 = \bW_1^\top$ whereas $w<0$, $\bW_2 = -\bW_1^\top$. Using this parameterization, substituting $\beta = r= 4d$ and $ w \mapsto \frac{w}{{d}}$ in \prettyref{eq:low_ff}, we get
\begin{align*}
    \bz_n = e\pth{x_n - \frac{1}{2}}\pth{1+ 4 w |w| x_n} \balpha.
\end{align*}

{\bf Linear.} Since $\be=\ba= e \cdot \balpha$ due to weight-tying, \prettyref{eq:low_linear} simplifies to
\begin{align*}
    \logit_n &= \inner{\be}{\bz_n} + b = e^2 \pth{x_n - \frac{1}{2}}(1+ 4 w |w| x_n ) + b \\
    & \stackrel{(x_n = x_n^2)}{=} e^2 \pth{ x_n + 4 w |w| x_n - \frac{1}{2} - 2 w |w| x_n      } + b \\
    &= e^2(1+2 w |w|) \, x_n + b - \frac{e^2}{2}.
\end{align*}

{\bf Prediction.} The next-token prediction probability is
\begin{align}
f_{(\btheta,b)}(x_1^n) &= \sigma \pth{e^2 (1+ 2 w |w|)\, x_n + b-\frac{e^2}{2}}, \quad \btheta \define (e,w) \in \reals^2.
\label{eq:pred_cannoli1}
\end{align}
{\bf Loss.} While we assumed $e>0$ in the beginning, in view of \prettyref{eq:pred_cannoli1} and the fact that $\balpha \in \{\pm 1\}^d/\sqrt{d} $, we see that $e \in \reals$ gives us the same expression for probability. Thus the final probability depends on just the three scalars $(e, w , b )\in \reals^3$. Defining $\btheta = (e,w) \in \reals^2$, we recall the cross-entropy loss $L(\cdot)$ from \prettyref{eq:loss_cannoli_almost} in \prettyref{sec:prob} for this canonical model:
\begin{align}
 L(\btheta,b) & = -\frac{1}{N} \sum_{n \in \set N} \Expect_{x_1^{n+1}}[x_{n+1} \cdot \log f_{(\btheta,b)}(x_1^n) \, +  (1- x_{n+1}) \cdot \log (1-f_{(\btheta,b)}(x_1^n)) ].
    \label{eq:loss_cannoli_almost1}
\end{align}

It turns out that we can further remove the bias $b$ by minimizing the loss over it which we discuss in \prettyref{app:all_character}. For now in the next section, we analyze when it's present as in \prettyref{eq:loss_cannoli_almost1}.


\section{Analysis of the loss with the bias, $L(\btheta,b)$, in \prettyref{eq:loss_cannoli_almost} and \prettyref{eq:loss_cannoli_almost1}}
\label{app:loss_bias}

In this section, we analyze the loss function with the bias, $L(\btheta,b)$, from \prettyref{eq:loss_cannoli_almost} and \prettyref{eq:loss_cannoli_almost1}, which will later be useful for studying $L(\btheta)$. First we characterize the set of its critical points in $\reals^3$. To this end, we define the following sets of points

\begin{align}
        \biglobalset(p,q) & \define \sth{ (e, w, b) \in \reals^3 : e^2(1+2w|w|) = \log \frac{(1-p)(1-q)}{pq}, \, b-\frac{e^2}{2} = \log \frac{p}{1-p}  }, 
    \label{eq:globalset_bias} \\
        \bilocalset(p,q) & \define \sth{ (e, w, b) \in \reals^3: e = 0, (p+q-1)(1+2w|w|) > 0, b = \log \frac{p}{q}  },
            \label{eq:localset_bias} \\
        \bisaddleset(p,q) &\define \sth{ (e, w, b) \in \reals^3: e = 0, (p+q-1)(1+2w|w|) \leq 0 , b = \log \frac{p}{q}  }. 
            \label{eq:saddleset_bias}
\end{align}


The following result establishes that these sets exhaust all the critical points.

\begin{theorem}[All critical points] 
\label{thm:all_station_bias}
Let the input sequence be $\finiteseq \sim (\bpi, \bP)$, the transformer parameters $ (\btheta, b) = (e,w, b) \in \reals^3$, and the next-token prediction loss $L(\cdot)$ be as in \prettyref{eq:loss_cannoli_almost1}. Then all the stationary points of $L$ are either in $\biglobalset$, $\bilocalset$, or $\bisaddleset$, \ie
\begin{align}
    \sth{(\btheta ,b) \in \reals^3: \nabla L(\btheta, b) = 0} = \biglobalset \cup \bilocalset \cup \bisaddleset.
\end{align}

\end{theorem}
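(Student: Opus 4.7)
The plan is to exploit the fact that $L(\btheta, b)$ depends on $(e,w,b)$ only through the two scalars
\[
A \define e^2(1+2w|w|), \qquad B \define b - e^2/2,
\]
since $\logit_n = A x_n + B$ takes only the values $B$ (on $\{x_n=0\}$) and $A+B$ (on $\{x_n=1\}$). Finding critical points then reduces to solving a two-dimensional problem in $(A,B)$ and back-solving via the chain rule for $(e,w,b)$.

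\textbf{Gradient computation.} Using stationarity $\finiteseq \sim (\bpi, \bP)$ with $\bpi = (q,p)/(p+q)$, conditioning on $x_n$ and averaging over $x_{n+1}$ makes the summand in \prettyref{eq:loss_cannoli_almost1} independent of $n$, so
\[
L(\btheta,b) = \pi_0 \bigl[ p\,\logistic(B) + (1-p)\,\logistic(-B) \bigr] + \pi_1 \bigl[ (1-q)\,\logistic(A+B) + q\,\logistic(-(A+B)) \bigr].
\]
A direct computation using $\logistic'(z) = \sigma(z) - 1$ together with the chain rule (noting that $w \mapsto w|w|$ is $C^1$ with derivative $2|w|$) yields
\[
\partial_b L = \partial_B L, \qquad \partial_w L = 4e^2 |w|\, \partial_A L, \qquad \partial_e L = e\bigl[2(1+2w|w|)\partial_A L - \partial_B L\bigr],
\]
where $\partial_A L = \pi_1(\sigma(A+B) - (1-q))$ and $\partial_B L = \partial_A L + \pi_0(\sigma(B) - p)$.

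\textbf{Case analysis and expected obstacle.} I split on whether $e=0$. If $e=0$, the $e$- and $w$-derivatives vanish identically and $A=0$ collapses both logits to $B$; the remaining equation $\partial_b L = 0$ reduces to $\sigma(b) = \pi_0 p + \pi_1(1-q) = p/(p+q)$, equivalently $b = \log(p/q)$. This yields the slice $\{(0, w, \log(p/q)) : w \in \reals\}$, which is exactly $\bilocalset \cup \bisaddleset$, partitioned according to the sign of $(1+2w|w|)(p+q-1)$ as in their defining conditions. If $e \neq 0$, then $\partial_w L = 0$ forces $|w|\partial_A L = 0$. When $w \neq 0$ this directly gives $\partial_A L = 0$, and $\partial_e L = -e\partial_B L = 0$ then yields $\partial_B L = 0$. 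When $w = 0$ (the delicate subcase, since $\partial_w L$ becomes automatically uninformative), $\partial_e L = \partial_b L = 0$ forms a $2 \times 2$ linear system in $(\partial_A L, \partial_B L)$ with nonzero determinant $-e$, which still forces $\partial_A L = \partial_B L = 0$. Inverting $\sigma$ then gives $B = \log\frac{p}{1-p}$ and $A+B = \log\frac{1-q}{q}$, hence $A = \log\frac{(1-p)(1-q)}{pq}$; these are precisely the defining equations of $\biglobalset$, completing the union identity.
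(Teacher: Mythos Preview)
Your proof is correct and follows essentially the same route as the paper's. The paper uses \prettyref{lmm:grad_comp_bias} to write the three partials in terms of $f_1,f_2$ (which are affine in $\sigma(A+B)$ and $\sigma(B)$) and then performs the identical case split on $e=0$ versus $e\neq 0$, with the same subcase $w=0$ versus $w\neq 0$; your $\partial_A L$ and $\partial_B L$ coincide with the paper's $\pi_1(f_1+f_2)$ and $\pi_1 f_1+f_2$. One small slip: in the $e\neq 0$, $w=0$ subcase the coefficient matrix of $(\partial_A L,\partial_B L)$ is $\begin{psmallmatrix}2e & -e\\ 0 & 1\end{psmallmatrix}$ with determinant $2e$, not $-e$, but this does not affect the argument since all you need is that it is nonzero.
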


\begin{proof}
    We refer to \prettyref{app:proof_all_character_bias}.
\end{proof}

Recall the definitions of local minima \& maxima, global minima, and that of all the saddle points from \prettyref{sec:landscape_cannoli}. We are now ready to present the main result about the loss landscape of $L(\cdot)$.

\begin{theorem}[Loss landscape with bias] 
\label{thm:all_character_bias}
Let the input sequence be $\finiteseq \sim (\bpi, \bP)$, the transformer parameters $ (e,w, b) \in \reals^3$, and the next-token prediction loss $L(\cdot)$ be as in \prettyref{eq:loss_cannoli_almost1}. Then for any $(p,q) \in (0,1)^2$ with $p+q \neq 1$ and $N \in \naturals$, 
    \begin{enumerate}[label={\upshape(\roman*)}, nosep,leftmargin=16pt,itemsep=2pt]

    \item the set of all global minima of $L$ is given by $\biglobalset(p,q)$,
   \item the set of all bad local minima of $L$ is given by $\bilocalset(p,q)$,
    \item and the set of all saddle points of $L$ is $\bisaddleset(p,q)$.
        
   \end{enumerate}

Furthermore, for any $\biglobalmin \in \biglobalset, \bilocalmin \in \bilocalset$, and $\bisaddle \in \bisaddleset$, the losses are ordered as
\begin{align*}
H(x_{n+1} \mid x_n) = L(\biglobalmin) < L(\bilocalmin) = L(\bisaddle) = H(x_{n+1}).    
\end{align*}


\end{theorem}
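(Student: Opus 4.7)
The plan is to combine \prettyref{thm:all_station_bias}, which already identifies the critical-point set as $\biglobalset \cup \bilocalset \cup \bisaddleset$, with (i) a direct loss evaluation on each of the three sets and (ii) a local second-order analysis to separate local minima from saddles. For (i), I would introduce the two ``logit coordinates'' $z_0 \define b - e^2/2$ and $z_1 \define b + e^2(\tfrac12 + 2w|w|)$, which are the only quantities on which the loss depends. On $\biglobalset$ the defining equations give $\sigma(z_0) = p$ and $\sigma(z_1) = 1-q$, exactly reproducing the Markov kernel, whence $L = \pi_0 h(p) + \pi_1 h(q) = H(x_{n+1}\mid x_n)$. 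On $\bilocalset \cup \bisaddleset$ the constraint $e=0$ forces $z_0 = z_1 = b = \log(p/q)$, so the prediction is the constant marginal $\pi_1$ and $L = H(\bpi) = H(x_{n+1})$. Since $p+q \neq 1$, the chain has non-zero mutual information, so $H(x_{n+1}\mid x_n) < H(x_{n+1})$, yielding the stated strict ordering.

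Global optimality of $\biglobalset$ follows from the standard entropy-rate lower bound $L(\btheta,b) \geq H(x_{n+1}\mid x_n)$ for every $(\btheta,b) \in \reals^3$, which is a direct consequence of the non-negativity of the Kullback--Leibler divergence between the transformer's predictive law and the true Markov kernel. Since $\biglobalset$ attains this bound, it is exactly the set of global minima, ruling out any element of $\bilocalset \cup \bisaddleset$ from being one.

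To distinguish local minima from saddle points, I would Taylor-expand $L$ at any critical point $(0, w_\ast, \log(p/q))$ in a perturbation $(\Delta e, \Delta w, \Delta b)$, using the factorisation $L = L^\ast(z_0, z_1)$ with $L^\ast$ strictly convex. The first-order $\Delta b$-contribution cancels by the criticality identity $\partial_{z_0} L^\ast + \partial_{z_1} L^\ast = 0$, and because $(z_0, z_1)$ depends on $e$ only through $e^2$, the leading second-order expansion reduces to
\[
 \Delta L \;=\; \tfrac{1}{2}\,\pi_0 \pi_1 \, \Delta b^{\,2} \;+\; \frac{pq\,(p+q-1)}{(p+q)^{2}}\,\bigl(1 + 2 w_\ast |w_\ast|\bigr)\, \Delta e^{\,2} \;+\; o\!\bigl(\|\Delta\|^{2}\bigr),
\]
with $\Delta w$ absent at this order, because varying only $w$ leaves $z_0 = z_1$ unchanged and hence $L$ is flat along $\bilocalset$. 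When $(p+q-1)(1 + 2 w_\ast | w_\ast|) > 0$ both coefficients are strictly positive, producing a non-strict local minimum; when the product is strictly negative, the $\Delta e^{\,2}$-coefficient is negative, the $e$-axis exhibits descent, and the point is a saddle.

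The main technical sticking point is the single degenerate point $w_\ast = -1/\sqrt{2}$, where $1 + 2 w_\ast | w_\ast| = 0$ and the $\Delta e^{\,2}$-coefficient vanishes, together with the mild non-smoothness of $\phi(w) \define 2w|w|$ at $w = 0$. For $w_\ast = -1/\sqrt{2}$ I would push the expansion one order further: since $\phi'(-1/\sqrt{2}) = 2\sqrt{2}$, the dominant remaining cross-term is proportional to $\Delta e^{\,2}\,\Delta w$, whose sign is flipped by flipping $\Delta w$, producing a descent direction and confirming this boundary point as a saddle (so $\bisaddleset$ correctly uses the non-strict inequality). At $w_\ast = 0$ (relevant only when $p+q > 1$) the map $\phi$ is $C^1$ with $\phi'(0) = 0$, so the leading-order expansion above remains valid via one-sided Taylor expansions. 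Finally, the universally positive $\Delta b^{\,2}$-coefficient shows that no point of $\bilocalset \cup \bisaddleset$ is a local maximum, so the three sets exhaust all critical points up to classification, completing the proof.
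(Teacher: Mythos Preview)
Your proposal is correct and takes essentially the same approach as the paper: invoke \prettyref{thm:all_station_bias} to enumerate the critical points, evaluate the loss on each set via the prediction probabilities, and use a second-order (Hessian/Taylor) analysis at $e=0$ to separate local minima from saddles. Your logit-coordinate framing $(z_0,z_1)$ and the explicit cubic expansion at $w_\ast=-1/\sqrt{2}$ are mild presentational refinements over the paper's direct Hessian computation (\prettyref{lmm:hess_comp_bias}) and its one-line ``neighborhood argument'', but the underlying structure is identical.
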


\begin{remark}
    \normalfont
    Note that a bad local minimum is a local minimum whose loss value is strictly less than that of the global minimum, as is the case here. Interestingly, \prettyref{thm:all_character_bias} highlights that all local minima for the loss $L$ are indeed bad local minima.
\end{remark}

\begin{proof}
    We refer to \prettyref{app:proof_all_station_bias}.
\end{proof}

\subsection{Technical lemmas}

The proofs of both \prettyref{thm:all_station_bias} and \prettyref{thm:all_character_bias} rely on few key lemmas that we present below. First we start with the result that rewrites the loss $L(\btheta,b)$ from \prettyref{eq:loss_cannoli_almost1} in a compact manner using the logistic function $\logistic(\cdot)$.

\begin{lemma}[Loss as a logistic function]
    The next-token prediction loss $L(\cdot)$ in \prettyref{eq:loss_cannoli_almost1} can be written as 
    \begin{align} 
    \begin{split}
     L(\btheta,b) &= \frac{1}{N} \sum_{n \in \set N} \Expect [\logistic \pth{ (2x_{n+1}-1) \cdot \logit_n } ] \\
     &= \Expect_{X,Y} \qth{\logistic \pth{(2Y-1) \pth{e^2(1+2w|w|)X + b - \frac{e^2}{2}}   } },
     \end{split}
     \label{eq:loss_rewrite_bias}
    \end{align}
    where $(X,Y) \in \{0,1\}^2$ are distributed according to $(X,Y) \sim (\bpi, \bP)$, \ie $X$ is a Bernoulli random variable with $X \sim \bpi  \equiv \mathrm{Bern}(p/(p+q))$ and $Y|X \sim \bP(p,q)$, the Markov kernel.
    \label{lmm:loss_rewrite_bias}
\end{lemma}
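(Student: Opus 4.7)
The plan is to reduce the cross-entropy expression to the logistic form in two straightforward steps: first, rewrite the per-position summand using the standard sigmoid--logistic identity, and second, invoke stationarity together with the fact that $\logit_n$ depends only on $x_n$ to collapse the average over $n$.

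For the first step, fix $n$ and let $z = \logit_n(e,w,b) = e^2(1+2w|w|)x_n + b - e^2/2$, so $f_{(\btheta,b)}(x_1^n) = \sigma(z)$. Using $-\log \sigma(z) = \log(1+e^{-z}) = \logistic(z)$ and $-\log(1-\sigma(z)) = \log(1+e^{z}) = \logistic(-z)$, together with $x_{n+1} \in \{0,1\}$ and $2x_{n+1}-1 \in \{-1,+1\}$, I would combine the two cross-entropy terms into the single expression
\begin{align*}
-x_{n+1}\log \sigma(z) - (1-x_{n+1})\log (1-\sigma(z)) = \logistic\bigl((2x_{n+1}-1)\, z\bigr).
\end{align*}
Taking expectations and averaging over $n$ immediately yields the first equality in \prettyref{eq:loss_rewrite_bias}.

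For the second step, I would observe that $\logit_n$ is a deterministic function of $x_n$ only; the canonical parameterization has already eliminated any dependence on the remainder of the context $x_1^{n-1}$. Since the process $\{x_n\}$ is assumed mixed and stationary, the joint law of $(x_n,x_{n+1})$ equals $(\bpi,\bP)$ for every $n \in [N]$. Hence $\Expect\bigl[\logistic((2x_{n+1}-1)\,\logit_n)\bigr]$ is independent of $n$, and the average $\frac{1}{N}\sum_{n\in[N]}$ collapses to a single expectation over $(X,Y)\sim(\bpi,\bP)$. Substituting the explicit formula for $\logit_n$ with $X=x_n$ and $Y=x_{n+1}$ gives the second equality.

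There is no real technical obstacle here: the statement is essentially a rewriting. The only subtlety worth flagging is the $n$-independence of the summand, which relies crucially on the canonical reparameterization having removed the attention contribution (so that $\logit_n$ is a function of $x_n$ alone); otherwise, stationarity would not suffice to drop the outer average.
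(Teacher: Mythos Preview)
Your proposal is correct and follows essentially the same approach as the paper: first rewrite the per-term cross-entropy via the sigmoid--logistic identity $-Y\log\sigma(Z)-(1-Y)\log(1-\sigma(Z))=\logistic((2Y-1)Z)$, then use that $\logit_n$ depends only on $x_n$ together with stationarity of $(x_n,x_{n+1})$ to collapse the average over $n$ into a single expectation over $(X,Y)\sim(\bpi,\bP)$.
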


The following lemma establishes the gradients of the loss function with respect to the parameters $e,w$, and $b$.
\begin{lemma}[Gradient computation]
    For any $(e,w,b) \in \reals^3$ and the next-token prediction loss $L(\cdot)$ in \prettyref{eq:loss_cannoli_almost1}, the gradients are given by
    \begin{align*}
        \frac{\partial L}{\partial e} &= \Expect_X \qth{(f_1 X + f_2)(2X(1+2w|w|) - 1)} \cdot e, \\
        \frac{\partial L}{\partial w} &= \Expect_X \qth{(f_1 X + f_2)X} \cdot 4e^2 |w|, \\
        \frac{\partial L}{\partial b} &= \Expect_X \qth{f_1 X+ f_2},
    \end{align*}
    where $X \in \{0,1\}$ is a Bernoulli random variable with $X \sim \mathrm{Bern}(p/(p+q))$, $f_1 = \sigma\pth{2e^2 w |w| + b+ \frac{e^2}{2}} + q -1 - \sigma\pth{b- \frac{e^2}{2}} + p$, and $f_2 = \sigma\pth{b- \frac{e^2}{2}} - p$. 
    \label{lmm:grad_comp_bias}
\end{lemma}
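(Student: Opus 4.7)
\textbf{Proof plan for \prettyref{lmm:grad_comp_bias}.} The plan is a direct calculation: differentiate the compact logistic form of the loss given by \prettyref{lmm:loss_rewrite_bias}, apply the chain rule, then decompose the joint expectation into an outer sum over $X$ and an inner conditional expectation over $Y \mid X$ (using the Markov kernel $\bP(p,q)$). The goal is to check that after simplifying the inner $Y$-expectation, the result reorganizes exactly into the expressions $f_1 X + f_2$ appearing in the statement.

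First I would set $v(X;e,w,b) \define e^2(1+2w|w|)X + b - \frac{e^2}{2}$ and $U \define (2Y-1)\,v$, so that by \prettyref{lmm:loss_rewrite_bias} we have $L(\btheta,b) = \Expect_{X,Y}[\logistic(U)]$. Using $\logistic'(z) = \sigma(z)-1$ and the chain rule,
\begin{align*}
\partial_\eta L = \Expect_{X,Y}\!\left[\,(\sigma(U)-1)\,(2Y-1)\,\partial_\eta v\,\right], \qquad \eta \in \{e,w,b\}.
\end{align*}
The three partials are easy: $\partial_b v = 1$, $\partial_e v = e\,(2X(1+2w|w|) - 1)$, and since $w\mapsto w|w|$ is $C^1$ on $\reals$ with derivative $2|w|$ (check separately on $w>0$, $w<0$, and $w=0$), $\partial_w v = 4e^2|w|\,X$. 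Pulling the $\eta$-independent factors outside the expectation already gives the structural form of each of the three claimed identities, so what remains is to compute $\Expect_{X,Y}[(\sigma(U)-1)(2Y-1)\cdot X^k]$ for $k \in \{0,1\}$, i.e.\ to identify this quantity with $\Expect_X[(f_1 X + f_2) X^k]$.

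For that I would write out the inner expectation conditional on $X$, using $1-\sigma(-z) = \sigma(z)$:
\begin{align*}
\Expect_{Y\mid X=0}[(\sigma(U)-1)(2Y-1)] &= -(1-p)(\sigma(-v_0)-1) + p(\sigma(v_0)-1) \\
 &= \sigma(v_0) - p, \\
\Expect_{Y\mid X=1}[(\sigma(U)-1)(2Y-1)] &= -q(\sigma(-v_1)-1) + (1-q)(\sigma(v_1)-1) \\
 &= \sigma(v_1) + q - 1,
\end{align*}
where $v_0 = b-\tfrac{e^2}{2}$ and $v_1 = 2e^2 w|w| + b + \tfrac{e^2}{2}$. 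Comparing with the definitions, the $X=0$ contribution is exactly $f_2$ and the $X=1$ contribution is exactly $f_1 + f_2$, so the conditional expectation equals $f_1 X + f_2$ for $X\in\{0,1\}$. Substituting this into the three formulas obtained from the chain rule yields the three claimed gradient identities.

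The only subtle point, and the main place where care is required, is the differentiation of $w\mapsto w|w|$ and the matching of the inner $Y$-expectation with the expressions $f_1, f_2$; the rest is mechanical chain rule and application of the stationarity of $(x_n,x_{n+1})\sim(\bpi,\bP)$ that collapses the $\frac{1}{N}\sum_n$ into a single expectation.
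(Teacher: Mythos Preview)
Your proposal is correct and follows essentially the same route as the paper's proof. The only cosmetic difference is that the paper invokes an external lemma to write $\nabla_\theta L = \Expect[(\sigma(\logit_X)-Y)\,\nabla_\theta\logit_X]$ and then computes $\Expect[\sigma(\logit_X)-Y\mid X]=f_1X+f_2$, whereas you derive the same identity directly from $\logistic'(z)=\sigma(z)-1$ applied to the logistic form of \prettyref{lmm:loss_rewrite_bias}; since $(\sigma((2Y-1)v)-1)(2Y-1)=\sigma(v)-Y$ for $Y\in\{0,1\}$, the two computations coincide line for line.
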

\begin{remark}\normalfont
It is interesting to note that the gradients for both $e$ and $w$ are product of an expectation term and an $e$ factor. Also, except for scaling factors in terms of $(e,w,b)$, all the gradients are governed by the two expectation terms $\Expect[(f_1 X +f_2)X]$ and $\Expect[f_1 X +f_2]$. This observation plays a key role in obtaining an ordinary differential equation which yields the energy function $\energy$, defined in \prettyref{eq:energy}.

\end{remark}

Now we characterize the Hessian at both local-minima and saddle points.

\begin{lemma}[Hessian at local-minima and saddle points]
    For the canonical parameterization $\bgamma = (b,e,w) \in \reals^3$ and the next-token prediction loss $L(\cdot)$ in \prettyref{eq:loss_cannoli_almost1}, the Hessian at any $\bilocalmin \in \bilocalset$ or $\bisaddle \in \bisaddleset$ is given by
    \begin{align*}
    \nabla^2 L(\bgamma) \bigg \rvert_{\bgamma= \bilocalmin, \bisaddle} = \pi_0 \pi_1 \begin{bmatrix}
    1 & 0 & 0 \\
    0 & 2(p+q-1)(1+2w|w|) & 0 \\
    0 & 0 & 0
    \end{bmatrix},
\end{align*}
where $\pi_0 = \frac{q}{p+q} $ and $\pi_1 = \frac{p}{p+q}$. 
    \label{lmm:hess_comp_bias}
\end{lemma}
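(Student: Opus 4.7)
My plan is to compute the $3\times 3$ Hessian of $L(\bgamma)$ directly from the gradient formulas in \prettyref{lmm:grad_comp_bias} and then evaluate at a point where $e=0$ and $b=\log(p/q)$. The key observation, which drives essentially every simplification, is that in the expressions
\begin{align*}
\frac{\partial L}{\partial e}=\Expect_X[(f_1X+f_2)(2X(1+2w|w|)-1)]\cdot e,\qquad
\frac{\partial L}{\partial w}=\Expect_X[(f_1X+f_2)X]\cdot 4e^2|w|,
\end{align*}
the overall factors $e$ and $e^2|w|$ mean that most second-order partial derivatives collapse at $e=0$. Concretely, differentiating $\partial L/\partial e=g(e,w,b)\cdot e$ with respect to any of the three variables at $e=0$ gives $g(0,w,b)$ along the diagonal and $0$ off-diagonal; similarly, every second derivative of $\partial L/\partial w=h(e,w,b)\cdot 4e^2|w|$ vanishes at $e=0$. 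This already forces the $(w,w)$ entry and all three off-diagonal entries at $\bgamma=(b,e,w)$ with $e=0$ to be $0$, matching the claimed block structure.

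It then remains to compute the two diagonal entries $\partial^2 L/\partial b^2$ and $\partial^2 L/\partial e^2$ at $e=0$, $b=\log(p/q)$. For this I first evaluate $f_1,f_2$ at this point: using $\sigma(\log(p/q))=p/(p+q)=\pi_1$, a short calculation yields $f_1=p+q-1$ and $f_2=-\pi_1(p+q-1)$. For $\partial^2 L/\partial b^2$, differentiating $\Expect[f_1X+f_2]$ with respect to $b$ gives $\Expect[X\,\partial_b f_1+\partial_b f_2]$; at $e=0$ the two $\sigma'(\cdot)$ contributions in $\partial_b f_1$ cancel, leaving $\partial_b f_2=\sigma'(b)=\pi_0\pi_1$ as desired. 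For $\partial^2 L/\partial e^2$, by the product-rule observation above, I only need $g(0,w,b)=\Expect[(f_1X+f_2)(2X(1+2w|w|)-1)]$ evaluated at the critical point; expanding using $X^2=X$ and $\Expect[X]=\pi_1$, and substituting $f_1=p+q-1$, $f_2=-\pi_1(p+q-1)$, the bracket factors as $\pi_1(p+q-1)\bigl[(1+4w|w|)-(2\pi_1(1+2w|w|)-1)\bigr]=2\pi_0\pi_1(p+q-1)(1+2w|w|)$, which is exactly the $(2,2)$ entry of the claimed matrix.

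The main obstacle is purely bookkeeping: correctly tracking which derivatives of $f_1,f_2$ survive at $e=0$ (several $\sigma'$ terms appear and must cancel in pairs), and keeping the factors $e$, $e^2|w|$ intact when differentiating so as not to mistakenly count contributions that actually vanish. There is no subtlety coming from the $|w|$ or $\mathrm{sign}(w)$ nondifferentiability, because the $e$ and $e^2$ prefactors suppress everything in $w$ at $e=0$; consequently the lemma is valid at \emph{any} $\bilocalmin\in\bilocalset$ or $\bisaddle\in\bisaddleset$, regardless of whether $w=0$ or $w\neq 0$. Once the four entries $(b,b),(e,e),(w,w)$ and the three off-diagonals are computed, the factorization $\pi_0\pi_1\cdot\mathrm{diag}(1,\,2(p+q-1)(1+2w|w|),\,0)$ follows directly.
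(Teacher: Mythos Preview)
Your proposal is correct and follows essentially the same approach as the paper: both compute the Hessian directly from the gradient formulas of \prettyref{lmm:grad_comp_bias} and evaluate at $e=0$, $b=\log(p/q)$, with the same $f_1=p+q-1$, $f_2=-\pi_1(p+q-1)$ substitution driving the $(e,e)$ entry. Your organization is slightly more streamlined---using the $e$ and $e^2|w|$ prefactors to kill the off-diagonal and $(w,w)$ entries in one stroke---whereas the paper explicitly computes all of $\partial f_i/\partial b,\partial f_i/\partial e,\partial f_i/\partial w$ at the point and then substitutes; but the underlying computation is identical.
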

\begin{remark} \normalfont
    We note that the Hessian is computed with the parameter ordering $(b,e,w)$.  
\end{remark}

The proofs of the lemmas are presented in \prettyref{app:proofs_all_lemmas_bias}.


\section{Analysis of the loss without bias, $L(\btheta)$, and proof of \prettyref{thm:all_character}}
\label{app:all_character}

The proof of \prettyref{thm:all_character}, concerning the loss $L(\btheta)$ in \prettyref{eq:loss_cannoli}, is similar to that of \prettyref{thm:all_character_bias} which studies the loss $L(\btheta,b)$ with the bias present. The main idea is to establish the analogous set of lemmas, as in \prettyref{app:loss_bias}, when the bias is substituted with its optimal choice. First we recall the loss function
\begin{align}
     L(\btheta) \define L(\btheta, b_\ast) &= -\frac{1}{N} \sum_{n \in \set N} \Expect_{x_1^{n+1}}[x_{n+1} \cdot \log f_{(\btheta,b_\ast)}(x_1^n) \, +  (1- x_{n+1}) \cdot \log (1-f_{(\btheta,b_\ast)}(x_1^n)) ],
    \label{eq:loss_cannoli1} \\
    b_\ast &= \argmin_{b \in \reals} L(\btheta,b).    \nonumber
\end{align}


 We start with the result that establishes a closed form expression for $b_\ast$.

\begin{lemma}[Optimal bias]
    For $\btheta = (e,w) \in \reals^2$ and $b \in \reals$, let $L(\btheta,b)$ be the next-token prediction loss defined in \prettyref{eq:loss_cannoli_almost1}. Then, for any $\btheta \in \reals^2$, $L(\btheta,b)$ is convex in $b$ and the minimizer $b_\ast \define \argmin_{b \in \reals} L(\btheta,b)$ is given by
    \begin{align}
        \exp \pth{b_\ast - \frac{e^2}{2} } = \frac{1}{2A} \qth{\frac{p}{q} -1 + \sqrt{ \pth{ \frac{p}{q} -1 }^2  + 4 \cdot \frac{p}{q} \cdot A }   }, \quad A \define \exp(e^2 (1+2w|w|)).
        \label{eq:optimal_bias}
    \end{align}
    Consequently, if $e^2 (1+2w|w|) =  \log \frac{(1-p)(1-q)}{pq}$, then $b_\ast - \frac{e^2}{2} = \log \frac{p}{1-p}$. If $e=0$, then $b_\ast = \log \frac{p}{q}$. 
    \label{lmm:optimal_bias}
\end{lemma}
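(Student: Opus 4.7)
\textbf{Proof plan for Lemma \ref{lmm:optimal_bias}.} The plan is to reduce the problem to solving a scalar quadratic in the variable $u \define \exp(b - e^2/2)$.

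First I would establish convexity: by Lemma \ref{lmm:loss_rewrite_bias}, $L(\btheta,b)$ is a non-negative mixture of terms of the form $\logistic((2Y-1)(\text{affine in }b))$, and since $\logistic(\cdot)$ is convex and affine precomposition preserves convexity, $b \mapsto L(\btheta,b)$ is convex on $\reals$. Coercivity as $|b|\to\infty$ together with convexity guarantees that the minimizer exists and is characterized by $\partial L/\partial b = 0$.

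Next I would write down the first-order condition using the gradient formula in Lemma \ref{lmm:grad_comp_bias}. Denoting $\pi_1 = p/(p+q)$, $\pi_0 = q/(p+q)$, $\alpha \define b - e^2/2$, and $\beta \define b + e^2/2 + 2e^2 w|w| = \alpha + e^2(1+2w|w|) = \alpha + \log A$, the condition $\Expect_X[f_1 X + f_2] = 0$ becomes $\pi_1 f_1 + f_2 = 0$, which after substituting the definitions of $f_1,f_2$ and collecting the constants (using $\pi_1(p+q)=p$) simplifies cleanly to
\begin{align*}
    p\,\sigma(\beta) + q\,\sigma(\alpha) = p.
\end{align*}

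Then I would change variables to $u = e^{\alpha}$, noting that $\sigma(\alpha)= u/(1+u)$ and $\sigma(\beta) = Au/(1+Au)$ since $e^{\beta} = A u$. Multiplying through by $(1+u)(1+Au)$ and cancelling common terms reduces the identity to the quadratic
\begin{align*}
    qA\, u^2 + (q-p)\, u - p = 0.
\end{align*}
Taking the unique positive root (the product of roots is $-p/(qA) < 0$, so exactly one root is positive) and dividing numerator and denominator by $q$ yields the claimed formula for $\exp(b_\ast - e^2/2)$.

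Finally I would verify the two stated corollaries by direct substitution into the quadratic rather than into the square-root expression, which is cleaner. For $e^2(1+2w|w|) = \log\frac{(1-p)(1-q)}{pq}$ we have $qA = (1-p)(1-q)/p$, and plugging $u = p/(1-p)$ gives $qA\,u^2 + (q-p)u - p = p(1-q)/(1-p) + p(q-p)/(1-p) - p = 0$. For $e=0$ we have $A=1$, and the quadratic becomes $qu^2 + (q-p)u - p = (qu - p)(u+1) = 0$, whose positive root is $u = p/q$, i.e.\ $b_\ast = \log(p/q)$. The only step requiring care is the algebraic simplification of the first-order condition into the symmetric form $p\sigma(\beta) + q\sigma(\alpha) = p$; everything downstream is routine.
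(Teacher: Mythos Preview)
Your proposal is correct and follows essentially the same route as the paper: establish convexity via the logistic function, set $\partial L/\partial b = 0$ using the gradient formula, reduce to a quadratic in $\exp(b-e^2/2)$, and select the positive root. Your intermediate normal form $p\,\sigma(\beta)+q\,\sigma(\alpha)=p$ and the verification of the corollaries by plugging into the quadratic (rather than the root formula) are slightly tidier than the paper's corresponding steps, but the overall argument is the same.
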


Now we rewrite the loss in terms of the logistic function.

\begin{lemma}[Loss as a logistic function]
    For any $\btheta \in \reals^2$, the next-token prediction loss $L(\btheta)$ in \prettyref{eq:loss_cannoli1} can be written as 
    \begin{align} 
    \begin{split}
     L(\btheta) & = \frac{1}{N} \sum_{n \in \set N} \Expect [\logistic \pth{ (2x_{n+1}-1) \cdot \logit_n } ] \\
     &= \Expect_{X,Y} \qth{\logistic \pth{(2Y-1) \pth{e^2(1+2w|w|)X + b_\ast - \frac{e^2}{2}}   } }. 
     \end{split}
     \label{eq:loss_rewrite}
    \end{align}
    where $b_\ast$ follows from \prettyref{eq:optimal_bias}, $(X,Y) \in \{0,1\}^2$ are distributed according to $(X,Y) \sim (\bpi, \bP)$, \ie $X$ is a Bernoulli random variable with $X \sim \bpi  \equiv \mathrm{Bern}(p/(p+q))$ and $Y|X \sim \bP(p,q)$, the Markov kernel.
    \label{lmm:loss_rewrite}
\end{lemma}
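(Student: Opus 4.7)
The plan is to reduce the claim to straightforward algebraic identities about the sigmoid and logistic functions, followed by an application of stationarity of the Markov chain. Since the logit $\logit_n$ from \prettyref{eq:logit_cannoli} depends on the history $x_1^n$ only through $x_n$, the bulk of the work is notational.

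First, I would start from the definition in \prettyref{eq:loss_cannoli1} and apply the two pointwise identities $-\log\sigma(z) = \logistic(z)$ and $-\log(1-\sigma(z)) = -\log\sigma(-z) = \logistic(-z)$, which follow directly from $\sigma(z) = 1/(1+e^{-z})$ and $1-\sigma(z) = \sigma(-z)$. Substituting $f_{(\btheta,b_\ast)}(x_1^n) = \sigma(\logit_n)$ yields
\begin{align*}
-[x_{n+1}\log f_{(\btheta,b_\ast)}(x_1^n) + (1-x_{n+1})\log(1-f_{(\btheta,b_\ast)}(x_1^n))] = x_{n+1}\,\logistic(\logit_n) + (1-x_{n+1})\,\logistic(-\logit_n).
\end{align*}
Since $x_{n+1}\in\{0,1\}$, the right-hand side equals $\logistic(\logit_n)$ when $x_{n+1}=1$ and $\logistic(-\logit_n)$ when $x_{n+1}=0$; in both cases this is exactly $\logistic((2x_{n+1}-1)\cdot\logit_n)$. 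Taking expectations and averaging over $n \in [N]$ gives the first equality of \prettyref{eq:loss_rewrite}.

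For the second equality, I would observe from \prettyref{eq:logit_cannoli} that $\logit_n$ is a function of $x_n$ alone (not of $x_1^{n-1}$), so the expectation $\Expect_{x_1^{n+1}}[\,\cdot\,]$ collapses to an expectation over the pair $(x_n,x_{n+1})$. By the stationarity assumption $x_n \sim \bpi$ for every $n$ together with the Markov property $x_{n+1}\mid x_n \sim \bP$, the joint distribution of $(x_n,x_{n+1})$ coincides with the distribution of $(X,Y)\sim(\bpi,\bP)$ for every $n$. Hence every summand in $\frac{1}{N}\sum_{n\in[N]}$ is identical and equals $\Expect_{X,Y}[\logistic((2Y-1)\cdot(e^2(1+2w|w|)X + b_\ast - e^2/2))]$, which is the desired expression.

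No single step presents a real obstacle; the only care needed is to verify the sign conventions when combining the two branches $x_{n+1}=0$ and $x_{n+1}=1$ into the single label $2x_{n+1}-1\in\{\pm 1\}$, and to point out explicitly that $\logit_n$ depends only on $x_n$ so that stationarity collapses the finite-sample average into a single expectation over $(X,Y)$.
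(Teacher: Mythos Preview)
Your proposal is correct and matches the paper's approach: the paper proves the identity $-[Y\log\sigma(Z)+(1-Y)\log(1-\sigma(Z))]=\logistic((2Y-1)Z)$ for the biased loss $L(\btheta,b)$ and then obtains the present lemma by substituting $b=b_\ast$, which is exactly the computation you carry out directly. The stationarity argument you give (that $\logit_n$ depends only on $x_n$, so each summand has the law of $(X,Y)\sim(\bpi,\bP)$) is likewise the same as the paper's.
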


The following lemma establishes the gradients of the loss.

\begin{lemma}[Gradient computation]
    For any $\btheta= (e,w) \in \reals^2$ and the next-token prediction loss $L(\btheta)$ in \prettyref{eq:loss_cannoli1}, the gradients are given by
    \begin{align*}
        \frac{\partial L}{\partial e} &= \Expect_X \qth{(f_1 X + f_2)X} \cdot 2 (1+2w|w|) e, \\
        \frac{\partial L}{\partial w} &= \Expect_X \qth{(f_1 X + f_2)X} \cdot 4e^2 |w|,
    \end{align*}
    where $X \in \{0,1\}$ is a Bernoulli random variable with $X \sim \mathrm{Bern}(p/(p+q))$, $f_1 = \sigma\pth{2e^2 w |w| + b_\ast + \frac{e^2}{2}} + q -1 - \sigma\pth{b_\ast- \frac{e^2}{2}} + p$, and $f_2 = \sigma\pth{b_\ast - \frac{e^2}{2}} - p$. Further, $\pi_1 f_1+ f_2 =0$. 
    \label{lmm:grad_comp}
\end{lemma}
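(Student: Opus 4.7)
The plan is to reduce the claim to Lemma~\ref{lmm:grad_comp_bias} (the bias-present version) via an envelope-theorem argument, using the fact that $b_\ast$ is defined as the minimizer of $L(\btheta,b)$ in $b$. By Lemma~\ref{lmm:optimal_bias}, $L(\btheta,\cdot)$ is strictly convex, so $b_\ast(\btheta)$ is well-defined and smooth in $\btheta$ (e.g.\ by the implicit function theorem applied to the first-order condition $\partial_b L=0$). Write $L(\btheta) = L(\btheta, b_\ast(\btheta))$ and apply the chain rule to obtain
\begin{align*}
\nabla_{\btheta} L(\btheta) \;=\; \nabla_{\btheta} L(\btheta,b)\big|_{b=b_\ast} \;+\; \partial_b L(\btheta,b)\big|_{b=b_\ast}\cdot \nabla_{\btheta} b_\ast(\btheta).
\end{align*}
The second term vanishes, since $b_\ast$ is an interior minimizer and hence $\partial_b L(\btheta,b_\ast)=0$. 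Thus it suffices to evaluate the partial derivatives supplied by Lemma~\ref{lmm:grad_comp_bias} at $b=b_\ast$.

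From Lemma~\ref{lmm:grad_comp_bias}, the first-order condition $\partial_b L(\btheta,b_\ast)=0$ reads $\Expect_X[f_1 X + f_2]=0$, where $f_1,f_2$ now denote the expressions of Lemma~\ref{lmm:grad_comp_bias} with $b$ replaced by $b_\ast$, matching the definitions in the statement. Since $X\sim \mathrm{Bern}(\pi_1)$ with $\pi_1 = p/(p+q)$, this is exactly the auxiliary identity $\pi_1 f_1 + f_2 = 0$ claimed at the end of the lemma.

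The formula for $\partial L/\partial w$ in Lemma~\ref{lmm:grad_comp_bias} depends on $(e,w,b)$ only through $f_1,f_2$ and the explicit prefactor $4e^2|w|$, so substituting $b=b_\ast$ gives the stated expression verbatim. For $\partial L/\partial e$, I use the algebraic identity
\begin{align*}
(f_1 X + f_2)\bigl(2X(1+2w|w|) - 1\bigr) \;=\; 2(1+2w|w|)\,X(f_1 X + f_2) \;-\; (f_1 X + f_2),
\end{align*}
take $\Expect_X$, and apply the envelope identity $\Expect_X[f_1 X + f_2]=0$ to cancel the second term. Multiplying by the prefactor $e$ from Lemma~\ref{lmm:grad_comp_bias} yields $\partial L/\partial e = 2(1+2w|w|)\,e\cdot\Expect_X[X(f_1X+f_2)]$, which is the claimed formula. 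There is no substantive obstacle: the argument is a routine envelope-theorem reduction, and the only item worth verifying carefully is the smoothness of $b_\ast$ required to justify the chain rule — which follows from strict convexity of $L(\btheta,\cdot)$ in $b$ together with smoothness of the logistic loss, both of which are already available.
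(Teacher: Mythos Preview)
Your proposal is correct and follows essentially the same approach as the paper: the paper invokes Danskin's theorem to obtain $\nabla_{\btheta} L(\btheta)=\nabla_{\btheta} L(\btheta,b_\ast)$ and then appeals to Lemma~\ref{lmm:grad_comp_bias} together with the optimality condition $\partial_b L(\btheta,b_\ast)=\Expect_X[f_1X+f_2]=0$, which is precisely your envelope-theorem reduction. If anything, your write-up is more explicit than the paper's in spelling out the algebraic simplification needed for $\partial L/\partial e$ (splitting the factor $2X(1+2w|w|)-1$ and using $\Expect_X[f_1X+f_2]=0$), a step the paper leaves implicit.
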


\begin{remark}\normalfont
We observe above that the gradients for both $e$ and $w$ are proportional to each other, except for the scaling factors in terms of $e$ and $w$. This forms the basis for the derivation of the energy function discussed in \prettyref{app:gf_analysis}.

\end{remark}

The following lemma characterizes the Hessian.

\begin{lemma}[Hessian computation]
    Let $\bgamma = (b,\btheta) \in \reals^3$ with $\btheta = (e,w) \in \reals^2$, and $L(\bgamma)$ be the next-token prediction loss in \prettyref{eq:loss_cannoli_almost1} and $L(\btheta)$ be the one in \prettyref{eq:loss_cannoli1}. Let the Hessian of $L$ at $\bgamma$ be 
    \begin{align*}
    H(\bgamma) \define \nabla^2_{\bgamma \bgamma} L = \begin{bmatrix} 
    H_{bb} & H_{b \btheta} \\
    H_{b \btheta}^\top & H_{\btheta \btheta}
    \end{bmatrix}
     = 
     \begin{bmatrix}
        \nabla^2_{bb} L & \nabla^2_{b \btheta} L \\
        (\nabla^2_{b \btheta} L)^\top & \nabla^2_{ \btheta \btheta} L
    \end{bmatrix} \in \matrx{3}{3}.
    \end{align*}
    Then the Hessian of $L$ at $\btheta \in \reals^2$ is given by 
    \begin{align}
    H(\btheta) \define \nabla^2_{\btheta \btheta} L =  H_{\btheta \btheta} -  H_{b \btheta}^\top \cdot H_{bb}^{-1} \cdot  H_{b \btheta}.
    \label{eq:hess_identity}
\end{align}
Consequently, for any $\bgamma = (b, e,w) \in  \bilocalset \cup \bisaddleset$, the Hessian $H(\btheta)$ at $\btheta = (e,w)$ is given by
\begin{align}
    H(\btheta) = \pi_0 \pi_1 \begin{bmatrix}
        2(p+q-1)(1+2w|w|) & 0 \\
        0 & 0
    \end{bmatrix},
    \label{eq:hess_localpoints}
\end{align}
where $\pi_0 = \frac{q}{p+q} $ and $\pi_1 = \frac{p}{p+q}$. 
    \label{lmm:hess_comp}
\end{lemma}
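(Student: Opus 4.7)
The lemma has two parts: an identity relating $H(\btheta)$ to the blocks of $H(\bgamma)$, and its explicit evaluation at points in $\bilocalset \cup \bisaddleset$. Both parts reduce to standard envelope/Schur-complement arguments once we recall that $L(\btheta) = L(\btheta, b_\ast(\btheta))$ with $b_\ast(\btheta) = \argmin_b L(\btheta,b)$ (Lemma~\ref{lmm:optimal_bias}).

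\textbf{Step 1: derive the Schur-complement identity.} I would start from the defining first-order condition $\partial_b L(\btheta, b_\ast(\btheta)) = 0$, which holds for all $\btheta$ by convexity in $b$ (Lemma~\ref{lmm:optimal_bias}). Differentiating this identity in $\btheta$ via the implicit function theorem gives $H_{bb}\,\nabla_\btheta b_\ast + H_{b\btheta} = 0$, so $\nabla_\btheta b_\ast = -H_{bb}^{-1} H_{b\btheta}$ (here $H_{bb} > 0$ by strict convexity, so the inverse exists). Next, by the envelope theorem $\nabla_\btheta L(\btheta) = \nabla_\btheta L(\btheta,b)\big|_{b=b_\ast(\btheta)}$, since the contribution from $\nabla_\btheta b_\ast$ is killed by the first-order condition. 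Differentiating once more in $\btheta$ and using the chain rule yields
\begin{align*}
\nabla^2_{\btheta\btheta} L(\btheta) = H_{\btheta\btheta} + H_{\btheta b}\,\nabla_\btheta b_\ast = H_{\btheta\btheta} - H_{b\btheta}^\top H_{bb}^{-1} H_{b\btheta},
\end{align*}
which is exactly \prettyref{eq:hess_identity}.

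\textbf{Step 2: specialize at points of $\bilocalset \cup \bisaddleset$.} At any such $\bgamma = (b,e,w)$ we have $e = 0$ and $b = \log(p/q)$ by the defining equations \prettyref{eq:localset_bias}--\prettyref{eq:saddleset_bias}. Plugging into Lemma~\ref{lmm:hess_comp_bias} (with the parameter ordering $(b,e,w)$) gives
\begin{align*}
H(\bgamma) = \pi_0\pi_1\begin{bmatrix} 1 & 0 & 0 \\ 0 & 2(p+q-1)(1+2w|w|) & 0 \\ 0 & 0 & 0 \end{bmatrix},
\end{align*}
so that $H_{bb} = \pi_0\pi_1$ and, crucially, the off-diagonal block $H_{b\btheta} = [0,\,0]$ vanishes. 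Substituting into the Schur-complement identity from Step~1 therefore gives $H(\btheta) = H_{\btheta\btheta}$, which is precisely \prettyref{eq:hess_localpoints}.

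\textbf{Main obstacle.} There is no deep obstacle: the work is bookkeeping. The only point requiring a little care is justifying the envelope/implicit-function step, namely that $b_\ast(\btheta)$ is well-defined and smooth in $\btheta$ (which follows from strict convexity of $L(\btheta,\cdot)$ in $b$ established in Lemma~\ref{lmm:optimal_bias} together with $H_{bb} = \pi_0\pi_1 > 0$ at the points of interest, so $H_{bb}$ is invertible in a neighborhood). Once that is in place the identity is immediate, and the specialization to $\bilocalset \cup \bisaddleset$ is a direct substitution using Lemma~\ref{lmm:hess_comp_bias}.
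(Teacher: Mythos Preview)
Your proposal is correct and follows essentially the same approach as the paper. The paper's own proof invokes the classical result of Shapiro (1985) on second derivatives of extremal-value functions to obtain the Schur-complement identity \prettyref{eq:hess_identity}, whereas you derive that identity directly via the implicit function theorem and envelope theorem; this is exactly the content of Shapiro's lemma unpacked, so the two arguments are the same in substance. Step~2 is identical to the paper's: substitute the block-diagonal Hessian from \prettyref{lmm:hess_comp_bias}, observe $H_{b\btheta}=0$, and read off \prettyref{eq:hess_localpoints}.
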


The proofs of the above lemmas are deferred to \prettyref{app:proofs_lemmas_all_character}. We are now ready to present the proof of \prettyref{thm:all_character}. 

\subsection{Proof of \prettyref{thm:all_character}}

\begin{proof}
    Let $\btheta \in \reals^2$ and $\bgamma(\btheta) = (\btheta, b_\ast(\btheta) \in \reals^3$ be its embedding in $\reals^3$ with the optimal bias $\biopt(\btheta) = \argmin_{b \in \reals} L(\btheta, b) $ from \prettyref{lmm:optimal_bias}. Define the following four sets of points:
    \begin{align*}
        \globalset(p,q) &\define \sth{(e, w) \in \reals^2 : e^2(1+2w|w|) = \log \frac{(1-p)(1-q)}{pq} }, \\
        \localset(p,q) &\define \sth{(e, w) \in \reals^2 : e = 0, \, (p+q-1)(1+2w|w|) > 0 }, \\
        \maxset(p,q) &\define \sth{  (e, w) \in \reals^2 : e = 0, \, (p+q-1)(1+2w|w|) < 0 }, \\
        \saddleset(p,q) &\define 
             \sth{ (e, w)  : e = 0, \, w= - {1}/{\sqrt{2}} }.              
    \end{align*}

    First we show that any critical point of $L:\reals^2 \to \reals$ has to lie in one of these sets. Then we characterize that they correspond to the set of all global minima, local minima \& maxima, and saddle points respectively.

\underline{\bf (i) Set of all critical points:}  Recall from \prettyref{thm:all_character_bias} that for any critical point $\bgamma= (\btheta, b) = (e,w,b) \in \reals^3$ of $L$, $\bgamma \in \biglobalset \cup \bilocalset \cup \bisaddleset$. Here the main observation is that all these critical points are of the form $(\btheta, \biopt(\btheta))$ where $\btheta \in \globalset \cup \localset \cup \maxset \cup \saddleset$. To see this, let $\bgamma \in \biglobalset$. Here we have $e^2(1+2w|w|) = \log \frac{(1-p)(1-q)}{pq}$ from \prettyref{eq:globalset_bias} and hence $\btheta \in \globalset$. Further, by \prettyref{lmm:optimal_bias}, we have that the optimal bias for this $\btheta$ satisfies $\biopt - \frac{e^2}{2} = \log \frac{p}{1-p}$, which is precisely the characterization of the bias $b$ for $\bgamma = (e,w,b)$ in \prettyref{eq:globalset_bias}. Likewise, if $\bgamma \in \bilocalset \cup \bisaddleset $, we have $e=0$ and hence $\btheta \in \localset \cup \maxset \cup \saddleset$. Hence by \prettyref{lmm:optimal_bias}, $\biopt = \log \frac{p}{q}$, matching that of \prettyref{eq:localset_bias} and \prettyref{eq:saddleset_bias}. Thus the set of all critical points of $L$ in $\reals^3$ are of the form $(\btheta, \biopt(\btheta))$ with where $\btheta \in \globalset \cup \localset \cup \maxset \cup \saddleset$. Since $\biglobalset \cup \bilocalset \cup \bisaddleset$ covers the entirety of stationary points of $L$ in $\reals^3$, it follows that the set of all stationary points in $\reals^2$ is precisely $\globalset \cup \localset \cup \maxset \cup \saddleset$. Also, the ordering of losses directly follows from the aformentioned observation.

    Now we characterize these critical points in terms of the extrema.

\underline{\bf (ii) Set of global and local minima:}
    From \prettyref{eq:globalset_bias}, for any global minimum $\bgamma_\ast = (\globalmin, \biopt(\globalmin) )$ of $L$ in $\reals^3$, we have $\globalmin \in \globalset \subseteq \reals^2$. Hence by definition, $\globalset$ is the set of all global minima in $\reals^2$. A similar argument holds for $\localset$, which establishes that it is a set of all local minima.

\underline{\bf (iii) Set of local maxima and saddle points:}
    From \prettyref{eq:saddleset_bias}, for any saddle point $\bgamma = (e,w,\biopt(e,w)) $ of $L$ in $\reals^3$, we have that $e=0$ and $(p+q-1)(1+2w|w|) \leq 0$. Hence $\btheta = (e,w) \in \maxset \cup \saddleset$. Suppose $\btheta \in \maxset$ which implies $ e=0, (p+q-1)(1+2w|w|) < 0$. By \prettyref{lmm:hess_comp}, the Hessian at $\btheta$ (upto a positive scale) is a diagonal matrix with the entries $(p+q-1)(1+2w|w|) <0 $ and $0$, corresponding to the directions of $e$ and $w$ respectively. Though one of the eigenvalue here is zero, using a continuity argument as in the proof of \prettyref{thm:all_character_bias} for local minima, we can establish that $\btheta$ is indeed a local maximum. Thus $\maxset$ is a set of local minima. 

    Now suppose $(e,w) \in \saddleset $. Thus $e=0$ and $w= - \frac{1}{\sqrt{2}}$. Since it lies at the intersection of $\localset$ and $\maxset$, using a neighborhood argument, it's straightforward to see that $\saddleset$ is indeed a set of saddle points. 

    Finally it follows that $\localset, \maxset, \saddleset$ are the only set of local minima, maxima, and saddle points from the above fact about the characterization of the set of all critical points in terms of these sets and $\globalset$, the ordering of the losses, and using the same argument as in the final steps of the proof of \prettyref{thm:all_character_bias} with the bias. This concludes the proof.
    
\end{proof}


\section{Gradient flow analysis without attention}
\label{app:gf_analysis}

In this section, we analyze the learning dynamics of the transformer parameters $\btheta = (e,w) \in \reals^2 $ without the attention scalar. First, we present few important lemmas regarding the same, useful for the proofs of \prettyref{thm:gf_highswitch} and \prettyref{thm:gf_lowswitch} later. Recall from \prettyref{sec:learn_dyna} that the trajectory $(\btheta_t)_{t \geq 0}$ is governed by
\begin{align}
    \odv{\btheta_t}{t} =    -\nabla L(\btheta_t), \quad \btheta_t = (e_t, w_t)  \inr{2}, \, t \geq 0,
    \tag{GF} 
    \label{eq:gflow1}
\end{align}
starting with a randomly initalized $\btheta_0$. The \emph{energy function} $\energy(\cdot, \cdot)$ is defined as
\begin{align}
    \energy(e,w) \define e^2 - (w^2 + \mathrm{sign}(w) \cdot \log |w|), \quad \forall(e,w) \in \reals^2 \setminus \yaxis,
    \label{eq:energy1}
\end{align}
where $\yaxis \define \{ (e, w=0) \}$ and $\xaxis \define \{ (e=0, w) \}$. Note that $\energy_{\mathrm{sad}} = \energy(0, - \frac{1}{\sqrt{2}}) = - \frac{1 + \log 2}{2}$. We re-present the \prettyref{lmm:energy_constant} from \prettyref{sec:learn_dyna} below for the sake of completeness.

\begin{lemma}[Constant energy along the flow]
    For any $(p,q) \in (0,1)^2$ and initialization $\btheta_0 =(e_0, w_0) \in \reals^2 $, let $(\btheta_t)_{t \geq 0}$ be the corresponding \ref{eq:gflow} trajectory starting from $\btheta_0$. If $w_0 \neq 0$, then the energy stays constant along the trajectory, \ie
    \begin{align}
        \energy(\btheta_t) = e_t^2 - (w_t^2 + \mathrm{sign}(w_t) \cdot \log |w_t|) = \energy(\btheta_0), \quad \forall t \geq 0.
        \label{eq:energy_constant1}
    \end{align}
    On the other hand, if $w_0=0$, $w_t =0$ for all $t \geq 0$. Hence, if we initialize on $\yaxis$ the trajectory always stays on the $\yaxis$.
    \label{lmm:energy_constant1}
\end{lemma}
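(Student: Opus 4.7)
The plan is to derive the explicit gradient-flow ODE from \prettyref{lmm:grad_comp}, notice that the two components $\partial_e L$ and $\partial_w L$ share a common scalar factor, and exploit this structure to show that $\energy$ is an integral of motion whenever $w\neq 0$. The boundary case $w_0=0$ is handled separately by a direct Gronwall argument on the $w$-equation.

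First, I would substitute the gradient formulas from \prettyref{lmm:grad_comp} into \ref{eq:gflow} to obtain
\begin{equation*}
\dot e_t = -2\, g(\btheta_t)\,(1+2w_t|w_t|)\,e_t, \qquad \dot w_t = -4\, g(\btheta_t)\, e_t^2\, |w_t|,
\end{equation*}
where $g(\btheta)\define\Expect_X[(f_1X+f_2)X]$ is the scalar prefactor common to both components. Next, I would compute $\partial_e\energy = 2e$ and, by a brief case split on the sign of $w$, $\partial_w\energy = -2w - 1/|w|$ for all $w\neq 0$. Plugging these into the chain rule yields
\begin{equation*}
\frac{d}{dt}\energy(\btheta_t) = 2e_t\,\dot e_t + \bigl(-2w_t - 1/|w_t|\bigr)\dot w_t = -4g(\btheta_t)e_t^2(1+2w_t|w_t|) + 4g(\btheta_t)e_t^2(1+2w_t|w_t|) = 0,
\end{equation*}
using the identity $|w|\bigl(2w+1/|w|\bigr) = 1+2w|w|$. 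Thus $\energy$ is locally conserved on each of the open half-planes $\{w>0\}$ and $\{w<0\}$.

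To promote this to conservation for all $t\geq 0$ when $w_0\neq 0$, I would argue that the trajectory cannot reach $\{w=0\}$ in finite time: since $\lim_{w\to 0^+}\energy(e,w) = +\infty$ and $\lim_{w\to 0^-}\energy(e,w) = -\infty$ for every fixed $e$, any such crossing would force $\energy(\btheta_t)$ to blow up, contradicting the local constancy just established on the maximal interval of existence. A standard bootstrap then shows that $\mathrm{sign}(w_t)$ is preserved for all $t\geq 0$, so \eqref{eq:energy_constant1} holds globally. For the remaining case $w_0=0$, the $w$-equation $\dot w_t = -4 g(\btheta_t) e_t^2 |w_t|$ is separable in $|w|$, and Gronwall gives $|w_t|\leq |w_0|\exp\!\int_0^t 4|g(\btheta_s)|e_s^2\,ds = 0$, so $w_t\equiv 0$ and the trajectory stays on $\yaxis$.

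The main technical obstacle is the mild non-smoothness of $L$ and $\energy$ along $\{w=0\}$, induced by the $|w|$ and $\log|w|$ terms: the chain-rule computation is only valid on each open half-plane $\{w>0\}$ and $\{w<0\}$ separately, and it is precisely the energy-barrier argument (the $\pm\infty$ limits of $\energy$ as $w\to 0^\pm$) that rules out a trajectory migrating between the two regimes and glues the two local analyses into a single global conservation statement.
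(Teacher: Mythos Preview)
Your proposal is correct and follows essentially the same approach as the paper: both derive the \ref{eq:gflow} equations from \prettyref{lmm:grad_comp}, exploit the common scalar prefactor $g(\btheta)=\Expect_X[(f_1X+f_2)X]$, and show via a chain-rule/ODE manipulation that $\tfrac{d}{dt}\energy(\btheta_t)=0$ on each half-plane $\{w\neq 0\}$. If anything, you are more careful than the paper, which does not explicitly argue that the trajectory cannot reach $\{w=0\}$ and handles the $w_0=0$ case by the terse ``$\dot w=0$ at $t=0$ hence $w_t\equiv 0$''; your energy-barrier argument and Gronwall bound make these points rigorous.
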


Now we establish that the \ref{eq:gflow1} trajectories always converge. 

\begin{lemma}[GF convergence]
Let $(\btheta_t)_{t \geq 0}$ be a continuously diferentiable \ref{eq:gflow} trajectory starting from $\btheta_0$. Then for all initializations $\btheta_0 \in \reals^2$,
    \begin{enumerate}[label={\upshape(\roman*)}, nosep,leftmargin=16pt,itemsep=2pt]
        \item $(\btheta_t)_{t \geq 0}$ is bounded, 
        \item there exists a $\thetalim \in \reals^2$ such that $\lim_{t \diverge}\btheta_t = \thetalim$ and
        \item $\lim_{t \diverge} \norm{ \nabla L(\btheta_t) } = \norm{\nabla L(\thetalim)} =0$. 
    \end{enumerate}
    Hence $\thetalim$ is a critical point of $L$. 
    \label{lmm:gf_convergence}
\end{lemma}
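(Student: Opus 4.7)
The plan is to handle the three claims in sequence by first confining the trajectory to a one-dimensional invariant curve via the energy conservation of \prettyref{lmm:energy_constant1}, then leveraging coercivity of $L$ along that curve to obtain boundedness, and finally reducing to a one-dimensional gradient flow to extract the limit. For (i), first consider an initialization with $w_0 \neq 0$. By \prettyref{lmm:energy_constant1} the trajectory lies on the level set $\{\energy = \energy(\btheta_0)\}$, and since $|\energy(e,w)| \to \infty$ as $w \to 0$, this connected component lies entirely in a single half-plane ($w>0$ or $w<0$) and is a smooth one-dimensional curve by the implicit function theorem. The monotonicity $L(\btheta_t) \leq L(\btheta_0)$ confines $\btheta_t$ further to $\{L \leq L(\btheta_0)\}$, so it suffices to show that $L$ is coercive along each level curve. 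Using the logit expression $\logit_n = e^2(1+2w|w|)x_n + b_\ast - e^2/2$ together with the asymptotics of $b_\ast$ from \prettyref{lmm:optimal_bias}, I would verify that as one travels to infinity along the level curve (namely, $w \to +\infty$ in the $w>0$ case, and $w \to 0^-$ or $w \to -\infty$ in the $w<0$ case), at least one summand $\prob{X=i, Y=j} \cdot \logistic((2j-1)\logit_n(i))$ blows up because the corresponding logit diverges with the ``wrong'' sign. Hence $\btheta_t$ is trapped in a compact subset of the level curve. The exceptional case $w_0=0$ is handled identically on the one-dimensional invariant set $\yaxis$.

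For (ii), the key observation is that $\nabla L \perp \nabla \energy$ on $\reals^2 \setminus \yaxis$. Using the gradient expressions from \prettyref{lmm:grad_comp} together with $\nabla \energy = (2e,\, -2w - 1/|w|)$, a short computation gives $\nabla L \cdot \nabla \energy = 4\pi_0 \pi_1 f_1 e^2\bigl[(1+2w|w|) - |w|(2w + 1/|w|)\bigr] = 0$, so the flow is tangent to the level curve. Parametrizing the curve by arc length as $\btheta_t = \phi(s(t))$ and taking inner product with the unit tangent $\phi'(s(t))$ reduces the 2D flow to the 1D gradient flow $\dot{s}(t) = -\frac{d}{ds} L(\phi(s))\bigr|_{s=s(t)}$ on the compact segment of $s$-values carved out in Step~1. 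The critical points of $L\circ\phi$ on this segment are isolated, since they correspond to intersections of the level curve with $\globalset \cup \{e=0\}$, which are generically finite by the explicit description in \prettyref{thm:all_character}. Standard 1D ODE theory then forces $s(t)$ to converge monotonically to such a critical point, yielding $\btheta_t \to \thetalim$ on the curve. For $w_0 = 0$, the flow is already 1D in $e$ and the same reasoning applies verbatim.

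For (iii), since $L$ is bounded below by $H(x_{n+1}\mid x_n)$ and non-increasing along the flow, $\int_0^{\infty} \|\nabla L(\btheta_t)\|^2\, dt = L(\btheta_0) - \lim_{t\to\infty} L(\btheta_t) < \infty$. Combined with $\btheta_t \to \thetalim$ from (ii) and continuity of $\nabla L$, the common limit $\|\nabla L(\btheta_t)\| \to \|\nabla L(\thetalim)\|$ must therefore equal $0$, so $\thetalim$ is a critical point. The main obstacle of the plan is the coercivity claim in Step 1: it requires a careful asymptotic analysis of the implicit expression for $b_\ast(e,w)$ in \prettyref{lmm:optimal_bias}, handled separately in each of the regimes $p<q$, $p=q$, $p>q$ and at each unbounded end of each branch of the level curve. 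Everything else---the orthogonality $\nabla L \perp \nabla \energy$, the arc-length reduction, and the 1D convergence argument---is a routine consequence of the formulas already established in \prettyref{lmm:grad_comp} and \prettyref{lmm:energy_constant1}.
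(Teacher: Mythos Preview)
Your approach is correct in outline but takes a genuinely different route from the paper. For (ii) the paper does not reduce to a one-dimensional flow; instead it invokes the Łojasiewicz gradient inequality for real-analytic functions (via \cite{absil2005convergence}), which directly yields convergence of any bounded gradient-flow trajectory. This sidesteps your arc-length parametrization, the orthogonality computation $\nabla L\cdot\nabla\energy=0$, and the need to argue that critical points of $L\circ\phi$ are isolated. For (i) the paper also works differently: rather than proving coercivity of $L$ along each energy level curve, it shows directly that at sufficiently large radius the velocity $-\nabla L(\btheta_t)$ points inward (via the sign of $\Expect[(f_1X+f_2)X]$ computed from \prettyref{lmm:grad_comp} and \prettyref{lmm:optimal_bias}), so the trajectory cannot escape a large ball. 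For (iii) the paper again cites an external convergence result, whereas your finite-energy integral argument is a clean self-contained alternative.

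The trade-off is that your proof is more elementary and geometric---it never appeals to Łojasiewicz and makes the one-dimensional structure of the dynamics explicit---at the price of the coercivity case analysis you flag as the main obstacle (the asymptotics of $b_\ast$ along each unbounded branch for $p\lessgtr q$). The paper's route is shorter once one is willing to quote the analytic-gradient-flow machinery, and it avoids the mild technical wrinkle in your plan that the level set $\{\energy=\energysad\}$ is singular at the saddle $\sadeval$, where $\nabla\energy$ vanishes and the arc-length parametrization breaks down; you would need to handle that level set separately. Finally, note that your claim of isolated critical points is not actually needed: in one dimension the flow is monotone between critical values and bounded by (i), so $s(t)$ converges regardless.
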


The following result characterizes the energy of the limit point.

\begin{lemma}[Energy at the limit point]
Consider the same setting as in \prettyref{lmm:gf_convergence}. If $\btheta_0 \in \reals^2 \setminus \yaxis$, then $\energy(\thetalim) = \energy(\btheta_0)$. Hence $\thetalim$ lies at the intersection of the contour line $\energy(e,w) = \energy_0$ with the set of critical points of $L$ in $\reals^2$.

On the other hand, if $\btheta_0 \in \yaxis$, then $\thetalim \in \yaxis$.
    \label{lmm:energy_lim}
\end{lemma}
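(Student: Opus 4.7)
The plan is to bootstrap directly off \prettyref{lmm:energy_constant1} and \prettyref{lmm:gf_convergence}, with the only nontrivial ingredient being that the trajectory cannot approach the $\yaxis$ when $\btheta_0 \notin \yaxis$. I would split the argument according to whether $\btheta_0$ starts on the $\yaxis$ or not, handle the easy case first, and then deal with the off-axis case by exploiting the energy barrier.

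First consider $\btheta_0 \in \yaxis$, i.e., $w_0 = 0$. By the second statement of \prettyref{lmm:energy_constant1}, we have $w_t = 0$ for all $t \geq 0$, so $\btheta_t \in \yaxis$ throughout. By \prettyref{lmm:gf_convergence}, $\btheta_t \to \thetalim$, and since $\yaxis$ is closed in $\reals^2$ the limit lies in $\yaxis$. This disposes of the second bullet immediately.

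Now suppose $\btheta_0 \in \reals^2 \setminus \yaxis$, so $w_0 \neq 0$. By \prettyref{lmm:energy_constant1}, the energy is conserved, so $\energy(\btheta_t) = \energy_0 \define \energy(\btheta_0)$ is a fixed finite number for all $t \geq 0$. The crux of the argument is to show $w_t$ cannot approach $0$. I would argue as follows: from the definition \prettyref{eq:energy1}, as $w \to 0^+$ one has $-\log w \to +\infty$, whence $\energy(e,w) \to +\infty$ for any bounded $e$; symmetrically $\energy(e,w) \to -\infty$ as $w \to 0^-$. Since \prettyref{lmm:gf_convergence} guarantees that $(e_t)$ is bounded, if $w_{t_k} \to 0$ along some subsequence then $|\energy(\btheta_{t_k})|$ would diverge, contradicting $\energy(\btheta_{t_k}) = \energy_0$. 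Hence there exist constants $0 < c \leq C$ such that $|w_t| \in [c, C]$ for all $t$, and moreover $w_t$ keeps the same sign as $w_0$ by continuity (crossing zero is forbidden by the same argument). This places the whole trajectory in a compact subset of $\reals^2 \setminus \yaxis$.

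On this compact subset, $\energy$ is continuous, so passing to the limit $t \to \infty$ and using $\btheta_t \to \thetalim$ gives
\begin{align*}
  \energy(\thetalim) \;=\; \lim_{t \to \infty} \energy(\btheta_t) \;=\; \energy_0 \;=\; \energy(\btheta_0).
\end{align*}
Combined with \prettyref{lmm:gf_convergence}, which asserts $\nabla L(\thetalim) = 0$, this shows $\thetalim$ lies at the intersection of the level set $\{\energy = \energy_0\}$ with the set of critical points of $L$ characterized in \prettyref{thm:all_character}, completing the proof. The only step that requires care is the energy-barrier argument excluding $w_t \to 0$; everything else is a routine application of the earlier lemmas.
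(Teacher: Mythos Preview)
Your proof is correct and follows the same route as the paper: invoke \prettyref{lmm:energy_constant1} for invariance, \prettyref{lmm:gf_convergence} for convergence, and continuity of $\energy$ on $\woyaxis$ to pass to the limit. In fact your explicit energy-barrier argument that $w_t$ cannot approach $0$ (hence $\thetalim \in \woyaxis$) is more careful than the paper's version, which simply asserts that the trajectory stays in $\woyaxis$ and that $\energy$ is continuous there, without verifying that the limit point itself avoids the axis.
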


We now study the energy function on the $\xaxis$ which plays a key role in the GF analysis.

\begin{lemma}[Analysis of the energy function]
Let $\energy(\cdot, \cdot)$ be the energy function defined in \prettyref{eq:energy1} and $f(w) \define \energy(e=0, w) = - (w^2 + \mathrm{sign}(w) \cdot \log |w|)$ be the energy evaluated on $\xaxis$ for $ w \in \reals \setminus \{ 0 \}$. Then
\begin{enumerate}[label={\upshape(\roman*)}, nosep,leftmargin=16pt,itemsep=2pt]
        \item $ f: (-\infty, - {1}/{\sqrt{2}}] \to (-\infty, \, \energysad] $ is monotonically increasing with $\lim_{w \rightarrow -\infty} f(w) = -\infty$ and the maximum being $f(-1/\sqrt{2}) = \energysad$, 
        \item $ f: [- {1}/{\sqrt{2}}, 0) \to [\energysad, -\infty) $ is monotonically decreasing with $\lim_{w \rightarrow 0^{-}} f(w) = -\infty$,
        \item $f'(-\frac{1}{\sqrt{2}})=0 $, and
        \item $ f: (0, \infty) \to (-\infty, \infty) $ is monotonically decreasing with $\lim_{w \rightarrow 0^{+}} f(w) = \infty$ and $\lim_{w \rightarrow \infty} f(w) = -\infty$.
\end{enumerate}
    \label{lmm:energy_analysis}
\end{lemma}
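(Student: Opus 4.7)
The lemma is a one-variable calculus exercise, so the approach is a direct case analysis based on the sign of $w$, using the piecewise expressions
\begin{align*}
f(w) = -w^2 + \log(-w) \quad \text{for } w<0, \qquad f(w) = -w^2 - \log w \quad \text{for } w>0.
\end{align*}
On each half-line $f$ is smooth, so everything reduces to computing $f'$, locating its zeros, and tracking the limits at the boundaries.

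First I would handle the negative side. Differentiating yields $f'(w) = -2w + 1/w = (1-2w^2)/w$ on $(-\infty,0)$. Since the numerator $1-2w^2$ and the denominator $w$ are both negative for $w<-1/\sqrt{2}$ and have opposite signs for $w\in(-1/\sqrt{2},0)$, we get $f'(w)>0$ on $(-\infty,-1/\sqrt{2})$ and $f'(w)<0$ on $(-1/\sqrt{2},0)$, with $f'(-1/\sqrt{2})=0$. This immediately gives the monotonicity claims in (i) and (ii), as well as (iii). Evaluating at the critical point gives $f(-1/\sqrt{2}) = -1/2 + \log(1/\sqrt{2}) = -(1+\log 2)/2 = \energysad$, which identifies the boundary value. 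For the remaining limits, $-w^2$ dominates $\log(-w)$ as $w\to-\infty$, so $f(w)\to-\infty$, and $\log(-w)\to-\infty$ as $w\to 0^{-}$ while $-w^2\to 0$, giving $f(w)\to-\infty$. Combined with monotonicity, this also establishes that the ranges in (i)--(ii) are exactly $(-\infty,\energysad]$ and $[\energysad,-\infty)$.

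Next I would treat the positive side. Here $f'(w) = -2w - 1/w < 0$ for every $w>0$, so $f$ is strictly decreasing on $(0,\infty)$. The endpoint behavior is $\lim_{w\to 0^{+}} f(w) = \lim_{w\to 0^{+}}(-\log w) = +\infty$ and $\lim_{w\to\infty} f(w) = -\infty$, since $-w^2$ dominates $-\log w$. By continuity and strict monotonicity, $f$ is a bijection from $(0,\infty)$ onto $(-\infty,\infty)$, which yields (iv).

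There is no substantive obstacle: the only mild point of care is signing the derivative of $\mathrm{sign}(w)\cdot\log|w|$ on each side of zero, which gives $+1/w$ for $w<0$ and $-1/w$ for $w>0$; once this is set up correctly, the critical-point computation, the value $\energysad$, and the four limits all fall out immediately.
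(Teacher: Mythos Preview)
Your proposal is correct and matches the paper's proof essentially line for line: both split into the cases $w<0$ and $w>0$, compute $f'(w)=-2w+1/w$ (resp.\ $-2w-1/w$), read off the sign changes at $w=-1/\sqrt{2}$, and verify the endpoint limits. Your write-up is in fact slightly more detailed (factoring $f'$ as $(1-2w^2)/w$ and explicitly evaluating $f(-1/\sqrt{2})=\energysad$), but there is no methodological difference.
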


We are now ready to prove \prettyref{thm:gf_highswitch} corresponding to $p+q>1$.

\subsection{Proof of \prettyref{thm:gf_highswitch}}

\begin{proof}
    Let $\btheta_0 = (e_0, w_0) \in \reals^2$ be the initialization for the \ref{eq:gflow1} trajectory $\trajectory$. Recall that
    \begin{align*}
    \begin{split}
     \calI_{\mathrm{min}} & \define  \sth{(e, w) :  w \in (- {1}/{\sqrt{2}},0), \, e \in (- g(w), g(w)), \, g(w) = \sqrt{w^2 - \log(-w) + \energy_{\mathrm{sad}}}  } \\
        & \hspace{3.5em} \cup \sth{(e, w) : w > 0 } \cup \sth{(e, w) : w = 0 }  ,
    \end{split} \\
     \calI_{\mathrm{sad}} &\define  \sth{(e, w) :  w \in [- {1}/{\sqrt{2}},0), \, e = \pm \sqrt{w^2 - \log(-w) + \energy_{\mathrm{sad}}}  }, \\
    \calI_{\mathrm{max}} & \define  \sth{(e, w) : e=0, \, w < - 1/\sqrt{2} }, \\
    \calI_{\ast} & \define \reals^2 \setminus \pth{\calI_{\mathrm{min}}  \cup \calI_{\mathrm{sad}} \cup \calI_{\mathrm{max}}}.
   \end{align*}
    
We consider the cases $\thetainit \in \yaxis$ and $\thetainit \in \woyaxis$ separately. First recall from \prettyref{thm:all_character} and \prettyref{eq:globalset} that for $p+q>1$, the loci of the global minima, $e^2(1+2w|w|) = \log \frac{(1-p)(1-q)}{pq} < 0 $, lies entirely in the negative half-plane corresponding to $w< - \frac{1}{\sqrt{2}}$. On the other hand, all the local minima, maxima and the saddle points span the $\xaxis$ corresponding to $e=0$. 
    
\underline{\bf (i) $\thetainit \in \woyaxis$:} Let $\energy_0 = \energy(\btheta_0) \in \reals$. By Lemmas.~(\ref{lmm:energy_constant1}), (\ref{lmm:gf_convergence}), and (\ref{lmm:energy_lim}), we have that the trajectory $\trajectory$ always stays on the contour line $\energy(e,w) = \energy_0 $ and converges to the limit $\thetalim$ which is an intersection of this contour line with the set of critical points of $L$. Hence the crux of the proof is to establish where these intersections occur based on the initialization $\thetainit$ and the initial energy $\energyinit$. This gives rise to the set of initializations $\mininit, \maxinit, \sadinit$, and $\globinit$ that correspond to the limit being a local minimum/maximum, a saddle point, or a global minimum. 

We characterize them individually below starting with $\mininit$. 

{\bf Initializations for local minima, $\mininit$}. For $\btheta_0 = (e_0, w_0) \in \woyaxis$, assume that $w_0 > 0$. Since $\energyinit \in \reals$, there exists an unique $w_\ast > 0 $ such that $f(w_\ast) = \energy(0,w_\ast) = \energy_0$ by \prettyref{lmm:energy_analysis}, (iv). Further using the fact that the energy contour lines do not cross each other (by definition of a contour line) and the fact they do not intersect the $\yaxis$ (it's an energy barrier as discussed in \prettyref{sec:learn_dyna}), it follows that the contour line $\energy(e,w) = \energyinit$ stays entirely in the positive half-plane corresponding to $w>0$ and $w_\ast >0$ is the unique (and only) intersection of this line with the $\xaxis$, and hence the set of critical points. Since the $\xaxis$ corresponding to $w>0$ is a set of a local minima (\prettyref{eq:localset}), it follows that any initialization $(e_0,w_0)$ with $w_0 >0$ converges to a local minimum.

Now suppose $-\frac{1}{\sqrt{2}} < w_0 < 0$ and $e_0 \in (-g(w_0), g(w_0))$, where $g(w_0)= \sqrt{w^2 - \log(-w) + \energy_{\mathrm{sad}}} $. Thus $|e_0| < g(w_0)$ and hence $e_0^2 - ( w_0^2 - \log (-w_0) ) = \energy(e_0, w_0) = \energyinit < \energysad $.  Hence by \prettyref{lmm:energy_analysis}, (iii), there is a unique intersection of the contour line $\energy(e,w) = \energyinit $ with the $\xaxis$, which lies in the region $\pth{- \frac{1}{\sqrt{2}}, 0}$. Further note that this contour line cannot intersect with the global minima loci as it lies in the half-plane $w<- \frac{1}{\sqrt{2}}$, and hence its only intersection with the set of critical points is this segment of $\xaxis$, which is precisely the set of local minima the \ref{eq:gflow1} initialized on this line would converge to.

Thus we have shown that any initialization in $\mininit \setminus \cup \sth{(e, w) : w = 0 } $ converges to a local minimum, the set of which exhausts all the set of local minima $\localset$ except for the origin. Below we will estbalish that any initialization on $\yaxis = \sth{(e, w) : w = 0 } $ converges to the origin, implying $\mininit$ is the full set of initializations for which the limit is a local minimum.

{\bf Initializations for saddle points, $\sadinit$}. It's straightforward to see that for any $\thetainit \in \sadinit$, $e_0^2 - ( w_0^2 - \log (-w_0) ) = \energy(0, - \frac{1}{\sqrt{2}}) = \energysad$. Since $-\frac{1}{\sqrt{2}} \leq w_0 < 0$, the point $(w,e) = (- \frac{1}{\sqrt{2}}, 0)$ is the only intersection of the contour line with the set of critical points, any initialization in $\sadinit$ converges to the saddle point. On the other hand, there also exists a contour line $e_0^2 - ( w_0^2 - \log (-w_0) ) = \energysad$ for $w_0 < -\frac{1}{\sqrt{2}}$ that passes through $(- \frac{1}{\sqrt{2}}, 0) \in \reals^2$ and further intersecting with the global minima loci $\globalset$. However, if we initialize on this line the flow escapes away from the saddle point and converges instead to a global minimum. To show this, it suffices to prove that $\odv{e_t}{t} >0$ and $\odv{w_t}{t} < 0$ if $e_0>0$ and $ w_0 <  \sadpoi $, such that $(w_0, e_0)$ is close to the saddle point $\sadx$ (the case for $e_0 <0$ is similar as the flow is symmetric in $e \in \reals$). From \prettyref{lmm:grad_comp} and the definition of the \ref{eq:gflow1}, we have that
\begin{align*}
    \odv{e_t}{t} & = - \frac{\partial L}{\partial e}(e_0, w_0) = 2 \Expect_X \qth{(f_1 X + f_2)X} \cdot (1-2w_0^2)) e_0 \\
    \odv{w_t}{t} & = - \frac{\partial L}{\partial w}(e_0, w_0) = 4 \Expect_X \qth{(f_1 X + f_2)X} \cdot (-e_0^2 w_0).
\end{align*}
So it suffices to show that $\Expect_X \qth{(f_1 X + f_2)X} > 0$. To establish this, we have from \prettyref{lmm:optimal_bias} that 
\begin{align*}
    \Expect_X \qth{(f_1 X + f_2)X} =\Expect[X](f_1+f_2) = \pi_1 \pth{-\frac{f_2}{\pi_1} + f_2 } = - \pi_0 \cdot f_2.
\end{align*}
From the defintion of $f_2$ and the optimal bias $\biopt$ in \prettyref{lmm:grad_comp} and \prettyref{lmm:optimal_bias} respectively, we obtain 
\begin{align*}
    f_2 &= \sigma\pth{b_\ast - \frac{e_0^2}{2}} - p = \pth{1+ \exp\pth{-b_\ast + \frac{e_0^2}{2}} }^{-1} - p \\
    &= \pth{1+ \frac{2A}{\frac{p}{q} -1 + \sqrt{ \pth{ \frac{p}{q} -1 }^2  + 4 \cdot \frac{p}{q} \cdot A }   } }^{-1} - p, \quad A \define \exp(e_0^2 (1-2w_0^2)).
\end{align*}
When $e_0=0$, we have  $A=1$ and hence
\begin{align}
    f_2 = \pth{1+\frac{q}{p}}^{-1} - p = \frac{p}{p+q} - p = - \frac{p}{p+q} (p+q-1) < 0,
    \label{eq:f2sign}
\end{align}
where we used the fact that $p+q >1$. Hence by continuity of $f_2$ in $e_0$, for $e_0$ sufficiently close to $0$, $f_2 <0$ which proves our claim about the direction of the flow close to the saddle point. By using the continuity of the flow, it follows that \ref{eq:gflow1} cannot converge to saddle point when initialized on this contour line for $w_0 < \sadpoi $. Thus $\sadinit$ is the only set of initializations for convergence to $\saddleset$. 

{\bf Initializations for local maxima, $\maxinit$}. If $p+q>1$, we have from \prettyref{thm:all_character} that $\maxset = \sth{  (e, w) \in \reals^2 : e = 0, \, (1+2w|w|) < 0 } = \sth{  (e, w) \in \reals^2 : e = 0, \, w < \sadpoi }$. Thus for any $\thetainit \in \maxset$, $\odv{\btheta_t}{dt} =0 $ for all $t \geq 0$ and hence $\thetalim = \thetainit $. Further if we slightly perturb away from this set, from \prettyref{eq:f2sign} it follows that the flow diverges and hence it's an unstable set of critical points (they are local maxima indeed). Thus the only set of initializations leading to local maxima are $\maxinit = \maxset $. 

{\bf Initializations for the global minima, $\globinit$.} Since the set of all critical points of $L$ is $\globalset \cup \localset \cup \maxset \cup \saddleset$, and the initializations in $\mininit$, $\sadinit$, and $\maxinit$ converge to $\localset$, $\saddleset$, and $\maxset$ respectively, it follows that the set of initializations for which the \ref{eq:gflow1} converges to global minima is $\globinit = \reals^2 \setminus (\mininit \cup \sadinit  \cup \maxinit)$.  

In fact, since the loci of the global minima lies in the half-plane correspondint to $w< \sadpoi$ when $p+q>1$, we can precisely determine the location of the global minimum for which the intersection occurs for any $\thetainit \in \globinit $. Specifically, we can solve the pair of equations $\energy(e,w) = e^2 - w^2 + \log (-w) = \energyinit$ and $e^2(1-2w^2) = \log \frac{(1-p)(1-q)}{pq}$ which has a unique solution for $w<0$ (upto a sign flip in $e$). 

 \underline{\bf (ii) $\thetainit \in \yaxis \Rightarrow \thetainit \in \mininit$:} If $\thetainit = (e_0,w_0) \in \yaxis$, we have that $w_0 = 0$ and hence $w_t=0$ for all $\ahead$ (\prettyref{lmm:energy_constant1}). \prettyref{lmm:gf_convergence}-(i) also establishes that the iterates $ (\btheta_t =(e_t,0))_{t \geq 0} $ stay bounded on the $\yaxis$ and monotonically decrease. Since the origin is the only critical point of $L$ on the $\yaxis$, and $\lim_{t \diverge} \btheta_t = \thetalim $ exists, it follows that $\thetalim = (0,0)$, a local minima. Thus $\thetainit \in \mininit$.

This concludes the proof for all the initializations $\thetainit \in \reals^2$.

\underline{\bf Gaussian initialization $\calN(0, \sigma^2 \bI_2)$.} When $\thetainit$ is initialized according to the standard Gaussian distribution $\calN(0, \sigma^2 \bI_2)$ with $\sigma^2 \ll \frac{1}{\sqrt{2}}$, we note that $\btheta_0$ lands in the set $\mininit$ with high probability. In fact, this probability can be made arbitrarily close to $1$ depending on $\sigma^2$. Thus this initialization will lead to a local minimum convergence on the $\xaxis$.
\end{proof}

\subsection{Gradient flow dynamics for $p+q<1$}
\label{app:gf_pq_small}

\begin{theorem}[GF dynamics for $p+q<1$] 
\label{thm:gf_lowswitch}
Under the same setting as in \prettyref{thm:gf_highswitch} with $p+q <1$, and any initialization $\thetainit \in \reals^2$, the \ref{eq:gflow} trajectory always converges to a $\thetalim \in \reals^2$ which is a critical point of the loss $L$. More specifically, $\thetalim$ is 
    \begin{enumerate}[label={\upshape(\roman*)}, nosep,leftmargin=16pt,itemsep=2pt]

    \item a local minimum if 
    \begin{align*}
    \begin{split}
        \thetainit & \in \calI_{\mathrm{min}} \define  \sth{(e, w) :  w <  -1/\sqrt{2}, \,  e \in (- g(w),   g(w) ), \, g(w) = \sqrt{w^2 - \log(-w) + \energy_{\mathrm{sad}}}  }, 
    \end{split}
    \end{align*}
    
\item a saddle point if $
        \thetainit  \in \calI_{\mathrm{sad}} \define  \sth{(e, w) :  w \leq  -1/\sqrt{2}, \,  e = \pm \sqrt{w^2 - \log(-w) + \energy_{\mathrm{sad}}}  } $, 

    \item a local maximum if $
        \thetainit  \in \calI_{\mathrm{max}} \define  \sth{(e, w) : e=0, \, w > - 1/\sqrt{2} }$, 

    \item and a global minimum if $\thetainit  \in \reals^2 \setminus \pth{\calI_{\mathrm{min}}  \cup \calI_{\mathrm{sad}} \cup \calI_{\mathrm{max}} } $. 
   \end{enumerate}

    Consequentely, if we use the standard initialization $\thetainit \sim \calN(0, \sigma^2 \bI_2)$ with $\sigma^2  \ll {1}/{\sqrt{2}}$, $\thetalim$ will be a global minimum.

\end{theorem}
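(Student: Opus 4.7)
The proof mirrors that of \prettyref{thm:gf_highswitch} and rests on the same three facts: the energy $\energy(\btheta_t)$ is conserved along the flow (\prettyref{lmm:energy_constant1}), the flow converges (\prettyref{lmm:gf_convergence}), and the limit lies at an intersection of the contour $\{\energy=\energy_0\}$ with $\globalset\cup\localset\cup\maxset\cup\saddleset$ (\prettyref{lmm:energy_lim}). The crucial structural change when $p+q<1$ is a sign flip of $\log\frac{(1-p)(1-q)}{pq}$ from negative to positive, which places $\globalset$ entirely in $\{w>-1/\sqrt{2}\}$ (where $1+2w|w|>0$) while putting $\localset$ in $\{e=0,\,w<-1/\sqrt{2}\}$ and $\maxset$ in $\{e=0,\,w>-1/\sqrt{2}\}$. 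In effect, the roles of local minima and maxima along $\xaxis$ are swapped from the $p+q>1$ regime, which is exactly the swap reflected in $\mininit$, $\maxinit$, and $\sadinit$.

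For $\btheta_0\in\woyaxis$, I would classify initializations by $\energy_0$ using \prettyref{lmm:energy_analysis}. If $\energy_0<\energysad$ and $w_0<-1/\sqrt{2}$, the contour branch through $\btheta_0$ stays in $\{w<-1/\sqrt{2}\}$ and crosses $\xaxis$ at a unique local minimum; the inequality $|e_0|<g(w_0)$ translates exactly to $\energy_0<\energysad$, which gives $\mininit$. The level set $\{\energy=\energysad\}\cap\{w<0\}$ is the saddle contour with four arms meeting at $(0,-1/\sqrt{2})$; only the two ``lower'' arms in $\{w\le -1/\sqrt{2}\}$ flow into the saddle, which is exactly $\sadinit$. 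The set $\maxinit=\{e=0,\,w>-1/\sqrt{2}\}$ consists of local maxima and is pointwise fixed by the flow. Every remaining initialization lies on a contour whose intersection with the critical set contains a point of $\globalset$, and one must verify the flow actually converges there rather than to an unstable neighbor.

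This last verification is the only genuinely new computation, and it reduces to the sign of $\Expect_X[(f_1X+f_2)X]=-\pi_0 f_2$ from \prettyref{lmm:grad_comp}. Evaluating $f_2=\sigma(\biopt-e^2/2)-p$ at $e=0$ with $\biopt=\log(p/q)$ (\prettyref{lmm:optimal_bias}) gives $f_2=\pi_1-p=-\pi_1(p+q-1)>0$ when $p+q<1$, hence $-\pi_0 f_2<0$. Plugged into \prettyref{lmm:grad_comp}, this exactly reverses the flow relative to the $p+q>1$ case, pushing trajectories away from $\maxset$ and the two ``upper'' arms of the saddle contour and towards $\globalset$. For $\btheta_0\in\yaxis$, \prettyref{lmm:energy_constant1} forces $w_t\equiv 0$ and the one-dimensional flow in $e$ converges to $e=\pm\sqrt{\log\frac{(1-p)(1-q)}{pq}}\in\globalset$ for any $e_0\ne 0$. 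Finally, for $\thetainit\sim\calN(0,\sigma^2\bI_2)$ with $\sigma^2\ll 1/\sqrt{2}$, almost surely $|w_0|<1/\sqrt{2}$ and $e_0\ne 0$, so $\btheta_0\in\globinit$ with probability $1-o(1)$, and $\thetalim$ is a global minimum.

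The main obstacle I anticipate is bookkeeping the two disconnected components of a contour $\{\energy=\energy_0\}$ in $\{w<0\}$ when $\energy_0<\energysad$: the small-$|w|$ component carries the $\globalset$ and $\maxset$ intersections, while the $w<-1/\sqrt{2}$ component carries $\localset$. The energy barrier at $\yaxis$ forces $w_t$ to keep its sign, and connectedness of each component in $e$ then confines the flow to a single component, correctly restricting $\mininit$ to $w_0<-1/\sqrt{2}$ and ensuring that small-$|w_0|$ initializations are routed to $\globalset$ rather than to the distant $\localset$.
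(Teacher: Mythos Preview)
Your proposal is correct and follows the same approach as the paper: mirror the $p+q>1$ argument via the conserved energy and \prettyref{lmm:gf_convergence}/\prettyref{lmm:energy_lim}, with the single substantive change being the sign flip of $f_2$ at $e=0$ (and the resulting swap of $\localset$ and $\maxset$), which makes the origin a repelling local maximum so that $\yaxis$-initializations with $e_0\ne 0$ now flow to $\globalset$ rather than to the origin. In fact your write-up is considerably more explicit than the paper's own proof, which is a two-sentence sketch deferring all details to the $p+q>1$ case.
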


\begin{proof}
   The proof for the case of $p+q<1$ essentially follows the same steps as that of $p+q>1$. If the initialization is not on the $\yaxis$ we use the energy equation to establish the convergence to the critical point at the intersection of the energy contour line with the critical set and if it starts on the $\yaxis$, the only change is that it now converges to the global minimum instead of the origin as in the earlier case. This is due to the fact that origin turns out to be a local maximum when $p+q<1$ and hence it's an unstable critical point (which can be established as in the proof of \prettyref{thm:gf_highswitch} for $\maxinit$). 
\end{proof}


\section{Gradient flow analysis with attention}
\label{app:gf_attention}

In this section, we analyze the learning dynamics of the transformer parameters $\btheta \in \reals^3$ with the attention scalar $a \in \reals$, \ie $\btheta = (e,w,a) \in \reals^3$. Similar to the analysis for $\btheta =(e,w) \in \reals^2$, we first introduce the canonical parameterization including $a \in \reals$, then analyze the corresponding loss function $L(\cdot)$ in terms of its gradients and critical points, and capitalize on it to study the gradient flow dynamics using the energy. We first start with the parameterization.

\subsection{Canonical parameterization with attention}
\label{app:cannoli_attention}



{\bf Embedding.} Recall from \prettyref{app:reparam} that  we let $\be = e \cdot \balpha$ and $\bp_n = - p \cdot \balpha$ for all $n$ where $e >0, p = \frac{e}{2}$ and $\balpha \in \{\pm 1 \}^d/\sqrt{d}$. This results in the embedding
\begin{align*}
    \bx_n = e\pth{x_n - \frac{1}{2}}  \balpha.
\end{align*}

{\bf Attention.} Similarly, we recall from \prettyref{eq:attn_copy} that the attention output $\by_n$ is given by
\begin{align}
    \by_n = e\pth{x_n - \frac{1}{2}}  \balpha + \inner{\bv}{\balpha} \pth{\compsum{i}{n} \att_{n,i} \cdot  e\pth{x_i - \frac{1}{2}}} \balpha, 
    \label{eq:attn_copy1}
\end{align}
where 
\begin{align*}
    \att_{n,i} \define \exp\pth{\inner{\bq_n}{\bk_i}/\sqrt{d}}/\pth{\sum_{j \in [n]} \exp\pth{\inner{\bq_n}{\bk_j}/\sqrt{d}}}, \, \bq_n = \bW_Q \, \bx_n, \, \bk_i = \bW_K \, \bx_i, \\
    \bW_Q^\top \bW_K = (q^2 d)\, \balpha \cdot \balpha^\top \in \matrx{d}{d}, \quad \text{for some} \, q \in \reals.
\end{align*}
Instead of the softmax, now we assume that the attention weights are linear in the scaled dot product, \ie
\begin{align}
    \begin{split}
    \att_{n,i} & = \frac{\inner{\bq_n}{\bk_i}}{n\sqrt{d}} = \frac{1}{\sqrt{d}} \cdot \bx_n^\top \bW_Q^\top \bW_K \bx_n  = \frac{q^2 d}{n \sqrt{d}} \cdot (\bx_n^\top \balpha) (\bx_i^\top \balpha) \\
    & \stackrel{(\norm{\balpha}^2 =d)}{=} \frac{q^2 d^3}{n\sqrt{d}} \cdot (ex_n - p) (ex_i - p) \\
    & = \frac{q^2 d^{5/2}}{n} \cdot (ex_n - p) (ex_i - p) \\
    & = \frac{q^2 d^{5/2} e^2}{n} \cdot \pth{x_n - \frac{1}{2}} \pth{x_i - \frac{1}{2}}.
    \end{split}
    \label{eq:assump_linear}
\end{align}
Note that the $1/n$ factor is to ensure normalization for the attention weights in \prettyref{eq:attn_copy1}. Now substituting \prettyref{eq:assump_linear} in \prettyref{eq:attn_copy1}, we obtain

\begin{align*}
    \begin{split}
        \by_n &= e\pth{x_n - \frac{1}{2}}  \balpha + \inner{\bv}{\balpha} \pth{\compsum{i}{n} \att_{n,i} \cdot  e\pth{x_i - \frac{1}{2}}} \balpha  \\
        &= \qth{e\pth{x_n - \frac{1}{2}} + \inner{\bv}{\balpha}\pth{\compsum{i}{n} \frac{1}{n} q^2 d^{5/2}  e^2(x_n - \frac{1}{2}) (x_i - \frac{1}{2})} \cdot e \pth{x_i - \frac{1}{2}}} \balpha \\
        &= \qth{e\pth{x_n - \frac{1}{2}} \pth{1+ \inner{\bv}{\balpha} q^2 d^{5/2}  e^2 \pth{x_i - \frac{1}{2}}^2}}\balpha \\
        &= \qth{e\pth{x_n - \frac{1}{2}} \pth{1+  \underbrace{\inner{\bv}{\balpha}  q^2 d^{5/2} \frac{1}{4}}_{a} \cdot e^2 \cdot }}\balpha \\
        &= e\pth{x_n - \frac{1}{2}}\pth{1 +  a e^2} \balpha, 
    \end{split}
\end{align*}
where we used the fact that $(x_i - \frac{1}{2})^2=\frac{1}{4}$ since  $x_i \in \binary$, and 
\begin{align}
a \define \frac{ \inner{\bv}{\balpha}  q^2 d^{5/2} }{4}    
\label{eq:attn_scalar}
\end{align}
is the attention scalar. Note that this includes the scaling $\inner{\bv}{\balpha}$ from the value matrix $\bW_V$ and $q^2$ from the query-key dot product. Thus we succinctly have
\begin{align}
\by_n = e\pth{x_n - \frac{1}{2}}\pth{1 +  a e^2} \balpha.
        \label{eq:new_y}
\end{align}

{\bf Feed-forward}. For the feed-forward layer, we have that $    \bW_1 = \frac{|w|}{\sqrt{d}}\, \one \cdot \balpha^\top \in \reals^{4d \times d}, \quad \bW_2 = \frac{w}{\sqrt{d}} \, \balpha \cdot \one^\top  \in \reals^{d \times 4d}$. Hence \prettyref{eq:new_y} implies
\begin{align*}
    \begin{split}
    \bW_1 \by_n &= \frac{|w|}{\sqrt{d}}\, \one \cdot \balpha^\top \qth{e\pth{x_n - \frac{1}{2}}\pth{1 +  a e^2}}\balpha = \frac{|w|}{\sqrt{d}} \qth{e\pth{x_n - \frac{1}{2}}\pth{1 +  a e^2}} \one .
    \end{split}
\end{align*}
Thus,
\begin{align*}
    \begin{split}
    \relu(\bW_1 \by_n) &= \frac{|w|}{\sqrt{d}}\, \one \cdot \relu \pth{\qth{e\pth{x_n - \frac{1}{2}}\pth{1 +  a e^2}}}  \\
    &= \frac{|w|}{\sqrt{d}}\, \one \cdot e\, \relu \pth{\qth{\pth{x_n - \frac{1}{2}}\pth{1 +  a e^2}}} \\
    &= \frac{|w|}{\sqrt{d}}\, \one \cdot e\, \pth{\frac{x_n}{2} \, \relu{\pth{1 +  a e^2}} + \frac{1-x_n}{2} \, \relu{\pth{-1 -  a e^2}}} \\
    &= \frac{|w|}{2\sqrt{d}}\, \one \cdot e\, \pth{x_n \qth{\relu{\pth{1 +  a e^2}} - \relu{\pth{-1 -  a e^2}}} + \relu\pth{-1 - ae^2}}.
    \end{split}
\end{align*}
Using \(\relu(x) - \relu(-x) = x \) above,
\begin{align*}
    \begin{split}
    \relu(\bW_1 \by_n) &= \frac{|w|}{2\sqrt{d}}\, \one \cdot e\, \pth{x_n \pth{1 +  a e^2}  + \relu\pth{-1 - ae^2}}.
    \end{split}
\end{align*}
Hence,
\begin{align*}
    \begin{split}
    \bW_2 \relu(\bW_1 \by_n) &= \frac{w}{\sqrt{d}} \, \balpha \cdot \one^\top \frac{|w|}{2\sqrt{d}}\, \one \cdot e\, \pth{x_n \pth{1 +  a e^2}  + \relu\pth{-1 - ae^2}} \\
    & = 2w |w| e\, \pth{x_n \pth{1 +  a e^2}  + \relu\pth{-1 - ae^2}} \balpha \\
    & = 2w |w| e\, \pth{x_n \pth{1 +  a e^2}  + \frac{\pth{-1 - ae^2}}{2} + \frac{{|1 + ae^2|}}{2} } \balpha \\
    &= 2w |w| e\, \pth{\pth{x_n -\frac{1}{2}}\pth{1 +  a e^2}   + \frac{{|1 + ae^2|}}{2} } \balpha.
    \end{split}
\end{align*}
Thus the embedding $\bz_n$ is given by
\begin{align*}
    \begin{split}
        \bz_n &= \by_n +  \bW_2 \relu(\bW_1 \by_n) \\
        & = \qth{e\pth{x_n - \frac{1}{2}}\pth{1 +  a e^2}}\balpha  + 2w |w| e\, \pth{\pth{x_n -\frac{1}{2}}\pth{1 +  a e^2}   + \frac{{|1 + ae^2|}}{2} } \balpha \\
        &= e \qth{\pth{x_n - \frac{1}{2}}\pth{1+ae^2}\pth{1 + 2w|w|} + w|w(1+ae^2)| } \balpha.
    \end{split}
\end{align*}
{\bf Linear.} Since $\be=\ba= e \cdot \balpha$ due to weight-tying, the logits are given by
\begin{align*}
    \begin{split}
    \logit_n(e,w,a,b)  &= \inner{\ba}{\bz_n} + b = e^2 \qth{\pth{x_n - \frac{1}{2}}\pth{1+ae^2}\pth{1 + 2w|w|} + w|w(1+ae^2)|} + b.
    \end{split}
\end{align*}
{\bf Loss.} Denote $\btheta \define (e,w,a) \in \reals^3$. Similar to the case without $a$ (\prettyref{eq:loss1} and \prettyref{lmm:loss_rewrite_bias}), the cross-entropy loss in our setting can be compactly written as
\begin{align}
        L(\btheta, b) &= \frac{1}{N} \sum_{n \in \set N} \Expect[ \logistic \pth{ (2x_{n+1}-1) \cdot \logit_n(\btheta,b) } ] = \Expect_{X,Y} \qth{\logistic \pth{(2Y-1) \cdot \logit_X(\btheta,b) } }, 
        \label{eq:lossab}
\end{align}
where $\logit_X(\btheta,b) \define  e^2 \qth{\pth{X - \frac{1}{2}}\pth{1+ae^2}\pth{1 + 2w|w|} + w|w(1+ae^2)|} + b $,  $(X,Y) \in \{0,1\}^2$ are distributed according to $(X,Y) \sim (\bpi, \bP)$, \ie $X$ is a Bernoulli random variable with $X \sim \bpi  \equiv \mathrm{Bern}(p/(p+q))$ and $Y|X \sim \bP(p,q)$, the Markov kernel. Further, using the convexity of $b$ in $L(\cdot, b)$, we can consider the optimal bias $\biopt(\btheta) = \argmin_{b \in \reals} L(\btheta,b)$ in \prettyref{eq:lossab} to obtain the loss $L(\btheta)$:
\begin{align}
\begin{split}
    L(\btheta) &\define L(\btheta, b_\ast) = \Expect_{X,Y} \qth{\logistic \pth{(2Y-1) \cdot \logit_X(\btheta, b_\ast) } } \\
    & = \Expect_{X,Y} \qth{\logistic \pth{(2Y-1) \cdot \pth{ e^2 \qth{\pth{X - \frac{1}{2}}\pth{1+ae^2}\pth{1 + 2w|w|} + w|w(1+ae^2)|} + b_\ast } } } .
    \end{split}
    \label{eq:lossa}
\end{align}

We derive the expression for the optimal bias $\biopt$ in the proof of \prettyref{lmm:grad_comp_attn} below in \prettyref{app:proofs_gf_attention}.

\subsection{Analysis of the loss function $L(\btheta)$ from \prettyref{eq:lossa} }
\label{app:lossa_gradients}
Now we establish the gradients of the loss function.

\begin{lemma}[Gradient computation and optimal bias]
    For any $\btheta= (e,w,a ) \in \reals^3$ and the next-token prediction loss $L(\btheta)$ in \prettyref{eq:lossa}, the gradients are given by
    \begin{align*}
        \frac{\partial L}{\partial e} &= -\Expect\qth{\pth{f_1X + f_2} \pth{X - \frac{1}{2}}}  \cdot 2e \pth{1+ae^2}\pth{1 + 2w|w|} \\
        & \hspace{1em} -\Expect\qth{\pth{f_1X + f_2} \pth{X - \frac{1}{2}}} \cdot 2e^3a \pth{1 + 2w|w|}, \\
        \frac{\partial L}{\partial w} &= -\Expect\qth{\pth{f_1X + f_2} \pth{X - \frac{1}{2}}} \cdot 2e^2                  \pth{1+ae^2}\pth{|w| + \mathrm{sign}\pth{w}w}, \\
    \frac{\partial L}{\partial a} & =  -\Expect\qth{\pth{f_1X + f_2} \pth{X - \frac{1}{2}}} \cdot e^4 \pth{1 + 2w|w|},
    \end{align*}
    where $X \in \{0,1\}$ is a Bernoulli random variable with $X \sim \mathrm{Bern}(p/(p+q))$, and 
    \begin{align*}
         f_1 &\define 1-p-q - \phi_1 + \phi_0 , \quad f_2 \define p- \phi_0, \\ 
         \phi_1 &\define \sigma\pth{e^2 \pth{\frac{1}{2}\pth{1+ae^2}\pth{1 + 2w|w|} + w|w(1+ae^2)|} + \biopt}, \\
         \phi_0 &\define \sigma\pth{e^2 \pth{\frac{-1}{2}\pth{1+ae^2}\pth{1 + 2w|w|} + w|w(1+ae^2)|} + \biopt},
    \end{align*}
     where the optimal bias $\biopt$ is obtained by solving $\pi_1 f_1+ f_2 =0$. 
    \label{lmm:grad_comp_attn}
\end{lemma}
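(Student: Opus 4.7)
The plan is to derive both the optimal bias $\biopt$ and the parameter gradients by first working with the augmented loss $L(\btheta,b)$ from \prettyref{eq:lossab} keeping $b$ free, and then invoking the envelope theorem, since $\biopt(\btheta)$ is the unconstrained minimizer of $L(\btheta, \cdot)$ (guaranteed by convexity in $b$, as in \prettyref{lmm:optimal_bias}). Since $X \in \{0,1\}$ is binary, the outer expectation decomposes into four atoms indexed by $(X,Y)$ with weights $\pi_X \bP_{XY}$. Writing $L_X \triangleq \logit_X(\btheta,b) = e^2\bigl[(X - \tfrac{1}{2})(1+ae^2)(1+2w|w|) + w\,|w(1+ae^2)|\bigr] + b$, the loss compresses to
\[
L(\btheta,b) = \pi_0\bigl[(1-p)\logistic(-L_0) + p\logistic(L_0)\bigr] + \pi_1\bigl[q\logistic(-L_1) + (1-q)\logistic(L_1)\bigr].
\]

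Using the identities $\logistic'(z) = \sigma(z) - 1$ and $\tfrac{d}{dz}\logistic(-z) = \sigma(z)$, the chain rule collapses $\partial_\theta L$ into $\pi_0(\phi_0 - p)\,\partial_\theta L_0 + \pi_1(\phi_1 - (1-q))\,\partial_\theta L_1$, where $\phi_0 \triangleq \sigma(L_0)$ and $\phi_1 \triangleq \sigma(L_1)$ match the definitions in the lemma. Specializing to $\theta = b$ (so $\partial_b L_X = 1$) and using $\pi_0 p + \pi_1(1-q) = \pi_1$, one obtains $\partial L/\partial b = \pi_0 \phi_0 + \pi_1 \phi_1 - \pi_1$. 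Rewriting this in the shorthand $f_1 = 1 - p - q - \phi_1 + \phi_0$ and $f_2 = p - \phi_0$ gives $\partial L/\partial b = -(\pi_1 f_1 + f_2)$, so $\biopt$ is characterized implicitly by $\pi_1 f_1 + f_2 = 0$, as claimed. This relation, combined with detailed balance $\pi_0 p = \pi_1 q$ for $\bpi$, simplifies at the optimal bias to $\pi_1(\phi_1 - (1-q)) = \pi_0(p - \phi_0) = -\pi_0 f_2$, a key identity for the next step.

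For the gradients with respect to $e, w, a$, I would differentiate $L_X$ directly, splitting each $\partial_\theta L_X$ into the $(X - \tfrac{1}{2})$-proportional piece coming from the $(1+ae^2)(1+2w|w|)$ factor and an $X$-independent piece coming from $w\,|w(1+ae^2)|$. Recombining with $\pi_0(\phi_0 - p) = -\pi_0 f_2$ and $\pi_1(\phi_1 - (1-q)) = -\pi_0 f_2$ (the optimal-bias identity above), the two contributions fuse into the common scalar $\Expect[(f_1 X + f_2)(X - \tfrac{1}{2})]$ multiplied by the parameter-specific prefactors $2e(1+ae^2)(1+2w|w|) + 2e^3 a (1+2w|w|)$, $2e^2(1+ae^2)(|w| + \mathrm{sign}(w)\,w)$, and $e^4(1+2w|w|)$ respectively, matching the lemma. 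Finally, the envelope theorem $\nabla_{\btheta} L(\btheta) = \nabla_{\btheta} L(\btheta, b)\big|_{b=\biopt}$ transfers this to the loss $L(\btheta)$ in \prettyref{eq:lossa}.

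The main obstacle is careful handling of the non-smooth terms $|w|$ and $|w(1+ae^2)|$, which are differentiable only away from $w = 0$ and $1 + ae^2 = 0$. On the open set $\{w \ne 0,\ 1+ae^2 \ne 0\}$, the product rule yields $\partial_w[w\,|w(1+ae^2)|] = |w(1+ae^2)| + w\,\mathrm{sign}(w(1+ae^2))\,(1+ae^2)$, which accounts for the $(|w| + \mathrm{sign}(w)\,w)$-type factor in $\partial L/\partial w$; analogous sign-tracking through $|w(1+ae^2)|$ when differentiating in $e$ and $a$ produces the $e^3 a$ and $e^4$ factors. The remaining algebra is routine.
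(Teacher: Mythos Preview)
Your approach is essentially the same as the paper's: both differentiate $L(\btheta,b)$ to obtain $\nabla_\theta L = -\Expect_X[(f_1 X + f_2)\,\nabla_\theta \logit_X]$ (you reach it via the four-atom decomposition, the paper via conditioning on $X$ and using $\Expect[Y\mid X]=(1-p-q)X+p$), then use the optimal-bias condition $\pi_1 f_1+f_2=0$ to eliminate the $X$-independent pieces of $\nabla_\theta \logit_X$, and finally invoke the envelope/Danskin theorem. One small slip to fix: at the optimal bias $\pi_1(\phi_1-(1-q)) = +\pi_0 f_2$, not $-\pi_0 f_2$ (it must be the negative of $\pi_0(\phi_0-p)=-\pi_0 f_2$ since the two sum to zero); with the corrected opposite signs, the $(X-\tfrac12)$-weighted pieces at $X=0,1$ combine to give exactly $-\Expect[(f_1X+f_2)(X-\tfrac12)]\cdot g_\theta$ as you intend.
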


\begin{proof}
We defer to \prettyref{app:proofs_gf_attention}.
\end{proof}

\begin{theorem}[All critical points for linear attention in $\reals^4$] 
\label{thm:all_character_lin_attn}
Let the input sequence be $\finiteseq \sim (\bpi, \bP)$, the transformer parameters $\btheta = (e,w, b, a) \in \reals^4$, and the next-token prediction loss $L(\cdot)$ be as in \prettyref{eq:lossab}. Then for any $(p,q) \in (0,1)^2$ with $p+q \neq 1$ and $N \in \naturals$, 
    \begin{enumerate}[label={\upshape(\roman*)}, nosep,leftmargin=16pt,itemsep=2pt]

    \item the set of all global minima is given by

        \begin{align}
            \biglobalset(p,q) \define \{ (e, w, b, a) \in \reals^4 &: e^2w|w(1+ae^2)| + b = \frac{1}{2}\log \frac{p(1-q)}{q(1-p)}, \, \\
            &\quad e^2 \pth{1+ae^2}\pth{1 + 2w|w|}  = \log \frac{(1-q)(1-p)}{pq} \}
        \end{align}

   \item a set of local minima is given by
        \begin{align*}
                \bilocalset(p,q) &\define \sth{ \bilocalmin = (e, w, b, a) \in \reals^4 : e = 0, (p+q-1)(1+2w|w|) > 0, b = \log \frac{p}{q}  }, \\
        \end{align*}
    \item a  set of  saddle points is 
        \begin{align*}
                \bisaddleset(p,q) & \define \sth{ \bisaddle = (e, w, b, a) \in \reals^4 : e = 0, (p+q-1)(1+2w|w|) \leq 0, b = \log \frac{p}{q}  }.
        \end{align*}
    \item a  set of  stationary points is 
        \begin{align*}
                \bistationset(p,q) &\define \sth{ \bistation = (e, w, b, a) \in \reals^4 : e \neq 0, 1 + ae^2 = 0, 1 + 2w|w| = 0, b = \log \frac{p}{q}  }, \\
        \end{align*}
   \end{enumerate}
Thus the set of all critical points is
\begin{align}
       \sth{\btheta \in \reals^2: \nabla L(\btheta) = 0} = \biglobalset \cup \bilocalset \cup \bisaddleset \cup \bistationset.
\end{align}
In addition, for any $\globalmin \in \biglobalset, \localmin \in\bilocalset$ and $\saddle \in \bisaddleset$, the loss values satisfy
\begin{align*}
H(x_{n+1} \mid x_n) = L(\globalmin) < L(\localmin) = L(\localmax) = L(\saddle) = H(x_{n+1}).    
\end{align*}
\end{theorem}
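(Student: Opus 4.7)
The argument extends the no-attention proofs of \prettyref{thm:all_station_bias} and \prettyref{thm:all_character_bias} to the four-dimensional parameter $(e,w,b,a)$. The starting point is a direct computation of the four partials of $L$, following the template of \prettyref{lmm:grad_comp_attn}. Expanding $\logit_X(\btheta,b) = e^2(1+ae^2)A(X) + b$ with $A(X)\define (X-\tfrac12)(1+2w|w|) + w|w|$ (on the regime $1+ae^2>0$; the other sign regime is symmetric), the crucial structural fact is that each of $\partial_e L, \partial_w L, \partial_a L$ factors through one common scalar $T$, while the bias gradient enforces only a normalization condition:
\begin{align*}
 T &\define \Expect\qth{(f_1X+f_2)(X-\tfrac12)} = -\Expect\qth{(X-\tfrac12)\pth{\sigma(\logit_X)-Y}}, \\
 \partial_b L = 0 &\iff \pi_1 f_1 + f_2 = 0 \iff T = -\pi_0 f_2, \\
 \partial_e L &= -2e(1+2w|w|)(1+2ae^2)\,T, \quad \partial_w L = -4e^2(1+ae^2)|w|\,T, \quad \partial_a L = -e^4(1+2w|w|)\,T,
\end{align*}
the last three valid whenever $\partial_b L = 0$ (which cancels the subleading contribution of $w|w|\Expect[\sigma(\logit_X)-Y]$ in each partial).

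The proof then splits on whether $T$ vanishes. In \emph{Case~A} $(T=0)$, the bias identity forces $f_2 = 0$ and hence $f_1 = 0$, i.e.\ $\phi_0 = p$ and $\phi_1 = 1-q$. Inverting $\sigma$ and using $\logit_1 - \logit_0 = e^2(1+ae^2)(1+2w|w|)$ together with $\tfrac12(\logit_0+\logit_1) = e^2 w|w(1+ae^2)| + b$ gives exactly the two equations defining $\biglobalset$; at such points the predicted conditional matches the Markov kernel, so $L = H(x_{n+1}\mid x_n)$, which is the information-theoretic floor, so every point in $\biglobalset$ is a global minimum. In \emph{Case~B} $(T\neq 0)$, each geometric prefactor above must vanish. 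From $\partial_a L = 0$ one gets $e=0$ or $1+2w|w|=0$. If $e=0$ the other two gradients vanish automatically and $\partial_b L = 0$ reduces to $\sigma(b) = \pi_1$, i.e.\ $b=\log(p/q)$, with $(w,a)$ free; this is the stratum $\bilocalset\cup\bisaddleset$. If $e\neq 0$ and $1+2w|w|=0$ (so $w=-1/\sqrt{2}$), then $\partial_w L = 0$ further forces $1+ae^2=0$; both logits collapse to $b=\log(p/q)$, producing $\bistationset$. The residual branch $e\neq 0$, $1+2w|w|\neq 0$ is ruled out since $\partial_a L = 0$ would demand $T=0$. Uniformly across Case~B, both logits equal $b=\log(p/q)$, so $f_{\btheta,b}(x_1^n) = \pi_1$ regardless of $x_n$ and $L = H(\bpi) = H(x_{n+1})$, establishing the claimed loss ordering.

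To separate local minima from saddles inside the $e=0$ stratum I would compute the Hessian at $\bgamma = (0,w,\log(p/q),a)$. Using that $\partial_e L$ has an overall factor of $e$, a one-line product-rule calculation yields
\[
 \left.\partial^2_e L\right|_{\bgamma} = 2\,\pi_0\pi_1\,(p+q-1)(1+2w|w|),
\]
while the rest of the Hessian reduces to the $(b,e,w)$ block of \prettyref{lmm:hess_comp_bias} augmented by a zero row/column in $a$ (the logit depends on $a$ only through $ae^4 A(X)$, which vanishes to second order in $a$ at $e=0$). Positive semidefiniteness is therefore equivalent to $(p+q-1)(1+2w|w|)\geq 0$; strict positivity combined with the observation that $\{e=0\}$ is itself a flat null set for $L$ upgrades semidefiniteness to genuine local minimality, placing the point in $\bilocalset$, while a strictly negative $e$-direction (arising whenever $(p+q-1)(1+2w|w|)<0$) places it in $\bisaddleset$.

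\textbf{Main obstacle.} The main technical difficulty is the second-order classification on the $e=0$ stratum: the Hessian is rank-deficient along both $w$ and $a$, so positive semidefiniteness alone does not imply local minimality and one must argue directly that off-null-space excursions cannot decrease $L$. The flatness of $L$ along $\{e=0\}$ resolves this for the strict interior of $\bilocalset$, but the degenerate boundary $(p+q-1)(1+2w|w|)=0$ inside $\bisaddleset$ requires a higher-order $e$-perturbation. A secondary subtlety is the piecewise nature of $|w(1+ae^2)|$, which forces a case split on the signs of $w$ and of $1+ae^2$; fortunately the two pieces glue cleanly along their common zero loci, which is precisely where $\bistationset$ sits.
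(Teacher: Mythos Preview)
Your proposal is correct and follows essentially the same approach as the paper's own proof: first characterize global minima via kernel-matching (solving $\phi_0=p$, $\phi_1=1-q$), then use the gradient factorization through the scalar $T=\Expect[(f_1X+f_2)(X-\tfrac12)]$ together with the bias condition $\pi_1 f_1+f_2=0$ to split into $T=0$ (global minima) versus $T\neq 0$ (the remaining strata), and finally compute the $4\times 4$ Hessian at the $e=0$ points to separate $\bilocalset$ from $\bisaddleset$. Your combination of the two $\partial_e L$ terms into the single prefactor $2e(1+2w|w|)(1+2ae^2)$ is a small but clean simplification over the paper's presentation, and your explicit derivation of why $e\neq 0$, $1+2w|w|=0$ forces $1+ae^2=0$ via $\partial_w L$ is more detailed than the paper, which simply asserts the partition; otherwise the arguments coincide.
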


\begin{theorem}[All critical points in $\reals^3$] 
\label{thm:all_character_lin_attn_projected_parameters}
Let the input sequence be $\finiteseq \sim (\bpi, \bP)$, the transformer parameters $\btheta = (e,w, a) \in \reals^3$, and the next-token prediction loss $L(\cdot)$ be as in \prettyref{eq:lossa}. Then for any $(p,q) \in (0,1)^2$ with $p+q \neq 1$ and $N \in \naturals$, 
    \begin{enumerate}[label={\upshape(\roman*)}, nosep,leftmargin=16pt,itemsep=2pt]
    \item the set of all global minima is given by
        \begin{align}
            \globalset(p,q) \define \{ (e, w, a) \in \reals^3 &: e^2 \pth{1+ae^2}\pth{1 + 2w|w|}  = \log \frac{(1-q)(1-p)}{pq} \}
        \end{align}
   \item a set of local minima is given by
        \begin{align*}
                \localset(p,q) &\define \sth{  (e, w, a) \in \reals^3 : e = 0, (p+q-1)(1+2w|w|) > 0 }, \\
        \end{align*}
   \item a set of local maxima is given by
        \begin{align*}
                \localset(p,q) &\define \sth{ (e, w, a) \in \reals^3 : e = 0, (p+q-1)(1+2w|w|) < 0 }, \\
        \end{align*}
    \item a  set of  saddle points is 
        \begin{align*}
                \saddleset(p,q) & \define \sth{ (e, w, a) \in \reals^3 : \pth{0, -1/\sqrt{2}, a}  }.
        \end{align*}
   \end{enumerate}
Defining  a set of  stationary points 
               $ \stationset(p,q) \define \sth{  (e, w, a) \in \reals^3 : e \neq 0, 1 + ae^2 = 0, 1 + 2w|w| = 0  }$, the set of all critical points is
\begin{align}
       \sth{\btheta \in \reals^2: \nabla L(\btheta) = 0} = \globalset \cup \localset  \cup \maxset\cup \saddleset \cup \stationset.
\end{align}
In addition, for any $\globalmin \in \globalset, \localmin \in\localset$,  and $\saddle \in \saddleset$, the loss values satisfy
\begin{align*}
H(x_{n+1} \mid x_n) = L(\globalmin) < L(\localmin) = L(\localmax) = L(\saddle) = H(x_{n+1}).    
\end{align*}
\end{theorem}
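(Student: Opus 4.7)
The plan is to mirror the strategy of Theorem~\ref{thm:all_character}, exploiting the gradient formulas of Lemma~\ref{lmm:grad_comp_attn}. A first key simplification: since $X^2 = X$ for $X\in\binary$ and the optimal-bias identity $\pi_1 f_1 + f_2 = 0$ is built into $\biopt$, a direct computation gives $\Expect[(f_1X+f_2)(X-\tfrac{1}{2})] = -\pi_0 f_2$. Substituting into Lemma~\ref{lmm:grad_comp_attn} and using $|w|+\mathrm{sign}(w)w = 2|w|$ for $w\neq 0$, the three stationarity equations collapse to
\begin{align*}
    e\,(1+2w|w|)(1+2ae^2)\,f_2 = 0,\qquad e^2(1+ae^2)\,|w|\,f_2 = 0,\qquad e^4(1+2w|w|)\,f_2 = 0.
\end{align*}

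I would then split the critical-point analysis into three cases. \emph{Case~(i):} if $e=0$, all three equations vanish trivially and the logit degenerates to $\biopt = \log(p/q)$ by Lemma~\ref{lmm:optimal_bias}, so the entire affine $(w,a)$-plane at $e=0$ consists of critical points with common loss $H(\bpi)$. \emph{Case~(ii):} if $e\neq 0$ and $f_2 = 0$, then $\pi_1 f_1 + f_2 = 0$ forces $f_1 = 0$ as well; writing $f_2 = p-\phi_0$ and $f_1+f_2 = 1-q-\phi_1$ gives $\phi_0=p$, $\phi_1=1-q$, and since the two logit arguments differ by exactly $e^2(1+ae^2)(1+2w|w|)$, applying $\sigma^{-1}$ and subtracting yields the single scalar equation defining $\globalset$. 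At such a point the model's predictions match the Markov kernel, so $L = H(x_{n+1}\mid x_n) = L_\ast$, identifying these as global minima. \emph{Case~(iii):} if $e\neq 0$ and $f_2\neq 0$, the last two equations force $1+2w|w|=0$ (hence $w=-1/\sqrt{2}$, so $|w|\neq 0$) and $1+ae^2=0$; the first equation is then automatic. This produces the degenerate stationary family $\stationset$.

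For the classification within Case~(i), I would expand $L$ to second order in $e$ around $(0,w,a)$. Since the logit is independent of $w,a$ when $e=0$, the loss is exactly constant along the $(w,a)$-plane, so only the transverse $e$-direction carries curvature. An adaptation of Lemma~\ref{lmm:hess_comp}, which goes through verbatim because the $a$-coupling enters only through the factor $1+ae^2$ evaluated at $ae^2=0$, produces a leading $e^2$-coefficient proportional to $\pi_0\pi_1(p+q-1)(1+2w|w|)$. A strictly positive value places the point in $\localset$; a strictly negative one, in $\maxset$; and the boundary $w=-1/\sqrt{2}$, where the transverse curvature vanishes, yields the saddle family $\saddleset = \sth{(0,-1/\sqrt{2},a)}$, confirmed by a continuity/perturbation argument along the $e$-axis exactly as in the proof of Theorem~\ref{thm:all_character}.

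The main obstacle will be handling the two-dimensional null direction of the Hessian at every point of Case~(i): classical Morse theory does not apply, so I would leverage the global fact that $L\equiv H(\bpi)$ on the entire $(w,a)$-plane at $e=0$, which reduces the local-extremum question to the sign of the transverse quadratic form and thereby renders the degeneracy harmless. A secondary subtlety is that the gradients of $L(\btheta)$ in Lemma~\ref{lmm:grad_comp_attn} are obtained after projecting out $b$ via $\biopt(\btheta)$; as in the two-dimensional setting, the envelope identity $\nabla_{\btheta}L(\btheta) = \nabla_{\btheta}L(\btheta,b)\rvert_{b=\biopt(\btheta)}$ ensures that no critical points are lost in this reduction, so the classification above is exhaustive.
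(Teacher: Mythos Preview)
Your proposal is correct and follows essentially the same route as the paper. The only organizational difference is that the paper first establishes the analogous result in $\reals^4$ (Theorem~\ref{thm:all_character_lin_attn}, with the bias $b$ explicit) and then projects down, whereas you work directly in $\reals^3$ using the clean identity $\Expect[(f_1X+f_2)(X-\tfrac12)]=-\pi_0 f_2$; both paths yield the identical case split $(e=0;\ f_2=0;\ \stationset)$ and the same transverse second derivative $2\pi_0\pi_1(p{+}q{-}1)(1{+}2w|w|)$ that drives the min/max/saddle classification on the $e=0$ plane.
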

\begin{remark}\normalfont
While the remaining set of stationary points could be classified to global minima, local minima, etc., it's technically unclear what category the set of critical points $\stationset(p,q)$ belong to, as the Hessian is undefined here. We would need to rely on local perturbation analysis to characterize these class of points.
\end{remark}

We defer the proofs of the theorems to \prettyref{app:optim_linear_attn}.

\subsection{Gradient flow analysis}

Analogous to the gradient flow analysis for $\btheta=(e,w) \in \reals^2$ in \prettyref{app:gf_analysis}, we now study its countepart toegether with the attention scalar, \ie $\btheta= (e,w,a) \in \reals^3$. To this end, let $(\btheta_t)_{t \geq 0}$ be a $C^1$ curve in $\reals^3$  governed by
\begin{align}
    \odv{\btheta_t}{t} =    -\nabla L(\btheta_t), \quad \btheta_t = (e_t, w_t, a_t)  \inr{3}, \, t \geq 0,
    \tag{GF-Attention} 
    \label{eq:gflow_attn1}
\end{align}
starting with a randomly initalized $\btheta_0$. We define the \emph{energy function} $\energy(\cdot, \cdot, \cdot)$  as
\begin{align}
    \energy(e,w, a) \define e^2 - (w^2 + \mathrm{sign}(w) \cdot \log |w|) - 2 a^2, \quad \forall(e,w, a) \in \reals^3 \setminus \eaplane,
    \label{eq:energy1}
\end{align}
where $\eaplane \define \{ (e, w=0, a) \}$. The following lemma presents the crucial result that the energy is constant along the flow in \ref{eq:gflow_attn1}.
\begin{lemma}[Constant energy along the flow]
    For any $(p,q) \in (0,1)^2$ and initialization $\btheta_0 =(e_0, w_0, a_0) \in \reals^3 $, let $(\btheta_t)_{t \geq 0}$ be the corresponding \ref{eq:gflow_attn1} trajectory starting from $\btheta_0$. If $w_0 \neq 0$, then the energy stays constant along the trajectory, \ie
    \begin{align}
        \energy(\btheta_t) = e_t^2 - (w_t^2 + \mathrm{sign}(w_t) \cdot \log |w_t|) - 2a_t^2 = \energy(\btheta_0), \quad \forall t \geq 0.
        \label{eq:energy_constant_attn1}
    \end{align}
    On the other hand, if $w_0=0$, $w_t =0$ for all $t \geq 0$. Hence, if we initialize on $\eaplane$ the trajectory always stays on the $\eaplane$.
    \label{lmm:energy_constant_attn1}
\end{lemma}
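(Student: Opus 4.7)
The plan is to prove conservation of $\energy$ by a direct chain-rule computation that mirrors the two-dimensional analysis of \prettyref{lmm:energy_constant1}, and then handle the degenerate case $w_0=0$ separately via uniqueness of the ODE. First I would use \prettyref{lmm:grad_comp_attn} to rewrite the gradient flow in a compact form. Writing $C \define \Expect[(f_1 X + f_2)(X-\tfrac{1}{2})]$ for the common scalar prefactor, simplifying $|w|+\mathrm{sign}(w)w = 2|w|$, and grouping the two terms in $\partial_e L$ as $2e(1+ae^2) + 2ae^3 = 2e(1+2ae^2)$, the flow becomes
\begin{align*}
\dot e_t &= 2C\, e_t(1+2a_t e_t^2)(1+2w_t|w_t|), \\
\dot w_t &= 4C\, e_t^2(1+a_t e_t^2)\,|w_t|, \\
\dot a_t &= C\, e_t^4(1+2w_t|w_t|).
\end{align*}

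Next, I would compute $\nabla \energy$ away from $\{w=0\}$. Since $\tfrac{d}{dw}(\mathrm{sign}(w)\log|w|) = 1/|w|$ for $w \neq 0$ (verified by splitting into $w>0$ and $w<0$), we have $\nabla\energy(e,w,a) = (2e,\, -2w - 1/|w|,\, -4a)$. Applying the chain rule, the key algebraic observation is the identity $(2w + 1/|w|)|w| = 2w|w| + 1$, which lets me pull out the common factor $4C\, e_t^2 (1+2w_t|w_t|)$ from all three contributions:
\begin{align*}
\odv{\energy(\btheta_t)}{t} = 4C\, e_t^2 (1+2w_t|w_t|)\qth{(1+2a_t e_t^2) - (1+a_t e_t^2) - a_t e_t^2} = 0.
\end{align*}
The three terms in the bracket cancel exactly, giving $\energy(\btheta_t) \equiv \energy(\btheta_0)$ for any $\btheta_0 \in \reals^3 \setminus \eaplane$.

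For the boundary case $\btheta_0 \in \eaplane$, the crucial point is that the $\dot w_t$ equation above carries an explicit factor $|w_t|$, so $w_t \equiv 0$ solves the scalar ODE for $w$ with $w_0 = 0$. Uniqueness of solutions to the full flow, which holds because $\nabla L$ is locally Lipschitz (the only source of non-smoothness, $|w|$, is itself Lipschitz), then forces $w_t \equiv 0$ for all $t \geq 0$, and the remaining pair $(e_t, a_t)$ evolves under the well-posed autonomous system obtained by setting $w=0$. The main technical subtlety is precisely this uniqueness argument at the non-smooth locus $\{w=0\}$; once it is dispatched, the conservation identity itself is the clean algebraic cancellation above, which parallels the two-dimensional result in \prettyref{lmm:energy_constant1}.
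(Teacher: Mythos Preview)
Your proposal is correct and follows essentially the same route as the paper: both arguments exploit that $\dot e$, $\dot w$, $\dot a$ share the common scalar $C=\Expect[(f_1X+f_2)(X-\tfrac12)]$ and then verify the differential identity $e\dot e = \dot w\,(w+\tfrac{1}{2|w|}) + 2a\dot a$. The only difference is presentational: the paper divides the $\dot w$ and $\dot a$ equations by their non-$C$ factors, substitutes into $\dot e$, and then integrates to \emph{discover} $\energy$, whereas you take $\energy$ as given and differentiate it directly via the chain rule, obtaining the same cancellation $(1+2ae^2)-(1+ae^2)-ae^2=0$; your version is arguably cleaner since the statement already hands you $\energy$, and your treatment of the $w_0=0$ case via local Lipschitzness of $\nabla L$ is slightly more careful than the paper's one-line observation.
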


Now we characterize the convergence of the gradient flow.

\begin{lemma}[GF convergence]
Let $(\btheta_t)_{t \geq 0}$ be a continuously diferentiable \ref{eq:gflow_attn1} trajectory starting from $\btheta_0$. Then for all initializations $\btheta_0 \in \reals^3$,
    \begin{enumerate}[label={\upshape(\roman*)}, nosep,leftmargin=16pt,itemsep=2pt]
        \item $(\btheta_t)_{t \geq 0}$ is bounded, 
        \item there exists a $\thetalim \in \reals^3$ such that $\lim_{t \diverge}\btheta_t = \thetalim$ and
        \item $\lim_{t \diverge} \norm{ \nabla L(\btheta_t) } = \norm{\nabla L(\thetalim)} =0$. 
    \end{enumerate}
    Hence $\thetalim$ is a critical point of $L$. 
    \label{lmm:gf_convergence_attn}
\end{lemma}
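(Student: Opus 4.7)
My plan is to adapt the strategy used in the two-dimensional counterpart (\prettyref{lmm:gf_convergence}) to the present three-dimensional setting. The starting point is to split on whether $\btheta_0$ lies on the invariant plane $\eaplane$ or not: by \prettyref{lmm:energy_constant_attn1}, if $w_0 \neq 0$ the trajectory stays off $\eaplane$ and the energy $\energy(\btheta_t) = \energy(\btheta_0) \triangleq \energy_0$ is conserved, while if $w_0 = 0$ the trajectory is trapped on $\eaplane$ and one must argue directly in that lower-dimensional sub-problem.

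Off of $\eaplane$ I would first establish boundedness of $(\btheta_t)_{t \geq 0}$. The two available tools are energy conservation $e_t^2 - 2a_t^2 - w_t^2 - \mathrm{sign}(w_t)\log|w_t| = \energy_0$, which cuts the ambient $\reals^3$ down to a two-dimensional surface, and loss monotonicity $L(\btheta_t) \leq L(\btheta_0)$, which follows from $\frac{d}{dt}L(\btheta_t) = -\|\nabla L(\btheta_t)\|^2 \leq 0$. I would then rule out divergence along the energy surface by a case analysis of the possible escape modes: (a) $|w_t| \to \infty$ or $|w_t| \to 0^{\pm}$, and (b) $w_t$ bounded away from $0$ and $\infty$ but $(e_t,a_t) \to \infty$ along the hyperboloid $e^2 - 2a^2 = $ const. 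In each case, using the explicit form of the logits in \prettyref{eq:lossa}, one shows that at least one of the two logit values ($X=0$ or $X=1$) tends to $-\infty$ with nonzero probability under the law of $(X,Y) \sim (\bpi,\bP)$, pushing $L(\btheta_t) \to +\infty$ and contradicting the upper bound $L(\btheta_0)$.

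With boundedness in hand, convergence to a single limit follows from the Lojasiewicz gradient inequality, since $L(e,w,a)$ is real-analytic: it is a finite expectation of the logistic loss evaluated at polynomial arguments in $(e,w,a)$, and the logistic function is real-analytic on all of $\reals$. The standard consequence is that the gradient-flow trajectory has finite length, so $\thetalim \triangleq \lim_{t \to \infty} \btheta_t$ exists in $\reals^3$. Combined with the identity $\int_0^\infty \|\nabla L(\btheta_t)\|^2\,dt = L(\btheta_0) - \lim_t L(\btheta_t) < \infty$ and continuity of $\nabla L$, this yields $\lim_{t\diverge}\|\nabla L(\btheta_t)\| = \|\nabla L(\thetalim)\| = 0$, so $\thetalim$ is a critical point. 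The $w_0=0$ case reduces to a two-dimensional flow on $\eaplane$ with coordinates $(e,a)$. The same boundedness-plus-analyticity argument applies: the logits still depend on $e^2(1+ae^2)$, so monotonicity of $L$ rules out escape along unbounded sequences in $\eaplane$, and the limit obtained this way is a stationary point of $L|_{\eaplane}$. That it is also a critical point of $L$ on the full $\reals^3$ follows from \prettyref{lmm:grad_comp_attn}, whose gradient formulas show that $\partial_w L$ vanishes on $\eaplane$.

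The main obstacle is the case-by-case boundedness verification off $\eaplane$: energy conservation alone only confines the flow to a two-dimensional surface in $\reals^3$, and one still has to verify that the loss genuinely diverges to $+\infty$ along every asymptotic regime of $(e,w,a)$ on that surface. Once boundedness is established, the remainder of the proof is a routine application of the Lojasiewicz inequality for analytic loss landscapes, and the continuity-based argument for the gradient norm.
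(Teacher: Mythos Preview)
Your overall route matches the paper's: split on whether $\btheta_0 \in \eaplane$, establish boundedness of the trajectory, and then invoke the \L{}ojasiewicz gradient inequality for real-analytic losses to obtain a single limit point with vanishing gradient. The paper's own proof is in fact just a one-line deferral to the two-dimensional counterpart (\prettyref{lmm:gf_convergence}), so your proposal is considerably more detailed than what the paper provides.

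The main difference is in how boundedness is argued. In the two-dimensional proof the paper shows that for $\norm{\btheta}$ large enough the velocity $-\nabla L(\btheta)$ points \emph{inward}, by computing the sign of $\Expect[(f_1X+f_2)X]$ from \prettyref{lmm:grad_comp} (via the explicit formula for the optimal bias). You instead argue by contradiction with loss monotonicity: along any unbounded escape on the energy surface, at least one logit value tends to $-\infty$ with positive probability under $(X,Y)\sim(\bpi,\bP)$, forcing $L(\btheta_t)\to\infty$. Both are valid in spirit; the paper's inward-velocity argument is more mechanical once the gradient formulas are in hand, whereas your approach trades those computations for a case analysis of asymptotic regimes that you correctly flag as the crux.

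One concrete slip: the logit in \prettyref{eq:lossa} is \emph{not} a polynomial in $(e,w,a)$, since it contains $w|w|$ and $w\,|w(1+ae^2)|$. So your justification of real-analyticity (``logistic loss at polynomial arguments'') is incorrect as stated. Off of $\eaplane$ the $|w|$ issue disappears on each half $\{w>0\}$ or $\{w<0\}$ (which the trajectory cannot leave, by \prettyref{lmm:energy_constant_attn1}), but the factor $|1+ae^2|$ introduces a second non-analytic locus $\{1+ae^2=0\}$ that neither you nor the paper addresses. To make the \L{}ojasiewicz step fully rigorous you would need to argue either that the trajectory eventually stays in a region where the sign of $1+ae^2$ is fixed, or that any accumulation point off that locus suffices; the paper simply asserts the analogue of the $2$D argument carries over.
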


The following result characterizes the energy of the limit point.

\begin{lemma}[Energy at the limit point]
Consider the same setting as in \prettyref{lmm:gf_convergence_attn}. If $\btheta_0 \in \reals^3 \setminus \eaplane$, then $\energy(\thetalim) = \energy(\btheta_0)$. Hence $\thetalim$ lies at the intersection of the contour line $\energy(e,w) = \energy_0$ with the set of critical points of $L$ in $\reals^3$.

On the other hand, if $\btheta_0 \in \eaplane$, then $\thetalim \in \eaplane$.
    \label{lmm:energy_lim_attn}
\end{lemma}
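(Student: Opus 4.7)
I would treat the two cases of the dichotomy separately, relying on the two facts already in hand: the energy-conservation statement (Lemma~\ref{lmm:energy_constant_attn1}) and the convergence statement (Lemma~\ref{lmm:gf_convergence_attn}). Once these are combined with a short continuity/barrier argument at $\eaplane$, the result follows almost immediately.

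\textbf{Case $\btheta_0 \in \reals^3 \setminus \eaplane$.} Here $w_0 \neq 0$. By Lemma~\ref{lmm:energy_constant_attn1} the trajectory satisfies $\energy(\btheta_t) = \energy(\btheta_0) =: \energy_0 \in \reals$ for every $t \geq 0$, and by Lemma~\ref{lmm:gf_convergence_attn} we have $\btheta_t \to \thetalim$ for some critical point $\thetalim$ of $L$. The only subtlety is that $\energy$ is not defined on $\eaplane$, so to write $\energy(\thetalim) = \energy_0$ I must first rule out $\thetalim \in \eaplane$, i.e.\ $w_{\lim} = 0$. I would do this by contradiction: if $w_t \to 0$ along the trajectory, then since $e_t$ and $a_t$ stay bounded (Lemma~\ref{lmm:gf_convergence_attn}), the term $-\mathrm{sign}(w_t)\log|w_t|$ in $\energy(\btheta_t)$ diverges (to $+\infty$ along any subsequence on which $w_t \to 0^+$, to $-\infty$ along one on which $w_t \to 0^-$), while the remaining terms $e_t^2 - w_t^2 - 2a_t^2$ stay bounded. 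This contradicts $\energy(\btheta_t) \equiv \energy_0 \in \reals$. Hence $\thetalim \in \reals^3 \setminus \eaplane$, and continuity of $\energy$ on this open set yields $\energy(\thetalim) = \lim_{t \to \infty} \energy(\btheta_t) = \energy_0$. Combined with $\nabla L(\thetalim) = 0$, this places $\thetalim$ precisely at the intersection of the contour $\{\energy = \energy_0\}$ with the critical set.

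\textbf{Case $\btheta_0 \in \eaplane$.} Here $w_0 = 0$, and the second part of Lemma~\ref{lmm:energy_constant_attn1} directly gives $w_t = 0$ for all $t \geq 0$. Thus $\btheta_t \in \eaplane$ throughout the flow, and since $\eaplane$ is closed in $\reals^3$, passing to the limit shows $\thetalim \in \eaplane$.

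\textbf{Main obstacle.} The only non-routine step is the continuity argument ruling out $\thetalim \in \eaplane$ in the first case. This is essentially an ``energy barrier'' argument: the $-\mathrm{sign}(w)\log|w|$ summand forces $|\energy|$ to blow up near $\eaplane$, so a trajectory on a finite-energy level set cannot limit onto it. Once this is phrased carefully via a subsequence argument, the rest is a straightforward combination of conservation, convergence, and continuity of $\energy$ on its domain.
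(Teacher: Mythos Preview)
Your proof is correct and follows essentially the same approach as the paper: combine the energy-conservation lemma, the convergence lemma, and continuity of $\energy$ on $\reals^3\setminus\eaplane$, with the $\eaplane$ case handled directly by invariance. In fact you are more careful than the paper's own argument, which simply invokes continuity without explicitly ruling out $\thetalim\in\eaplane$; your energy-barrier contradiction (the $-\mathrm{sign}(w)\log|w|$ term blowing up while the rest stays bounded) is exactly the right way to close that gap.
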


We defer the proofs of the lemmas to \prettyref{app:proofs_gf_attention}.


\subsection{Role of standard initialization}

The following picture enhances the behavior of the GF dynamics near the origin.

\begin{figure*}[!htbp]
\captionsetup[sub]{font=scriptsize}
\centering
\begin{subfigure}[t!]{0.48\textwidth}
\centering
   \includegraphics[width=1.0\textwidth]{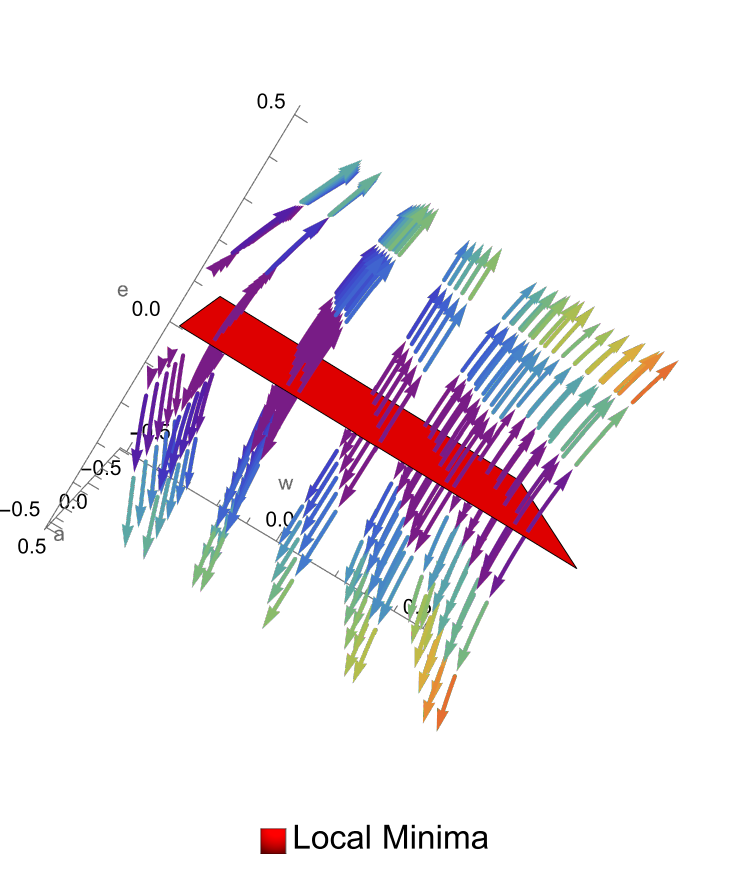}
    \caption{Gradient flow around origin $(p+q<1)$}
\end{subfigure}
\hfill
\begin{subfigure}[t!]{0.48\textwidth}
\centering
   \includegraphics[width=1.0\textwidth]{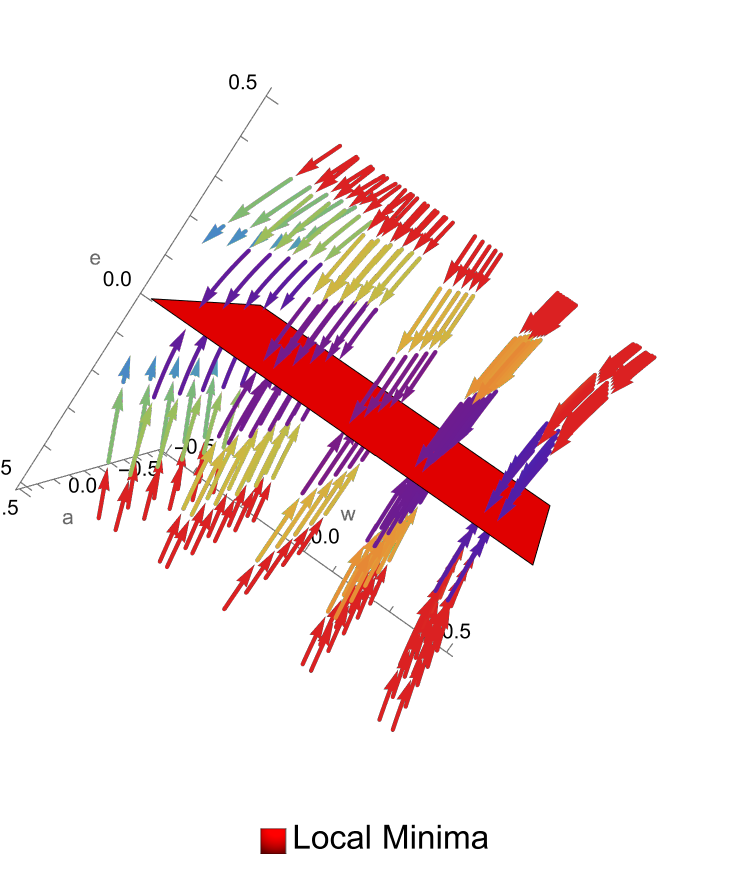}
    \caption{Gradient flow around origin $(p+q>1)$}
\end{subfigure}
\caption{Gradient flow dynamics in $\mathbb{R}^3$, near the origin, for the transformer parameters with attention scalar $a$ (\prettyref{sec:learn_dyna_attn}). The local minima are repellors for $p+q<1$, while attracting for $p+q>1$.  }
 \label{fig:pq_grad_flow_attn_near_origin}
 \end{figure*}

Based on the above theoretical results and empirical evidence, we conjecture the following theorem, whose informal proof we defer to \prettyref{app:informal_proof}.

\begin{theorem}[ $\qth{\text{Informal}}$ Role of standard initialization for \(p+q > 1\)] 
\label{thm:lin_attn_initialization}

    If we use the standard initialization $\thetainit \sim \calN(0, \sigma^2 \bI_3)$ with $\sigma^2  \ll {1}/{\sqrt{2}}$ for the \ref{eq:gflow_attn1}, $\thetalim$ will be a local minimum with high probability.
\end{theorem}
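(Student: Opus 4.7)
The plan is to reduce the three-dimensional analysis to a perturbation of the two-dimensional gradient flow of \prettyref{thm:gf_highswitch}, exploiting the fact that the attention coordinate $a_t$ is effectively frozen in a neighborhood of the origin. From \prettyref{lmm:grad_comp_attn}, one reads that $\partial_a L$ carries an explicit factor $e^4(1+2w|w|)$, so for $|e_t| \ll 1$ the $a$-component evolves on a much slower timescale than $(e_t, w_t)$. Consequently, for initializations $\thetainit \sim \calN(0,\sigma^2 \bI_3)$ with $\sigma^2 \ll 1/\sqrt{2}$, one has $|a_0| = O(\sigma)$ and the leading-order dynamics of $(e_t, w_t)$ should mirror the two-dimensional picture of \prettyref{thm:gf_highswitch}, in which small Gaussian initializations fall inside $\mininit$ and hence converge to a local minimum.

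To make this rigorous, I would proceed via energy conservation. Since $w_0 \neq 0$ almost surely, \prettyref{lmm:energy_constant_attn1} together with \prettyref{lmm:energy_lim_attn} force the limit point $\thetalim$ to lie on the level set $\energy(e,w,a) = \energy(\btheta_0)$ and to be a critical point of $L$ enumerated in \prettyref{thm:all_character_lin_attn_projected_parameters}. For small $|w_0|$, the term $-\mathrm{sign}(w_0)\log|w_0|$ dominates the initial energy: $\energy_0 \to +\infty$ as $w_0 \to 0^+$ and $\energy_0 \to -\infty$ as $w_0 \to 0^-$. Comparing these extremes against the energies of each critical family: local maxima on $\{e=0,\,w<-1/\sqrt{2}\}$ have $\energy$ bounded above by $\energysad - 2a^2$; the saddle stratum $\{e=0,\,w=-1/\sqrt{2}\}$ contributes $\energy = \energysad - 2a^2$; the global-minimum surface for $p+q>1$ requires $e^2(1+ae^2)(1+2w|w|)<0$ and, by a direct level-set computation, cannot intersect an energy contour whose $|{\energy_0}|$ is driven to infinity by $w_0\to 0$; only the local-minimum manifold $\localset = \{e=0,\, w>-1/\sqrt{2}\}$ attains arbitrarily large positive (as $w\to 0^+$) or arbitrarily large negative (as $w\to 0^-$) energies. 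Combined with \prettyref{lmm:gf_convergence_attn}, elimination forces $\thetalim \in \localset$, i.e.\ a local minimum, with sign of its $w$-coordinate matching that of $w_0$.

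The main obstacle is twofold. First, one needs quantitative trajectory control to certify that $a_t$ remains small enough on the whole flow (so that the high-$|a|$ portions of the global-minimum surface and of $\stationset$ are unreachable) and that $w_t$ cannot cross zero (so that the $\eaplane$ barrier implicit in \prettyref{lmm:energy_constant_attn1} genuinely pins the sign of $w_t$); the natural route is a Gronwall-style bound exploiting the $e^4$ factor in $\partial_a L$ together with the a priori boundedness from \prettyref{lmm:gf_convergence_attn}. Second, one must exclude convergence to the non-smooth stationary set $\stationset = \{e\neq 0,\,1+ae^2=0,\,1+2w|w|=0\}$, whose classification was already left open in \prettyref{thm:all_character_lin_attn_projected_parameters}; here the cleanest argument is to show that reaching $\stationset$ from near the origin would require $a_t$ to escape to $\Omega(1/\sigma^2)$, contradicting the slow-$a$ estimate. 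Modulo these geometric verifications, which the empirical pictures in \prettyref{fig:pq_3dplots} and \prettyref{fig:pq_grad_flow_attn_near_origin} strongly support, the desired high-probability convergence to a local minimum then follows by taking $\sigma$ small enough that the above perturbative regime is valid with the stated probability.
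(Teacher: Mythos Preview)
Your route is correct in spirit but genuinely different from the paper's. The paper's informal argument never invokes energy conservation or the classification of critical families; instead it linearizes the $e$-dynamics near the origin. Dropping all quadratic-and-higher product terms (so $ae^2\approx 0$, $w|w|\approx 0$, $z_1\approx z_2\approx b$), the gradient in \prettyref{lmm:grad_comp_attn} reduces to $\partial_e L \approx 2\,\Expect_X[(f_1X+f_2)X]\,e$, and with the optimal bias this coefficient equals $(p+q-1)\pi_1(1-\pi_1)>0$. Hence the approximate flow is $\dot e = -(p+q-1)\pi_1(1-\pi_1)\,e$, giving exponential decay $e_t\to 0$, which is read off as convergence to $\localset$. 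The paper never tracks $w_t$ or $a_t$ at all.

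Your energy-conservation-plus-elimination strategy is the natural 3D analogue of the proof of \prettyref{thm:gf_highswitch} and is structurally more principled, but precisely the obstacle you flag is real: the extra $-2a^2$ term in $\energy$ lets the global-minimum surface (and the local-maximum set) attain arbitrarily negative energies at large $|a|$, so energy matching alone cannot eliminate them for $w_0<0$ without the Gronwall-type control on $a_t$ that you propose. The paper's linearization simply sidesteps this geometry, at the cost of a heuristic ``drop higher-order terms'' step that is no more rigorous than your trajectory bound. In short: the paper buys brevity and an explicit decay rate for $e$; your approach buys a template for a rigorous proof, modulo the two obstacles you already identified.
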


\section{Additional empirical results}
\begin{figure*}[!h]
\captionsetup[sub]{font=scriptsize}
\centering
\begin{subfigure}{0.32\textwidth}
\centering
   \includegraphics[width=\textwidth]{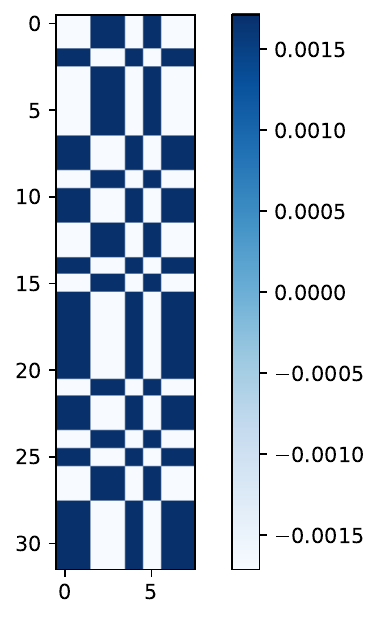} 
    \caption{$\bW_1$ at initialization}
    \label{fig:cfc12}
\end{subfigure}
\hfill
\begin{subfigure}{0.32\textwidth}
\centering
   \includegraphics[width=0.94\textwidth]{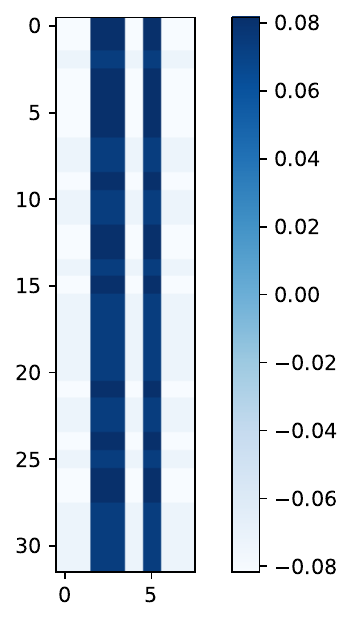}
    \caption{$\bW_1$ after 50 iterations}
    \label{fig:cfc22}
\end{subfigure}
\hfill
\begin{subfigure}{0.32\textwidth}
\centering
   \includegraphics[width=0.94\textwidth]{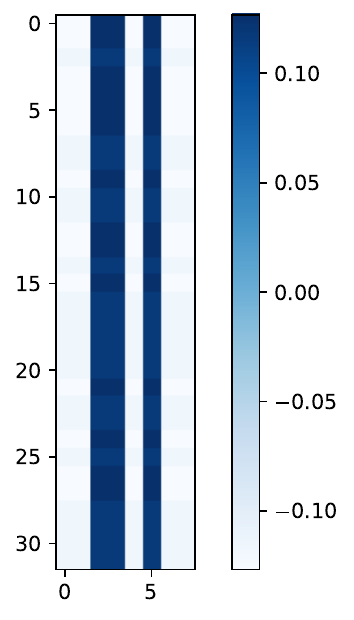}
    \caption{$\bW_1$ at convergence}
   \label{fig:cfc32}
\end{subfigure}\\
\begin{subfigure}{0.32\textwidth}
\centering
   \includegraphics[width=\textwidth]{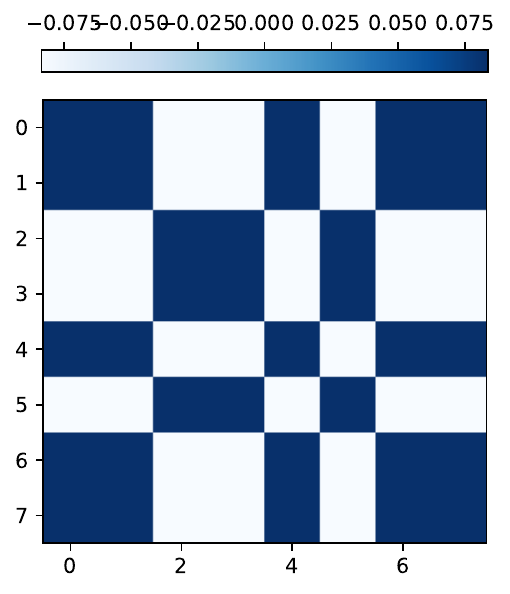} 
    \caption{$\bW_V$ at initialization}
    \label{fig:att12}
\end{subfigure}
\hfill
\begin{subfigure}{0.32\textwidth}
\centering
   \includegraphics[width=1.03\textwidth]{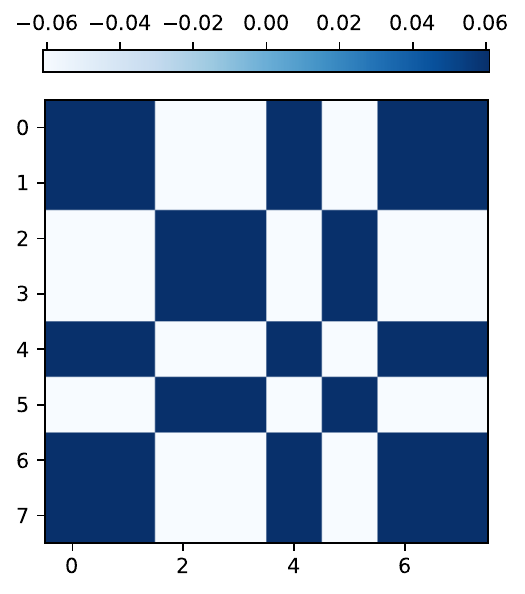}
    \caption{$\bW_V$ after 50 iterations}
    \label{fig:att22}
\end{subfigure}
\hfill
\begin{subfigure}{0.32\textwidth}
\centering
   \includegraphics[width=\textwidth]{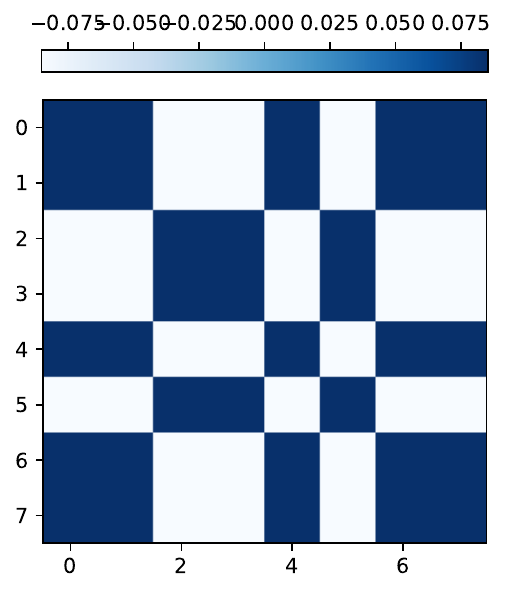}
    \caption{$\bW_V$ at convergence}
   \label{fig:att32}
\end{subfigure}
\caption{Evolution of parameters $\bW_1$ and $\bW_V$ across iterations, starting from a rank-one initialization. The parameters maintain a rank-one structure across the entire training.}
\label{fig:weights2}
\end{figure*}

\section{Model architecture and hyper-parameters}
\label{app:architecture}

\begin{table}[h]
\caption{Parameters in the transformer architecture with their shape.}
\label{tab:transformer-parameters}
\vspace{1mm}
\small%
\newcolumntype{R}{>{\raggedleft\arraybackslash}X}
\begin{tabularx}{\linewidth}{Xllr}
\toprule
Parameter
& Matrix shape \\
\cmidrule(lr){1-2}
transformer.wte 
& $2 \times d$ \\
transformer.wpe 
& $N \times d$ \\
transformer.h.ln\_1
& $d \times 1$ \\
transformer.h.attn.c\_attn
& $3d \times d$ \\
transformer.h.attn.c\_proj
& $d \times d$ \\
transformer.h.ln\_2
& $d \times 1$ \\
transformer.h.mlp.c\_fc
& $4d \times d$ \\
transformer.h.mlp.c\_proj
& $d \times 4d$ \\
transformer.ln\_f
& $d \times 1$ \\
\bottomrule
\end{tabularx}
\end{table}

\begin{table}[h]
\caption{Settings and parameters for the transformer model used in the experiments.}
\label{tab:transformer-setup}
\vspace{1mm}
\small%
\newcolumntype{R}{>{\raggedleft\arraybackslash}X}
\begin{tabularx}{\linewidth}{lX}
    \toprule
    Dataset & $k$-th order binary Markov source \\
    Architecture & Based on the \gptwo architecture as implemented in \cite{pagliardini-llm} \\
    \midrule
    Batch size & Grid-searched in $\{16, 50\}$ \\
    Accumulation steps & 1 \\
    \midrule
    Optimizer & AdamW ($\beta_1 = 0.9, \beta_2 = 0.95$) \\
    Learning rate & $0.001$ \\
    Scheduler & Cosine \\
    \# Iterations & $8000$ \\
    Weight decay & $\num{1e-3}$\\
    \midrule
    Dropout & $0$ \\
    Sequence length & Grid-searched in $\{512, 1024, 2048\}$ \\
    Embedding dimension & Grid-searched in $\{4,8,16,32,64\}$ \\
    Transformer layers & 1 \\
    Attention heads & 1 \\
    \midrule
    Repetitions & 3 or 5\\
    \bottomrule
\end{tabularx}
\end{table}


\section{Proofs of theorems in \prettyref{app:loss_bias}}

\subsection{Proof of \prettyref{thm:all_character_bias}}
\label{app:proof_all_character_bias}

\begin{proof}
We characterize the set of global minima, local minima, and that of the saddle points individually.

\underline{\bf (i) Set of all global minima.} Let $\biglobalmin \in \reals^3$ be arbitrary. From \cite[Lemma 1]{makkuva2024attention}, we have that $\biglobalmin$ is a global minimum for the loss $L(\cdot)$ in \prettyref{eq:loss_cannoli_almost1} if and only if its prediction probability satisfies $ f_{\biglobalmin}(x_1^n) = \prob{x_{n+1}=1 \mid x_n }$, the Markov kernel. Since the input $\finiteseq \sim (\bpi(p,q), \pqmarkov)$, we have that
\begin{align}
    \prob{x_{n+1}=1 \mid x_n } = (1-x_n) p + x_n (1-q) = (1-p-q) x_n +p.
    \label{eq:kernel_rewrite}
\end{align}
On the other hand, by definition, from \prettyref{eq:pred_cannoli}, $f_{\biglobalmin}(x_1^n) = \sigma\pth{e^2 (1+ 2 w |w|)\, x_n + b-\frac{e^2}{2} }$, where $\biglobalmin=(e,w,b)$. Since $x_n \in \binary$, this can be further simplified to
\begin{align}
\begin{split}
f_{\biglobalmin}(x_1^n) & = \sigma\pth{e^2 (1+ 2 w |w|)\, x_n + b-\frac{e^2}{2} } \\
&= x_n \cdot \sigma\pth{e^2 (1+ 2 w |w|) + b-\frac{e^2}{2} } + (1-x_n) \cdot \sigma\pth{ b-\frac{e^2}{2} } \\
& = x_n \pth{ \sigma\pth{ 2 e^2 w |w|) + b + \frac{e^2}{2} } - \sigma\pth{ b-\frac{e^2}{2} }  } + \sigma\pth{ b-\frac{e^2}{2} }.
\end{split}
    \label{eq:expand_sigmoid}
\end{align}
Since both $f_{\biglobalmin}(x_1^n)$ and $\prob{x_{n+1}=1 \mid x_n }$ are linear functions of $x_n$, equating them for all vallues of $x_n \in \binary$ implies that the respective coeffecients in these functions in \prettyref{eq:kernel_rewrite} and \prettyref{eq:expand_sigmoid} are also equal, \ie
\begin{align*}
    \sigma\pth{ b-\frac{e^2}{2} } &= p,  \\
    \sigma\pth{ 2 e^2 w |w| + b + \frac{e^2}{2} } - \sigma\pth{ b-\frac{e^2}{2} } & = 1-p-q,
\end{align*}
and hence
\begin{align}
    \sigma\pth{ b-\frac{e^2}{2} } = p, \quad
    \sigma\pth{ 2 e^2 w |w|) + b + \frac{e^2}{2} } = 1-q. 
    \label{eq:almost_set}
\end{align}
Since $\sigma(z)= y $ for $y \in (0,1)$ implies $z = \log \frac{y}{1-y}$, \prettyref{eq:almost_set} can be rewritten as
\begin{align*}
    b-\frac{e^2}{2} = \log \frac{p}{1-p}, \quad 2 e^2 w |w| + b + \frac{e^2}{2} = \log \frac{1-q}{q}. 
\end{align*}
Using $2 e^2 w |w| + b + \frac{e^2}{2} = e^2(1+2w|w|) + b- \frac{e^2}{2}= e^2(1+2w|w|) + \log \frac{p}{1-p}$ in the second equality above, we obtain
\begin{align}
\begin{split}
     b-\frac{e^2}{2} &= \log \frac{p}{1-p}, \\
     e^2(1+2w|w|) & = \log \frac{1-q}{q} + \log \frac{1-p}{p}  = \log \frac{(1-p)(1-q)}{pq}.
     \end{split}
     \label{eq:final_set}
\end{align}
Thus $\biglobalmin \in \reals^3$ is a global minimum for $L(\cdot)$ if and only if it satisfies \prettyref{eq:final_set} (note that it's already a critical point, as established in \prettyref{thm:all_station_bias}). Thus, the set of all global minimum $\biglobalset(p,q)$ is given by
        \begin{align*}
            \biglobalset(p,q) \define \sth{ \bgamma_\ast = (e, w, b) \in \reals^3 : e^2(1+2w|w|) = \log \frac{(1-p)(1-q)}{pq}, \, b-\frac{e^2}{2} = \log \frac{p}{1-p}  }.
        \end{align*}

Since the prediction $f_{\biglobalmin}(\cdot)$ equals the Markov kernel for any $\biglobalmin \in \biglobalset$, it follows from \prettyref{thm:globalmin_tie} (or \cite[Lemma 1]{makkuva2024attention}) that $L(\biglobalmin) = H(x_{n+1} \mid x_n)$, the entropy rate of the Markov chain.

\underline{\bf (ii) Set of local minima and saddle points.} 

Define $\bilocalset(p,q) \subseteq \reals^3$ and $\bisaddleset \subseteq \reals^3$ as follows:
        \begin{align*}
                \bilocalset(p,q) &\define \sth{ \bilocalmin = (e, w, b) \in \reals^3 : e = 0, (p+q-1)(1+2w|w|) > 0, b = \log \frac{p}{q}  }, \\
                \bisaddleset(p,q) & \define \sth{ \bisaddle = (e, w, b) \in \reals^3 : e = 0, (p+q-1)(1+2w|w|) \leq 0, b = \log \frac{p}{q}  }.
        \end{align*}
To show that $\bilocalset$ is the set of all bad local minima for $L(\cdot)$, we first show that any $\bilocalmin \in \bilocalset$ is a bad local minimum and then show that every bad local minimum should belong to $\bilocalset$. Similarly for $\bisaddleset$. We start with the local minima. 

Let $\bilocalmin = (e,w,b) \in \bilocalset$. Recall that $\bilocalmin$ is a stationary point (\prettyref{thm:all_station_bias}), \ie
\begin{align*}
    \nabla L(\bilocalmin) = 0.
\end{align*}
Rearragning the order of scalars and writing $\bilocalmin =(b, e,w)$, from \prettyref{lmm:hess_comp_bias}, the Hessian of the loss at $\bilocalmin$ is
\begin{align}
        \nabla^2 L(\bilocalmin) = \pi_0 \pi_1 \begin{bmatrix}
    1 & 0 & 0 \\
    0 & 2(p+q-1)(1+2w|w|) & 0 \\
    0 & 0 & 0
    \end{bmatrix}.
    \label{eq:hess1}
\end{align}
By definition, $\bilocalmin = (b,e,w)$ satisfies $(p+q-1)(1+2w|w|) > 0$. Thus its Hessian in \prettyref{eq:hess1} has a block diagonal structure of the form $\begin{bmatrix} \bH_{b,e} & 0 \\ 0 & 0 \end{bmatrix}$ where $\bH_{b,e}$ has both the eigen values positive, and hence positive-definite. In other words, $\bilocalmin$ is a local minimum for $L(\cdot)$ in the $(b,e) \in \reals^2$ space for any fixed $w$ in the set. Interestingly, using the continuity argument and the fact that $L( b= \log \frac{p}{q}, e=0, w)$ is constant in $w \in \reals$, we can essentially follow the same steps as in proof of Theorem 2 in \cite[Appendix B.3]{makkuva2024attention} (\prettyref{thm:localmin_tie} above) to show that $\bilocalmin =(b,e,w)$ is a also local minimum for $L(\cdot)$ in the full parameter space $\reals^3$. This establishes that $\bilocalmin$ is a local minimum for $L(\cdot)$.

For the saddle points, let $\bisaddle = (e,w,b) \in \bisaddleset$. We have that $\bisaddle$ is a stationary point (\prettyref{thm:all_station_bias}) and \prettyref{lmm:hess_comp_bias} implies its Hessian (after rearraging the order of scalars as above with $\bisaddle =(b, e,w)$) is: 
\begin{align}
        \nabla^2 L(\bisaddle) = \pi_0 \pi_1 \begin{bmatrix}
    1 & 0 & 0 \\
    0 & 2(p+q-1)(1+2w|w|) & 0 \\
    0 & 0 & 0
    \end{bmatrix}.
    \label{eq:hess2}
\end{align}
If $ w \neq - \frac{1}{\sqrt{2}}$, $(p+q-1)(1+2w|w|) < 0$ for any $\bisaddle \in \bisaddleset$, and hence the Hessian $\nabla^2 L(\bisaddle)$ in \prettyref{eq:hess2} as both positive, negative, and zero eigen values. Thus $\bisaddle$ is a saddle point for $L(\cdot)$. Using a neighborhood argument, we can similarly argue for $w = \frac{1}{\sqrt{2}}$ to establish that it's also a saddle point. Now we compute the loss value.

For any $\bilocalmin = (e,w,b) \in \bilocalset$ or $\bisaddle = (e,w,b) \in \bisaddleset$, we have that $e= 0$ and $b=\log \frac{p}{q}$. Thus for $\bgamma = \bilocalmin$ or $\bisaddleset$, the prediction probability in view of \prettyref{eq:pred_cannoli} is
\begin{align*}
    \mathbb{P}_{\bgamma}(x_{n+1} = 1 \mid x_1^n) =  \sigma\pth{e^2 (1+ 2 w |w|)\, x_n + b-\frac{e^2}{2} } = \sigma(b) = \frac{p}{p+q} = \prob{x_{n+1} = 1},
\end{align*}
the marginal distribution. Substituting this equality in the definition of cross-entropy loss $L(\cdot)$ in \prettyref{eq:loss} and the fact that $ \prob{x_{n+1} = 1} = \frac{p}{p+q} =\pi_1$, following the same steps as in \cite[Appendix B.3]{makkuva2024attention}, we obtain
\begin{align*}
    L(\bgamma)= & -\frac{1}{N} \sum_{n \in \set N} \Expect_{x_1^{n+1}}[x_{n+1} \cdot \log f_{\bgamma}(x_1^n) + (1- x_{n+1}) \cdot \log (1-f_{\bgamma}(x_1^n)) ] \\
    & = -\frac{1}{N} \sum_{n \in \set N} \Expectnplus \qth{ x_{n+1} \cdot \log \pi_1 + (1- x_{n+1}) \cdot \log \pi_0} \\
    & = \frac{1}{N} \sum_{n \in \set N} \qth{ - \pi_1 \log \pi_1 - \pi_0 \log \pi_0} \\
    & = H(\bpi) = H(x_{n+1}).
\end{align*}
Thus $L(\bilocalmin) = L(\bisaddle) = H(x_{n+1})$. To see that $ H(x_{n+1} \mid x_n) = L(\biglobalmin) < L(\bilocalmin) = L(\bisaddle) = H(x_{n+1})$ for any global minimum $\biglobalmin$, observe that the gap 
\begin{align*}
    L(\bilocalmin) - L_\ast = H(x_{n+1}) - H(x_{n+1}|x_n) = I(x_n; x_{n+1}) \geq 0,
    \end{align*}
where $I(x_n; x_{n+1})$ is the mutual information between $x_n$ and $x_{n+1}$ \cite{cover1999elements}. Hence the optimality gap equals zero if and only if the mutual information equals zero, which happens when $x_n$ and $x_{n+1}$ are independent, \ie $\prob{x_{n+1} = 1 \mid x_n} $ doesn't depend on $x_n$. Since $\prob{x_{n+1} = 1 \mid x_n} = (1-p-q) x_n + p$ from \prettyref{eq:kernel_rewrite}, this happens only when $p+q=1$ which contradicts the theorem assumption that $p+q \neq 1$. Hence  $ H(x_{n+1} \mid x_n) = L(\biglobalmin) < L(\bilocalmin) = L(\bisaddle) = H(x_{n+1})$.

Now we finally show that $\bilocalset$ and $\bisaddleset$ are the only set of bad local minima and saddle points respectively. Let $\bgamma$ is a bad local minimum for $L(\cdot)$. By definition, it's also a critical point. Recall from \prettyref{thm:all_station_bias} that any stationary point $\bgamma = (e,w,b) $ for the loss $L(\cdot)$ satisfies that either $\bgamma \in \biglobalset$, $\bgamma \in \bilocalset$, or $\bgamma \in \bisaddleset$. Clearly $\bgamma \not \in \biglobalset $, as $\biglobalset$ is the set of all global minima. Similarly, $\bgamma \not \in \bisaddleset $ as every point in $\bisaddleset$ is a saddle point for the loss $L(\cdot)$ as established above. Hence $\bgamma \in \bilocalset$. Thus every bad local minimum in $\reals^3$ belongs to $\bilocalset$. This coupled with the fact above that $\bilocalset$ is a set of bad local minima implies $\bilocalset$ is indeed the set of all bad local minima. The proof for $\bisaddleset$ is similar.


\end{proof}

\subsection{Proof of \prettyref{thm:all_station_bias}}
\label{app:proof_all_station_bias}
\begin{proof}
    Let $\bgamma = (e,w,b) \in \reals^3$ be such that $\nabla L(\bgamma) = \pth{\frac{\partial L}{\partial e}, \frac{\partial L}{\partial w}, \frac{\partial L}{\partial b}}^\top = 0$. By \prettyref{lmm:grad_comp_bias}, we have
        \begin{align}
        \frac{\partial L}{\partial e} &= \Expect_X \qth{(f_1 X + f_2)(2X(1+2w|w|) - 1))} \cdot e = 0 , \label{eq:e_grad} \\
        \frac{\partial L}{\partial w} &= \Expect_X \qth{(f_1 X + f_2)X} \cdot 4e^2 |w|  =0 , \label{eq:w_grad} \\
        \frac{\partial L}{\partial b} &= \Expect_X \qth{f_1 X+ f_2} = 0, \label{eq:b_grad}
    \end{align}
    where $X \sim \mathrm{Bern}(p/(p+q))$, $f_1 = \sigma\pth{2e^2 w |w| + b+ \frac{e^2}{2}} + q -1 - \sigma\pth{b- \frac{e^2}{2}} + p$, and $f_2 = \sigma\pth{b- \frac{e^2}{2}} - p$. Our goal is to now show that Eqs.~(\ref{eq:e_grad})-(\ref{eq:b_grad}) hold only if either $(e=0, b= \log \frac{p}{q}) $ or $(f_1=0, f_2=0)$. We consider two cases corresponding to $e=0$ and $e \neq 0$.

    \underline{(i): $e=0$.}
        If $e=0$, we readily see that $\frac{\partial L}{\partial e} = \frac{\partial L}{\partial e} = 0$. Further, $f_1 = p+q -1$ and $f_2 = \sigma(b) - p$. Hence, \prettyref{eq:b_grad} implies that
        \begin{align*}
           \Expect_X \qth{f_1 X+ f_2} = (p+q-1) \Expect[X] + \sigma(b) - p = (p+q-1) \frac{p}{p+q} + \sigma(b) - p  = \sigma(b) - \frac{p}{p+q} = 0,
        \end{align*}
        which implies that $b = \log \frac{p}{q}$. Since $w \in \reals$ is arbitrary, we see in this case that 
        \begin{align*}
            \bgamma &\in \sth{(e, w, b) \in \reals^3 : e = 0, (p+q-1)(1+2w|w|) > 0, b = \log \frac{p}{q}  } \bigcup \\
            &\hspace{2em}  \sth{(e, w, b) \in \reals^3 : e = 0, (p+q-1)(1+2w|w|) < 0 , b = \log \frac{p}{q}  }, \\
            & = \bilocalset \cup \bisaddleset.
        \end{align*}
    \underline{(ii): $e \neq 0$.}
    Suppose $e \neq 0$. Here we show that $f_1 = f_2 =0$ and hence $\bgamma \in \biglobalset$. We consider two cases corresponding to $w \neq 0$ and $w = 0$. Let $w \neq 0$. Since both $e \neq 0$ and $w \neq 0$, and $X = X^2$ in \prettyref{eq:w_grad} with $\Expect[X]= \pi_1 = \frac{p}{p+q} >0$, we obtain that
    \begin{align*}
        \Expect_X[f_1 X^2 + f_2 X ] = (f_1+f_2) \Expect[X] = (f_1 + f_2) \pi_1 = 0,
    \end{align*}
    and hence $f_1 + f_2 = 0$. Further \prettyref{eq:b_grad} implies that $f_1 \pi_1 + f_2 =0$. Together, this implies $f_1 = f_2=0$.

    Now suppose $w = 0$. From \prettyref{eq:b_grad}, we have that $\Expect[f_1 X + f_2  ] = f_1 \pi_1 + f_2 =0$. Since $e \neq 0$, \prettyref{eq:e_grad} yields
    \begin{align*}
        \Expect_X \qth{(f_1 X + f_2)(2X - 1))} = 2 \Expect_X \qth{(f_1 X + f_2)X} = 2 \pi_1 (f_1 + f_2) = 0.
    \end{align*}
    So $f_1 = f_2 = 0$ in this case too. Thus we have showed that whenever $e \neq 0$, we have $f_1 = f_2 = 0$. Recalling the expressions for $f_1 $ and $f_2$,
    \begin{align}
        f_2 &= \sigma\pth{b- \frac{e^2}{2}} - p = 0 \Rightarrow b - \frac{e^2}{2} = \log \frac{p}{1-p}, \label{eq:b_part} \\
        f_1 &= \sigma\pth{2e^2 w |w| + b+ \frac{e^2}{2}} + q -1 - \sigma\pth{b- \frac{e^2}{2}} + p = \sigma\pth{2e^2 w |w| + b+ \frac{e^2}{2}} + q -1 = 0, \nonumber
    \end{align}
    and hence,
    \begin{align*}
        2e^2 w |w| + b+ \frac{e^2}{2} = \log \frac{1-q}{q}.
    \end{align*}
    Substituting $ b - \frac{e^2}{2} = \log \frac{p}{1-p}$ in the above equation,
    \begin{align*}
        2e^2 w |w| + e^2 + b- \frac{e^2}{2} = 2e^2 w |w| + e^2 + \log \frac{p}{1-p} = \log \frac{1-q}{q},
    \end{align*}
    and thus,
    \begin{align}
        e^2(1+2w|w|) = \log \frac{(1-p)(1-q)}{pq}. 
        \label{eq:ew_part}
    \end{align}
    In view of \prettyref{eq:b_part} and \prettyref{eq:ew_part}, we have that $\bgamma=(e,w,b) \in \biglobalset$. 
    
    Together, we have shown that whenever $\nabla L(\bgamma) = 0$, we have $\bgamma \in \biglobalset \cup \bilocalset \cup \bisaddleset$. Since $ \biglobalset \cup \bilocalset \cup \bisaddleset \subseteq \sth{\bgamma: \nabla L(\bgamma) =0}$, we are done. 
\end{proof}


\section{Proofs of technical lemmas in \prettyref{app:loss_bias}}
\label{app:proofs_all_lemmas_bias}

\subsection{Proof of \prettyref{lmm:loss_rewrite_bias}}
\label{app:proof_loss_lemma_bias}

\begin{proof}
Recall from \prettyref{eq:loss1} that the cross-entropy loss $L(\cdot)$ is defined as
\begin{align}
L(\bgamma) &= - \frac{1}{N} \sum_{n \in \set N} \Expect_{x_1^{n+1}}[x_{n+1} \cdot \log f_{\bgamma}(x_1^n) \, +  (1- x_{n+1}) \cdot \log (1-f_{\bgamma}(x_1^n)) ],
\label{eq:loss_again}
\end{align}
where $f_{\bgamma}(x_1^n) = \sigma(\logit_n) = \sigma\pth{e^2 (1+ 2 w |w|)\, x_n + b-\frac{e^2}{2}}$ from \prettyref{eq:pred_cannoli}. For any $Y \in \binary, Z \in \reals$, using the fact that $1-\sigma(Z)= \sigma(-Z)$, we have
\begin{align}
\begin{split}
    - \qth{ Y \log \sigma(Z) + (1-Y) \log (1-\sigma(Z))} &= - \qth{Y \log \sigma(Z) + (1-Y) \log \sigma(-Z)} \\
    & \stackrel{(a)}{=} Y \log (1+\exp(-Z)) + (1-Y) \log (1+\exp(Z)) \\
    & = \log \pth{1+ \exp(-(2Y-1)Z)} \\
    & \stackrel{(b)}{=} \logistic \pth{(2Y-1) Z},
\end{split}
\label{eq:some_identity}
\end{align}
where $(a)$ and $(b)$ follow from the definitions of sigmoid and the logistic functions: $\sigma(z) = \frac{1}{1+\exp(-z)}, \, \logistic(z) = \log(1+ \exp(-z))$ for $ z \in \reals$. Substituting $Y= x_{n+1} \in \binary$ and $Z= \logit_n \in \reals$ in \prettyref{eq:some_identity}, \prettyref{eq:loss_again} simplifies to
\begin{align*}
    L(\bgamma) = \frac{1}{N} \sum_{n \in \set N} \Expect[ \logistic \pth{ (2x_{n+1}-1) \cdot \logit_n } ].
\end{align*}
Since $\logit_n$ is only a function of $x_n$, the above expectation is over the distribution of the pairs $(x_n, x_{n+1})$, which for all $n \in [N]$ have the same law as a pair of random variables $(X,Y)$ with $X \sim \bpi  \equiv \mathrm{Bern}(p/(p+q))$ and $Y|X \sim \bP(p,q)$, the Markov kernel. Hence the above equality can be rewritten using the definition of $\logit_n$ as
\begin{align*}
L(\bgamma) = \frac{1}{N} \sum_{n \in \set N} \Expect [\logistic \pth{ (2x_{n+1}-1) \cdot \logit_n } ] = \Expect_{X,Y} \qth{\logistic \pth{(2Y-1) \pth{e^2(1+2w|w|)X + b - \frac{e^2}{2}}   } }. 
\end{align*}
\end{proof}

\subsection{Proof of \prettyref{lmm:grad_comp_bias}}
\label{app:proof_grad_lemma_bias}

\begin{proof}
    With $\bgamma = (e,w,b)$ and $\theta$ denoting either of the scalars $e,w$, or $b$, we have from \cite[Lemma 2]{makkuva2024attention} that the gradient of the loss $L(\cdot)$ is given by
    \begin{align}
    \nabla_{\theta} L(\bgamma) &= - \frac{1}{N} \compsum{n}{N} \Expectnplus \qth{(x_{n+1} - \predprob) \cdot \nabla_{\theta} \, \logit_n },
    \label{eq:grad_comp_1}
\end{align}
where $f_{\bgamma}(x_1^n) = \sigma(\logit_n) = \sigma\pth{e^2 (1+ 2 w |w|)\, x_n + b-\frac{e^2}{2}}$. Using the same argument as in the proof of \prettyref{lmm:loss_rewrite}, we can replace the expecatations in \prettyref{eq:grad_comp_1} with that of a pair of random variables $(X,Y)$ with $X \sim \bpi  \equiv \mathrm{Bern}(p/(p+q))$ and $Y|X \sim \bP(p,q)$, the Markov kernel. That is,
    \begin{align}
    \nabla_{\theta} L(\bgamma) &= - \Expect_{X,Y} \qth{ \pth{Y - \sigma\pth{e^2(1+2w|w|)X + b - \frac{e^2}{2}} } \cdot \nabla_{\theta} \, \pth{e^2(1+2w|w|)X + b - \frac{e^2}{2}} }.
    \label{eq:grad_comp_2}
\end{align}
Now we define the error term $\calE(X,Y) \define -  \pth{Y - \sigma\pth{e^2(1+2w|w|)X + b - \frac{e^2}{2}} } $. Our goal is to show that $\Expect[\calE(X,Y) \mid X] = f_1 X + f_2$, where  $f_1 \define \sigma\pth{2e^2 w |w| + b+ \frac{e^2}{2}} + q -1 - \sigma\pth{b- \frac{e^2}{2}} + p$, and $f_2 \define \sigma\pth{b- \frac{e^2}{2}} - p$, which suffices to prove the lemma. To this end, using the fact that $X \in \binary$, we have
\begin{align*}
    \calE(X,Y) &= -  \pth{Y - \sigma\pth{e^2(1+2w|w|)X + b - \frac{e^2}{2}} } \\
    & = -  \pth{Y - X \cdot \sigma\pth{2e^2w|w| + b + \frac{e^2}{2}} - (1-X) \cdot \sigma\pth{ b - \frac{e^2}{2}}   } \\
    & = -Y + X \pth{ \sigma\pth{2e^2w|w| + b + \frac{e^2}{2}} - \sigma\pth{ b - \frac{e^2}{2}} } + \sigma\pth{b- \frac{e^2}{2}}.
\end{align*}
Now taking the conditional expectation with respect to $X$ and using the fact that $\Expect[Y|X] = \prob{Y=1 \mid X} = (1-p-q) X + p$ (since $Y|X \sim \bP(p,q)$), we have
\begin{align*}
    \Expect[\calE(X,Y) \mid X] &= - (1-p-q) X - p + X \pth{ \sigma\pth{2e^2w|w| + b + \frac{e^2}{2}} - \sigma\pth{ b - \frac{e^2}{2}} } + \sigma\pth{b- \frac{e^2}{2}} \\
    &= X \pth{\sigma\pth{2e^2 w |w| + b+ \frac{e^2}{2}} + q -1 - \sigma\pth{b- \frac{e^2}{2}} + p} + \sigma\pth{b- \frac{e^2}{2}} - p \\
    & \stackrel{(a)}{=} f_1 X + f_2,
\end{align*}
where $(a)$ follows from the definition of $f_1$ and $f_2$ above. Thus \prettyref{eq:grad_comp_2} simplifies to
\begin{align*}
    \nabla_{\theta} L(\bgamma) =  \Expect_{X} \qth{ (f_1 X + f_2) \cdot \nabla_{\theta} \, \pth{e^2(1+2w|w|)X + b - \frac{e^2}{2}} }.
\end{align*}
Letting $\theta = e, w$, and $b$ in the above equation, we finally obtain the individual gradients:
    \begin{align*}
        \frac{\partial L}{\partial e} &= \Expect_X \qth{(f_1 X + f_2)(2X(1+2w|w| - 1))} \cdot e, \\
        \frac{\partial L}{\partial w} &= \Expect_X \qth{(f_1 X + f_2)X} \cdot 4e^2 |w|, \\
        \frac{\partial L}{\partial b} &= \Expect_X \qth{f_1 X+ f_2}.
    \end{align*}

\end{proof}

\subsection{Proof of \prettyref{lmm:hess_comp_bias}}
\label{app:proof_hess_lemma_bias}
\begin{proof}
    Slightly changing the variable order, for any $\bgamma = (b,e,w) \in \reals^3$, we define 
    \begin{align}
        \bH(\bgamma) \define \nabla^2 L(\bgamma) = \begin{bmatrix}
        \frac{\partial^2 L}{\partial b^2} & \frac{\partial^2 L}{\partial b \partial e} & \frac{\partial^2 L}{\partial b \partial w} \\
        & & \\
        \frac{\partial^2 L}{\partial e \partial b} & \frac{\partial^2 L}{\partial e^2} & \frac{\partial^2 L}{\partial e \partial w} \\
        & & \\
        \frac{\partial^2 L}{\partial w \partial b} & \frac{\partial^2 L}{\partial w \partial e} & \frac{\partial^2 L}{\partial w^2} \\
        \end{bmatrix} \in \reals^{3 \times 3}.
        \label{eq:hess_defi}
    \end{align}
    Recall that for any $\bilocalmin \in \bilocalset$ and $\bisaddle \in \bisaddleset$, we have $e=0$ and $b = \log \frac{p}{q}$. Now we compute the second derivatives of $L$ with respect to any $\bgamma= (b=\log \frac{p}{q}, e= 0, w)$. We start with the first derivatives. By \prettyref{lmm:grad_comp}, the gradients are
        \begin{align}
        \begin{split}
        \frac{\partial L}{\partial b} &= \Expect_X \qth{f_1 X+ f_2}, \\
        \frac{\partial L}{\partial e} &= \Expect_X \qth{(f_1 X + f_2)(2X(1+2w|w|) - 1))} \cdot e, \\
        \frac{\partial L}{\partial w} &= \Expect_X \qth{(f_1 X + f_2)X} \cdot 4e^2 |w|,
        \end{split}
        \label{eq:grad_set}
    \end{align}
where 
\begin{align*}
f_1 &\define \sigma\pth{2e^2 w |w| + b+ \frac{e^2}{2}} + q -1 - \sigma\pth{b- \frac{e^2}{2}} + p, \\ 
f_2 &\define \sigma\pth{b- \frac{e^2}{2}} - p.
\end{align*}
From \prettyref{eq:grad_set}, we see that the second derivaties of $L$ depend on the first-derivatives of $f_1$ and $f_2$, which we now compute. Recall that the derivative of the sigmoid function obeys $\sigma'(z) = \sigma(z)(1-\sigma(z))= \sigma(z)\sigma(-z)$ for any $z \in \reals$. Now the gradients of $f_1$ and $f_2$ with respect to $b,e$, and $w$ are
\begin{align*}
\frac{\partial f_1}{\partial b} &=    \sigma\pth{2e^2 w |w| + b+ \frac{e^2}{2}} \sigma\pth{-2e^2 w |w| - b - \frac{e^2}{2}}  - \sigma\pth{b- \frac{e^2}{2}}\sigma\pth{-b+ \frac{e^2}{2}}, \\
\frac{\partial f_2}{\partial b} &= \sigma\pth{b- \frac{e^2}{2}}\sigma\pth{-b+ \frac{e^2}{2}}, \\
\frac{\partial f_1}{\partial e} &=   (4ew|w| +e) \, \sigma\pth{2e^2 w |w| + b+ \frac{e^2}{2}} \sigma\pth{-2e^2 w |w| - b - \frac{e^2}{2}}  + e \, \sigma\pth{b- \frac{e^2}{2}}\sigma\pth{-b+ \frac{e^2}{2}},\\
\frac{\partial f_2}{\partial e} &= (-e) \, \sigma\pth{b- \frac{e^2}{2}}\sigma\pth{-b+ \frac{e^2}{2}}, \\
\frac{\partial f_1}{\partial w} &=   (4e^2 \cdot w \mathrm{sign}(w)) \, \sigma\pth{2e^2 w |w| + b+ \frac{e^2}{2}} \sigma\pth{-2e^2 w |w| - b - \frac{e^2}{2}}, \\
\frac{\partial f_2}{\partial w} &= 0.
\end{align*}
Using the fact that $\sigma\pth{\log \frac{p}{q}}= \frac{p}{p+q} = \pi_1$ and $\sigma\pth{-\log \frac{p}{q}}= \frac{q}{p+q} = \pi_0$, the above gradients evaluated for any $\bgamma= (b=\log \frac{p}{q}, e= 0, w)$ further reduce to
\begin{align}
\begin{split}
\frac{\partial f_1}{\partial b} \bigg \rvert_{\bgamma} =  0, \quad \frac{\partial f_2}{\partial b} \bigg \rvert_{\bgamma} = \pi_0 \pi_1, \\
\frac{\partial f_1}{\partial e} \bigg \rvert_{\bgamma} = 0, \quad \frac{\partial f_2}{\partial e} \bigg \rvert_{\bgamma} = 0, \\
\frac{\partial f_1}{\partial w} \bigg \rvert_{\bgamma} = 0, \quad \frac{\partial f_2}{\partial w} \bigg \rvert_{\bgamma} = 0.
\end{split}
\label{eq:grad_eval_set}
\end{align}
Now substituting \prettyref{eq:grad_eval_set} when computing the second-derivatives of $L$ in \prettyref{eq:grad_set}, we obtain
\begin{align}
\begin{split}
        \frac{\partial^2 L}{\partial b^2} \bigg \rvert_{\bgamma} &= \Expect_X \qth{\frac{\partial f_1}{\partial b} \bigg \rvert_{\bgamma} X+ \frac{\partial f_2}{\partial b} \bigg \rvert_{\bgamma}} = \pi_0 \pi_1, \\
        \frac{\partial^2 L}{\partial b \partial e} \bigg \rvert_{\bgamma} &= \Expect_X \qth{\frac{\partial f_1}{\partial e} \bigg \rvert_{\bgamma} X+ \frac{\partial f_2}{\partial e} \bigg \rvert_{\bgamma}} = 0, \\
        \frac{\partial^2 L}{\partial b \partial w} \bigg \rvert_{\bgamma} &= \Expect_X \qth{\frac{\partial f_1}{\partial w} \bigg \rvert_{\bgamma} X+ \frac{\partial f_2}{\partial w} \bigg \rvert_{\bgamma}} = 0, \\
        \frac{\partial^2 L}{\partial e^2} \bigg \rvert_{\bgamma} &= \Expect_X \qth{(f_1 X + f_2)(2X(1+2w|w|) - 1))} \bigg \rvert_{\bgamma}  \\
        & = \Expect_X \qth{(2f_1 (1+2w|w|) - f_1 + 2f_2(1+2w|w|) X -f_2  } \bigg \rvert_{\bgamma} \\
        & = \Expect_X \qth{(f_1 (1+4w|w|) + f_2(2+4w|w|) X -f_2  } \bigg \rvert_{\bgamma} \\
        & = (f_1 (1+4w|w|) + f_2(2+4w|w|) ) \pi_1 - f_2  \bigg \rvert_{\bgamma} \\
        & \stackrel{(a)}{=} ((p+q-1) (1+4w|w|) - \pi_1 (p+q-1)(2+4w|w|) ) \pi_1 + \pi_1 (p+q-1) \\
        & = \pi_1 (p+q-1) \pth{ 1+ 4w|w| -  \pi_1 (2+4w|w|) + 1} \\
        & \stackrel{(b)}{=} 2 \pi_1 \pi_0 (p+q-1) (1+2w|w|),
\end{split}
\label{eq:secondgrad_eval_set}
\end{align}
where $(a)$ follows from the fact that $f_1 \vert_{\bgamma} = p+q-1, f_2 \vert_{\bgamma} = \sigma(b) - p = \frac{p}{p+q}- p = \frac{-p}{p+q}(p+q-1) = - \pi_1 (p+q-1)$ and $(b)$ from $1-\pi_1 = \pi_0$. Returning to the remaining second derivatives,
\begin{align}
    \begin{split}
       \frac{\partial^2 L}{\partial e \partial w} \bigg \rvert_{\bgamma} &= \frac{\partial}{\partial e} \pth{\Expect[(f_1 X+f_2) X] \cdot 4e^2 |w| }  \bigg \rvert_{\bgamma} \\
       & = \frac{\partial}{\partial e} \pth{\Expect[(f_1 +f_2) X] \cdot 4e^2 |w| }  \bigg \rvert_{\bgamma} \\
        & = \frac{\partial}{\partial e} \pth{(f_1 +f_2) \cdot 4\pi_1 e^2 |w| }  \bigg \rvert_{\bgamma} \\
        & = \frac{\partial}{\partial e}  \pth{ \pth{ \sigma \pth{2e^2 w |w| + b+ \frac{e^2}{2}} + q -1} \cdot 4\pi_1 e^2 |w| }  \bigg \rvert_{\bgamma} \\
        & =  \pth{ \frac{\partial}{\partial e}   \sigma \pth{2e^2 w |w| + b+ \frac{e^2}{2}} } 4\pi_1 e^2 |w| \bigg \rvert_{\bgamma}  \\
        & \hspace{3em }+ \pth{ \sigma \pth{2e^2 w |w| + b+ \frac{e^2}{2}} + q -1} \cdot \frac{\partial}{\partial e}  (4 \pi_1 e^2 |w|)\bigg \rvert_{\bgamma}  \\
        &= 0, \\
        \frac{\partial^2 L}{\partial w^2} \bigg \rvert_{\bgamma} &= \frac{\partial}{\partial w} \pth{\Expect[(f_1 X+f_2) X] \cdot 4e^2 |w| } \\
        &= \pth{\frac{\partial}{\partial w} \Expect[(f_1 X+f_2) X] \cdot 4|w| } e^2   \bigg \rvert_{\bgamma} \\
        & =0.
    \end{split}
    \label{eq:secondgrad_eval_set1}
\end{align}
Congregating all the second derivatives from \prettyref{eq:secondgrad_eval_set} and \prettyref{eq:secondgrad_eval_set1} into the Hessian $\bH(\bgamma)$ in \prettyref{eq:hess_defi}, we finally obtain
\begin{align*}
    \bH(\bgamma) = \pi_0 \pi_1 \begin{bmatrix}
    1 & 0 & 0 \\
    0 & 2(p+q-1)(1+2w|w|) & 0 \\
    0 & 0 & 0
    \end{bmatrix}. 
\end{align*}

\end{proof}

\section{Proofs of lemmas in \prettyref{app:all_character}}
\label{app:proofs_lemmas_all_character}

\subsection{Proof of \prettyref{lmm:optimal_bias}}

\begin{proof}
  Recall from \prettyref{lmm:loss_rewrite_bias} that for any $\btheta =(e,w) \in \reals^2$ and $b \in \reals$, we have
  \begin{align*}
      L(\btheta, b) = \Expect_{X,Y} \qth{\logistic \pth{(2Y-1) \pth{e^2(1+2w|w|)X + b - \frac{e^2}{2}}   } }.
  \end{align*}
  Since $\logistic(\cdot)$ is a convex function, $Y \in \binary$ and thus $2Y- 1 \in \{ \pm 1\}$, the convexity of $L$ in $b$ follows from the following fact: 
  \begin{align*}
      \frac{\partial^2 L }{\partial b^2} = \Expect_{X,Y} \qth{\logistic'' \pth{(2Y-1) \pth{e^2(1+2w|w|)X + b - \frac{e^2}{2}}   } } \geq 0.
  \end{align*}
  To find the optimal $\biopt$, we set the gradient $\frac{ \partial L}{\partial b}=0$. Thus from \prettyref{lmm:grad_comp_bias}, we obtain
  \begin{align*}
              \frac{\partial L}{\partial b} &= \Expect_X \qth{f_1 X+ f_2} = 0, \\
              f_1 = \sigma\pth{2e^2 w |w| + b+ \frac{e^2}{2}} + q -1 - \sigma\pth{b- \frac{e^2}{2}} + p, f_2 & = \sigma\pth{b- \frac{e^2}{2}} - p. 
  \end{align*}
  Substituting $E \define  e^2(1+2w|w|)$, $B \define b- \frac{e^2}{2}$, and $\Expect[X] = \pi_1 = p/(p+q) $ in the above equations,
  \begin{align*}
      \pi_1 \pth{\sigma(E+B) + q-1 - \sigma(B) + p} + \sigma(B) - p = \pi_1 \cdot \sigma(E+B) + \pi_0 \cdot 
      \sigma(B) + \frac{p(p+q-1)}{p+q} - p =0.
  \end{align*}
  Further simplifying,
  \begin{align*}
      \pi_0 \cdot \sigma(B) = \pi_1 \cdot (1 - \sigma(E+B)) = \pi_1 \cdot  \sigma(-E - B) .
  \end{align*}
  In other words,
  \begin{align*}
      \frac{ (1+\exp(-B))^{-1} } {(1+\exp(E+B))^{-1}} = \frac{\pi_1}{\pi_0} = \frac{p}{q} \Rightarrow \frac{1+ \exp(E) \exp(B)}{1+ \exp(-B)} = \frac{p}{q}.
  \end{align*}
  Defining $x \define \exp(B)$ and $A \define \exp(E)$, we thus obtain the following quadratic equation in $x$ and its corresponding roots:
  \begin{align*}
      A x^2 - x \pth{\frac{p}{q} -1} - \frac{p}{q} = 0 \Rightarrow x = \frac{1}{2A} \qth{\frac{p}{q} -1 \pm \sqrt{ \pth{ \frac{p}{q} -1 }^2  + 4 \cdot \frac{p}{q} \cdot A }   }.
  \end{align*}
    Since $x >0$, we take the root corresponding to the addition choice above and resubstituting $x = \exp(b-\frac{e^2}{2})$ and $A=\exp(e^2(1+2w|w|))$, we obtain the final expression for $\biopt$. In particular, if $e=0$, it is easy to see that $A=1$ and hence $x= \exp(\biopt) = \frac{p}{q}$, implying $\biopt = \log \frac{p}{q}$. Similarly, if $A= \pqfrac $, it's straightforward to see that $\exp(\biopt - e^2/2) = \frac{p}{1-p}$ and hence $\biopt - e^2/2 = \log \frac{p}{1-p}$.
\end{proof}

\subsection{Proof of \prettyref{lmm:loss_rewrite}}
\begin{proof}
    The proof directly follows from \prettyref{lmm:loss_rewrite_bias} by substituting $b=\biopt$.
\end{proof}

\subsection{Proof of \prettyref{lmm:grad_comp}}

\begin{proof}
    By Danskin's theorem \cite{danskin1966theory}, it follows that for $\biopt = \argmin_{b \in \reals} L(\btheta,b) $ and $L(\btheta)= L(\btheta, \biopt)$, we have $\nabla_{\btheta} L(\btheta) = \nabla_{\btheta} L(\btheta, \biopt) $. Using the gradient expressions of $L(\btheta,b)$ w.r.t $\btheta$ from \prettyref{lmm:grad_comp_bias}, and using the fact that $\frac{\partial L}{\partial b} = \Expect[f_1 X + f_2] $ at $b=\biopt$, the claim follows. 
\end{proof}

\subsection{Proof of \prettyref{lmm:hess_comp}}

\begin{proof}
    Since $L(\btheta)= L(\btheta, \biopt)$ where $\biopt = \argmin_{b \in \reals} L(\btheta,b) $, the identity in \prettyref{eq:hess_identity} about the Hessian of the loss $L$ with respect to $\btheta$ follows from the classical result of \cite[Lemma 2.2]{shapiro1985second} about second-derivatives of extremal-value functions. Finally \prettyref{eq:hess_localpoints} follows from substituting the full Hessian in $\reals^{3 \times 3}$ from \prettyref{lmm:hess_comp_bias} in this identity.
\end{proof}


\section{Proofs of lemmas in \prettyref{app:gf_analysis}}
\label{app:proofs_gf_analysis}

\subsection{Proof of \prettyref{lmm:energy_constant1}}
\begin{proof}
    Denote $(e_t, w_t) = (e,w) $ with the dependence on time implicitly assumed. Then by the definition of \ref{eq:gflow1} and the gradient expressions in \prettyref{lmm:grad_comp}, we have that
    \begin{align}
        \odv{e}{t} &= - \frac{\partial L}{\partial e}(\btheta_t) = - 2 \Expect[(f_1 X + f_2) X] \cdot (1+2w |w|)e, \label{eq:eflow} \\
        \odv{w}{t} &= - \frac{\partial L}{\partial w}(\btheta_t) = - \Expect[(f_1 X + f_2) X] \cdot 4e^2|w|. \label{eq:wflow}
    \end{align}
    Dividing \prettyref{eq:eflow} by $(1+2w|w|)e$ and \prettyref{eq:wflow} by $4e^2|w|$, we have
    \begin{align*}
        \frac{1}{1+2w|w|} \odv{e}{t} = \frac{1}{2e|w|} \odv{w}{t} \Rightarrow e \odv{e}{t} = \pth{w + \frac{1}{2|w|}} \odv{w}{t}.
    \end{align*}
    Noting that $\odv{}{t}(\mathrm{sign}(w) \cdot \log|w|) = \frac{1}{|w|} \odv{w}{t}$, the above equation can be rewritten as
    \begin{align*}
        \odv{}{t}\pth{e^2- w^2 - \mathrm{sign}(w) \cdot \log |w|} = 0.
    \end{align*}
    Thus defining $\energy(e,w) = e^2- w^2 - \mathrm{sign}(w) \cdot \log |w| $ for $w \neq 0$, the above equation implies $\energy(\btheta_t) = \energy(\thetainit) $ for $\thetainit = (e_0,w_0) $ with $w_0 \neq 0$. On the other hand, it's easy to see that if $w_0 = 0$, \prettyref{eq:wflow} implies $\odv{w}{t}=0$ at $t=0$ and hence $w_t =0$ for all $t \geq 0$. 
\end{proof}

\subsection{Proof of \prettyref{lmm:gf_convergence}}
\begin{proof}
    To prove the convergence of the trajectory $\trajectory$, we use the classical result due to Łojasiewicz \cite[Theorem 2.2]{absil2005convergence} which gurantees the convergence of gradient flow for real analytic functions, as long as the trajectory is bounded. Hence we first show the boundedness of the trajectory.

    \underline{ (i) $\trajectory$ is bounded.}
    
    We consider the cases $\thetainit \in \yaxis$ and $\thetainit \in \woyaxis$ separately. 

    Let's suppose $\thetainit = \fleshinit \in \yaxis$, \ie $w_0 = 0$. Thus it follows from \prettyref{lmm:energy_constant1} that $w_t = 0$ for all $ t \geq 0$. That is, the trajectory always stays on the $\yaxis$ and it suffices to track $(e_t)_{t \geq 0}$ and show that they are bounded. To this end, we show that if $e_0 >0$, we have $\odv{e}{t} < 0$ and if $e_0 < 0$, we have $\odv{e}{t} > 0$ for all $t \geq 0$, which establishes our claim. We have from the \ref{eq:gflow1} and \prettyref{lmm:grad_comp} that 
 \begin{align}
     \odv{e_t}{t} &= - \frac{\partial L}{\partial e}(e_t, w_t=0) = -2 \Expect_X \qth{(f_1 X + f_2)X} e_t = -2\pi_1(f_1 + f_2) e_t,
     \label{eq:ederv}
\end{align}
and
\begin{align}
     f_1 + f_2 &= - \frac{f_2}{\pi_1} + f_2 = - \frac{\pi_0}{\pi_1} f_2 = - \frac{\pi_0}{\pi_1} \qth{\sigma\pth{(b_\ast)_t - \frac{e_t^2}{2}} - p } \nonumber \\
     & = - \frac{\pi_0}{\pi_1} \qth{\pth{1+ \exp\pth{-(b_\ast)_t + \frac{e_t^2}{2}} }^{-1} - p} \nonumber \\
    &= - \frac{\pi_0}{\pi_1} \qth{ \pth{1+ \frac{2x_t}{\frac{p}{q} -1 + \sqrt{ \pth{ \frac{p}{q} -1 }^2  + 4 \cdot \frac{p}{q} \cdot x_t }   } }^{-1} - p }, \quad x_t \define \exp(e_t^2).
    \label{eq:f1f2}
 \end{align}
Defining 
\begin{align}
g(x) \define \frac{2x}{\frac{p}{q} -1 + \sqrt{ \pth{ \frac{p}{q} -1 }^2  + 4 \cdot \frac{p}{q} \cdot x }   }, 
\label{eq:gfn}
\end{align}
and substituting \prettyref{eq:gfn} and \prettyref{eq:f1f2} in \prettyref{eq:ederv}, we obtain
\begin{align}
    \odv{e_t}{t} = 2\pi_0 \pth{\frac{1}{1+ g(x_t)} - p} \cdot e_t. 
    \label{eq:e_traj}
\end{align}
Since $x_t = \exp(e_t^2) = \exp(-e_t^2)$, in view of \prettyref{eq:e_traj}, with out loss of generality, we can assume that $e_0 >0$ and show that $\odv{e_t}{t} <0$ for all $ t \geq 0$. That is, the RHS \prettyref{eq:e_traj} is negative. Note that $x_t \geq 1$ since $x_t = \exp(e_t^2)$ and $g(x_t) > 0$ since the denominator $\frac{p}{q} -1 + \sqrt{ \pth{ \frac{p}{q} -1 }^2  + 4 \cdot \frac{p}{q} \cdot x_t } >  \frac{p}{q} -1 + \frac{p}{q} + 1 = 2 \cdot \frac{p}{q} >0 $. Further $g(1) = \frac{q}{p}$ and $\lim_{x \diverge } g(x) = \infty $. If we show that $g(x)$ is increasing in $x$ for $x \geq 1$, it implies $\frac{1}{1+g(x)} - p < \frac{1}{1+g(1)} - p = \frac{1}{1+\frac{q}{p}} - p = - \frac{p}{p+q}(p+q-1) <0$. Thus the gradient in \prettyref{eq:e_traj} remains negative starting at $t=0$ and hence the sequence $(e_t)_{t \geq 0}$ will be bounded. 
Now it remains to show $g(\cdot)$ is increasing, \ie $g'(\cdot) > 0$. Defining $C = \pth{\frac{p}{q} -1 }/\pth{2 \sqrt{\frac{p}{q}}}$ and $D= C^2$, we have that $g(x)$ upto a postive scaling is
\begin{align*}
    g(x) = \frac{x}{C+ \sqrt{x+D}}.
\end{align*}
Hence 
\begin{align*}
    g'(x) &= \frac{C+ \sqrt{x+D} - \frac{x}{2 \sqrt{x+D}}}{(C+ \sqrt{x+D})^2}.
\end{align*}
Thus it suffices to show that $h_1(x) \define C+ \sqrt{x+D} > h_2(x) \define \frac{x}{2 \sqrt{x+D}} $ for $x \geq 1 $. Note that $h_1(1) - h_2(1)$ is given by
\begin{align*}
    h_1(1) - h_2(1) &= \frac{\frac{p}{q}-1}{2 \sqrt{\frac{p}{q}}} + \sqrt{1 + \pth{\frac{\frac{p}{q}-1}{2 \sqrt{\frac{p}{q}}}}^2} - \frac{1}{2 \sqrt{1 + \pth{\frac{\frac{p}{q}-1}{2 \sqrt{\frac{p}{q}}}}^2}} \\
    & = \sqrt{\frac{p}{q}} - \frac{ \sqrt{\frac{p}{q}}}{1+ \frac{p}{q}} \\
    & > 0.
\end{align*}
Now we show that $h'(x) > h_2'(x) $ for all $x \geq 1$ which implies that $h_1(x) > h_2(x)$ for all $x\geq 1$, thus establishing our claim. To this end, we have that 
\begin{align*}
    h'_1(x) - h'_2(x) &= \frac{1}{2 \sqrt{x+D}} - \frac{\sqrt{x+D} - \frac{x}{2 \sqrt{x+D}}}{2(x+D)} \\
    &= \frac{x}{2 \sqrt{x+D}(x+D)} > 0.
\end{align*}

This proves our claim that $g(\cdot)$ is increasing and hence $(e_t)_{t \geq 0}$, and consequently $\trajectory$, is bounded when $\btheta_0 \in \yaxis$. 

Now let's assume that $\thetainit = (e_0, w_0) \in \woyaxis$. Since $\trajectory \subseteq \woyaxis$, it follows that the loss $L(\cdot)$ is analytic on the trajectory (since the logistic function is analytic), and hence by \cite[Theorem 2.2]{absil2005convergence}, it follows that $\lim_{t \diverge } \norm{\btheta_t}$ exists. Now we show that $\lim_{t \diverge } \norm{\btheta_t} \neq \infty$, which implies the desired result about boundedness. To show $\lim_{t \diverge } \norm{\btheta_t} \neq \infty$, we show that there exists a large $B>0$ such that for any $\btheta_t = (e,w) \in \reals^2$ with $\norm{(e,w)} \geq B$ , the velocity vector $\odv{\btheta}{t}$ points inwards into the ball of radius $B$ and thus the trajectory always stays inside this ball, and hence bounded. To establish this, let's denote $(e_t, w_t) = (e,w) $ with the dependence on time implicitly assumed. Then by the definition of \ref{eq:gflow1} and the gradient expressions in \prettyref{lmm:grad_comp}, we have that
    \begin{align}
        \odv{e}{t} &= - \frac{\partial L}{\partial e}(\btheta_t) = - 2 \Expect[(f_1 X + f_2) X] \cdot (1+2w |w|)e \label{eq:eflow1} \\
        \odv{w}{t} &= - \frac{\partial L}{\partial w}(\btheta_t) = - \Expect[(f_1 X + f_2) X] \cdot 4e^2|w|, \label{eq:wflow1}
    \end{align}
where $f_1 = \sigma\pth{2e^2 w |w| + b_\ast + \frac{e^2}{2}} + q -1 - \sigma\pth{b_\ast- \frac{e^2}{2}} + p$, and $f_2 = \sigma\pth{b_\ast - \frac{e^2}{2}} - p$ with $\pi_1 f_1+ f_2 =0$. Given that only $\odv{e}{t}$ flips in sign under the transformation $(e,w) \mapsto (-e,w)$, with out loss of generality we can assume $e>0$. Now let's also assume $w>0$. Thus, in view of \prettyref{eq:eflow1} and \ref{eq:gflow1}, to show that the derivative points inwards, it suffices to show that $\Expect[(f_1 X + f_2)X] >0$ for reasonably large $B$ with $\norm{(e,w)} =B$. Similar to \prettyref{eq:f1f2} and \prettyref{eq:gfn} above, using the relation $\pi_1 f_1+f_2 =0$, we obtain
\begin{align}
    \Expect[(f_1 X + f_2)X] = \pi_1 (f_1+f_2) = - \pi_0 \pth{\frac{1}{1+g(x)} - p}, \quad x \define \exp(e^2(1+2w|w|)).  
    \label{eq:almost}
\end{align}
Using the fact that $g(x)$ is increasing for $x \geq 1$ with $\lim_{x \diverge} g(x) = \infty$, and $|w| = w >0$, we can chose a $B>0$ such that for any $\norm{(e,w)} \geq B$, in \prettyref{eq:almost} we have $1/(1+g(x)) < p$ and hence $\Expect[(f_1 X + f_2)X] >0$. This finishes the proof of our claim. The proof for $w<0$ is similar, where we make use of the fact that $\lim_{x \rightarrow 0} g(x) = 0$ to show $\Expect[(f_1 X + f_2)X] < 0$ for $e,w$ reasonably large.

    \underline{(ii) $\lim_{t \diverge} \btheta_t = \thetalim$. }
    Since the logistic function $\logistic(\cdot)$ is analytic, it follows from \prettyref{lmm:loss_rewrite} that the loss $L(\btheta)$ is analytic too whenever $w \neq 0$. On the other hand, when $w=0$, it's easy to see that $L$ is an analytic function of $e \in \reals$. By \prettyref{lmm:energy_constant1}, we know that if $w_0 \neq 0$, $w_t \neq 0$ and if $w_0 =0$, $w_t=0$ for all $t \geq 0$. Thus the loss is analytic on the trajectory for all $t \geq 0$. Since the trajectory is bounded, it follows from Łojasiewicz's theorem \cite[Theorem 2.2]{absil2005convergence} that there exists a $\thetalim \in \reals^2$ such that $\lim_{t \diverge} \btheta_t = \thetalim$.

    \underline{(iii) $\lim_{t \diverge} \norm{ \nabla L(\btheta_t) } = \norm{\nabla L(\thetalim)} =0$.}
    Since the trajectory is bounded, it follows from \cite[Theorem 2]{ahmadova2023convergence} that the gradient converges to zero, \ie $\lim_{t \diverge} \norm{ \nabla L(\btheta_t) } =0$. Since $\nabla L(\cdot)$ is a continuous function and $\lim_{t \diverge} \btheta_t = \thetalim$, we have $\lim_{t \diverge} \norm{ \nabla L(\btheta_t) } = \norm{\nabla L(\thetalim)} =0$.
    
\end{proof}

\subsection{Proof of \prettyref{lmm:energy_lim}}
\begin{proof}
    Since the energy function $\energy(\cdot,\cdot)$ in \prettyref{eq:energy1} is a continuous function in $\woyaxis$, and any trajectory $\trajectory$ with intialization $\theta \in \woyaxis$ stays in $\woyaxis$ for all $t \geq 0$ (\prettyref{lmm:energy_constant1}), it follows that $\lim_{t \diverge} \energy(\btheta_t) = \energy(\thetalim) = \energy(\thetainit)$. As $\nabla L(\thetalim)=0$ from \prettyref{lmm:gf_convergence}, it follows that $\thetalim$ lies at the intersection of the contour line $\energy(e,w) = \energy_0$ with the set of critical points of $L$ in $\reals^2$. 

    On the other hand, if $\thetainit \in \yaxis$, we have $\btheta_t \in \yaxis$ from \prettyref{lmm:energy_constant1} for all $t \geq 0$. Hence $\thetalim \in \yaxis$. 
\end{proof}

\subsection{Proof of \prettyref{lmm:energy_analysis}}
\begin{proof}
    Recall that $f: \reals \setminus \{ 0\} \to \reals$, defined as $f(w) \define \energy(e=0, w) = - (w^2 + \mathrm{sign}(w) \cdot \log |w|)$. If $w<0$, we have
    \begin{align*}
        f(w) & = - (w^2 - \log (-w)) , \quad f'(w) = - 2w + \frac{1}{w} .
    \end{align*}
    Hence $f'(w) \geq 0 $ for $ w \in (-\infty, - {1}/{\sqrt{2}}] $ and $f'(w) \leq 0 $ for $w \in [- {1}/{\sqrt{2}}, 0)$ with $f'(-\frac{1}{\sqrt{2}})=0 $. It's also straightforward to see that $\lim_{w \rightarrow -\infty} f(w) = -\infty$, $\lim_{w \rightarrow 0^{-}} f(w) = -\infty$, and $f(-1/\sqrt{2}) = \energysad$ (by the definition of $f$). This establishes (i), (ii), and (iii).

    On the other hand, for $w>0$, we have $f(w) = - (w^2 + \log w)$ and $f'(w)= - (2w + 1/w)$. Hence $f$ is monotonically decreasing for $w>0$ with $\lim_{w \rightarrow 0^{+}} f(w) = \infty$ and $\lim_{w \rightarrow \infty} f(w) = -\infty$. Note that $w=0$ acts as an energy barrier since $\lim_{w \rightarrow 0^{-}} f(w) = -\infty$ whereas $\lim_{w \rightarrow 0^{+}} f(w) = \infty$. 
\end{proof}


\section{Proofs of lemmas in \prettyref{app:gf_attention}}
\label{app:proofs_gf_attention}

\subsection{Proof of \prettyref{lmm:grad_comp_attn}}

\begin{proof}
First we recall the loss with the bias $L(\btheta,b)$ from \prettyref{eq:lossab}:
\begin{align*}
    L(\btheta,b) &= \Expect_{X,Y} \qth{\logistic \pth{(2Y-1) \cdot \logit_X(\btheta, b) } },
    \end{align*}
    
where $\logit_X(\btheta,b) =  e^2 \qth{\pth{X - \frac{1}{2}}\pth{1+a e^2}\pth{1 + 2w|w|} + w|w(1+a e^2)|} + b$ with $\btheta = (e,w,a)$. Using the fact that $\logistic'(z) = \sigma(z) -1$ and $2Y-1 \in \{ \pm 1\}$, we have for any $\theta \in \{e,w,a,b \}$ that
\begin{align}
    \nabla_\theta L = \Expect\qth{ \pth{ \sigma((2Y-1) \cdot \logit_X) -1  } (2Y-1) \cdot \nabla_\theta \logit_X   } = \Expect \qth{ \pth{ \sigma(\logit_X) - Y } \cdot \nabla_\theta \logit_X   }.
    \label{eq:okay_grad}
\end{align}
Now we simplify $\sigma(\logit_X)$ using the fact that $X \in \binary$:
\begin{align*}
    \begin{split} 
    \sigma(\logit_X) &= \sigma\pth{e^2 \qth{\pth{X - \frac{1}{2}}\pth{1+ae^2}\pth{1 + 2w|w|} + w|w(1+ae^2)|} + b} \\
     &= X \underbrace{\sigma\pth{e^2 \qth{\frac{1}{2}\pth{1+ae^2}\pth{1 + 2w|w|} + w|w(1+ae^2)|} + b}}_{ \define \phi_1} \\
        &+ (1-X)\underbrace{\sigma\pth{e^2 \qth{\frac{-1}{2}\pth{1+ae^2}\pth{1 + 2w|w|} + w|w(1+ae^2)|} + b}}_{\define \phi_0} \\
        & = X \phi_1 + (1-X)\phi_0 \\
        & =  X (\phi_1 - \phi_0) + \phi_0.
    \end{split} 
\end{align*}
Thus the gradients in \prettyref{eq:okay_grad} are given by 
\begin{align}
    \begin{split}
        \nabla_\theta L &= -\Expect \qth{\pth{Y - X (\phi_1 - \phi_0) - \phi_0} \nabla_\theta \logit_X  } \\
        &= -\Expect_X \qth{\Expect_{x_1^{n+1}}\qth{\pth{\Expect[Y \mid X] - X (\phi_1 - \phi_0) - \phi_0}\nabla_\theta \logit_X  } }  \\
        &= - \Expect_X \qth{\pth{\pth{1-p-q}X + p -  X (\phi_1 - \phi_0) - \phi_0} \nabla_\theta \logit_X } \\ 
        &= - \Expect_X\qth{\pth{\pth{\underbrace{1-p-q - \phi_1 + \phi_0}_{f_1}}X + \underbrace{p - \phi_0}_{f_2} } \nabla_\theta \logit_X } \\
        &= - \Expect_X \qth{\pth{f_1X + f_2} \nabla_\theta \logit_X }. 
    \end{split}
    \label{eq:grad_formul}
\end{align}

Now we compute the individual gradients with respect to $e,w,a$ and $b$. Recall that

\begin{align*}
    \begin{split}
    \logit_X &= e^2 \qth{\pth{X - \frac{1}{2}}\pth{1+ae^2}\pth{1 + 2w|w|} + w|w(1+ae^2)|} + b.
    \end{split}
\end{align*}
Thus,
\begin{align*}
    \begin{split}
    \nabla_e \logit_X &= 2e \qth{\pth{X - \frac{1}{2}}\pth{1+ae^2}\pth{1 + 2w|w|} + w|w(1+ae^2)|} \\
    &+ e^2 \qth{2ae\pth{X - \frac{1}{2}}\pth{1 + 2w|w|} + w\mathrm{sign}\pth{w\pth{1+ae^2}}(2ae)} \\
     &= 2e \pth{X - \frac{1}{2}}\pth{1+ae^2}\pth{1 + 2w|w|} +  2ew|w(1+ae^2)| \\
    &+  2e^3a\pth{X - \frac{1}{2}}\pth{1 + 2w|w|} + 2e^3aw\mathrm{sign}\pth{w\pth{1+ae^2}} \\
     & = 2e \pth{X - \frac{1}{2}}\pth{1+ae^2}\pth{1 + 2w|w|} +  2e^3a\pth{X - \frac{1}{2}}\pth{1 + 2w|w|} \\
    &+  2ew|w(1+ae^2)|  + 2e^3aw\mathrm{sign}\pth{w\pth{1+ae^2}}. 
    \end{split}
\end{align*}
Substituting the above equation in \prettyref{eq:grad_formul}, we obtain

\begin{align*}
\begin{split}
    \nabla_e L &= -\pth{\Expect\qth{\pth{f_1X + f_2} \pth{X - \frac{1}{2}}}} \cdot 2e \pth{1+ae^2}\pth{1 + 2w|w|} \\
    &-\pth{\Expect\qth{\pth{f_1X + f_2} \pth{X - \frac{1}{2}}}} \cdot 2e^3a \pth{1 + 2w|w|} \\
    &-\pth{\Expect\qth{\pth{f_1X + f_2}}} \cdot  2ew|w(1+ae^2)| \\
    &-\pth{\Expect\qth{\pth{f_1X + f_2}}}  \cdot 2e^3aw \, \mathrm{sign}\pth{w\pth{1+ae^2}}. 
\end{split}
\end{align*}

Now we compute the derivative with respect to $w$.
\begin{align*}
\begin{split}
 \logit_X &= e^2 \qth{\pth{X - \frac{1}{2}}\pth{1+ae^2}\pth{1 + 2w|w|} + w|w(1+ae^2)|} + b \\
\Rightarrow \nabla_w \logit_X &=  2e^2 \pth{X - \frac{1}{2}}\pth{1+ae^2}\pth{|w| + \mathrm{sign}\pth{w}w} \\
&+e^2 \qth{|w(1+ae^2)| + w\pth{1+ae^2}\mathrm{sign}\pth{w\pth{1+ae^2}}} \\
\Rightarrow \nabla_w L &= -\pth{\Expect\qth{\pth{f_1X + f_2} \pth{X - \frac{1}{2}}}} 2e^2                   
                                  \pth{1+ae^2}\pth{|w| + \mathrm{sign}\pth{w}w} \\
                    & -\pth{\Expect\qth{\pth{f_1X + f_2}}}  e^2 \qth{|w(1+ae^2)| + 
 w \pth{1+ae^2}\mathrm{sign}\pth{w\pth{1+ae^2}}}.
\end{split}
\end{align*}
Similarly, for $a$:
\begin{align*}
\begin{split}
 \logit_X  &= e^2 \qth{\pth{X - \frac{1}{2}}\pth{1 + 2w|w|} + w|w(1+ae^2)|} + b \\
\Rightarrow \nabla_a \logit_X &=  e^4 \pth{X - \frac{1}{2}}\pth{1+ae^2}\pth{1 + 2w|w|} \\
&+e^4 w^2 \mathrm{sign}\pth{w(1+ae^2)} \\
\Rightarrow \nabla_a L &= -\pth{\Expect\qth{\pth{f_1X + f_2} \pth{X - \frac{1}{2}}}}e^4 \pth{1 + 2w|w|} \\
    & -\pth{\Expect\qth{\pth{f_1X + f_2}}} e^4 w^2 \mathrm{sign}\pth{w(1+ae^2)}.
\end{split}
\end{align*}
Finally, since $\nabla_b \logit_X =1$, it follows from \prettyref{eq:grad_formul} that
\begin{align}
\begin{split}
\nabla_b L &= - \Expect\qth{f_1X + f_2}.
\end{split}
\label{eq:bias_gradient_attn}
\end{align}

For the optimal $\biopt$, we have $\nabla_b L =0$ and hence $\Expect\qth{f_1X + f_2} = 0$, simplifying the expressions for the gradients of $e,w$, and $a$ above. In fact, there exists a closed form expression for $\biopt$ in terms of $e,w$, and $a$. Recall from \prettyref{eq:grad_formul} that

\begin{align}
    \begin{split}
          - f_1 &=  \sigma(z_1) - \sigma(z_2) + p+q -1, \\
          - f_2 &= \sigma(z_2) - p, \\
        z_1 & \define e^2 \qth{\frac{1}{2}\pth{1+ae^2}\pth{1 + 2w|w|} + w|w(1+ae^2)|} + b, \\
        z_2 & \define e^2 \qth{\frac{-1}{2}\pth{1+ae^2}\pth{1 + 2w|w|} + w|w(1+ae^2)|} + b.
        \end{split}
        \label{eq:bias_clean}
\end{align}

Substituting \prettyref{eq:bias_clean} in \prettyref{eq:bias_gradient_attn} and setting the gradient to zero there, we have

\begin{align*}
    \begin{split}
     -\Expect_{X}\qth{f_1 X + f_2} &= \pth{\sigma(z_1) - \sigma(z_2) + p + q -1} \Expect[X] +  \sigma(z_2) - p \\
     &= (\sigma(z_1) - \sigma(z_2) + p + q -1) \pi_1 +  \sigma(z_2) - p = 0.
    \end{split}
\end{align*}

Simplifying,

\begin{align*}
    \begin{split}
    (\sigma(z_1) - \sigma(z_2) + p + q -1) \pi_1 &= p - \sigma(z_2) \\
    \Rightarrow  \pth{\sigma(z_1) - 1} \pi_1 -  \sigma(z_2) \pi_1 + p  &= p - \sigma(z_2) \\
     \Rightarrow  \pth{\sigma(z_1) - 1} \pi_1 &= \sigma(z_2)(\pi_1 -1) \\
     \Rightarrow \frac{\sigma(z_2)}{1-\sigma(z_1)} &= \frac{\pi_1}{1-\pi_1} = \frac{p}{q},
    \end{split}
\end{align*}
Using the definition of the sigmoid function and rearranging,

\begin{align}
    \begin{split}
      \frac{1 + \exp(z_1)}{1 + \exp(-z_2)} &= \frac{p}{q} \\
    \Rightarrow  \exp(z_1) + 1 &= \frac{p}{q} \pth{1 + \exp(-z_2)} \\
     \Rightarrow  \exp(2z_1) + \exp(z_1) &= \frac{p}{q} \exp(z_1) + \frac{p}{q}\exp(z_1-z_2) \\
     \Rightarrow  (\exp(z_1))^2 + \exp(z_1)(1- \frac{p}{q}) - \frac{p}{q} \cdot  \exp(z_1 - z_2) &= 0.
    \end{split}
    \label{eq:quadratic_clean}
\end{align}

By definitions of $z_1$ and $z_2$ in \prettyref{eq:bias_clean}, we have $z_1 - z_2 = e^2(1+ae^2)(1+2w|w|)$ and thus \(A \define \exp(z_1 - z_2) = \exp(e^2(1+ae^2)(1+2w|w|))\). Thus the quadratic equation in \prettyref{eq:quadratic_clean} simplifies to
\begin{align*}
         (\exp(z_1))^2 + \exp(z_1)(1- \frac{p}{q}) - \frac{p}{q} \cdot A &= 0.
\end{align*}
On solving the quadratic equation for $\exp(z_1)$:

\begin{align*}
    \exp(z_1) &= \frac{1}{2}\qth{\frac{p}{q} -1 + \sqrt{ \pth{ \frac{p}{q} -1 }^2  + 4 \cdot \frac{p}{q} \cdot A }   } \\
    \Rightarrow z_1 &= \log \pth{\frac{1}{2}\qth{\frac{p}{q} -1 + \sqrt{ \pth{ \frac{p}{q} -1 }^2  + 4 \cdot \frac{p}{q} \cdot A }   }} \\
     \Rightarrow \biopt &= \log \pth{\frac{1}{2}\qth{\frac{p}{q} -1 + \sqrt{ \pth{ \frac{p}{q} -1 }^2  + 4 \cdot \frac{p}{q} \cdot A }   }} \\
    &\quad - e^2 \qth{\frac{1}{2}\pth{1+ae^2}\pth{1 + 2w|w|} + w|w(1+ae^2)|}.
\end{align*}

Note that when $a=0$ above, we recover the expression for the optimal bias in \prettyref{lmm:optimal_bias}. This concludes the proof. 

\end{proof}

\subsection{Proofs of \prettyref{thm:all_character_lin_attn} and \prettyref{thm:all_character_lin_attn_projected_parameters}}

\label{app:optim_linear_attn}

We prove \prettyref{thm:all_character_lin_attn} and \prettyref{thm:all_character_lin_attn_projected_parameters} below. Note that \prettyref{thm:all_character_lin_attn_projected_parameters} directly follows from the former by removing the bias $b$, since for any critical point $\bgamma = (e,w,b,a) \in \reals^4$ with $\nabla L(\bgamma) = 0$, the bias $b$ is already the optimal one corresponding to $\btheta = (e,w,a) \in \reals^3 $, \ie $b = \biopt(\btheta) = \argmin_{b \in \reals} L(\btheta,b)$. This is similar to the proof of \prettyref{thm:all_character}, which follows from \prettyref{thm:all_character_bias}.

Now we prove \prettyref{thm:all_character_lin_attn}.

\begin{proof}

We characterize the set of global minima first.

\underline{\bf Set of all global minima.} Let $\biglobalmin \in \reals^4$ be arbitrary. From \cite[Lemma 1]{makkuva2024attention}, we have that $\biglobalmin$ is a global minimum for the loss $L(\cdot)$ in \prettyref{eq:loss_cannoli_almost1} if and only if its prediction probability satisfies $ f_{\biglobalmin}(x_1^n) = \prob{x_{n+1}=1 \mid x_n }$, the Markov kernel. Since the input $\finiteseq \sim (\bpi(p,q), \pqmarkov)$, we have that
\begin{align}
    \prob{x_{n+1}=1 \mid x_n } = (1-x_n) p + x_n (1-q) = (1-p-q) x_n +p.
    \label{eq:kernel_rewrite_lin_attn}
\end{align}

On the other hand, by definition, from \prettyref{eq:pred_cannoli}, $f_{\biglobalmin}(x_1^n) =\sigma\pth{e^2 \qth{\pth{x_n - \frac{1}{2}}\pth{1+ae^2}\pth{1 + 2w|w|} + w|w(1+ae^2)|} + b}$, where $\biglobalmin=(e,w,b, a)$. Since $x_n \in \binary$, this can be further simplified to

\begin{align}
    \begin{split} 
    f_{\biglobalmin}(x_1^n) &= \sigma\pth{e^2 \qth{\pth{x_n - \frac{1}{2}}\pth{1+ae^2}\pth{1 + 2w|w|} + w|w(1+ae^2)|} + b}
    \end{split} \nonumber \\
    \begin{split}
          &= x_n \sigma\pth{e^2 \qth{\frac{1}{2}\pth{1+ae^2}\pth{1 + 2w|w|} + w|w(1+ae^2)|} + b} \\
        &\quad + (1-x_n)\sigma\pth{e^2 \qth{\frac{-1}{2}\pth{1+ae^2}\pth{1 + 2w|w|} + w|w(1+ae^2)|} + b}
    \end{split} \nonumber \\
    \begin{split}
          &= x_n \sigma\pth{e^2 \qth{\frac{1}{2}\pth{1+ae^2}\pth{1 + 2w|w|} + w|w(1+ae^2)|} + b} \\
        &\quad -x_n \sigma\pth{e^2 \qth{\frac{-1}{2}\pth{1+ae^2}\pth{1 + 2w|w|} + w|w(1+ae^2)|} + b} \\
        &\quad + \sigma\pth{e^2 \qth{\frac{-1}{2}\pth{1+ae^2}\pth{1 + 2w|w|} + w|w(1+ae^2)|} + b}.
    \end{split}
    \label{eq:expand_sigmoid_lin_attn}
\end{align}

Since both $f_{\biglobalmin}(x_1^n)$ and $\prob{x_{n+1}=1 \mid x_n }$ are linear functions of $x_n$, equating them for all vallues of $x_n \in \binary$ implies that the respective coeffecients in these functions in \prettyref{eq:kernel_rewrite_lin_attn} and \prettyref{eq:expand_sigmoid_lin_attn} are also equal, \ie

\begin{align}
    \begin{split}
          1-p-q &=  \sigma\pth{e^2 \qth{\frac{1}{2}\pth{1+ae^2}\pth{1 + 2w|w|} + w|w(1+ae^2)|} + b} \\
        &\quad - \sigma\pth{e^2 \qth{\frac{-1}{2}\pth{1+ae^2}\pth{1 + 2w|w|} + w|w(1+ae^2)|} + b}, \\
        p &= \sigma\pth{e^2 \qth{\frac{-1}{2}\pth{1+ae^2}\pth{1 + 2w|w|} + w|w(1+ae^2)|} + b},
    \end{split}
    \label{eq:to_be}
\end{align}
and hence

\begin{align}
    \begin{split}
    \sigma\pth{e^2 \qth{\frac{1}{2}\pth{1+ae^2}\pth{1 + 2w|w|} + w|w(1+ae^2)|} + b} &= 1- q, \\
    \sigma\pth{e^2 \qth{\frac{-1}{2}\pth{1+ae^2}\pth{1 + 2w|w|} + w|w(1+ae^2)|} + b} &= p.
    \end{split}
    \label{eq:almost_set_lin_attn}
\end{align}

Since $\sigma(z)= y $ for $y \in (0,1)$ implies $z = \log \frac{y}{1-y}$, \prettyref{eq:almost_set_lin_attn} can be rewritten as

\begin{align*}
    e^2 \qth{\frac{1}{2}\pth{1+ae^2}\pth{1 + 2w|w|} + w|w(1+ae^2)|} + b &= \log \frac{1-q}{q}, \\
    e^2 \qth{\frac{-1}{2}\pth{1+ae^2}\pth{1 + 2w|w|} + w|w(1+ae^2)|} + b &= \log \frac{p}{1-p}.
\end{align*}

Adding and subtracting the above two equations, we obtain

\begin{align}
\begin{split}
        e^2w|w(1+ae^2)| + b&= \frac{1}{2}\log \frac{p(1-q)}{q(1-p)}, \\
    e^2 \pth{1+ae^2}\pth{1 + 2w|w|}  &= \log \frac{(1-q)(1-p)}{pq}.
\end{split}
\label{eq:final_set_lin_attn}
\end{align}

Thus $\biglobalmin \in \reals^4$ is a global minimum for $L(\cdot)$ if and only if it satisfies \prettyref{eq:final_set_lin_attn}. It's easy to see that $\biglobalmin$ is already a critical point for $L$ as \prettyref{eq:to_be} is equivalent to $f_1=f_2=0$ in \prettyref{lmm:grad_comp_attn}. Thus, the set of all global minimum $\biglobalset(p,q)$ is given by
        \begin{align*}
            \biglobalset(p,q) \define \{ \bgamma_\ast = (e, w, b, a) \in \reals^4 &: e^2w|w(1+ae^2)| + b = \frac{1}{2}\log \frac{p(1-q)}{q(1-p)}, \, \\
            &\quad e^2 \pth{1+ae^2}\pth{1 + 2w|w|}  = \log \frac{(1-q)(1-p)}{pq} \}.
        \end{align*}

Since the prediction $f_{\biglobalmin}(\cdot)$ equals the Markov kernel for any $\biglobalmin \in \biglobalset$, it follows from \prettyref{thm:globalmin_tie} (or \cite[Lemma 1]{makkuva2024attention}) that $L(\biglobalmin) = H(x_{n+1} \mid x_n)$, the entropy rate of the Markov chain. Now we characterize the remaining set of stationary points.


\underline{\bf Non-global-min critical points.} For any critical point $\bgamma =(e,w,a,b) \in \reals^4$, we have from the gradient expressions in \prettyref{lmm:grad_comp_attn} that (denoting $-f_1$ and $-f_2$ from the lemma as $f_1$ and $f_2)$ respectively)

\begin{align}
\begin{split}
    \frac{ \partial L}{\partial b} &= \Expect_{X}\qth{f_1 X + f_2} = 0, \\
    \frac{ \partial L}{\partial e} &= \Expect_{X}\qth{\pth{f_1 X + f_2}\pth{X-\frac{1}{2}}} 2e \pth{1+ae^2}\pth{1 + 2w|w|} \\
    &\quad + \Expect_{X}\qth{\pth{f_1 X + f_2}\pth{X-\frac{1}{2}}}  2e^3a \pth{1 + 2w|w|} = 0, \\ 
    \frac{ \partial L}{\partial w} &= \Expect_{X}\qth{\pth{f_1 X + f_2}\pth{X-\frac{1}{2}}} 2e^2 \pth{1+ae^2}\pth{|w| + \mathrm{sign}\pth{w}w}  = 0,     \\
    \frac{ \partial L}{\partial a} &= \Expect_{X}\qth{\pth{f_1 X + f_2}\pth{X-\frac{1}{2}}} e^4 \pth{1 + 2w|w|}  = 0. 
\end{split}
\label{eq:grad_set_lin_attn}
\end{align}


From \prettyref{eq:grad_set_lin_attn}, we have that $\Expect_{X}\qth{f_1 X + f_2} = 0$. If $ \Expect_{X}\qth{\pth{f_1 X + f_2}\pth{X-\frac{1}{2}} } = \Expect[(f_1 X + f_2)X] =0 $, we have that $f_1=f_2 =0$, implying $\bgamma$ is a global minimum. Hence without loss of generality, assume that $\Expect_{X}\qth{\pth{f_1 X + f_2}\pth{X-\frac{1}{2}} } \neq 0$. Then we can  partition the above set of equations into the following regions of stationarity:

\begin{enumerate}[label={\upshape(\roman*)}]
    \item \( \Expect_{X}\qth{f_1 X + f_2} = 0 ,  e=0,  \)
    \item \(\Expect_{X}\qth{f_1 X + f_2} = 0, e \neq 0, 1 + ae^2 = 0, 1 + 2w|w| = 0 . \)
\end{enumerate}

    Slightly changing the variable order, for any $\bgamma = (b,e,w, a) \in \reals^4$, we define 
    \begin{align}
        \bH(\bgamma) \define \nabla^2 L(\bgamma) = \begin{bmatrix}
        \frac{\partial^2 L}{\partial b^2} & \frac{\partial^2 L}{\partial b \partial e} & \frac{\partial^2 L}{\partial b \partial w} & \frac{\partial^2 L}{\partial b \partial a}\\
        & & \\
        \frac{\partial^2 L}{\partial e \partial b} & \frac{\partial^2 L}{\partial e^2} & \frac{\partial^2 L}{\partial e \partial w} & \frac{\partial^2 L}{\partial e \partial a}\\
        & & \\
        \frac{\partial^2 L}{\partial w \partial b} & \frac{\partial^2 L}{\partial w \partial e} & \frac{\partial^2 L}{\partial w^2} & \frac{\partial^2 L}{\partial w \partial a} \\
        & & \\
        \frac{\partial^2 L}{\partial a \partial b} & \frac{\partial^2 L}{\partial a \partial e} & \frac{\partial^2 L}{\partial a \partial w} & \frac{\partial^2 L}{\partial a^2} \\
        \end{bmatrix} \in \reals^{4 \times 4}. 
        \label{eq:hess_defi_lin_attn}
      \end{align}


Recall that 
\begin{align*}
    \begin{split}
          f_1 &=  \sigma(z_1) - \sigma(z_2) + p+q -1, \\
           f_2 &= \sigma(z_2) - p, \\
        z_1 & \define e^2 \qth{\frac{1}{2}\pth{1+ae^2}\pth{1 + 2w|w|} + w|w(1+ae^2)|} + b, \\
        z_2 & \define e^2 \qth{\frac{-1}{2}\pth{1+ae^2}\pth{1 + 2w|w|} + w|w(1+ae^2)|} + b.
        \end{split}
\end{align*}

From \prettyref{eq:grad_set_lin_attn}, we see that the second derivaties of $L$ depend on the first-derivatives of $f_1$ and $f_2$, which we now compute. Recall that the derivative of the sigmoid function obeys $\sigma'(z) = \sigma(z)(1-\sigma(z))= \sigma(z)\sigma(-z)$ for any $z \in \reals$. Now the gradients of $f_1$ and $f_2$ with respect to $b,e, w$ and $a$ are

\begin{align}
    \begin{split}
    \frac{\partial f_1}{\partial b} &= \sigma(z_1) \sigma(-z_1) - \sigma(z_2)\sigma(-z_2), \\
    \frac{\partial f_2}{\partial b} &= \sigma(z_2)\sigma(-z_2), \\
    \frac{\partial f_1}{\partial e} &= \sigma(z_1)\sigma(-z_1) \sth{2e \qth{\frac{1}{2}\pth{1+ae^2}\pth{1 + 2w|w|} + w|w(1+ae^2)|}} \\
    &\quad + \sigma(z_1)\sigma(-z_1)\sth{2ae^3 \qth{\frac{1}{2} \pth{1+2w|w|} + w|w|\mathrm{sign}(1+2w|w|)}} \\
    &\quad - \sigma(z_2)\sigma(-z_2) \sth{2e \qth{-\frac{1}{2}\pth{1+ae^2}\pth{1 + 2w|w|} + w|w(1+ae^2)|}} \\
    &\quad -\sigma(z_2)\sigma(-z_2)\sth{2ae^3 \qth{-\frac{1}{2} \pth{1+2w|w|} + w|w|\mathrm{sign}(1+2w|w|)}} \\
    \frac{\partial f_2}{\partial e} &=  \sigma(z_2)\sigma(-z_2) \sth{2e \qth{-\frac{1}{2}\pth{1+ae^2}\pth{1 + 2w|w|} + w|w(1+ae^2)|}} \\
    &\quad +\sigma(z_2)\sigma(-z_2)\sth{2ae^3 \qth{-\frac{1}{2} \pth{1+2w|w|} + w|w|\mathrm{sign}(1+2w|w|)}} \\
    \frac{\partial f_1}{\partial w} &= \sigma(z_1)\sigma(-z_1) \sth{2e^2 \qth{\frac{1}{2}\pth{1+ae^2}|w| + |w||1+ae^2|}} \\
    &\quad - \sigma(z_2)\sigma(-z_2) \sth{2e^2 \qth{-\frac{1}{2}\pth{1+ae^2}|w| + |w||1+ae^2|}}, \\
    \frac{\partial f_2}{\partial w} &= \sth{2e^2 \qth{-\frac{1}{2}\pth{1+ae^2}|w| + |w||1+ae^2|}}, \\
    \frac{\partial f_1}{\partial a} &= \sigma(z_1)\sigma(-z_1) \sth{e^4 \qth{\frac{1}{2}\pth{1 + 2w|w|} + w|w|\mathrm{sign}\pth{1+ae^2}}} \\
    &\quad - \sigma(z_2)\sigma(-z_2) \sth{e^4 \qth{-\frac{1}{2}\pth{1 + 2w|w|} + w|w|\mathrm{sign}\pth{1+ae^2)}}}, \\
    \frac{\partial f_2}{\partial a} &= \sigma(z_2)\sigma(-z_2) \sth{e^4 \qth{-\frac{1}{2}\pth{1 + 2w|w|} + w|w|\mathrm{sign}\pth{1+ae^2}}}.
   \end{split}
   \label{eq:f1f2grad}
\end{align}

Now we characterize the first set of critical points.

\underline{\bf (i) Stationary points with \( \Expect_{X}\qth{f_1 X + f_2} = 0 ,  e=0. \)} When $e=0$, we have that $z_1 = z_2 = b$. Hence,

\begin{align*}
    f_1 &= \sigma(b) + p+q-1 - \sigma(b) = p + q -1, \\
    f_2 &= \sigma(b) - p .
\end{align*}

Thus, $
    \Expect_{X}\qth{f_1 X + f_2} = \pth{p+q-1}\Expect_{X}\qth{X} + \sigma(b) -p = 
     \pth{p+q-1}\pi_1 + \sigma(b) - p = 0 $. Rearranging and simplifying, $\sigma(b) = \frac{p}{p+q}$ and hence $b = \log \frac{p}{q}$. Using the fact that $\sigma\pth{\log \frac{p}{q}}= \frac{p}{p+q} = \pi_1$ and $\sigma\pth{-\log \frac{p}{q}}= \frac{q}{p+q} = \pi_0$, the above gradients evaluated for any $\bgamma= (b=\log \frac{p}{q}, e= 0, w, a)$ further reduce to
\begin{align}
\begin{split}
\frac{\partial f_1}{\partial b} \bigg \rvert_{\bgamma} =  0, \quad \frac{\partial f_2}{\partial b} \bigg \rvert_{\bgamma} = \pi_0 \pi_1, \\
\frac{\partial f_1}{\partial e} \bigg \rvert_{\bgamma} = 0, \quad \frac{\partial f_2}{\partial e} \bigg \rvert_{\bgamma} = 0, \\
\frac{\partial f_1}{\partial w} \bigg \rvert_{\bgamma} = 0, \quad \frac{\partial f_2}{\partial w} \bigg \rvert_{\bgamma} = 0, \\
\frac{\partial f_1}{\partial a} \bigg \rvert_{\bgamma} = 0, \quad \frac{\partial f_2}{\partial a} \bigg \rvert_{\bgamma} = 0.
\end{split}
\label{eq:grad_eval_set_lin_attn}
\end{align}

Now substituting \prettyref{eq:grad_eval_set_lin_attn} when computing the second-derivatives of $L$ in \prettyref{eq:grad_set_lin_attn}, we obtain

\begin{align}
\begin{split}
        \frac{\partial^2 L}{\partial b^2} \bigg \rvert_{\bgamma} &= \Expect_X \qth{\frac{\partial f_1}{\partial b} \bigg \rvert_{\bgamma} X+ \frac{\partial f_2}{\partial b} \bigg \rvert_{\bgamma}} = \pi_0 \pi_1, \\
        \frac{\partial^2 L}{\partial b \partial e} \bigg \rvert_{\bgamma} &= \Expect_X \qth{\frac{\partial f_1}{\partial e} \bigg \rvert_{\bgamma} X+ \frac{\partial f_2}{\partial e} \bigg \rvert_{\bgamma}} = 0, \\
        \frac{\partial^2 L}{\partial b \partial w} \bigg \rvert_{\bgamma} &= \Expect_X \qth{\frac{\partial f_1}{\partial w} \bigg \rvert_{\bgamma} X+ \frac{\partial f_2}{\partial w} \bigg \rvert_{\bgamma}} = 0, \\
        \frac{\partial^2 L}{\partial b \partial a} \bigg \rvert_{\bgamma} &= \Expect_X \qth{\frac{\partial f_1}{\partial a} \bigg \rvert_{\bgamma} X+ \frac{\partial f_2}{\partial a} \bigg \rvert_{\bgamma}} =0, \\
        \frac{\partial^2 L}{\partial e^2} \bigg \rvert_{\bgamma} &= \Expect_{X}\qth{\pth{f_1 X + f_2}\pth{X-\frac{1}{2}}} 2 \pth{1+ae^2}\pth{1 + 2w|w|} \bigg \rvert_{\bgamma} \\
        &\quad  +\underbrace{\Expect_{X}\qth{\pth{f_1 X + f_2}\pth{X-\frac{1}{2}}} 4e^2a \pth{1 + 2w|w|} \bigg \rvert_{\bgamma}}_{0} \\
        &\quad + \underbrace{\Expect_{X}\qth{\pth{\frac{\partial f_1}{\partial e} \bigg \rvert_{\bgamma} X + \frac{\partial f_2}{\partial e} \bigg \rvert_{\bgamma}}\pth{X-\frac{1}{2}}} 2 \pth{1+ae^2}\pth{1 + 2w|w|}}_0  \\
        & = \Expect_X \qth{(2f_1 (1+2w|w|) - f_1 + 2f_2(1+2w|w|) X -f_2  } \bigg \rvert_{\bgamma} \\
        & = \Expect_X \qth{(f_1 (1+4w|w|) + f_2(2+4w|w|) X -f_2  } \bigg \rvert_{\bgamma} \\
        & = (f_1 (1+4w|w|) + f_2(2+4w|w|) ) \pi_1 - f_2  \bigg \rvert_{\bgamma} \\
        & \stackrel{(a)}{=} ((p+q-1) (1+4w|w|) - \pi_1 (p+q-1)(2+4w|w|) ) \pi_1 + \pi_1 (p+q-1) \\
        & = \pi_1 (p+q-1) \pth{ 1+ 4w|w| -  \pi_1 (2+4w|w|) + 1} \\
        & \stackrel{(b)}{=} 2 \pi_1 \pi_0 (p+q-1) (1+2w|w|),
\end{split}
\label{eq:secondgrad_eval_set_lin_attn_1}
\end{align}

where $(a)$ follows from the fact that $f_1 \vert_{\bgamma} = p+q-1, f_2 \vert_{\bgamma} = \sigma(b) - p = \frac{p}{p+q}- p = \frac{-p}{p+q}(p+q-1) = - \pi_1 (p+q-1)$ and $(b)$ from $1-\pi_1 = \pi_0$. Returning to the remaining second derivatives,

\begin{align}
    \begin{split}
       \frac{\partial^2 L}{\partial e \partial w} \bigg \rvert_{\bgamma} &= \frac{\partial}{\partial e} \Expect_{X}\qth{\pth{f_1 X + f_2}\pth{X-\frac{1}{2}}} 4e^2 \pth{1+ae^2}|w|   \bigg \rvert_{\bgamma} \\
       &= \Expect_{X}\qth{\pth{f_1 X + f_2}\pth{X-\frac{1}{2}}}\pth{8e \pth{1+ae^2}|w| +8e^3a|w| } \bigg \rvert_{\bgamma} \\
        &\quad + \Expect_{X}\qth{\pth{\frac{\partial f_1}{\partial e} \bigg \rvert_{\bgamma} X+ \frac{\partial f_2}{\partial e} \bigg \rvert_{\bgamma}}\pth{X-\frac{1}{2}}} 4e^2 \pth{1+ae^2}\pth{|w|} \\
        &= 0, \\
       \frac{\partial^2 L}{\partial e \partial a} \bigg \rvert_{\bgamma} &= \frac{\partial}{\partial e} \Expect_{X}\qth{\pth{f_1 X + f_2}\pth{X-\frac{1}{2}}} e^4 \pth{1 + 2w|w|}   \bigg \rvert_{\bgamma} \\
       &= \Expect_{X}\qth{\pth{f_1 X + f_2}\pth{X-\frac{1}{2}}} 4e^3 \pth{1 + 2w|w|} \bigg \rvert_{\bgamma} \\
        &\quad + \Expect_{X}\qth{\pth{\frac{\partial f_1}{\partial e} \bigg \rvert_{\bgamma} X+ \frac{\partial f_2}{\partial e} \bigg \rvert_{\bgamma}}\pth{X-\frac{1}{2}}} e^4 \pth{1 + 2w|w|} \\
        &= 0 , \\
        \frac{\partial^2 L}{\partial w^2} \bigg \rvert_{\bgamma} &= \frac{\partial}{\partial w}\Expect_{X}\qth{\pth{f_1 X + f_2}\pth{X-\frac{1}{2}}} 4e^2 \pth{1+ae^2}|w| \bigg \rvert_{\bgamma}\\
        &= \Expect_{X}\qth{\pth{f_1 X + f_2}\pth{X-\frac{1}{2}}} 4e^2 \pth{1+ae^2}\mathrm{sign}(w) \\
        &\quad + \Expect_{X}\qth{\pth{\frac{\partial f_1}{\partial w} \bigg \rvert_{\bgamma} X+ \frac{\partial f_2}{\partial w} \bigg \rvert_{\bgamma}}\pth{X-\frac{1}{2}}} 4e^2 \pth{1+ae^2}|w| \\
        & =0, \\        
        \frac{\partial^2 L}{\partial w \partial a} \bigg \rvert_{\bgamma} &= \frac{\partial}{\partial w} \Expect_{X}\qth{\pth{f_1 X + f_2}\pth{X-\frac{1}{2}}} e^4 \pth{1 + 2w|w|} \bigg \rvert_{\bgamma}\\
        &= \Expect_{X}\qth{\pth{f_1 X + f_2}\pth{X-\frac{1}{2}}} 2e^4 |w|   \bigg \rvert_{\bgamma} \\
        &\quad + \Expect_{X}\qth{\pth{\frac{\partial f_1}{\partial w} \bigg \rvert_{\bgamma} X+ \frac{\partial f_2}{\partial w}} \bigg \rvert_{\bgamma}}e^4 \pth{1 + 2w|w|} \\
        & =0, \\ 
        \frac{\partial^2 L}{\partial  a^2} \bigg \rvert_{\bgamma} &= \frac{\partial}{\partial a} \Expect_{X}\qth{\pth{f_1 X + f_2}\pth{X-\frac{1}{2}}} e^4 \pth{1 + 2w|w|} \bigg \rvert_{\bgamma}\\
        &= \Expect_{X}\qth{\pth{f_1 X + f_2}\pth{X-\frac{1}{2}}} 2e^4 |w|   \bigg \rvert_{\bgamma} \\
        &\quad + \Expect_{X}\qth{\pth{\frac{\partial f_1}{\partial w} \bigg \rvert_{\bgamma} X+ \frac{\partial f_2}{\partial w}} \bigg \rvert_{\bgamma}}e^4 \pth{1 + 2w|w|} \\
        & =0. \\ 
    \end{split}
    \label{eq:secondgrad_eval_set_lin_attn_2}
\end{align}

Congregating all the second derivatives from \prettyref{eq:secondgrad_eval_set_lin_attn_1} and \prettyref{eq:secondgrad_eval_set_lin_attn_2} into the Hessian $\bH(\bgamma)$ in \prettyref{eq:hess_defi_lin_attn}, we finally obtain
\begin{align*}
    \bH(\bgamma) = \pi_0 \pi_1 \begin{bmatrix}
    1 & 0 & 0 & 0\\
    0 & 2(p+q-1)(1+2w|w|) & 0 &0 \\
    0 & 0 & 0 &0 \\
    0 & 0 & 0 &0 \\
    \end{bmatrix}, 
\end{align*}

which is identical to the Hessian obtained in the proof of \prettyref{thm:all_character_bias} (\prettyref{app:proof_all_station_bias}) for $e=0$. 
Thus it follows that $\bilocalset(p,q) \subseteq \reals^4$ and $\bisaddleset \subseteq \reals^4$ defined below are a set of local minima and saddle points respectively:
        \begin{align*}
                \bilocalset(p,q) &\define \sth{ \bilocalmin = (e, w, b, a) \in \reals^4 : e = 0, (p+q-1)(1+2w|w|) > 0, b = \log \frac{p}{q}  }, \\
                \bisaddleset(p,q) & \define \sth{ \bisaddle = (e, w, b, a) \in \reals^4 : e = 0, (p+q-1)(1+2w|w|) \leq 0, b = \log \frac{p}{q}  }.
        \end{align*}

Now we focus on the remaining set of critical points.

\underline{\bf (ii) Stationary points with \(\Expect_{X}\qth{f_1 X + f_2} = 0, e \neq 0, 1 + ae^2 = 0, 1 + 2w|w| = 0.\)} For this set of points, the Hessian remains undefined because $\frac{\partial f_1}{\partial e}, \frac{\partial f_2}{\partial e}, \frac{\partial f_1}{\partial a}, \frac{\partial f_2}{\partial a}$ do not exist (\prettyref{eq:f1f2grad}). This non-existence arises since \(\mathrm{sign}\pth{1+ae^2}\) is undefine \(1 + ae^2 = 0\). However, even in this scenario,when $e \neq 0, 1 + ae^2 = 0, 1 + 2w|w| = 0$, we have $z_1 = z_2 = b$. Hence,

\begin{align*}
    f_1 &= \sigma(b) + p+q-1 - \sigma(b) = p + q -1, \\
    f_2 &= \sigma(b) - p. 
\end{align*}

Thus the expectation term
    $ \Expect_{X}\qth{f_1 X + f_2} = \pth{p+q-1}\Expect_{X}\qth{X} + \sigma(b) -p =  \pth{p+q-1}\pi_1 + \sigma(b) - p = 0$. Simplifying, $ \sigma(b) = \frac{p}{p+q}$, which implies $ b = \log \frac{p}{q}$. 

We could attempt to understand the characterization of the points on this manifold through local perturbation analysis. However, in this work, we classify them as stationary points and leave the comprehensive characterization for future research. This set of points $\bistationset(p,q) \subseteq \reals^4$ is defined as
        \begin{align*}
                \bistationset(p,q) &\define \sth{ \bilocalmin = (e, w, b, a) \in \reals^4 : e \neq 0, 1 + ae^2 = 0, 1 + 2w|w| = 0, b = \log \frac{p}{q}  }.
        \end{align*}

This concludes the proof.

\end{proof}

\subsection{Proof of \prettyref{lmm:energy_constant_attn1}}

\begin{proof}

Recall from \prettyref{lmm:grad_comp_attn} that for $\btheta =(e,w,a)  \in \reals^3$, we have

    \begin{align*}
        \frac{\partial L}{\partial e} &= -\Expect\qth{\pth{f_1X + f_2} \pth{X - \frac{1}{2}}}  \cdot 2e \pth{1+ae^2}\pth{1 + 2w|w|} \\
        & \hspace{1em} -\Expect\qth{\pth{f_1X + f_2} \pth{X - \frac{1}{2}}} \cdot 2e^3a \pth{1 + 2w|w|}, \\
        \frac{\partial L}{\partial w} &= -\Expect\qth{\pth{f_1X + f_2} \pth{X - \frac{1}{2}}} \cdot 2e^2                  \pth{1+ae^2}\pth{|w| + \mathrm{sign}\pth{w}w}, \\
    \frac{\partial L}{\partial a} & =  -\Expect\qth{\pth{f_1X + f_2} \pth{X - \frac{1}{2}}} \cdot e^4 \pth{1 + 2w|w|},
    \end{align*}
    where $X \in \{0,1\}$ is a Bernoulli random variable with $X \sim \mathrm{Bern}(p/(p+q))$, and 
    \begin{align*}
         f_1 &\define 1-p-q - \phi_1 + \phi_0 , \quad f_2 \define p- \phi_0, \\ 
         \phi_1 &\define \sigma\pth{e^2 \pth{\frac{1}{2}\pth{1+ae^2}\pth{1 + 2w|w|} + w|w(1+ae^2)|} + \biopt}, \\
         \phi_0 &\define \sigma\pth{e^2 \pth{\frac{-1}{2}\pth{1+ae^2}\pth{1 + 2w|w|} + w|w(1+ae^2)|} + \biopt},
    \end{align*}
     where the optimal bias $\biopt$ is obtained by solving $\pi_1 f_1+ f_2 =0$.  Using the definition of the gradient flow that $\dot{\btheta} = -\nabla L(\btheta)$ for $\btheta = \btheta_t$, we have
\begin{align}
\begin{split}
\label{eq:del_dw_lin_attn}
    \dot{w} =  -\frac{ L(\btheta)}{\partial w} &= \Expect\qth{\pth{f_1X + f_2} \pth{X - \frac{1}{2}}} \cdot  2e^2 \pth{1+ae^2}\pth{|w| + \mathrm{sign}\pth{w}w} \\
    \Rightarrow \frac{\dot{w}}{e^2 \pth{|w| + \mathrm{sign}\pth{w}w}} &= \Expect\qth{\pth{f_1X + f_2} \pth{X - \frac{1}{2}}}2\pth{1+ae^2}.
\end{split}
\end{align}

Similarly for $a$,

\begin{align}
\begin{split}
\label{eq:del_da_lin_attn}
    \dot{a} = - \frac{ L(\btheta)}{\partial a}  &=  \Expect\qth{\pth{f_1X + f_2} \pth{X - \frac{1}{2}}}e^4 \pth{1 + 2w|w|} \\
    \Rightarrow \frac{\dot{a}}{e^4} &=  \Expect\qth{\pth{f_1X + f_2} \pth{X - \frac{1}{2}}} \pth{1 + 2w|w|}. 
\end{split}
\end{align}

Likewise, for $e$,
\begin{align}
\begin{split}
    \dot{e} = -\frac{ L(\btheta)}{\partial e}  &= \Expect\qth{\pth{f_1X + f_2} \pth{X - \frac{1}{2}}}2\pth{1+ae^2} e\pth{1 + 2w|w|}  \\
    &+\Expect\qth{\pth{f_1X + f_2} \pth{X - \frac{1}{2}}}\pth{1 + 2w|w|}  2e^3a  .\\
\end{split}
\label{eq:del_de_lin_attn}
\end{align}

By substituting the expressions of \prettyref{eq:del_dw_lin_attn} and \prettyref{eq:del_da_lin_attn} into \prettyref{eq:del_de_lin_attn}:

\begin{align}
\begin{split}
    \dot{e} &= \underbrace{\Expect\qth{\pth{f_1X + f_2} \pth{X - \frac{1}{2}}}2\pth{1+ae^2}}_{\prettyref{eq:del_dw_lin_attn}} e\pth{1 + 2w|w|}  \\
    &+  \underbrace{\Expect\qth{\pth{f_1X + f_2} \pth{X - \frac{1}{2}}}\pth{1 + 2w|w|}}_{\prettyref{eq:del_da_lin_attn}}  2e^3a .
\end{split}
\end{align}

Thus, we obtain

\begin{align}
\begin{split}
    \dot{e} &=  \frac{\dot{w}}{2e^2 \pth{|w| + \mathrm{sign}\pth{w}w}} e\pth{1 + 2w|w|}    +   \frac{\dot{a}}{e^4}  2e^3a .
\end{split}
\end{align}

On rearranging and simplifying:

\begin{align}
\begin{split}
    e\dot{e} &=  \frac{\dot{w}}{ \pth{|w| + \mathrm{sign}\pth{w}w}} \pth{1 + 2w|w|}  +   \dot{a}  2a  \\
    \Rightarrow e\dot{e} &=  \frac{\dot{w}}{2 \pth{|w|}} \pth{1 + 2w|w|} +  2a \dot{a} .
\end{split}
\end{align}

Integrating the above equation on both sides:

\begin{align}
\begin{split}
\int e\dot{e} &=   \int \frac{\dot{w}}{4 \pth{|w|}} \pth{1 + 2w|w|} +   \int 2a \dot{a} \\
    \Rightarrow \frac{e^2(t)}{2} &=  \frac{\mathrm{sign}(w(t)) \cdot \log |w(t)| + w(t)^2}{2} + a(t)^2 + \frac{c}{2} \\
     \Rightarrow e^2(t) &= \mathrm{sign}(w(t)) \cdot \log |w(t)| + w(t)^2 + 2a(t)^2 + c.
\end{split}
\end{align}

Note that here \(c \in  \reals \), is a constant that depends on the initial conditions. Thus the energy $\energy(\btheta_t) = \energy(\btheta_0)$ for $w_0 \neq 0$.

\end{proof}

\subsection{Proofs of \prettyref{lmm:energy_lim_attn}, \prettyref{lmm:gf_convergence_attn}, and \prettyref{thm:main_attn} }
\label{app:direct_proofs_attn}

\begin{proof}
    We note that the proofs of \prettyref{lmm:energy_lim_attn}, \prettyref{lmm:gf_convergence_attn} directly follow from that of their counterparts \prettyref{lmm:energy_lim} and \prettyref{lmm:gf_convergence} using Łojasiewicz's theorem to characterize the convergence of the gradient flow. \prettyref{thm:main_attn} is a direct consequence of \prettyref{lmm:energy_lim_attn}, \prettyref{lmm:gf_convergence_attn}.
\end{proof}

\subsection{Informal proof of \prettyref{thm:lin_attn_initialization}}
\label{app:informal_proof}
\begin{proof}

[Informal] For $\btheta= (e,w,a)$, recall from \prettyref{lmm:grad_comp_attn} that the derivative of the loss $L$ with respect to $e$ is  

\begin{align*}
\begin{split}
    \frac{ \partial L}{\partial e} &= \Expect_{X}\qth{\pth{f_1 X + f_2}\pth{X-\frac{1}{2}}} 2e \pth{1+ae^2}\pth{1 + 2w|w|} \\
    &\quad + \Expect_{X}\qth{\pth{f_1 X + f_2}\pth{X-\frac{1}{2}}}  2e^3a \pth{1 + 2w|w|},
\end{split}
\end{align*}
where 
\begin{align*}
    \begin{split}
          f_1 &=  \sigma(z_1) - \sigma(z_2) + p+q -1, \\
           f_2 &= \sigma(z_2) - p, \\
        z_1 & \define e^2 \qth{\frac{1}{2}\pth{1+ae^2}\pth{1 + 2w|w|} + w|w(1+ae^2)|} + b, \\
        z_2 & \define e^2 \qth{\frac{-1}{2}\pth{1+ae^2}\pth{1 + 2w|w|} + w|w(1+ae^2)|} + b.
        \end{split}
\end{align*}



Assuming the initialization is very small, making any product of quantities in $\btheta = (e, w, a, b)$ much smaller than the individual quantities. Therefore, we can consider these products to be approximately zero. That is, \( \forall x, y \in \btheta, x \geq xy \And y \geq xy \And xy \approx 0\). Hence,

\begin{align*}
    z_1 &= b, \\
    z_2 &= b,  \\
    f_1 &= p+q-1, \\
    f_2 &= \sigma(b) - p. 
\end{align*}

Hence the gradient with respect to $e$ is

\begin{align}
\begin{split}
    \frac{ \partial L}{\partial e} &= 2\Expect_{X}\qth{\pth{f_1 X + f_2}\pth{X}} e.  
\end{split}
\end{align}

Simplifying the expectation term, $
    \Expect_{X}\qth{\pth{f_1 X + f_2}X} = (f_1 +f_2) \pi_1 = f_1 \pi_1 - f_1 \pi^2 = (p+q-1)(\pi_1 - \pi_1^2)$, where we used the fact that \(b\) is optimal in the above equations, specifically where \(f_1 \pi_1 + f_2 = 0\). Thus the gradient flow for the parameter $e$ is governed by
    
\begin{align*}
    \dot{e} = -\frac{ \partial L}{\partial e} &= -(p +q -1)(\pi_1 - \pi_1^2) e
    \Rightarrow e = e_0\exp(-(p +q -1)(\pi_1 - \pi_1^2)t).
\end{align*}

 Since \((p +q -1)(\pi_1 - \pi_1^2) > 0\), \(e \rightarrow 0\), which denotes it converges to the local minima.

\end{proof}





\end{document}